\useunder{\uline}{\ul}{}
\newtheorem{theorem}{Theorem}
\title{vLinear: A Powerful Linear Model for Multivariate Time Series Forecasting}
\author{
    Anonymous Submission
}
\author{
Wenzhen Yue$^1$
\and
Ruohao Guo$^1$\and
Ji Shi$^1$\and
Zihan Hao$^1$\and
Shiyu Hu$^1$\And
Xianghua Ying$^1$\footnote{Corresponding Author}\\
\affiliations
$^1$State Key Laboratory of General Artificial Intelligence, School of Intelligence Science and Technology, Peking University\\
\emails
yuewenzhen@stu.pku.edu.cn,
xhying@pku.edu.cn
}
\begin{document}

\maketitle

\begin{abstract}
    In this paper, we present \textbf{vLinear}, an effective yet efficient \textbf{linear}-based multivariate time series forecaster featuring two components: the \textbf{v}ecTrans module and the WFMLoss objective. Many state-of-the-art forecasters rely on self-attention or its variants to capture multivariate correlations, typically incurring $\mathcal{O}(N^2)$ computational complexity with respect to the number of variates $N$. To address this, we propose vecTrans, a lightweight module that utilizes a learnable vector to model multivariate correlations, reducing the complexity to $\mathcal{O}(N)$. Notably, vecTrans can be seamlessly integrated into Transformer-based forecasters, delivering up to 5$\times$ inference speedups and consistent performance gains. Furthermore, we introduce WFMLoss (Weighted Flow Matching Loss) as the objective. In contrast to typical \textbf{velocity-oriented} flow matching objectives, we demonstrate that a \textbf{final-series-oriented} formulation yields significantly superior forecasting accuracy. WFMLoss also incorporates path- and horizon-weighted strategies to focus learning on more reliable paths and horizons. Empirically, vLinear achieves state-of-the-art performance across 22 benchmarks and 124 forecasting settings. Moreover, WFMLoss serves as an effective plug-and-play objective, consistently improving existing forecasters. The code is available at \url{https://anonymous.4open.science/r/vLinear}.
    
\end{abstract}

\section{Introduction}

\begin{figure}[t]
   \centering
   \includegraphics[width=1.0\linewidth]{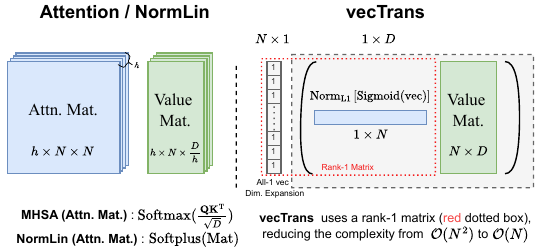}
   \caption{Comparison of multi-head self-attention (MHSA), NormLin \protect\cite{olinear}, and vecTrans. vecTrans employs a learnable rank-1 matrix to model token dependencies. By further re-arranging the computational order, vecTrans reduces the complexity from $\mathcal{O}(N^2)$ to $\mathcal{O}(N)$. When integrated into Transformer-based forecasters (e.g., iTransformer), vecTrans yields up to $5\times$ inference speedup while consistently improving forecasting accuracy.}
   \label{fig1_vectrans}
\end{figure}

Time series forecasting plays a pivotal role in domains such as weather forecasting \cite{nature_weather}, energy systems \cite{informer}, and transportation \cite{traffic}. Recently, many studies have focused on the linear-based forecasters \cite{olinear} and Transformer-based methods \cite{timemixer++}. While Transformer-based models often achieve strong performance, they typically entail high computational costs. Conversely, linear models are significantly more efficient but may suffer from limited expressiveness, especially in modeling multivariate correlations. To balance this efficiency-performance trade-off, we aim to develop a powerful yet efficient linear model for multivariate time series forecasting.

While self-attention \cite{transformer} captures multivariate correlations effectively \cite{itransformer,samformer,fredformer}, it suffers from quadratic $\mathcal{O}(N^2)$ complexity. Inspired by the inherent low-rank structure in attention matrices \cite{freeformer}, we propose the vecTrans module, which replaces the complex input-dependent attention matrix with a learnable rank-1 matrix to model token dependencies (Figure~\ref{fig1_vectrans}). Specifically, vecTrans aggregates token features using a learnable vector of length $N$  and broadcasts the results to all tokens. This approach effectively reduces the complexity to linear  $\mathcal{O}(N)$. \textbf{Empirically, vecTrans exhibits superior performance in time series forecasting despite its minimalist design}. 


Recently, Flow Matching (FM) \cite{flow_matching,rectified_flow,fm_scaling} has emerged as a powerful paradigm in generative modeling by learning a time-dependent velocity field to transform noise into data. Typical FM objectives are \textbf{velocity-oriented}, aiming to match the instantaneous flow direction (Figure~\ref{fig2_wfmloss}). However, we argue that such indirect supervision is suboptimal for time series forecasting. Instead, we propose a \textbf{final-series-oriented} FM objective that explicitly prioritizes the accuracy of the final predictions, yielding significant performance gains (see Table~\ref{tab2_vel_wfm_comp}). To further refine this objective, we incorporate path- and horizon-weighted strategies to focus on the more reliable stages of the generation path and the near-future horizons. Theoretically, we justify the horizon-weighted formulation as the optimal objective under a maximum-likelihood framework assuming a Laplace distribution. We term this \textbf{W}eighted \textbf{F}low \textbf{M}atching objective as \textbf{WFMLoss} and the overall model as \textbf{vLinear}. Our contributions are summarized as follows:

\begin{itemize}
    \item We propose \textbf{vecTrans}, a lightweight token dependency learning module with linear computational complexity. Despite its simplicity, vecTrans consistently outperforms self-attention and its variants. When integrated as a plug-in into Transformer-based forecasters, vecTrans achieves up to a $5\times$ speedup along with significant performance improvements.
    
    \item Based on flow matching theory, we introduce a final-series-oriented, path- and horizon-weighted time series flow matching loss, termed \textbf{WFMLoss}. WFMLoss outperforms existing objective designs and yields consistent performance gains when applied to existing forecasters.
    
    \item Extensive experiments on 22 benchmarks and 124 forecasting settings demonstrate that vLinear consistently achieves state-of-the-art performance while maintaining superior computational efficiency.
\end{itemize}

\section{Related Work}

\subsection{Time Series Forecasters}

Linear-based forecasters (e.g., DLinear \cite{linear}, RLinear \cite{rlinear}, and TimeMixer \cite{timemixer}) often exhibit competitive performance with high computational efficiency. To further enhance modeling capacity, many state-of-the-art time series forecasters rely on the classic self-attention mechanism (e.g., iTransformer \cite{itransformer}, PatchTST \cite{patchtst}, Fredformer \cite{fredformer}, Leddam \cite{Leddam_icml}, and TimeMixer++ \cite{timemixer++}) or its variants (e.g., FreEformer \cite{freeformer}, Informer \cite{informer}) to model multivariate correlations and temporal dynamics. A recent work, OLinear \cite{olinear}, directly learns an attention-like matrix to model token dependencies, achieving a good trade-off between performance and efficiency. Despite their effectiveness, both self-attention mechanisms and the NormLin module suffer from \textit{quadratic} computational complexity. In this work, we introduce the lightweight vecTrans module, achieving state-of-the-art performance while maintaining \textit{linear} complexity with respect to the number of variates. Moreover, vecTrans exhibits strong generalizability and consistently improves both performance and efficiency when plugged into Transformer-based forecasters.

\subsection{Loss Designs for Time Series Forecasting}

The most common loss function in time series forecasting is the mean squared error (MSE) loss \cite{itransformer,patchtst,timemixer++}. As an improvement, CARD \cite{card} proposes a weighted MAE loss to emphasize important time steps. FreDF identifies the bias induced by the temporal MSE loss and instead applies the MAE loss in the frequency domain. TransDF \cite{transdf} further computes the loss in a transformed domain where the series is temporally decorrelated. DBLoss \cite{dbloss} decomposes the predicted series into trend and seasonal components and computes the losses on these components separately. Unlike the above point-wise losses, Sundial \cite{sundial} introduces the TimeFlow loss, which is a velocity-oriented MSE-based flow matching loss.
In this work, we demonstrate  that a final-series-oriented flow matching loss significantly outperforms its velocity-oriented counterpart and other state-of-the-art losses.


\begin{figure}[t]
   \centering
   \includegraphics[width=1.0\linewidth]{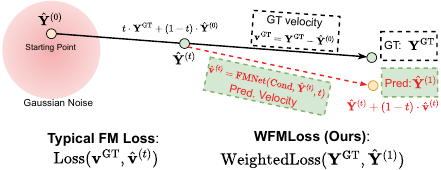}
   \caption{Illustration of the typical flow matching loss and WFMLoss. Unlike velocity-oriented alignment, WFMLoss focuses on calibrating the final predicted series and incorporates path- and horizon-weighted strategies to improve forecasting performance.}
   \label{fig2_wfmloss}
\end{figure}

\section{Preliminaries: Flow Matching for Time Series Forecasting}


Flow Matching (FM) \cite{flow_matching} transforms the forecasting task from static point estimation to modeling optimal straight-line trajectories, typically yielding structural coherence and better generalization \cite{fm_ot}.


Time series forecasting can be formulated as conditional flow matching \cite{fm_ot}. As illustrated in Figure~\ref{fig2_wfmloss}, given conditional representations $\mathbf{Cond} \in \mathbb{R}^H$ extracted from historical observations, the current state $\hat{\mathbf{Y}}^{(t)}$, and the  time $t \in \left [ 0,1 \right ] $ \footnote{In this work, $t\in \left [ 0,1 \right ] $ denotes the continuous time in the velocity field, distinct from the discrete time step $i$ in the series.}, the neural network
$\mathrm{FM\text{-}Net}(\mathbf{Cond}, \hat{\mathbf{Y}}^{(t)}, t)$
is trained to predict the velocity field $\hat{\mathbf{v}}^{(t)}$,  approximating the ground-truth velocity directed towards the target series $\mathbf{Y}^{\mathrm{GT}}$.

During training, the intermediate state $\hat{\mathbf{Y}}^{(t)}$ is constructed by linearly interpolating between a Gaussian noise sample $\hat{\mathbf{Y}}^{(0)}$ and the ground-truth series $\mathbf{Y}^{\mathrm{GT}}$ \cite{fm_ot,flow_match_interpolants,rectified_flow}:
$\hat{\mathbf{Y}}^{(t)} = t  \mathbf{Y}^{\mathrm{GT}} + (1 - t) \hat{\mathbf{Y}}^{(0)}$ .



Accordingly, the ground-truth velocity is defined as
$\mathbf{v} ^{\mathrm{GT}} = \mathbf{Y} ^{\mathrm{GT} }- \hat{\mathbf{Y}} ^{(0)}$. The network FM-Net can be trained by minimizing the discrepancy between $\hat{\mathbf{v}}^{(t)}$ and $\mathbf{v}^{\mathrm{GT} }$ \cite{sundial}:

\begin{equation}
\mathcal{L}_{\mathrm{FM}} = \left \| \hat{\mathbf{v}}^{(t)}-\mathbf{v}^{\mathrm{GT} } \right \|^2 .
\label{eq2_loss_fm}
\end{equation}

During inference, predictions are generated by integrating the predicted velocity field starting from Gaussian noise using a discrete push-forward process with $K$ steps:


\begin{figure*}[t]
   \centering
   \includegraphics[width=1.0\linewidth]{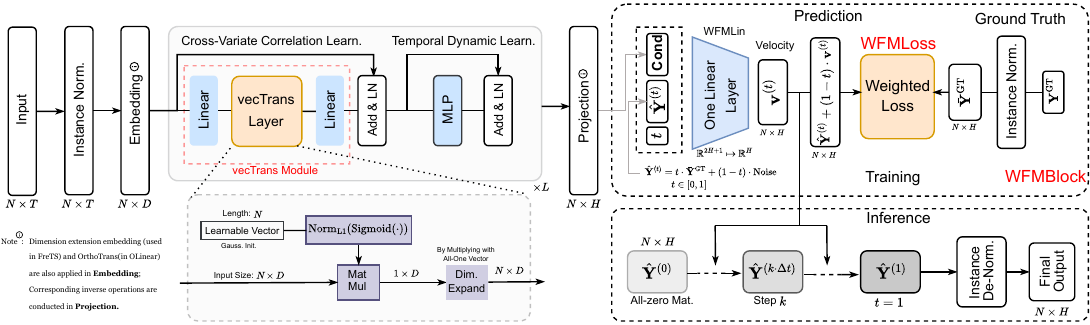}
   \caption{Overall architecture of vLinear, featuring the vecTrans module and the WFMLoss design. vecTrans uses a single learnable vector to aggregate multivariate features and broadcasts the aggregated representation to each variate by expanding along the variate dimension. WFMLoss is based on final-series-oriented flow matching and incorporates path- and horizon-weighted strategies.}
   \label{fig3_overall_arch}
\end{figure*}


\begin{equation}
\hat{\mathbf{Y}} = \mathbf{E} \left [ \hat{\mathbf{Y}}^{(0)} + \Delta t \sum_{k=0}^{K-1} \hat{\mathbf{v}}(\hat{\mathbf{Y}}^{(k \Delta t)}, k \Delta t, \mathbf{Cond} )  \right ] ,  \Delta t=\frac{1}{K},
\label{eq3_fm_infer}
\end{equation}

\noindent where the expectation $\mathbf{E} \left [ \cdot \right ] $ is taken over the predictions generated from different initial Gaussian noise samples.

\section{Method}

For multivariate time series forecasting, given a historical sequence $\mathbf{X} =\left \{ \mathbf{x} _{1}, \cdots, \mathbf{x} _{T}  \right \} \in \mathbb{R}^{N \times T} $ with $T$ time steps and $N$ variates, our goal is to  predict the future $H$ time steps $\mathbf{Y} =\left \{ \mathbf{x}_{T+1},\cdots, \mathbf{x}_{T+H}   \right \} \triangleq \left \{ \mathbf{y}_{1},\cdots, \mathbf{y}_{H}   \right \} \in \mathbb{R}^{N \times H}$. $\mathbf{Y} ^ \mathrm{GT} $ and $\hat{\mathbf{Y}} $ denote ground truth and prediction, respectively.

\subsection{Overall Architecture} \label{overall_arch}

As shown in Figure~\ref{fig3_overall_arch}, vLinear adopts a simple architecture.
Given the input series $\mathbf{X}$, we first apply instance normalization to mitigate non-stationarity \cite{non-stationary}. Then, the OrthoTrans layer
\cite{olinear} is used to decorrelate the series, which has been proven effective
for representation learning. OrthoTrans is implemented by multiplying
with the orthogonal matrix that diagonalizes the series autocorrelation matrix,
and is thus computationally efficient. Its inverse operation,
$\mathrm{OrthoTrans}^{\mathrm{inv}}$, which corresponds to multiplication with
the transposed orthogonal matrix, is applied during decoding.
After OrthoTrans, a dimension extension module is applied to enhance expressivity \cite{frets}, which
transforms the input from $N \times T$ to $N \times d \times T$ by multiplying
with a learnable vector $\boldsymbol{\phi}_d \in \mathbb{R}^d$. The series is then flattened and embedded
into a hidden representation $\mathbf{H}^0 \in \mathbb{R}^{N \times D}$.

During representation learning, we use the vecTrans module (detailed in Section~\ref{vecTrans_sec}) and the two-layer MLP to learn multivariate correlations and temporal dynamics, respectively. After $L$ stacked blocks, we obtain the final representation $\mathbf{H}^L\in \mathbb{R}^{N \times D}$. Then $\mathrm{OrthoTrans} ^{\mathrm{inv}}$ and linear layers are used to output the conditional representation $\mathbf{Cond}\in \mathbb{R}^{N \times H}$. Then $\mathbf{Cond}$ is fed into the WFMBlock (detailed in Section~\ref{WFMLoss_sec}) to generate the final prediction $\hat{\mathbf{Y}}$. The overall process can be formulated as:

\begin{equation}
\begin{aligned}
\mathbf{H}^0  &= \mathrm{Embed}(\mathrm{InstanceNorm} (\mathbf{X} )) \in \mathbb{R}^{N \times  D}, \\
\mathbf{H}^{l+1}_{\mathrm{Var}} &= \mathrm{LN}(\mathbf{H}^{l}+\mathrm{vecTransModule} (\mathbf{H}^{l})), \\
\mathbf{H}^{l+1} &= \mathrm{LN}(\mathbf{H}^{l+1}_{\mathrm{Var}}+\mathrm{MLP} (\mathbf{H}^{l+1}_{\mathrm{Var}})),
 l = 0, \cdots, L-1,
\\
\mathbf{Cond} &= \mathrm{Projection} \left ( \mathbf{H}^{L} \right )   \in \mathbb{R}^{N \times H},\\
\hat{\mathbf{Y}} &= \mathrm{DeNorm}(\mathrm{WFMBlock} ^{\mathrm{inf} }(\mathbf{Cond})) \in \mathbb{R}^{N \times H},
\end{aligned}
\label{eq_overall_proc}
\end{equation}

\noindent where $\mathrm{LN}$ denotes layer normalization, and $\mathrm{WFMBlock} ^{\mathrm{inf} }$ denotes the inference phase of our WFMBlock (Figure~\ref{fig3_overall_arch}).

\subsection{vecTrans} \label{vecTrans_sec}

It has been well established that modeling multivariate correlations via the self-attention mechanism significantly improves forecasting performance \cite{itransformer,fredformer,freeformer}. 
To reduce the computational burden, NormLin \cite{olinear} removes the query and key projections and directly employs a learnable matrix to model cross-variate dependencies. However, both self-attention and NormLin entail quadratic computational complexity. Meanwhile, it has been observed that attention matrices often exhibit a low-rank structure (see Figure~\ref{fig_vec_visual}), indicating substantial information redundancy. Motivated by this observation, we impose a rank-1 constraint\footnote{More general rank-$k$ settings are discussed in Section~\ref{more_ranks}.}  on the learnable matrix that captures multivariate correlations.

\begin{theorem}[Rank-1 Row-Normalized Matrix] \label{theorem1}
Suppose that $\mathbf{A} \in \mathbb{R}^{N \times N}$ is a row-wise L1-normalized matrix with no zero rows. If $\mathrm{rank}(\mathbf{A}) = 1$, then $\mathbf{A}$ must be of the form $\mathbf{A} = \mathbf{1}\mathbf{a}^{\mathsf{T}},$
where $\mathbf{1} \in \mathbb{R}^N$ is the all-one vector, $\mathbf{a} \in \mathbb{R}^N$, and $\|\mathbf{a}\|_1 = 1$.
\end{theorem}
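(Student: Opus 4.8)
The plan is to exploit the defining property of a rank-one matrix: any $\mathbf{A}$ with $\mathrm{rank}(\mathbf{A}) = 1$ admits an outer-product factorization $\mathbf{A} = \mathbf{u}\mathbf{v}^{\mathsf{T}}$ for some nonzero vectors $\mathbf{u},\mathbf{v}\in\mathbb{R}^N$, so that $A_{ij} = u_i v_j$. Under this factorization the $i$-th row of $\mathbf{A}$ is exactly $u_i\,\mathbf{v}^{\mathsf{T}}$, i.e.\ every row is a scalar multiple of one common direction $\mathbf{v}^{\mathsf{T}}$. The whole claim therefore reduces to showing that these scalar multiples are all equal, which is precisely the assertion that $\mathbf{u}$ is a constant multiple of the all-ones vector $\mathbf{1}$.

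First I would invoke the ``no zero rows'' hypothesis: if some $u_i = 0$ then row $i$ of $\mathbf{A}$ would vanish, so in fact $u_i \neq 0$ for every $i$, while $\mathbf{v}\neq\mathbf{0}$ since $\mathbf{A}\neq\mathbf{0}$. Next I would impose the row-wise L1-normalization. The L1 norm of row $i$ factorizes as $\sum_{j}|u_i v_j| = |u_i|\,\|\mathbf{v}\|_1$, and setting this equal to $1$ for every $i$ forces $|u_i| = 1/\|\mathbf{v}\|_1 =: c$, a single positive constant independent of $i$. Hence all entries of $\mathbf{u}$ share the common magnitude $c$, and defining $\mathbf{a} := c\,\mathbf{v}$ already yields $\|\mathbf{a}\|_1 = c\,\|\mathbf{v}\|_1 = 1$, matching the required normalization of $\mathbf{a}$.

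The main obstacle is the final sign argument: magnitude equality $|u_i| = c$ only forces $u_i \in \{+c, -c\}$, and a naive reading would permit rows that are sign-flipped copies of one another (a genuine rank-one, row-L1-normalized matrix such as $\bigl(\begin{smallmatrix} \tfrac12 & \tfrac12 \\ -\tfrac12 & -\tfrac12\end{smallmatrix}\bigr)$ is \emph{not} of the form $\mathbf{1}\mathbf{a}^{\mathsf{T}}$). Resolving this requires the nonnegativity that is intrinsic to the attention- and NormLin-type matrices motivating the theorem: since $A_{ij}=u_iv_j\ge 0$ and $\mathbf{v}$ has at least one nonzero coordinate $v_{j_0}$, all the $u_i$ must share the sign of $v_{j_0}$, and after absorbing this common sign into $\mathbf{a}$ we obtain $u_i = c$ for all $i$, i.e.\ $\mathbf{u} = c\,\mathbf{1}$. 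Substituting back gives $\mathbf{A} = c\,\mathbf{1}\mathbf{v}^{\mathsf{T}} = \mathbf{1}\,\mathbf{a}^{\mathsf{T}}$ with $\|\mathbf{a}\|_1 = 1$, completing the argument; I would make the nonnegativity assumption explicit at the outset, since without it the sign step, and hence the conclusion, fails.
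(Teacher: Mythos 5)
Your proposal is correct (modulo the extra hypothesis you flag) and follows the same skeleton as the paper's own proof: factor the rank-1 matrix as an outer product $\mathbf{A}=\mathbf{u}\mathbf{v}^{\mathsf{T}}$ (the paper equivalently writes each row as $\lambda_i\,\mathbf{a}^{\mathsf{T}}$), use the no-zero-rows hypothesis to get $u_i\neq 0$ for all $i$, and use row-wise L1 normalization to force $|u_i|=1/\|\mathbf{v}\|_1$ for every $i$. The difference lies entirely in the final sign step, and there your treatment is the stronger one. The paper disposes of the signs by asserting that a ``global sign'' can be absorbed into $\mathbf{a}$ without loss of generality, which is legitimate only when all the row scalars share one sign --- something the stated hypotheses do not guarantee. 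You correctly identify this as the crux and exhibit the counterexample
\[
\begin{pmatrix} \tfrac12 & \tfrac12 \\ -\tfrac12 & -\tfrac12 \end{pmatrix},
\]
which is rank-1, row-wise L1-normalized, and has no zero rows, yet is not of the form $\mathbf{1}\mathbf{a}^{\mathsf{T}}$ for any $\mathbf{a}$. So the theorem as literally stated is false, and the paper's proof has a gap precisely where you said it would. Your repair --- adding entrywise nonnegativity, so that $u_i v_{j_0}\geq 0$ with $v_{j_0}\neq 0$ forces all $u_i$ to share a common sign that can then genuinely be absorbed into $\mathbf{a}$ --- is exactly the right one, and it is harmless in context: the vecTrans construction applies a Sigmoid before L1 normalization, so the matrices the theorem is meant to describe do have positive entries. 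In short, you take the same route as the paper but arrive with a complete argument; the paper's statement and proof both need the nonnegativity hypothesis made explicit.
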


The proof is provided in Appendix~\ref{appd_a}.
Let $\mathbf{a} \in \mathbb{R}^N$ denote the learnable vector.
To ensure that the rank-1 matrix $(\mathbf{1} \mathbf{a}^{\mathsf{T}}) \in \mathbb{R}^{N \times N}$ is row-wise L1-normalized with positive entries, we apply the Sigmoid function followed by L1 normalization to  $\mathbf{a}$.
Specifically, we define:

\begin{equation}
\mathrm{vecTrans}\left ( \mathbf{H}  \right )  = \mathbf{1}   \left [ \mathrm{Norm} _{\mathrm{L1} }\left ( \mathrm{Sigmoid} \left ( \mathbf{a}  \right )  \right )^{\mathsf{T} }   \mathbf{H} \right ],
\label{eq:vecTrans}
\end{equation}

\noindent where $\mathbf{H} \in \mathbb{R}^{N\times D}$.
In Eq.~\eqref{eq:vecTrans}, we re-arrange the order of operations to multiply with $\mathbf{H}$ first, avoiding explicitly computing the rank-1 matrix $(\mathbf{1} \mathbf{a}^{\mathsf{T}}) \in \mathbb{R}^{N \times N}$.
The variants of Eq.~\eqref{eq:vecTrans} are discussed in Appendix~\ref{appd_ablation_vectrans}. Incorporating the pre- and post-linear layers, the vecTrans module is formulated as: 

\begin{equation}
\mathrm{vecTransModule}\left ( \cdot \right ) \triangleq \mathrm{Linear} \left ( \mathtt{vecTrans} \left ( \mathrm{Linear} \left ( \cdot \right )  \right )  \right ),
\label{eq:vecTransMod}
\end{equation}

\noindent where $\mathrm{Linear}(\cdot)$ and $\mathrm{vecTrans}(\cdot)$ operate on the temporal and variate dimensions, respectively. Ablation studies on these two linear layers are provided in Appendix~\ref{appd_pre_post_lin}.

\begin{table}[t]
\begin{center}
\renewcommand{\arraystretch}{0.9}
\begin{tabular}{@{}ccc@{}}
\toprule
Module                                                         & FLOPs & Memory \\ \midrule
\begin{tabular}[c]{@{}c@{}}MHSA\\  \shortcite{transformer} \end{tabular}         &  $ \mathcal{O} \left ( 2N^2D+4ND^2 \right )$     &    $\mathcal{O} \left ( hN^2+ND \right )$     \\
\begin{tabular}[c]{@{}c@{}}Lin.Attn.\\ \shortcite{flattentrans} \end{tabular}    &  $ \mathcal{O} \left ( 4ND^2 + \frac{2ND^2}{h}\right )$       &   $\mathcal{O} \left ( \frac{D^2}{h} +ND \right )$      \\
\begin{tabular}[c]{@{}c@{}}NormLin\\  \shortcite{olinear} \end{tabular}      &   $ \mathcal{O} \left ( N^2D+2ND^2 \right )$     &   $ \mathcal{O} \left ( N^2+ND \right )$      \\
\begin{tabular}[c]{@{}c@{}}vecTrans\\      (Ours)\end{tabular} &    $ \mathcal{O} \left ( 2ND^2+ND \right )$    &    $ \mathcal{O} \left ( ND \right )$     \\ \bottomrule
\end{tabular}

\caption{FLOPs and memory usage of MHSA, linear attentions, NormLin, and vecTrans. $h$ denotes the number of attention heads. vecTrans exhibits $\mathcal{O}(N)$ computational and memory complexity.}
\label{tab1_flops_vecTrans}
\end{center}
\end{table}

\paragraph{Comparison with Linear Attention}
Linear attention methods \cite{flattentrans,linear_softmax} typically adopt a multi-head mechanism and employ linear projections to generate input-dependent query and key matrices. In contrast, vecTrans utilizes an input-independent aggregation and broadcasting strategy without multiple heads, leading to a simpler design and improved computational efficiency. As shown in Table~\ref{tab1_flops_vecTrans}, vecTrans incurs fewer FLOPs and consumes less memory. Furthermore, it empirically outperforms linear attention mechanisms (see Table~\ref{tab_vec_attn}).

\subsection{WFMLoss and Inference} \label{WFMLoss_sec}

The representation learning described above yields $\mathbf{Cond}\in \mathbb{R}^{N \times H}$, which encapsulates the rich context of the history series. Given $\mathbf{Cond}$, the intermediate state at time $t$ $\hat{\mathbf{Y}}^{(t)}$, and the time $t \in [0,1]$,  we employ a single linear layer\footnote{The multi-layer setting is discussed in Appendix~\ref{appd_wfmlin}.}, denoted as WFMLin, to generate the velocity  $\hat{\mathbf{v}}^{(t)}$:

\begin{equation}
    \hat{\mathbf{v}}^{(t)}=\mathrm{Linear} (\mathrm{Concat} (\mathbf{Cond} , \hat{\mathbf{Y}}^{(t)},t)) \in \mathbb{R} ^{N \times H}.
    \label{eq_vt}
\end{equation}

This velocity $\hat{\mathbf{v}}^{(t)}$ guides $\hat{\mathbf{Y}}^{(t)}$ toward the prediction: $\hat{\mathbf{Y}} ^{(1)}= \hat{\mathbf{Y}}  ^{(t)} + (1-t) \hat{\mathbf{v}} ^{(t)}$. We then quantify the discrepancy between the predicted $\hat{\mathbf{Y}} ^{(1)}$ and the normalized ground truth $\bar{\mathbf{Y}}^{\mathrm{GT}} $. By incorporating path- ($(2-t)^{-0.5},0\le t \le 1 $) and horizon-weighted ($i^{-0.5},1 \le i \le H$) strategies, the final WFMLoss is formulated as:

\begin{equation}
\begin{aligned}
    \mathrm{WFMLoss}^{(t)} & = (2-t)^{-0.5} \mathrm{WeightedLoss} (\hat{\mathbf{Y}}^{(1)},\bar{\mathbf{Y}}^{\mathrm{GT}}  ) \\
    &=\frac{1}{H} (2-t)^{-0.5}\sum_{i=1}^{H}i^{-0.5}\left \| \hat{\mathrm{Y}}^{(1)}_{:,i}- \bar{\mathrm{Y}}^{\mathrm{GT}}_{:,i}\right \|  _1,
\end{aligned}
\label{eq_wfmloss}
\end{equation}

\noindent where the subscript ($:,i$) denotes the time step $i$. 
During training, $t$ is uniformly sampled from the interval $[0,1]$.
The hyperparameters in WFMLoss are discussed in Appendix~\ref{hyper-para}. 

\paragraph{Discussion}
The rationale behind the path-weighted design ($(2-t)^{-0.5}$) is to emphasize the later stages of the density path (i.e., larger $t$) to facilitate effective learning. Since smaller $t$ corresponds to higher noise levels, obtaining an accurate $\hat{\mathbf{Y}} ^{(1)}$ is relatively more difficult. Regarding the horizon-weighted MAE loss, we present the following theorem:

\begin{theorem}[Horizon-Weighted MAE Loss] 
\label{theorem2}
Consider the univariate series $\mathbf{y}^{\mathrm{GT}} \in \mathbb{R}^H$. Suppose that the series is a first-order Markov process and the conditional probability density follows a Laplace distribution. Then, horizon-weighted MAE loss $\sum_{i=1}^{H}i^{-0.5} \left | \hat{\mathrm{y}_i}- \mathrm{y}^{\mathrm{GT}}_i  \right | $ is optimal under the maximum likelihood criterion.
\end{theorem}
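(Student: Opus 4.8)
The plan is to obtain the horizon-weighted MAE as the negative log-likelihood of the observed future series under the stated assumptions, and then show that minimizing it over the point predictions coincides with maximum-likelihood estimation. The key realization is that the weight $i^{-0.5}$ should emerge as the reciprocal scale parameter of a per-horizon Laplace likelihood, so the entire argument hinges on identifying how that scale grows with the horizon $i$.

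First I would write the joint predictive density of the future series $y^{\mathrm{GT}}_1,\dots,y^{\mathrm{GT}}_H$ conditioned on the forecast origin. By the first-order Markov assumption it factorizes as $p(y^{\mathrm{GT}}_1,\dots,y^{\mathrm{GT}}_H)=\prod_{i=1}^{H}p(y^{\mathrm{GT}}_i\mid y^{\mathrm{GT}}_{i-1})$, where $y^{\mathrm{GT}}_0$ is the last observed value. Since the intermediate values are unobserved at inference time, the quantity that actually governs the $i$-step-ahead prediction $\hat{y}_i$ is the marginal predictive density $p(y^{\mathrm{GT}}_i\mid\text{history})$, obtained by propagating the one-step Laplace conditional $i$ times from the origin.

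The crux is the horizon-dependence of the scale parameter. For a first-order Markov process whose one-step increments have a fixed conditional scale (equivalently a fixed conditional variance $\sigma^2$), the $i$-step-ahead forecast variance accumulates additively, so that $\mathrm{Var}(y^{\mathrm{GT}}_i\mid\text{history})=i\sigma^2$. Because the scale $b$ of a Laplace law is proportional to its standard deviation ($b=\sigma/\sqrt{2}$), the effective scale at horizon $i$ satisfies $b_i=b_1\sqrt{i}\propto\sqrt{i}$. Treating the $i$-step predictive density as Laplace centered at the prediction $\hat{y}_i$ with scale $b_i$, the per-horizon negative log-likelihood is $\log(2b_i)+b_i^{-1}\,\lvert y^{\mathrm{GT}}_i-\hat{y}_i\rvert$.

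Summing over $i$ and discarding the $\hat{y}$-independent normalization terms $\log(2b_i)$, the maximum-likelihood objective reduces to $\sum_{i=1}^{H}b_i^{-1}\,\lvert y^{\mathrm{GT}}_i-\hat{y}_i\rvert \;\propto\; \sum_{i=1}^{H}i^{-0.5}\,\lvert y^{\mathrm{GT}}_i-\hat{y}_i\rvert$, which is precisely the horizon-weighted MAE, establishing optimality under the maximum-likelihood criterion. I expect the main obstacle to be the scale-accumulation step, since a convolution of Laplace increments is not itself Laplace, so the marginal $i$-step density is only approximately Laplace. I would therefore either invoke the exact additivity of variance for independent increments together with a Laplace approximation of the marginal, or assume directly that the $i$-step-ahead predictive law is Laplace with scale $\propto\sqrt{i}$, which is justified by the linear-in-$i$ growth of forecast variance characteristic of a Markov random walk.
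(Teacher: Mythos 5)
Your proposal is correct and follows essentially the same route as the paper's proof: a random-walk Markov structure gives forecast variance growing linearly in the horizon, hence a Laplace scale $b_i \propto \sqrt{i}$, and summing the per-horizon negative log-likelihoods while dropping the $\hat{y}$-independent terms $\log(2b_i)$ yields the horizon-weighted MAE. The only difference is one of bookkeeping: the paper attaches the $\sqrt{i}$-scaled Laplace directly to the factor $p(y_i \mid y_{i-1})$ in the chain-rule factorization (centered at $\hat{y}_i$), whereas you attach it to the $i$-step-ahead marginal predictive density and explicitly flag that a convolution of Laplace increments is not exactly Laplace — a subtlety the paper silently absorbs into its modeling assumption, so your treatment is, if anything, the more careful of the two.
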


\begin{table}[t]
\begin{center}
\setlength{\tabcolsep}{5pt}
\renewcommand{\arraystretch}{1.0} 
{\fontsize{8}{10}\selectfont
\begin{tabular}{@{}lcccccc@{}}
\toprule
Dataset & \multicolumn{1}{l}{ETTh1}             & \multicolumn{1}{l}{ETTm2}             & \multicolumn{1}{l}{ECL}               & \multicolumn{1}{l}{Solar}             & \multicolumn{1}{l}{Weather}           & \multicolumn{1}{l}{PEMS03}            \\ \midrule
WFMLoss & {\color[HTML]{FF0000} \textbf{0.416}} & {\color[HTML]{FF0000} \textbf{0.268}} & {\color[HTML]{FF0000} \textbf{0.153}} & {\color[HTML]{FF0000} \textbf{0.209}} & {\color[HTML]{FF0000} \textbf{0.233}} & {\color[HTML]{FF0000} \textbf{0.093}} \\
Vel.$\dagger$   & {\color[HTML]{0000FF} {\ul 0.434}}    & {\color[HTML]{0000FF} {\ul 0.278}}    & {\color[HTML]{0000FF} {\ul 0.168}}    & {\color[HTML]{0000FF} {\ul 0.280}}    & {\color[HTML]{0000FF} {\ul 0.247}}    & {\color[HTML]{0000FF} {\ul 0.097}}    \\
Vel.$\ddagger$  & 0.439                                 & 0.331                                 & 0.693                                 & 0.963                                 & 3.257                                 & 0.359                                 \\ \bottomrule
\end{tabular}
}
\caption{Comparison between WFMLoss and velocity-oriented FM losses. Average MSEs across four prediction lengths are reported. $\dagger$ denotes the path- and horizon-weighted MAE loss, while $\ddagger$ denotes the plain MSE loss (Eq.~\eqref{eq2_loss_fm}). WFMLoss exhibits a significant performance advantage over its velocity-oriented counterparts.}

\label{tab2_vel_wfm_comp}
\end{center}
\end{table}

\noindent The proof is provided in Appendix~\ref{appd_b}. As shown in Table~\ref{tab2_vel_wfm_comp}, WFMLoss outperforms the velocity-oriented counterpart (`Vel.$\dagger$') by 8.6\% on average. \textbf{This performance gap may stem from the velocity-based objective's heavy dependence on the initial noise} $\hat{\mathbf{Y}}^{(0)}$ (recall that $\mathbf{v} ^{\mathrm{GT}} = \mathbf{Y} ^{\mathrm{GT} }- \hat{\mathbf{Y}} ^{(0)}$).
It is counter-intuitive to require the model to memorize the starting noise $\hat{\mathbf{Y}}^{(0)}$ when the current state $\hat{\mathbf{Y}}^{(t)}$ (at a large $t$) is already close to the ground truth.
Furthermore, the plain MSE version (`Vel.$\ddagger$' in Table~\ref{tab2_vel_wfm_comp}) suffers from training instability and significantly underperforms WFMLoss, highlighting the effectiveness of our weighting strategy.

\begin{table}[b]
\begin{center}
\setlength{\tabcolsep}{5pt}
\renewcommand{\arraystretch}{1.0} 
{\fontsize{9}{10}\selectfont
\begin{tabular}{@{}c|cccccc@{}}
\toprule
Starting   & ETTh1 & ETTm2 & ECL   & Traffic & Solar & Weather \\ \midrule
All-Zero     & 0.416 & 0.268 & 0.153 & 0.440   & 0.209 & 0.233   \\ \midrule
Size: 10 & 0.417 & 0.268 & 0.153 & 0.441   & 0.210 & 0.234   \\
Size: 30 & 0.416 & 0.268 & 0.153 & 0.440   & 0.209 & 0.233   \\
Size: 50 & 0.416 & 0.268 & 0.153 & 0.440   & 0.209 & 0.233   \\ \bottomrule
\end{tabular}
}
\caption{Comparison of inference starting from the all-zero state and various Gaussian sampling sizes. Average MSEs across four prediction horizons $\{96,192,336,720\}$ are reported.}

\label{tab3_ori_iters}
\end{center}
\end{table}

\subsubsection{Inference}

As shown in Eq.~\eqref{eq3_fm_infer}, the standard inference process is to start from Gaussian noise and compute the expected values of the outputs. However, since WFMLin is a linear layer, we derive the following simplified inference process.

\begin{table*}[t]
\begin{center}
{\fontsize{7}{8.5}\selectfont
\setlength{\tabcolsep}{1.7pt}
\begin{tabular}{@{}c|cc|cc|cc|cc|cc|cc|cc|cc|cc|cc|cc|cc@{}}
\toprule
Model                          & \multicolumn{2}{c|}{\begin{tabular}[c]{@{}c@{}}vLinear\\      (Ours)\end{tabular}} & \multicolumn{2}{c|}{\begin{tabular}[c]{@{}c@{}}OLinear\\  \shortcite{olinear} \end{tabular}} 
& \multicolumn{2}{c|}{\begin{tabular}[c]{@{}c@{}}TimeMixer\\      \shortcite{timemixer} \end{tabular}} 
& \multicolumn{2}{c|}{\begin{tabular}[c]{@{}c@{}}FilterNet\\   \shortcite{filternet} \end{tabular}} 
& \multicolumn{2}{c|}{\begin{tabular}[c]{@{}c@{}}FITS\\      \shortcite{fits} \end{tabular}} 
& \multicolumn{2}{c|}{\begin{tabular}[c]{@{}c@{}}DLinear\\      \shortcite{linear} \end{tabular}} 
& \multicolumn{2}{c|}{\begin{tabular}[c]{@{}c@{}}TimeMix.++\\    \shortcite{timemixer++} \end{tabular}} 
& \multicolumn{2}{c|}{\begin{tabular}[c]{@{}c@{}}Leddam\\      \shortcite{Leddam_icml} \end{tabular}} 
& \multicolumn{2}{c|}{\begin{tabular}[c]{@{}c@{}}Fredformer\\   \shortcite{fredformer} \end{tabular}} 
& \multicolumn{2}{c|}{\begin{tabular}[c]{@{}c@{}}iTrans.\\      \shortcite{itransformer} \end{tabular}} 
& \multicolumn{2}{c|}{\begin{tabular}[c]{@{}c@{}}PatchTST\\     \shortcite{patchtst} \end{tabular}} 
& \multicolumn{2}{c}{\begin{tabular}[c]{@{}c@{}}TimesNet\\      \shortcite{timesnet} \end{tabular}} \\ \midrule
Metric                         & MSE                                      & MAE                                     & MSE                                  & MAE                                     & MSE                                     & MAE                                    & MSE                                     & MAE                                    & MSE                                  & MAE                                  & MSE                                    & MAE                                   & MSE                                      & MAE                                     & MSE                     & MAE                                                 & MSE                                     & MAE                                     & MSE                                                  & MAE                     & MSE                                    & MAE                                    & MSE                                    & MAE                                   \\ \midrule
ETTm1                          & {\color[HTML]{FF0000} \textbf{0.369}}    & {\color[HTML]{0000FF} {\ul 0.378}}      & {\color[HTML]{0000FF} {\ul 0.374}}   & {\color[HTML]{FF0000} \textbf{0.377}}   & 0.381                                   & 0.395                                  & 0.392                                   & 0.401                                  & 0.493                                & 0.452                                & 0.403                                  & 0.407                                 & {\color[HTML]{FF0000} \textbf{0.369}}    & 0.378                                   & 0.386                   & 0.397                                               & 0.384                                   & 0.395                                   & 0.407                                                & 0.410                   & 0.387                                  & 0.400                                  & 0.400                                  & 0.406                                 \\
ETTm2                          & {\color[HTML]{FF0000} \textbf{0.268}}    & {\color[HTML]{FF0000} \textbf{0.310}}   & 0.270                                & {\color[HTML]{0000FF} {\ul 0.313}}      & 0.275                                   & 0.323                                  & 0.285                                   & 0.328                                  & 0.291                                & 0.333                                & 0.350                                  & 0.401                                 & {\color[HTML]{0000FF} {\ul 0.269}}       & 0.320                                   & 0.281                   & 0.325                                               & 0.279                                   & 0.324                                   & 0.288                                                & 0.332                   & 0.281                                  & 0.326                                  & 0.291                                  & 0.333                                 \\
ETTh1                          & {\color[HTML]{FF0000} \textbf{0.416}}    & {\color[HTML]{FF0000} \textbf{0.420}}   & 0.424                                & {\color[HTML]{0000FF} {\ul 0.424}}      & 0.447                                   & 0.440                                  & 0.441                                   & 0.439                                  & 0.440                                & 0.431                                & 0.456                                  & 0.452                                 & {\color[HTML]{0000FF} {\ul 0.419}}       & 0.432                                   & 0.431                   & 0.429                                               & 0.435                                   & 0.426                                   & 0.454                                                & 0.447                   & 0.469                                  & 0.454                                  & 0.458                                  & 0.450                                 \\
ETTh2                          & {\color[HTML]{0000FF} {\ul 0.358}}       & {\color[HTML]{0000FF} {\ul 0.385}}      & 0.367                                & 0.388                                   & 0.365                                   & 0.395                                  & 0.383                                   & 0.407                                  & 0.376                                & 0.398                                & 0.559                                  & 0.515                                 & {\color[HTML]{FF0000} \textbf{0.339}}    & {\color[HTML]{FF0000} \textbf{0.380}}   & 0.373                   & 0.399                                               & 0.365                                   & 0.393                                   & 0.383                                                & 0.407                   & 0.384                                  & 0.405                                  & 0.414                                  & 0.427                                 \\
ECL                            & {\color[HTML]{FF0000} \textbf{0.153}}    & {\color[HTML]{FF0000} \textbf{0.245}}   & {\color[HTML]{0000FF} {\ul 0.159}}   & {\color[HTML]{0000FF} {\ul 0.248}}      & 0.182                                   & 0.273                                  & 0.173                                   & 0.268                                  & 0.384                                & 0.434                                & 0.212                                  & 0.300                                 & 0.165                                    & 0.253                                   & 0.169                   & 0.263                                               & 0.176                                   & 0.269                                   & 0.178                                                & 0.270                   & 0.208                                  & 0.295                                  & 0.192                                  & 0.295                                 \\
Exchange                       & {\color[HTML]{0000FF} {\ul 0.341}}       & {\color[HTML]{FF0000} \textbf{0.391}}   & 0.355                                & {\color[HTML]{0000FF} {\ul 0.399}}      & 0.387                                   & 0.416                                  & 0.388                                   & 0.419                                  & 0.365                                & 0.408                                & 0.354                                  & 0.414                                 & 0.357                                    & 0.409                                   & 0.354                   & 0.402                                               & {\color[HTML]{FF0000} \textbf{0.333}}   & {\color[HTML]{FF0000} \textbf{0.391}}   & 0.360                                                & 0.403                   & 0.367                                  & 0.404                                  & 0.416                                  & 0.443                                 \\
Traffic                        & 0.440                                    & {\color[HTML]{0000FF} {\ul 0.252}}      & 0.451                                & {\color[HTML]{FF0000} \textbf{0.247}}   & 0.485                                   & 0.298                                  & 0.463                                   & 0.310                                  & 0.615                                & 0.370                                & 0.625                                  & 0.383                                 & {\color[HTML]{FF0000} \textbf{0.416}}    & 0.264                                   & 0.467                   & 0.294                                               & 0.433                                   & 0.291                                   & {\color[HTML]{0000FF} {\ul 0.428}}                   & 0.282                   & 0.531                                  & 0.343                                  & 0.620                                  & 0.336                                 \\
Weather                        & {\color[HTML]{0000FF} {\ul 0.233}}       & {\color[HTML]{FF0000} \textbf{0.256}}   & 0.237                                & {\color[HTML]{0000FF} {\ul 0.260}}      & 0.240                                   & 0.272                                  & 0.245                                   & 0.272                                  & 0.273                                & 0.292                                & 0.265                                  & 0.317                                 & {\color[HTML]{FF0000} \textbf{0.226}}    & 0.262                                   & 0.242                   & 0.272                                               & 0.246                                   & 0.272                                   & 0.258                                                & 0.279                   & 0.259                                  & 0.281                                  & 0.259                                  & 0.287                                 \\
Solar-Energy                   & {\color[HTML]{0000FF} {\ul 0.209}}       & {\color[HTML]{0000FF} {\ul 0.226}}      & 0.215                                & {\color[HTML]{FF0000} \textbf{0.217}}   & 0.216                                   & 0.280                                  & 0.235                                   & 0.266                                  & 0.376                                & 0.384                                & 0.330                                  & 0.401                                 & {\color[HTML]{FF0000} \textbf{0.203}}    & 0.258                                   & 0.230                   & 0.264                                               & 0.226                                   & 0.262                                   & 0.233                                                & 0.262                   & 0.270                                  & 0.307                                  & 0.301                                  & 0.319                                 \\
PEMS03                        & {\color[HTML]{FF0000} \textbf{0.093}}    & {\color[HTML]{FF0000} \textbf{0.197}}   & {\color[HTML]{0000FF} {\ul 0.095}}   & {\color[HTML]{0000FF} {\ul 0.199}}      & 0.167                                   & 0.267                                  & 0.145                                   & 0.251                                  & 0.489                                & 0.465                                & 0.278                                  & 0.375                                 & 0.165                                    & 0.263                                   & 0.107                   & 0.210                                               & 0.135                                   & 0.243                                   & 0.113                                                & 0.221                   & 0.180                                  & 0.291                                  & 0.147                                  & 0.248                                 \\
PEMS04                        & {\color[HTML]{FF0000} \textbf{0.090}}    & {\color[HTML]{FF0000} \textbf{0.190}}   & {\color[HTML]{0000FF} {\ul 0.091}}   & {\color[HTML]{FF0000} \textbf{0.190}}   & 0.185                                   & 0.287                                  & 0.146                                   & 0.258                                  & 0.531                                & 0.489                                & 0.295                                  & 0.388                                 & 0.136                                    & 0.251                                   & 0.103                   & {\color[HTML]{0000FF} {\ul 0.210}}                  & 0.162                                   & 0.261                                   & 0.111                                                & 0.221                   & 0.195                                  & 0.307                                  & 0.129                                  & 0.241                                 \\
PEMS07                        & {\color[HTML]{FF0000} \textbf{0.076}}    & {\color[HTML]{FF0000} \textbf{0.164}}   & {\color[HTML]{0000FF} {\ul 0.077}}   & {\color[HTML]{FF0000} \textbf{0.164}}   & 0.181                                   & 0.271                                  & 0.123                                   & 0.229                                  & 0.500                                & 0.472                                & 0.329                                  & 0.395                                 & 0.152                                    & 0.258                                   & 0.084                   & {\color[HTML]{0000FF} {\ul 0.180}}                  & 0.121                                   & 0.222                                   & 0.101                                                & 0.204                   & 0.211                                  & 0.303                                  & 0.124                                  & 0.225                                 \\
PEMS08                        & {\color[HTML]{FF0000} \textbf{0.111}}    & {\color[HTML]{0000FF} {\ul 0.195}}      & {\color[HTML]{0000FF} {\ul 0.113}}   & {\color[HTML]{FF0000} \textbf{0.194}}   & 0.226                                   & 0.299                                  & 0.172                                   & 0.260                                  & 0.534                                & 0.487                                & 0.379                                  & 0.416                                 & 0.200                                    & 0.279                                   & 0.122                   & {\color[HTML]{0000FF} {\ul 0.211}}                  & 0.161                                   & 0.250                                   & 0.150                                                & 0.226                   & 0.280                                  & 0.321                                  & 0.193                                  & 0.271                                 \\ \midrule
\multicolumn{1}{l|}{1\textsuperscript{st} Count} & 8                                        & 8                                       & 0                                    & 6                                       & 0                                       & 0                                      & 0                                       & 0                                      & 0                                    & 0                                    & 0                                      & 0                                     & 5                                        & 1                                       & 0                       & 0                                                   & 1                                       & 1                                       & 0                                                    & 0                       & 0                                      & 0                                      & 0                                      & 0                                     \\ \bottomrule
\end{tabular}
}
\caption{Long-term forecasting results under the `Input-96-Predict-\{96, 192, 336, 720\}' setting, except for PEMS, for which $H \in \left \{ 12,24,48,96 \right \}$.  Average results are reported. These settings are used throughout the work. Full results are presented in Tables~\ref{tab_long_term_appd} and \ref{tab_long_term_PEMS}.}
\label{tab_long_term}
\end{center}
\end{table*}

\begin{table*}[t]
\begin{center}
{\fontsize{7}{8.5}\selectfont
\setlength{\tabcolsep}{1.7pt}
\begin{tabular}{@{}c|cc|cc|cc|cc|cc|cc|cc|cc|cc|cc|cc|cc@{}}
\toprule
Model                          & \multicolumn{2}{c|}{\begin{tabular}[c]{@{}c@{}}vLinear\\      (Ours)\end{tabular}} 
& \multicolumn{2}{c|}{\begin{tabular}[c]{@{}c@{}}OLinear\\    \shortcite{olinear} \end{tabular}} 
& \multicolumn{2}{c|}{\begin{tabular}[c]{@{}c@{}}TimeMixer\\    \shortcite{timemixer} \end{tabular}} 
& \multicolumn{2}{c|}{\begin{tabular}[c]{@{}c@{}}FilterNet\\   \shortcite{filternet} \end{tabular}} 
& \multicolumn{2}{c|}{\begin{tabular}[c]{@{}c@{}}DLinear\\  \shortcite{linear} \end{tabular}} 
& \multicolumn{2}{c|}{\begin{tabular}[c]{@{}c@{}}TimeMix.++\\  \shortcite{timemixer++} \end{tabular}} 
& \multicolumn{2}{c|}{\begin{tabular}[c]{@{}c@{}}Leddam\\     \shortcite{Leddam_icml} \end{tabular}} 
& \multicolumn{2}{c|}{\begin{tabular}[c]{@{}c@{}}CARD\\     \shortcite{card} \end{tabular}} 
& \multicolumn{2}{c|}{\begin{tabular}[c]{@{}c@{}}Fredformer\\   \shortcite{fredformer} \end{tabular}} 
& \multicolumn{2}{c|}{\begin{tabular}[c]{@{}c@{}}iTrans.\\      \shortcite{itransformer} \end{tabular}} 
& \multicolumn{2}{c|}{\begin{tabular}[c]{@{}c@{}}PatchTST\\      \shortcite{patchtst} \end{tabular}} 
& \multicolumn{2}{c}{\begin{tabular}[c]{@{}c@{}}TimesNet\\     \shortcite{timesnet} \end{tabular}} \\ \midrule
Metric                         & MSE                                      & MAE                                     & MSE                                    & MAE                                   & MSE                                     & MAE                                    & MSE                                     & MAE                                    & MSE                                                    & MAE                   & MSE                                                      & MAE                     & MSE                                   & MAE                                   & MSE                  & MAE                                                  & MSE                                                     & MAE                     & MSE                                    & MAE                                   & MSE                                    & MAE                                    & MSE                                    & MAE                                   \\ \midrule
ILI                            & {\color[HTML]{FF0000} \textbf{1.391}}    & {\color[HTML]{0000FF} {\ul 0.692}}      & {\color[HTML]{0000FF} {\ul 1.429}}     & {\color[HTML]{FF0000} \textbf{0.690}} & 1.864                                   & 0.806                                  & 1.793                                   & 0.791                                  & 2.742                                                  & 1.126                 & 1.805                                                    & 0.793                   & 1.725                                 & 0.777                                 & 1.959                & 0.822                                                & 1.732                                                   & 0.797                   & 1.715                                  & 0.773                                 & 1.905                                  & 0.804                                  & 1.809                                  & 0.807                                 \\
COVID-19                       & {\color[HTML]{FF0000} \textbf{4.834}}    & {\color[HTML]{FF0000} \textbf{1.187}}   & {\color[HTML]{0000FF} {\ul 5.187}}     & {\color[HTML]{0000FF} {\ul 1.211}}    & 5.919                                   & 1.350                                  & 5.607                                   & 1.322                                  & 8.279                                                  & 1.601                 & 5.974                                                    & 1.369                   & 5.251                                 & 1.285                                 & 5.536                & 1.314                                                & 5.279                                                   & 1.287                   & 5.301                                  & 1.293                                 & 5.836                                  & 1.362                                  & 6.106                                  & 1.369                                 \\
METR-LA                        & {\color[HTML]{0000FF} {\ul 0.571}}       & {\color[HTML]{0000FF} {\ul 0.320}}      & 0.587                                  & {\color[HTML]{FF0000} \textbf{0.311}} & 0.608                                   & 0.372                                  & 0.603                                   & 0.366                                  & 0.580                                                  & 0.422                 & {\color[HTML]{FF0000} \textbf{0.567}}                    & 0.363                   & 0.603                                 & 0.367                                 & 0.639                & 0.350                                                & 0.617                                                   & 0.369                   & 0.627                                  & 0.373                                 & 0.614                                  & 0.372                                  & 0.617                                  & 0.370                                 \\
NASDAQ                         & {\color[HTML]{FF0000} \textbf{0.118}}    & {\color[HTML]{FF0000} \textbf{0.199}}   & 0.121                                  & {\color[HTML]{0000FF} {\ul 0.201}}    & {\color[HTML]{0000FF} {\ul 0.120}}      & 0.204                                  & 0.127                                   & 0.211                                  & 0.150                                                  & 0.251                 & 0.125                                                    & 0.210                   & 0.128                                 & 0.211                                 & 0.125                & 0.207                                                & 0.127                                                   & 0.210                   & 0.133                                  & 0.217                                 & 0.128                                  & 0.209                                  & 0.161                                  & 0.247                                 \\
Wiki                           & {\color[HTML]{0000FF} {\ul 6.379}}       & {\color[HTML]{0000FF} {\ul 0.418}}      & 6.395                                  & {\color[HTML]{FF0000} \textbf{0.415}} & 6.443                                   & 0.439                                  & 6.457                                   & 0.439                                  & 6.420                                                  & 0.510                 & 6.430                                                    & 0.443                   & 6.417                                 & 0.433                                 & 6.419                & 0.427                                                & {\color[HTML]{FF0000} \textbf{6.318}}                   & 0.429                   & 6.422                                  & 0.432                                 & 6.368                                  & 0.424                                  & 7.633                                  & 0.572                                 \\
SP500                          & {\color[HTML]{FF0000} \textbf{0.144}}    & {\color[HTML]{FF0000} \textbf{0.249}}   & {\color[HTML]{0000FF} {\ul 0.146}}     & {\color[HTML]{0000FF} {\ul 0.250}}    & 0.153                                   & 0.265                                  & 0.164                                   & 0.279                                  & 0.178                                                  & 0.298                 & 0.157                                                    & 0.270                   & 0.163                                 & 0.282                                 & 0.147                & 0.252                                                & 0.167                                                   & 0.286                   & 0.161                                  & 0.279                                 & 0.159                                  & 0.277                                  & 0.150                                  & 0.262                                 \\
DowJones                       & {\color[HTML]{0000FF} {\ul 7.688}}       & {\color[HTML]{0000FF} {\ul 0.620}}      & {\color[HTML]{FF0000} \textbf{7.686}}  & {\color[HTML]{FF0000} \textbf{0.619}} & 8.499                                   & 0.633                                  & 8.283                                   & 0.633                                  & 7.893                                                  & 0.626                 & 8.895                                                    & 0.643                   & 8.257                                 & 0.633                                 & 7.699                & {\color[HTML]{FF0000} \textbf{0.619}}                & 8.041                                                   & 0.625                   & 8.177                                  & 0.630                                 & 7.991                                  & 0.626                                  & 10.960                                 & 0.737                                 \\
Power                          & {\color[HTML]{FF0000} \textbf{1.231}}    & {\color[HTML]{FF0000} \textbf{0.835}}   & 1.248                                  & {\color[HTML]{FF0000} \textbf{0.835}} & {\color[HTML]{0000FF} {\ul 1.234}}      & {\color[HTML]{0000FF} {\ul 0.840}}     & 1.309                                   & 0.870                                  & 1.278                                                  & 0.870                 & {\color[HTML]{0000FF} {\ul 1.234}}                       & 0.841                   & 1.295                                 & 0.868                                 & 1.288                & 0.847                                                & 1.302                                                   & 0.870                   & 1.324                                  & 0.874                                 & 1.311                                  & 0.873                                  & 1.317                                  & 0.871                                 \\
Unemp                          & {\color[HTML]{0000FF} {\ul 0.655}}       & {\color[HTML]{FF0000} \textbf{0.427}}   & 0.729                                  & {\color[HTML]{0000FF} {\ul 0.461}}    & 1.581                                   & 0.708                                  & 1.286                                   & 0.627                                  & {\color[HTML]{FF0000} \textbf{0.565}}                  & 0.509                 & 1.506                                                    & 0.678                   & 1.502                                 & 0.689                                 & 1.163                & 0.596                                                & 2.048                                                   & 0.789                   & 1.408                                  & 0.666                                 & 1.237                                  & 0.624                                  & 2.328                                  & 0.852                                 \\ \midrule
\multicolumn{1}{l|}{1\textsuperscript{st} Count} & 5                                        & 5                                       & 1                                      & 5                                     & 0                                       & 0                                      & 0                                       & 0                                      & 1                                                      & 0                     & 1                                                        & 0                       & 0                                     & 0                                     & 0                    & 1                                                    & 1                                                       & 0                       & 0                                      & 0                                     & 0                                      & 0                                      & 0                                      & 0                                     \\ \bottomrule
\end{tabular}
}
\caption{Short-term forecasting results under the `Input-12-Predict-\{3, 6, 9, 12\}' and `Input-36-Predict-\{24, 36, 48, 60\}' settings.  Results are averaged across eight prediction horizons. Full results are presented in Tables~\ref{tab_short_term_p1} and \ref{tab_short_term_p2}.}
\label{tab_short_term}
\end{center}
\end{table*}

\begin{theorem}[Simplified Inference] 
\label{theorem3}
If the FM-Net producing the velocity (e.g., WFMLin in vLinear) is linear with respect to $\hat{\mathbf{Y}}^{(t)}$, then performing inference starting from the origin (i.e., the all-zero state) is equivalent to computing the expectation of the outputs derived from Gaussian noise samples.
\end{theorem}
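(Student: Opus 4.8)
The plan is to exploit the fact that a single affine layer makes the entire $K$-step push-forward an affine map of its starting point, so that taking the expectation over zero-mean Gaussian noise collapses to the output of the deterministic part of that map, which is exactly what the all-zero initialization produces. First I would make the affine structure of the velocity explicit. Because WFMLin in Eq.~\eqref{eq_vt} is a single linear layer applied to the concatenation $(\mathbf{Cond},\hat{\mathbf{Y}}^{(t)},t)$, its output is affine in each argument; isolating the dependence on the current state, I would write $\hat{\mathbf{v}}(\hat{\mathbf{Y}}^{(t)},t,\mathbf{Cond}) = \mathbf{A}\,\hat{\mathbf{Y}}^{(t)} + \mathbf{b}^{(t)}$, where $\mathbf{A}$ is the fixed sub-block of the weight matrix acting on the state and $\mathbf{b}^{(t)}$ collects the $\mathbf{Cond}$-, $t$-, and bias-dependent contributions. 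The crucial point is that neither $\mathbf{A}$ nor $\mathbf{b}^{(t)}$ depends on the random initialization $\hat{\mathbf{Y}}^{(0)}$.

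Next I would propagate this structure through the discrete push-forward of Eq.~\eqref{eq3_fm_infer}. A single Euler step becomes $\hat{\mathbf{Y}}^{((k+1)\Delta t)} = (\mathbf{I} + \Delta t\,\mathbf{A})\,\hat{\mathbf{Y}}^{(k\Delta t)} + \Delta t\,\mathbf{b}^{(k\Delta t)}$, an affine map whose linear part is the constant matrix $\mathbf{I} + \Delta t\,\mathbf{A}$. Telescoping these updates reproduces exactly the bracketed term in Eq.~\eqref{eq3_fm_infer}, and composing the $K$ maps (a one-line induction on $k$) yields $\hat{\mathbf{Y}}^{(1)} = \mathbf{M}\,\hat{\mathbf{Y}}^{(0)} + \mathbf{c}$, with $\mathbf{M} = (\mathbf{I}+\Delta t\,\mathbf{A})^{K}$ and $\mathbf{c}$ a vector assembled from the $\mathbf{b}^{(k\Delta t)}$ terms. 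Both $\mathbf{M}$ and $\mathbf{c}$ are deterministic and independent of the draw of $\hat{\mathbf{Y}}^{(0)}$.

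Finally I would take the expectation over the Gaussian samples. Since $\hat{\mathbf{Y}}^{(0)}\sim\mathcal{N}(\mathbf{0},\mathbf{I})$ is zero-mean, linearity of expectation gives $\mathbf{E}[\hat{\mathbf{Y}}^{(1)}] = \mathbf{M}\,\mathbf{E}[\hat{\mathbf{Y}}^{(0)}] + \mathbf{c} = \mathbf{c}$. Running the same recursion from the origin, $\hat{\mathbf{Y}}^{(0)} = \mathbf{0}$, returns $\mathbf{M}\cdot\mathbf{0} + \mathbf{c} = \mathbf{c}$. The two quantities coincide, which is the claim.

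I do not expect a genuine obstacle here: the result follows directly from linearity together with the zero-mean property of the initialization noise. The only points requiring care are the inductive bookkeeping that keeps $\mathbf{M}$ and $\mathbf{c}$ deterministic while confining all randomness to the single term $\mathbf{M}\hat{\mathbf{Y}}^{(0)}$, and verifying that $t$ and $\mathbf{Cond}$, being fixed non-random inputs, enter only through $\mathbf{c}$. Were WFMLin nonlinear in $\hat{\mathbf{Y}}^{(t)}$ (the multi-layer case), the commutation of expectation and push-forward would fail, which is precisely why the theorem is stated under the linearity hypothesis.
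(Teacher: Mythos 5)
Your proposal is correct and takes essentially the same approach as the paper's proof: both isolate the affine dependence of WFMLin on the state (block-partitioning the weight matrix so that $\hat{\mathbf{v}}^{(t)} = \mathbf{W}_{\mathrm{state}}\hat{\mathbf{Y}}^{(t)} + \boldsymbol{\Phi}(t)$), observe that each Euler step is then an affine map with constant linear part $\mathbf{I} + \Delta t\,\mathbf{W}_{\mathrm{state}}$, and conclude via linearity of expectation and the zero-mean Gaussian initialization. The only cosmetic difference is that you compose the $K$ steps into a closed-form map $\hat{\mathbf{Y}}^{(1)} = \mathbf{M}\hat{\mathbf{Y}}^{(0)} + \mathbf{c}$ before taking expectations, whereas the paper runs an induction showing the expected path and the zero-start path satisfy the same recurrence from the same initial condition.
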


The proof of Theorem~\ref{theorem3} is provided in Appendix~\ref{appd_c}. It is also verified empirically in Table~\ref{tab3_ori_iters}. Based on this, the inference process with $K$ steps can be formulated as:

\begin{equation}
\hat{\mathbf{Y}} = \Delta t \sum_{k=0}^{K-1} \mathrm{WFMLin}(\hat{\mathbf{Y}}^{(k \Delta t)}, k \Delta t,\mathbf{Cond} )   , \quad \Delta t=\frac{1}{K},
\label{eq_infer_from}
\end{equation}

\noindent with $\hat{\mathbf{Y}}^{(0)}$ being the all-zero matrix at time $t=0$. Eq.~\eqref{eq_infer_from} renders the forecasting model \textbf{deterministic} \footnote{The probabilistic forecasting case is discussed in Appendix~\ref{appd_prob_forecaster}.}, and improves computational efficiency. It is noteworthy that, in contrast to the deterministic \textit{inference}, injecting noise into $\hat{\mathbf{Y}}^{(0)}$ during \textit{training} enhances the robustness of vLinear. Corresponding empirical results are presented in Appendix~\ref{appd_noise}.


\section{Experiments}

\paragraph{Datasets and Implementation Details} 
We extensively evaluate vLinear using 22 real-world datasets from diverse domains: \textbf{ETT}\{h1, h2, m1, m2\}, \textbf{ECL}, \textbf{Traffic}, \textbf{Weather},  \textbf{Exchange}, \textbf{Solar-Energy}, \textbf{PEMS}\{03, 04, 07, 08\}, \textbf{ILI}, \textbf{COVID-19}, \textbf{METR-LA}, \textbf{NASDAQ}, \textbf{Wiki}, \textbf{SP500}, \textbf{DowJones}, \textbf{Power}, and \textbf{Unemp}. vLinear is implemented using PyTorch~\cite{pytorch} and optimized with the ADAM optimizer \cite{adam_opt}.
More dataset and implementation details are presented in Appendices D and E.

\subsection{Forecasting Performance}

We choose 13 well-acknowledged state-of-the-art forecasters as baselines, including (1)  Linear-based models: \textbf{OLinear} \cite{olinear}, \textbf{TimeMixer} \cite{timemixer}, \textbf{FilterNet} \cite{filternet}, \textbf{FITS} \cite{fits}, \textbf{DLinear} \cite{linear}; (2) Transformer-based models: \textbf{TimeMixer++} \cite{timemixer++}, \textbf{Leddam} \cite{Leddam_icml},  \textbf{Fredformer} \cite{fredformer}, \textbf{iTransformer} \cite{itransformer}, \textbf{PatchTST} \cite{patchtst}, \textbf{CARD} \cite{card}; (3) TCN-based model: \textbf{TimesNet} \cite{timesnet}.

\begin{table}[t]
\begin{center}
{\fontsize{7}{8}\selectfont
\setlength{\tabcolsep}{2.7pt}
\begin{tabular}{@{}c|cc|cc|cc|cc|cc@{}}
\toprule
Model        & \multicolumn{2}{c|}{\begin{tabular}[c]{@{}c@{}}vLinear\\      (Ours)\end{tabular}} 
& \multicolumn{2}{c|}{\begin{tabular}[c]{@{}c@{}}SimpleTM\\    \shortcite{simpletm} \end{tabular}} 
& \multicolumn{2}{c|}{\begin{tabular}[c]{@{}c@{}}TQNet\\      \shortcite{tqnet} \end{tabular}} 
& \multicolumn{2}{c|}{\begin{tabular}[c]{@{}c@{}}TimePro\\      \shortcite{timepro} \end{tabular}} 
& \multicolumn{2}{c}{\begin{tabular}[c]{@{}c@{}}TimeBase\\      \shortcite{timebase} \end{tabular}} \\ \midrule
Metric       & MSE                                     & MAE                                     & MSE                                     & MAE                                  & MSE                                  & MAE                                  & MSE                                                   & MAE                   & MSE                                    & MAE                                   \\ \midrule
ETTm1        & {\color[HTML]{FF0000} \textbf{0.369}}   & {\color[HTML]{FF0000} \textbf{0.378}}   & 0.381                                   & 0.396                                & {\color[HTML]{0000FF} {\ul 0.377}}   & {\color[HTML]{0000FF} {\ul 0.393}}   & 0.391                                                 & 0.400                 & 0.431                                  & 0.420                                 \\
ETTm2        & {\color[HTML]{FF0000} \textbf{0.268}}   & {\color[HTML]{FF0000} \textbf{0.310}}   & {\color[HTML]{0000FF} {\ul 0.275}}      & {\color[HTML]{0000FF} {\ul 0.322}}   & 0.277                                & 0.323                                & 0.281                                                 & 0.326                 & 0.290                                  & 0.332                                 \\
ETTh1        & {\color[HTML]{FF0000} \textbf{0.416}}   & {\color[HTML]{FF0000} \textbf{0.420}}   & {\color[HTML]{0000FF} {\ul 0.422}}      & {\color[HTML]{0000FF} {\ul 0.428}}   & 0.441                                & 0.434                                & 0.438                                                 & 0.438                 & 0.463                                  & 0.429                                 \\
ETTh2        & {\color[HTML]{0000FF} {\ul 0.358}}      & {\color[HTML]{FF0000} \textbf{0.385}}   & {\color[HTML]{FF0000} \textbf{0.353}}   & {\color[HTML]{0000FF} {\ul 0.391}}   & 0.378                                & 0.402                                & 0.377                                                 & 0.403                 & 0.408                                  & 0.424                                 \\
ECL          & {\color[HTML]{FF0000} \textbf{0.153}}   & {\color[HTML]{FF0000} \textbf{0.245}}   & 0.166                                   & 0.260                                & {\color[HTML]{0000FF} {\ul 0.164}}   & {\color[HTML]{0000FF} {\ul 0.259}}   & 0.169                                                 & 0.262                 & 0.227                                  & 0.295                                 \\
Traffic      & {\color[HTML]{FF0000} \textbf{0.440}}   & {\color[HTML]{FF0000} \textbf{0.252}}   & {\color[HTML]{0000FF} {\ul 0.444}}      & 0.289                                & 0.445                                & {\color[HTML]{0000FF} {\ul 0.276}}   & {\color[HTML]{FF0000} \textbf{0.440}}                 & 0.286                 & 0.682                                  & 0.374                                 \\
Weather      & {\color[HTML]{FF0000} \textbf{0.233}}   & {\color[HTML]{FF0000} \textbf{0.256}}   & 0.243                                   & 0.271                                & {\color[HTML]{0000FF} {\ul 0.242}}   & {\color[HTML]{0000FF} {\ul 0.269}}   & 0.251                                                 & 0.276                 & 0.252                                  & 0.279                                 \\
Solar & 0.209                                   & {\color[HTML]{FF0000} \textbf{0.226}}   & {\color[HTML]{FF0000} \textbf{0.184}}   & {\color[HTML]{0000FF} {\ul 0.247}}   & {\color[HTML]{0000FF} {\ul 0.198}}   & 0.256                                & 0.232                                                 & 0.266                 & 0.404                                  & 0.388                                 \\ \bottomrule
\end{tabular}
}
\caption{Comparison with additional state-of-the-art forecasters. Average results are reported across four prediction horizons.}
\label{tab_more_baseline}
\end{center}
\end{table}

Tables~\ref{tab_long_term} and \ref{tab_short_term} present comprehensive results of long- and short-term forecasting. vLinear consistently achieves state-of-the-art performance across various benchmarks and prediction settings. Note that vLinear performs competitively on datasets with a large number of variates (e.g., PEMS, ECL, and Traffic), indicating that our vecTrans module effectively captures multivariate correlations even with a single vector. This observation \textbf{challenges the necessity of computationally expensive attention-based approaches} for multivariate correlation modeling. Furthermore, the performance advantage of vLinear on small-scale datasets (e.g., COVID-19, Power, and Unemp) demonstrates its effectiveness under limited training data.

We also compare vLinear against additional competitive baselines in Table~\ref{tab_more_baseline}, where vLinear demonstrates superior performance. In Appendix~\ref{robust}, vLinear demonstrates better robustness to random seeds when compared to state-of-the-art forecasters: TimeMixer++ and iTransformer.

\begin{figure}[t]
   \centering
   \includegraphics[width=0.95\linewidth]{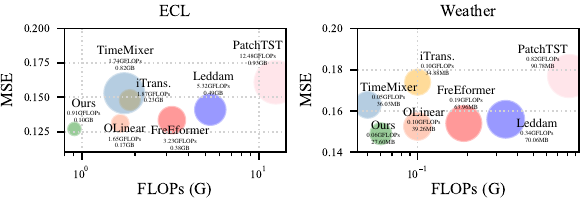}
   \caption{Model efficiency comparison. Bubble areas represent GPU memory usage during inference. The `Input-96-Predict-96' setting is used. Resource footprint data are from Table~\ref{tab_gpu}.}
   \label{fig_gpu}
\end{figure}

\begin{figure}[t]
   \centering
   \includegraphics[width=1.0\linewidth]{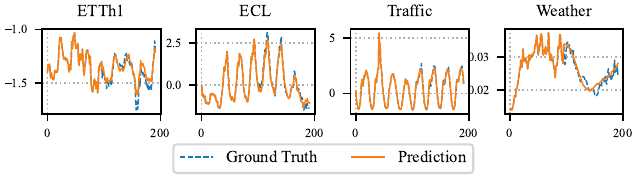}
   \caption{Visualization of our forecasting results.}
   \label{fig_plot}
\end{figure}

\paragraph{Efficiency}
As shown in Figure~\ref{fig_gpu}, vLinear incurs fewer FLOPs and consumes less GPU memory during inference, while achieving superior performance. This observation is consistent with the analysis in Table~\ref{tab1_flops_vecTrans}.

\subsection{Model Analysis}

\paragraph{Representation Learning}
As shown in Figure~\ref{fig3_overall_arch}, we employ the vecTrans module and MLP to capture multivariate dependencies and temporal dynamics, respectively. To verify the effectiveness of this design, we conduct ablation studies on the representation learning of these two dimensions.
Specifically, when the vecTrans layer is applied to the temporal dimension (as a variant), the operation in Eq.~\eqref{eq:vecTrans} is performed on the dimension of size $D$ with the learnable vector $\mathbf{a} \in \mathbb{R}^D$. 
As shown in Table\ref{tab_var_temp}, our design (vecTrans-MLP) outperforms the self-attention counterpart (Attn.-MLP, in the last row) by 3.4\% on average, with reduced computational cost. Furthermore, removing the vecTrans module (second row) leads to an \textbf{11\%} performance degradation, highlighting its effectiveness. Moreover, on datasets with a large number of variates (ECL, Solar, and PEMS03), vecTrans exhibits a larger performance margin (about 5\%) over the second-best variant, demonstrating the efficacy of our multivariate correlation learning scheme in the vecTrans module.

\begin{table}[t]
\begin{center}
{\fontsize{7.5}{8}\selectfont
\setlength{\tabcolsep}{4pt}
\begin{tabular}{@{}cc|cccccc@{}}
\toprule
Var.     & Temp.    & ETTh1                                 & ETTm2                                 & ECL                                   & Solar                                 & Weather                               & PEMS03                                \\ \midrule
w/o      & vecTrans & 0.423                                 & 0.274                                 & 0.186                                 & 0.262                                 & 0.253                                 & 0.192                                 \\
w/o      & MLP      & 0.420                                 & 0.274                                 & 0.173                                 & 0.235                                 & 0.244                                 & 0.146                                 \\ \midrule
MLP      & w/o      & 0.420                                 & 0.274                                 & 0.173                                 & 0.235                                 & 0.244                                 & 0.146                                 \\
MLP      & vecTrans & 0.420                                 & 0.273                                 & 0.174                                 & 0.235                                 & 0.244                                 & 0.148                                 \\
MLP      & MLP      & {\color[HTML]{0000FF} {\ul 0.419}}    & 0.274                                 & 0.174                                 & 0.232                                 & 0.245                                 & 0.153                                 \\ \midrule
vecTrans & w/o      & 0.423                                 & {\color[HTML]{0000FF} {\ul 0.270}}    & 0.168                                 & 0.237                                 & 0.249                                 & 0.115                                 \\
vecTrans & vecTrans & 0.423                                 & 0.271                                 & 0.168                                 & 0.235                                 & 0.249                                 & 0.113                                 \\
vecTrans & MLP      & {\color[HTML]{FF0000} \textbf{0.416}} & {\color[HTML]{FF0000} \textbf{0.268}} & {\color[HTML]{FF0000} \textbf{0.153}} & {\color[HTML]{FF0000} \textbf{0.209}} & {\color[HTML]{FF0000} \textbf{0.233}} & {\color[HTML]{FF0000} \textbf{0.093}} \\ \midrule
Attn.    & MLP      & 0.422                                 & 0.275                                 & {\color[HTML]{0000FF} {\ul 0.159}}    & {\color[HTML]{0000FF} {\ul 0.220}}    & {\color[HTML]{0000FF} {\ul 0.237}}    & {\color[HTML]{0000FF} {\ul 0.100}}    \\ \bottomrule
\end{tabular}
}
\caption{Ablation study on the representation learning of variate (\textit{Var.}) and temporal (\textit{Temp.}) dimensions. Average MSEs across four horizons are reported. Full results are provided in Table~\ref{tab_var_tmp_appd}.}
\label{tab_var_temp}
\end{center}
\end{table}

\begin{figure}[ht]
   \centering
   \includegraphics[width=1.0\linewidth]{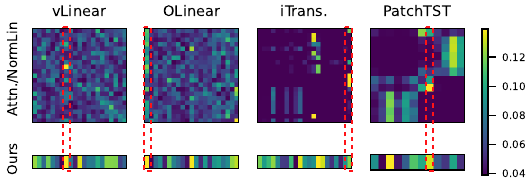}
   \caption{Visualization of Attention/NormLin matrices and the learned vector in vecTrans on the Weather dataset.  As variants, vecTrans is applied to OLinear, iTransformer, and PatchTST, while NormLin is applied to vLinear. The learned vector approximates the column-wise weight distribution of the matrices.}
   \label{fig_vec_visual}
\end{figure}

\paragraph{vecTrans Visualization}
Figure~\ref{fig_vec_visual} shows that the learned vector approximates the column-wise average of the attention and NormLin matrices. This structural similarity justifies replacing computationally expensive matrix-based schemes with our lightweight vector-based strategy.


\begin{table}[ht]
\begin{center}
{\fontsize{7}{8}\selectfont
\setlength{\tabcolsep}{2.1pt}
\begin{tabular}{@{}c|ccccccccc@{}}
\toprule
Dataset & \begin{tabular}[c]{@{}c@{}}vecT.\\      (Ours)\end{tabular} 
& \begin{tabular}[c]{@{}c@{}}Attn.\\ \shortcite{transformer} \end{tabular} 
& \begin{tabular}[c]{@{}c@{}}G.Attn.\\  \shortcite{gatedattn} \end{tabular} 
& \begin{tabular}[c]{@{}c@{}}E.Attn.\\ \shortcite{freeformer} \end{tabular} 
& \begin{tabular}[c]{@{}c@{}}Flow.\\    \shortcite{flowformer} \end{tabular} 
& \begin{tabular}[c]{@{}c@{}}Flash.\\ \shortcite{flashattention} \end{tabular} 
& \begin{tabular}[c]{@{}c@{}}FLattn.\\   \shortcite{flattentrans} \end{tabular} 
& \begin{tabular}[c]{@{}c@{}}L.Attn.\\    \shortcite{linear_softmax} \end{tabular} 
& \begin{tabular}[c]{@{}c@{}}Mamba\\   \shortcite{mamba} \end{tabular} \\ \midrule
ETTh1   & {\color[HTML]{FF0000} \textbf{0.416}}                       & 0.422                                                   & 0.423                                                     & {\color[HTML]{0000FF} {\ul 0.420}}                        & {\color[HTML]{0000FF} {\ul 0.420}}                      & 0.423                                                    & 0.428                                                     & 0.425                                                     & 0.425                                                   \\
ETTm2   & {\color[HTML]{FF0000} \textbf{0.268}}                       & 0.275                                                   & 0.273                                                     & {\color[HTML]{0000FF} {\ul 0.272}}                        & {\color[HTML]{0000FF} {\ul 0.272}}                      & 0.275                                                    & 0.274                                                     & {\color[HTML]{0000FF} {\ul 0.272}}                        & {\color[HTML]{0000FF} {\ul 0.272}}                      \\
ECL     & {\color[HTML]{FF0000} \textbf{0.153}}                       & 0.159                                                   & {\color[HTML]{0000FF} {\ul 0.156}}                        & {\color[HTML]{FF0000} \textbf{0.153}}                     & 0.158                                                   & 0.159                                                    & 0.158                                                     & 0.157                                                     & 0.169                                                   \\
Solar   & {\color[HTML]{FF0000} \textbf{0.209}}                       & 0.220                                                   & 0.226                                                     & {\color[HTML]{0000FF} {\ul 0.211}}                        & 0.218                                                   & 0.220                                                    & 0.223                                                     & 0.223                                                     & 0.224                                                   \\
Weather & {\color[HTML]{FF0000} \textbf{0.233}}                       & 0.237                                                   & 0.237                                                     & 0.235                                                     & {\color[HTML]{0000FF} {\ul 0.234}}                      & 0.237                                                    & 0.241                                                     & 0.239                                                     & 0.256                                                   \\
PEMS03  & {\color[HTML]{FF0000} \textbf{0.093}}                       & 0.100                                                   & 0.099                                                     & {\color[HTML]{0000FF} {\ul 0.095}}                        & 0.099                                                   & 0.100                                                    & 0.102                                                     & 0.101                                                     & 0.106                                                   \\ \bottomrule
\end{tabular}
}
\caption{Performance comparison between vecTrans and various attention mechanisms and Mamba. Average MSEs are reported.}
\label{tab_vec_attn}
\end{center}
\end{table}

\paragraph{vecTrans vs. Attention Variants}
We further compare vecTrans with vanilla self-attention and its variants, including Gated Attention \cite{gatedattn} and linear attentions \cite{flattentrans,linear_softmax}, as well as Mamba \cite{mamba}, within the vLinear framework. Table~\ref{tab_vec_attn} demonstrates that vecTrans consistently outperforms these baselines (with reduced computational complexity), highlighting its strong representation capabilities.

\begin{table}[ht]
\begin{center}
{\fontsize{8}{9}\selectfont
\setlength{\tabcolsep}{4pt}
\begin{tabular}{@{}c|cc|cc|cc|cc@{}}
\toprule
Rank    & \multicolumn{2}{c|}{$k$=1}                                                 & \multicolumn{2}{c|}{$k$=4}                                                      & \multicolumn{2}{c|}{$k$=16}                                                  & \multicolumn{2}{c}{$k$=32}                                                      \\ \midrule
Metric  & MSE                                   & MAE                                   & MSE                                   & MAE                                   & MSE                                   & MAE                                & MSE                                   & MAE                                   \\ \midrule
ETTh1   & {\color[HTML]{FF0000} \textbf{0.416}} & {\color[HTML]{FF0000} \textbf{0.420}} & 0.420                                 & 0.422                                 & {\color[HTML]{0000FF} {\ul 0.419}}    & {\color[HTML]{0000FF} {\ul 0.421}} & {\color[HTML]{0000FF} {\ul 0.419}}    & 0.422                                 \\
ETTm2   & {\color[HTML]{FF0000} \textbf{0.268}} & {\color[HTML]{FF0000} \textbf{0.310}} & {\color[HTML]{0000FF} {\ul 0.271}}    & 0.312                                 & 0.273                                 & 0.312                              & {\color[HTML]{0000FF} {\ul 0.271}}    & {\color[HTML]{0000FF} {\ul 0.311}}    \\
ECL     & {\color[HTML]{FF0000} \textbf{0.153}} & {\color[HTML]{0000FF} {\ul 0.245}}    & {\color[HTML]{FF0000} \textbf{0.153}} & {\color[HTML]{0000FF} {\ul 0.245}}    & {\color[HTML]{FF0000} \textbf{0.153}} & {\color[HTML]{0000FF} {\ul 0.245}} & {\color[HTML]{FF0000} \textbf{0.153}} & {\color[HTML]{FF0000} \textbf{0.244}} \\
Solar   & {\color[HTML]{FF0000} \textbf{0.209}} & {\color[HTML]{FF0000} \textbf{0.226}} & {\color[HTML]{0000FF} {\ul 0.210}}    & {\color[HTML]{FF0000} \textbf{0.226}} & 0.211                                 & {\color[HTML]{0000FF} {\ul 0.227}} & 0.211                                 & {\color[HTML]{0000FF} {\ul 0.227}}    \\
Weather & {\color[HTML]{FF0000} \textbf{0.233}} & {\color[HTML]{FF0000} \textbf{0.256}} & {\color[HTML]{FF0000} \textbf{0.233}} & {\color[HTML]{FF0000} \textbf{0.256}} & {\color[HTML]{0000FF} {\ul 0.234}}    & {\color[HTML]{0000FF} {\ul 0.257}} & 0.245                                 & 0.266                                 \\
PEMS03  & {\color[HTML]{FF0000} \textbf{0.093}} & {\color[HTML]{FF0000} \textbf{0.197}} & {\color[HTML]{0000FF} {\ul 0.095}}    & {\color[HTML]{0000FF} {\ul 0.198}}    & {\color[HTML]{0000FF} {\ul 0.095}}    & 0.199                              & 0.096                                 & 0.199                                 \\ \bottomrule
\end{tabular}
}
\caption{Comparison of different rank settings for vecTrans.}
\label{tab_more_ranks}
\end{center}
\end{table}

\paragraph{Larger Rank in vecTrans} \label{more_ranks}
Relaxing the rank-1 constraint of vecTrans to rank-$k$, Eq.~\eqref{eq:vecTrans} generalizes to:

\begin{equation}
\mathrm{vecTrans}\left ( \mathbf{H}  \right )  = \frac{\mathbf{A}   \left [ \left ( \mathtt{Sigmoid} \left ( \mathbf{B}  \right )  \right )^{\mathsf{T} }   \mathbf{H} \right ]}{\mathbf{A}   \left [ \left ( \mathtt{Sigmoid} \left ( \mathbf{B}  \right )  \right )^{\mathsf{T} }  \mathbf{1} \right ]}  ,
\label{eq:vecTrans_larger_rank}
\end{equation}

\noindent where $\mathbf{A}, \mathbf{B} \in \mathbb{R}^{N \times k}$ are the learnable matrices, $\mathbf{H} \in \mathbb{R}^{N \times D}$, and $\mathbf{1} \in \mathbb{R}^{N}$. The denominator in Eq.~\eqref{eq:vecTrans_larger_rank} is used for the row-wise L1 normalization.  
As shown in Table \ref{tab_more_ranks}, various rank settings yield similar results, with the rank-1 setting performing slightly better than the others. Therefore, we adopt Eq.~\eqref{eq:vecTrans} (rank-1) as the default setting in this work. This observation indicates that a rank-1 matrix is sufficient to capture multivariate correlations.

\begin{table}[hbt]
\begin{center}
{\fontsize{7.5}{8.5}\selectfont
\setlength{\tabcolsep}{4pt}
\begin{tabular}{@{}cccccccc@{}}
\toprule
Dataset & \begin{tabular}[c]{@{}c@{}}WFM\\      (Ours)\end{tabular} 
& \begin{tabular}[c]{@{}c@{}}TransDF\\ \shortcite{transdf} \end{tabular} 
& \begin{tabular}[c]{@{}c@{}}DBLoss\\   \shortcite{dbloss} \end{tabular} 
& \begin{tabular}[c]{@{}c@{}}FreDF\\ \shortcite{fredf} \end{tabular} 
& \begin{tabular}[c]{@{}c@{}}W\_MAE\\  \shortcite{card} \end{tabular} 
& \begin{tabular}[c]{@{}c@{}}MAE\end{tabular} & \begin{tabular}[c]{@{}c@{}}MSE\end{tabular} \\ \midrule
ETTh1   & {\color[HTML]{FF0000} \textbf{0.416}}                     & 0.500                                                     & 0.446                                                    & 0.498                                                   & {\color[HTML]{0000FF} {\ul 0.446}}                       & 0.447                                                 & 0.450                                                 \\
ETTm2   & {\color[HTML]{FF0000} \textbf{0.268}}                     & 0.291                                                     & 0.297                                                    & 0.285                                                   & 0.278                                                    & {\color[HTML]{0000FF} {\ul 0.276}}                    & 0.290                                                 \\
ECL     & {\color[HTML]{FF0000} \textbf{0.153}}                     & 0.164                                                     & 0.168                                                    & 0.172                                                   & {\color[HTML]{0000FF} {\ul 0.160}}                       & {\color[HTML]{0000FF} {\ul 0.160}}                    & 0.162                                                 \\
Solar   & {\color[HTML]{FF0000} \textbf{0.209}}                     & 0.220                                                     & 0.370                                                    & 0.367                                                   & {\color[HTML]{0000FF} {\ul 0.214}}                       & 0.215                                                 & 0.220                                                 \\
Weather & {\color[HTML]{FF0000} \textbf{0.233}}                     & 0.244                                                     & 0.256                                                    & 0.241                                                   & 0.239                                                    & {\color[HTML]{0000FF} {\ul 0.238}}                    & 0.243                                                 \\
PEMS03  & {\color[HTML]{FF0000} \textbf{0.093}}                     & 0.098                                                     & 0.096                                                    & 0.096                                                   & {\color[HTML]{0000FF} {\ul 0.095}}                       & {\color[HTML]{0000FF} {\ul 0.095}}                    & 0.098                                                 \\ \bottomrule
\end{tabular}
}
\caption{WFMLoss vs. other losses. Average MSEs are reported.}
\label{tab_wfmloss_compare}
\end{center}
\end{table}

\paragraph{WFMLoss vs. Other Losses}
We compare our WFMLoss against other objectives under the framework of vLinear. As shown in Table~\ref{tab_wfmloss_compare}, WFMLoss outperforms the second-best loss by 3.2\% on average.

\begin{table}[ht]
\begin{center}
{\fontsize{7}{8}\selectfont
\setlength{\tabcolsep}{3.5pt}
\begin{tabular}{@{}cc|cc|cccccc@{}}
\toprule
P.W & H.W. & MSE & MAE & ETTh1                                 & ETTm2                                 & ECL                                   & Solar                                 & Weather                               & PEMS03                                \\ \midrule
\ding{55}   & \ding{55}    & \ding{52}   &     & 0.444                                 & 0.317                                 & 0.167                                 & 0.220                                 & 0.276                                 & 0.099                                 \\
\ding{55}   & \ding{52}    & \ding{52}   &     & 0.444                                 & 0.312                                 & 0.171                                 & 0.219                                 & 0.276                                 & 0.319                                 \\
\ding{52}   & \ding{55}    & \ding{52}   &     & 0.442                                 & 0.316                                 & 0.165                                 & 0.219                                 & 0.272                                 & 0.098                                 \\
\ding{52}   & \ding{52}    & \ding{52}   &     & 0.444                                 & 0.313                                 & 0.170                                 & 0.218                                 & 0.274                                 & 0.098                                 \\ \midrule
\ding{55}   & \ding{55}    &     & \ding{52}   & 0.419                                 & 0.270                                 & {\color[HTML]{0000FF} {\ul 0.154}}    & 0.211                                 & 0.235                                 & 0.095                                 \\
\ding{55}   & \ding{52}    &     & \ding{52}   & 0.418                                 & {\color[HTML]{0000FF} {\ul 0.269}}    & {\color[HTML]{0000FF} {\ul 0.154}}    & {\color[HTML]{0000FF} {\ul 0.210}}    & {\color[HTML]{0000FF} {\ul 0.234}}    & {\color[HTML]{0000FF} {\ul 0.094}}    \\
\ding{52}   & \ding{55}    &     & \ding{52}   & {\color[HTML]{0000FF} {\ul 0.418}}    & 0.270                                 & {\color[HTML]{0000FF} {\ul 0.154}}    & 0.211                                 & 0.246                                 & 0.095                                 \\
\ding{52}   & \ding{52}    &     & \ding{52}   & {\color[HTML]{FF0000} \textbf{0.416}} & {\color[HTML]{FF0000} \textbf{0.268}} & {\color[HTML]{FF0000} \textbf{0.153}} & {\color[HTML]{FF0000} \textbf{0.209}} & {\color[HTML]{FF0000} \textbf{0.233}} & {\color[HTML]{FF0000} \textbf{0.093}} \\ \bottomrule
\end{tabular}
}
\caption{Ablation study on WFMLoss design. }
\label{tab_wfmloss_ablation}
\end{center}
\end{table}

\paragraph{WFMLoss Design}

We conduct ablation studies to analyze the impact of path-weighting (P.W.), horizon-weighting (H.W.), and the MAE objective. 
As shown in Table~\ref{tab_wfmloss_ablation}, the proposed scheme outperforms the variants without P.W., without H.W., and the MSE-based counterpart by 6\%, 1\%, and 14\%, respectively. These consistent improvements verify the effectiveness of our WFMLoss design.

\subsection{Generalizability of vecTrans and WFMLoss}

\begin{table}[ht]
\begin{center}
{\fontsize{6.5}{7}\selectfont
\setlength{\tabcolsep}{3.5pt}
\begin{tabular}{@{}c|cc|cc|cc|cc@{}}
\toprule
                          & \multicolumn{2}{c|}{iTransformer}                  & \multicolumn{2}{c|}{PatchTST}                 & \multicolumn{2}{c|}{Leddam}                                                   & \multicolumn{2}{c}{Fredformer}                                                \\ \cmidrule(l){2-9} 
\multirow{-2}{*}{Dataset} & Attn. & vecTrans                              & Attn. & vecTrans                              & Attn.                                 & vecTrans                              & Attn.                                 & vecTrans                              \\ \midrule
ETTm1                     & 0.407 & {\color[HTML]{FF0000} \textbf{0.391}} & 0.387 & {\color[HTML]{FF0000} \textbf{0.379}} & 0.386                                 & {\color[HTML]{FF0000} \textbf{0.379}} & {\color[HTML]{FF0000} \textbf{0.384}} & {\color[HTML]{FF0000} \textbf{0.384}} \\
ECL                       & 0.178 & {\color[HTML]{FF0000} \textbf{0.166}} & 0.208 & {\color[HTML]{FF0000} \textbf{0.180}} & 0.169                                 & {\color[HTML]{FF0000} \textbf{0.163}} & 0.176                                 & {\color[HTML]{FF0000} \textbf{0.171}} \\
PEMS03                    & 0.113 & {\color[HTML]{FF0000} \textbf{0.108}} & 0.180 & {\color[HTML]{FF0000} \textbf{0.149}} & 0.107                                 & {\color[HTML]{FF0000} \textbf{0.104}} & 0.134                                 & {\color[HTML]{FF0000} \textbf{0.128}} \\
PEMS07                    & 0.101 & {\color[HTML]{FF0000} \textbf{0.090}} & 0.211 & {\color[HTML]{FF0000} \textbf{0.153}} & 0.084                                 & {\color[HTML]{FF0000} \textbf{0.082}} & 0.121                                 & {\color[HTML]{FF0000} \textbf{0.118}} \\
Solar                     & 0.233 & {\color[HTML]{FF0000} \textbf{0.223}} & 0.270 & {\color[HTML]{FF0000} \textbf{0.234}} & 0.230                                 & {\color[HTML]{FF0000} \textbf{0.223}} & 0.226                                 & {\color[HTML]{FF0000} \textbf{0.225}} \\
Weather                   & 0.258 & {\color[HTML]{FF0000} \textbf{0.248}} & 0.259 & {\color[HTML]{FF0000} \textbf{0.245}} & {\color[HTML]{FF0000} \textbf{0.242}} & {\color[HTML]{FF0000} \textbf{0.242}} & 0.246                                 & {\color[HTML]{FF0000} \textbf{0.239}} \\
METR-LA                   & 0.338 & {\color[HTML]{FF0000} \textbf{0.332}} & 0.335 & {\color[HTML]{FF0000} \textbf{0.333}} & 0.327                                 & {\color[HTML]{FF0000} \textbf{0.320}} & {\color[HTML]{FF0000} \textbf{0.336}} & 0.341                                 \\ \bottomrule
\end{tabular}
}
\caption{Plugging the vecTrans module into Transformer-based forecasters to replace the attention mechanism.}
\label{tab_vecTrans_plugin}
\end{center}
\end{table}

\begin{table}[ht]
\begin{center}
{\fontsize{6.5}{7}\selectfont
\setlength{\tabcolsep}{3.5pt}
\begin{tabular}{@{}c|cc|cc|cc|cc@{}}
\toprule
\multicolumn{1}{c|}{}                          & \multicolumn{2}{c|}{OLinear}                                                  & \multicolumn{2}{c|}{\begin{tabular}[c]{@{}c@{}}iTransformer \end{tabular}} & \multicolumn{2}{c|}{\begin{tabular}[c]{@{}c@{}}PatchTST\end{tabular}} & \multicolumn{2}{c}{DLinear}                   \\ \cmidrule(l){2-9} 
\multicolumn{1}{c|}{\multirow{-2}{*}{Dataset}} & Ori.                                  & WFMLoss                               & Ori.                     & WFMLoss                                                  & Ori.                   & WFMLoss                                                & Ori.  & WFMLoss                               \\ \midrule
ETTh1                                          & 0.424                                 & {\color[HTML]{FF0000} \textbf{0.417}} & 0.454                    & {\color[HTML]{FF0000} \textbf{0.421}}                    & 0.469                  & {\color[HTML]{FF0000} \textbf{0.417}}                  & 0.456 & {\color[HTML]{FF0000} \textbf{0.434}} \\
ETTm2                                          & {\color[HTML]{FF0000} \textbf{0.270}} & {\color[HTML]{FF0000} \textbf{0.270}} & 0.288                    & {\color[HTML]{FF0000} \textbf{0.281}}                    & 0.281                  & {\color[HTML]{FF0000} \textbf{0.278}}                  & 0.350 & {\color[HTML]{FF0000} \textbf{0.285}} \\
ECL                                            & 0.159                                 & {\color[HTML]{FF0000} \textbf{0.152}} & 0.178                    & {\color[HTML]{FF0000} \textbf{0.173}}                    & 0.208                  & {\color[HTML]{FF0000} \textbf{0.193}}                  & 0.212 & {\color[HTML]{FF0000} \textbf{0.211}} \\
Traffic                                        & 0.451                                 & {\color[HTML]{FF0000} \textbf{0.441}} & 0.428                    & {\color[HTML]{FF0000} \textbf{0.420}}                    & 0.531                  & {\color[HTML]{FF0000} \textbf{0.470}}                  & 0.625 & {\color[HTML]{FF0000} \textbf{0.623}} \\
Weather                                        & 0.237                                 & {\color[HTML]{FF0000} \textbf{0.233}} & 0.258                    & {\color[HTML]{FF0000} \textbf{0.250}}                    & 0.259                  & {\color[HTML]{FF0000} \textbf{0.254}}                  & 0.265 & {\color[HTML]{FF0000} \textbf{0.259}} \\ \bottomrule
\end{tabular}
}
\caption{Plugging WFMLoss into existing forecasters. Average MSEs are reported. Full results are presented in Table~\ref{tab_wfmloss_plugin_appd}.}
\label{tab_wfmloss_plugin}
\end{center}
\end{table}

As presented in Tables~\ref{tab_vecTrans_plugin} and \ref{tab_wfmloss_plugin}, integrating the vecTrans module and WFMLoss into existing forecasters consistently yields performance improvements. Specifically, for PatchTST, these two modules contribute \textbf{11.4\%} and \textbf{6.6\%} performance gains, respectively.  Notably, substituting the attention mechanism with the vecTrans module yields \textbf{5.8$\times$} and  \textbf{3.4$\times$} inference speedups for iTransformer on the PEMS07 and ECL datasets, respectively (see Table~\ref{tab_gpu2}).

We provide additional experiments in the appendix, including probabilistic forecasting (Appendix~\ref{appd_prob_forecaster}), vecTrans variants (Appendix~\ref{appd_ablation_vectrans}), hyperparameter sensitivity analysis (Appendix~\ref{hyper-para}), and evaluations with more metrics (Appendix~\ref{appd_more_metrics}).

\section{Conclusion}

In this paper, we present vLinear, a powerful yet efficient linear-based time series forecaster featuring the vecTrans and WFMLoss designs. vecTrans captures multivariate correlations with linear complexity regarding the number of variates. WFMLoss is a final-series-oriented, path- and horizon-weighted flow matching loss that achieves superior performance. Furthermore, both modules can be integrated into existing forecasters to enhance their performance. We hope our designs will inspire future work in the time series community.

\nocite{xu2025neutsflow,fm_ot,quantileformer,CSDI,diff_tsf,wu2025k2vae,wu2025srsnet,li2025towards,wang2025optimal,wang2025effective}

\appendix

\section{Theorem 1 and Its Proof} \label{appd_a}

\begin{theorem}[Rank-1 Row-Normalized Matrix] \label{theorem1_appd}

Suppose that $\mathbf{A} \in \mathbb{R}^{N \times N}$ is a row-wise L1-normalized matrix with no zero rows. If $\mathrm{rank}(\mathbf{A}) = 1$, then $\mathbf{A}$ must be of the form $\mathbf{A} = \mathbf{1}\mathbf{a}^{\mathsf{T}},$
where $\mathbf{1} \in \mathbb{R}^N$ is the all-one vector, $\mathbf{a} \in \mathbb{R}^N$, and $\|\mathbf{a}\|_1 = 1$.
    
\end{theorem}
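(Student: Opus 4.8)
The plan is to exploit the defining consequence of $\mathrm{rank}(\mathbf{A})=1$: the row space of $\mathbf{A}$ is one-dimensional, so every row is a scalar multiple of a single nonzero row vector. Concretely, I would write $\mathbf{A} = \mathbf{u}\mathbf{v}^{\mathsf{T}}$ for some nonzero $\mathbf{u},\mathbf{v}\in\mathbb{R}^N$, so that the $i$-th row equals $u_i\,\mathbf{v}^{\mathsf{T}}$. Since $\mathbf{A}$ has no zero rows and $\mathbf{v}\neq\mathbf{0}$, this forces $u_i\neq 0$ for every $i$.

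Next I would invoke the row-wise L1 normalization to pin down the magnitudes of the $u_i$. Each row satisfies $\|u_i\mathbf{v}\|_1 = |u_i|\,\|\mathbf{v}\|_1 = 1$, so $|u_i| = 1/\|\mathbf{v}\|_1$ is the same positive constant $c$ for all $i$. At this point the rows of $\mathbf{A}$ all have identical L1 length and are pairwise proportional with proportionality factors $\pm 1$, so it remains only to control the signs.

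The remaining step of fixing the signs, i.e.\ showing that the $u_i$ share a common sign so that $\mathbf{u}$ is a scalar multiple of $\mathbf{1}$ rather than of an arbitrary $\pm 1$ vector, is the genuinely subtle part, and I expect it to be the main obstacle. This is where the nonnegativity of the entries (as produced by the Sigmoid-then-L1 construction in Eq.~\eqref{eq:vecTrans}) is essential: a row-normalized matrix is understood here to have nonnegative rows of unit L1 norm. Under this reading $\mathbf{v}$ may be taken nonnegative, and then $u_i\mathbf{v}\ge\mathbf{0}$ with $\mathbf{v}\neq\mathbf{0}$ forces $u_i>0$, hence $u_i=c$ for all $i$ and $\mathbf{u}=c\mathbf{1}$. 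Setting $\mathbf{a}=c\mathbf{v}$ yields $\mathbf{A}=c\mathbf{1}\mathbf{v}^{\mathsf{T}}=\mathbf{1}\mathbf{a}^{\mathsf{T}}$, with $\|\mathbf{a}\|_1 = c\|\mathbf{v}\|_1 = 1$, as claimed. I would emphasize that the nonnegativity cannot be dropped: for $N=2$ the matrix with rows $(1,0)$ and $(-1,0)$ is rank one with unit-L1 rows yet is not of the form $\mathbf{1}\mathbf{a}^{\mathsf{T}}$.

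A clean alternative that avoids the explicit factorization is to argue directly on rows. Fixing any reference row $\mathbf{r}$, which is nonzero with $\|\mathbf{r}\|_1=1$, rank one implies every row equals $\lambda_i\mathbf{r}$; the normalization gives $|\lambda_i|\,\|\mathbf{r}\|_1=|\lambda_i|=1$, and nonnegativity of both $\mathbf{r}$ and $\lambda_i\mathbf{r}$ forces $\lambda_i=1$. Hence all rows coincide with $\mathbf{r}$, so $\mathbf{A}=\mathbf{1}\mathbf{r}^{\mathsf{T}}$ and $\mathbf{a}=\mathbf{r}$ satisfies $\|\mathbf{a}\|_1=1$.
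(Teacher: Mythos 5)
Your proof follows the same overall route as the paper's: factor the rank-one matrix as an outer product, observe that the no-zero-rows condition makes every row coefficient nonzero, and use the unit L1 norm of each row to force all coefficients to share the same magnitude. Where you genuinely diverge --- and in fact improve on the paper --- is the sign step. The paper disposes of it with the phrase ``absorbing any global sign into $\mathbf{a}$ without loss of generality, we may assume $\lambda_i = 1/\|\mathbf{a}\|_1$ for all $i$,'' but that is not justified: equal magnitudes do not give equal signs, and a single global sign flip of $\mathbf{a}$ cannot repair rows whose coefficients have opposite signs. Your two-row counterexample with rows $(1,0)$ and $(-1,0)$ shows the statement is literally false under the bare reading of ``row-wise L1-normalized,'' so some additional hypothesis is unavoidable. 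You supply the right one: in the paper's setting the entries are nonnegative (Sigmoid followed by L1 normalization in Eq.~\eqref{eq:vecTrans}), and nonnegativity of $\mathbf{v}$ together with nonnegativity of each row $u_i\mathbf{v}$ forces every $u_i>0$, after which the conclusion follows exactly as in the paper. In short: same decomposition, but your treatment closes a real gap in the published argument, and your direct row-based variant at the end is a cleaner way to package the whole proof.
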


\begin{proof}
Since $\mathrm{Rank}(\mathbf{A}) = 1$, all rows of $\mathbf{A}$ lie in a one-dimensional subspace of $\mathbb{R}^N$. Hence, there exists a nonzero vector $\mathbf{a} \in \mathbb{R}^N$ and scalars $\{\lambda_i\}_{i=1}^N$ such that
\[
\mathbf{A}_{i,:} = \lambda_i \mathbf{a}^{\mathsf{T}}, \quad i = 1, \dots, N.
\]
Equivalently, $\mathbf{A}$ can be written as
\[
\mathbf{A} = \boldsymbol{\lambda} \, \mathbf{a}^{\mathsf{T}},
\]
where $\boldsymbol{\lambda} = (\lambda_1, \dots, \lambda_N)^{\mathsf{T}} \in \mathbb{R}^N$.

Since $\mathbf{A}$ has no zero rows, we have $\lambda_i \neq 0$ for all $i$. Moreover, the L1 normalization of each row yields
\[
\|\mathbf{A}_{i,:}\|_1 = \|\lambda_i \mathbf{a}^{\mathsf{T}}\|_1 = |\lambda_i| \, \|\mathbf{a}\|_1 = 1, \quad \forall i.
\]
Therefore,
\[
|\lambda_i| = \frac{1}{\|\mathbf{a}\|_1}, \quad \forall i,
\]
which implies that all $\lambda_i$ have the same magnitude. Absorbing any global sign into $\mathbf{a}$ without loss of generality, we may assume
\[
\lambda_i = \frac{1}{\|\mathbf{a}\|_1}, \quad \forall i.
\]
Substituting back, we obtain
\[
\mathbf{A} = \frac{1}{\|\mathbf{a}\|_1} \, \mathbf{1} \, \mathbf{a}^{\mathsf{T}}.
\]
Finally, by redefining $\mathbf{a} \leftarrow \mathbf{a} / \|\mathbf{a}\|_1$, the desired form follows:
\[
\mathbf{A} = \mathbf{1} \mathbf{a}^{\mathsf{T}},
\]
with $\|\mathbf{a}\|_1 = 1$.
\end{proof}

\section{Theorem 2 and Its Proof}\label{appd_b}

\begin{theorem}[Horizon-Weighted MAE Loss] 
\label{theorem2_appd}
Consider the univariate series $\mathbf{y}^{\mathrm{GT}} \in \mathbb{R}^H$. Suppose that the series is a first-order Markov process and the conditional probability density follows a Laplace distribution. Then, horizon-weighted MAE loss $\sum_{i=1}^{H}i^{-0.5} \left | \hat{\mathrm{y}_i}- \mathrm{y}^{\mathrm{GT}}_i  \right | $ is optimal under the maximum likelihood criterion.
\end{theorem}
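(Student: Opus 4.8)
The plan is to show that the stated weighted MAE arises as the negative log-likelihood of the forecast horizon under a Laplace predictive model whose per-step scale is fixed by the Markov structure; minimizing it is then exactly maximum-likelihood estimation. First I would write the predictive density of the horizon conditioned on the observed history, treating the model outputs $\hat{y}_1,\dots,\hat{y}_H$ as the location parameters of the per-horizon conditionals and $b_i>0$ as their scales. The Laplace assumption gives $p(y^{\mathrm{GT}}_i \mid \mathrm{history}) = \tfrac{1}{2b_i}\exp(-|y^{\mathrm{GT}}_i-\hat{y}_i|/b_i)$. Taking the negative log-likelihood of the whole horizon and discarding the additive term $\sum_i \log(2b_i)$, which is independent of the predictions, reduces the MLE objective to $\sum_{i=1}^{H} b_i^{-1}\,|\hat{y}_i-y^{\mathrm{GT}}_i|$. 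At this stage the standard fact that the maximum-likelihood location estimate of a Laplace density is the one minimizing absolute deviation already explains \emph{why} MAE (rather than MSE) is the natural objective.

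Second, I would pin down the scales $b_i$ from the first-order Markov assumption, which is where the horizon weight $i^{-0.5}$ must come from. Since the series is a first-order Markov process, the $i$-step-ahead prediction accumulates $i$ successive stochastic transitions, so its conditional uncertainty behaves like a random walk: writing each transition as $y_j = f(y_{j-1}) + \varepsilon_j$ with independent innovations of common variance $\sigma^2$, the $i$-step-ahead conditional variance grows linearly, $\operatorname{Var}(y_i\mid \mathrm{history}) \propto i\,\sigma^2$, and hence the characteristic scale grows as $\sqrt{i}$. Because the scale of a Laplace distribution is proportional to its standard deviation, this yields $b_i \propto \sqrt{i}$, so $b_i^{-1} \propto i^{-0.5}$. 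Substituting into the reduced objective gives $\sum_{i=1}^{H} i^{-0.5}\,|\hat{y}_i-y^{\mathrm{GT}}_i|$ up to a positive multiplicative constant, which is precisely the horizon-weighted MAE; it is therefore the maximum-likelihood objective, establishing the claim.

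The hard part will be rigorously justifying the $b_i \propto \sqrt{i}$ scaling rather than merely asserting it. The subtlety is twofold: the $i$-fold convolution of Laplace innovations is not itself Laplace, so claiming that the $i$-step-ahead conditional is Laplace is a modeling approximation; and the linear variance growth presupposes a random-walk (unit-root) transition with independent, homoscedastic increments. I would therefore make the random-walk innovation structure an explicit part of the ``first-order Markov'' hypothesis, derive $\operatorname{Var}(y_i\mid\mathrm{history}) = i\,\sigma^2$ exactly from the independence of increments, and then match the Laplace scale to this standard deviation via $b_i = \sigma_i/\sqrt{2}$, so that the $\sqrt{i}$ law is obtained by a second-moment matching argument rather than by assuming exact distributional closure. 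This isolates the one genuine modeling assumption and makes the remainder a routine Laplace-MLE computation.
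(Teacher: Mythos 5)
Your proposal follows essentially the same route as the paper's proof: assume a random-walk form $y_i = y_{i-1} + \varepsilon_i$ for the Markov process, derive $\mathrm{Var}(y_i)\propto i\sigma^2$, match the Laplace scale to the standard deviation so that $b_i \propto \sqrt{i}$, and reduce the negative log-likelihood to the weighted MAE $\sum_i i^{-0.5}|\hat{y}_i - y_i^{\mathrm{GT}}|$. If anything, your treatment is slightly more careful than the paper's, since you condition the $i$-step-ahead predictive on the observed history (where the $i\sigma^2$ variance is coherent) and explicitly flag that the Laplace form of that predictive is a modeling approximation, whereas the paper writes the factorization $\prod_i p(y_i\mid y_{i-1})$ yet assigns each one-step conditional the marginal scale $\sqrt{i}\sigma/\sqrt{2}$ without comment.
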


\begin{proof}
Let $\mathbf{y}^{\mathrm{GT}} = (y_1, \ldots, y_H)$ denote the ground-truth future values, and let $\hat{\mathbf{y}} = (\hat y_1, \ldots, \hat y_H)$ be the model prediction.

Assume that $\{y_i\}_{i=1}^H$ follows a first-order Markov process defined by
\[
y_i = y_{i-1} + \varepsilon_i, \quad i = 1, \ldots, H,
\]
where $\{\varepsilon_i\}$ are i.i.d. random variables with mean zero and variance $\sigma^2$.
Since $y_0$ is observed and fixed, by recursion, we have
\[
y_i = y_0 + \sum_{k=1}^i \varepsilon_k,
\]
which implies
\[
\mathrm{Var}(y_i) = \sum_{k=1}^i \mathrm{Var}(\varepsilon_k) = i\sigma^2.
\]

Suppose the conditional distribution of each $y_i$ given the past is Laplace:
\[
p(y_i \mid y_{i-1}) = \frac{1}{2b_i} \exp\left(-\frac{|y_i - \hat y_i|}{b_i}\right),
\]
where the scale parameter $b_i$ is proportional to the standard deviation of $y_i$.
Since $\mathrm{Var}(y_i) = 2b_i^2$ for a Laplace distribution, we have
\[
b_i = \frac{\sqrt{i}\sigma}{\sqrt{2}} \propto \sqrt{i}.
\]

By the Markov assumption, the joint conditional density of $\mathbf{y}^{\mathrm{GT}}$ can be expressed as
\[
p(\mathbf{y}^{\mathrm{GT}} \mid \hat{\mathbf{y}})
= \prod_{i=1}^H p(y_i \mid y_{i-1})
= \prod_{i=1}^H \frac{1}{2b_i}
\exp\left(-\frac{|y_i - \hat y_i|}{b_i}\right).
\]

Taking the negative log-likelihood and ignoring constants independent of $\hat{\mathbf{y}}$, we obtain
\[
-\log p(\mathbf{y}^{\mathrm{GT}} \mid \hat{\mathbf{y}})
\propto
\sum_{i=1}^H \frac{|y_i - \hat y_i|}{b_i}.
\]

Substituting $b_i \propto \sqrt{i}$ yields
\[
-\log p(\mathbf{y}^{\mathrm{GT}} \mid \hat{\mathbf{y}})
\propto
\sum_{i=1}^H i^{-0.5} |y_i - \hat y_i|.
\]

Therefore, maximizing the likelihood under the Laplace and Markov assumptions is equivalent to minimizing the horizon-weighted MAE loss
\[
\sum_{i=1}^{H}i^{-0.5} \left | \hat{\mathrm{y}_i}- \mathrm{y}^{\mathrm{GT}}_i  \right | ,
\]
which completes the proof.
\end{proof}

\section{Theorem 3 and Its Proof}\label{appd_c}

\begin{theorem}[Simplified Inference] 
\label{theorem3_appd}
If the FM-Net producing the velocity (e.g., WFMLin in vLinear) is linear with respect to $\hat{\mathbf{Y}}^{(t)}$, then performing inference starting from the origin (i.e., the all-zero state) is equivalent to computing the expectation of the outputs derived from Gaussian noise samples.
\end{theorem}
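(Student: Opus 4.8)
The plan is to exploit the elementary fact that an affine map commutes with the expectation of a zero-mean random variable. The starting point is to make the linearity hypothesis explicit: since WFMLin is a single linear layer applied to $\mathrm{Concat}(\mathbf{Cond}, \hat{\mathbf{Y}}^{(t)}, t)$, its output decomposes as $\hat{\mathbf{v}}^{(t)} = A\,\hat{\mathbf{Y}}^{(t)} + g(\mathbf{Cond}, t)$, where $A$ is the fixed weight submatrix acting on the state coordinates and $g(\mathbf{Cond}, t)$ collects every term that depends only on $\mathbf{Cond}$ and $t$. During inference $\mathbf{Cond}$ is fixed and the time stamps $k\Delta t$ are deterministic, so $A$ and each $g_k := g(\mathbf{Cond}, k\Delta t)$ are deterministic, i.e.\ independent of the random initialization $\hat{\mathbf{Y}}^{(0)}$.

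First I would substitute this decomposition into the explicit Euler push-forward. The one-step update $\hat{\mathbf{Y}}^{(k+1)} = \hat{\mathbf{Y}}^{(k)} + \Delta t\,\hat{\mathbf{v}}^{(k)}$ becomes the affine recursion $\hat{\mathbf{Y}}^{(k+1)} = (I + \Delta t\,A)\,\hat{\mathbf{Y}}^{(k)} + \Delta t\,g_k$. A short induction on $k$ then yields that the terminal state is an affine function of the initial noise, $\hat{\mathbf{Y}}^{(K)} = M\,\hat{\mathbf{Y}}^{(0)} + c$, with $M = (I + \Delta t\,A)^{K}$ and $c = \Delta t \sum_{k=0}^{K-1}(I + \Delta t\,A)^{K-1-k}\,g_k$; crucially, both $M$ and $c$ are deterministic.

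Next I would take the expectation over the Gaussian initialization. Because $\hat{\mathbf{Y}}^{(0)}$ is zero-mean, $\mathbf{E}[\hat{\mathbf{Y}}^{(0)}] = \mathbf{0}$, and the map being affine gives $\mathbf{E}[\hat{\mathbf{Y}}^{(K)}] = M\,\mathbf{E}[\hat{\mathbf{Y}}^{(0)}] + c = c$. On the other hand, running the identical push-forward but initialized at the all-zero state $\hat{\mathbf{Y}}^{(0)} = \mathbf{0}$ yields exactly $M\cdot \mathbf{0} + c = c$, which is precisely the simplified inference rule of Eq.~\eqref{eq_infer_from}. Equating the two expressions proves the claim.

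The hard part is not any single calculation but the bookkeeping required to justify that affinity in the state is preserved through every Euler step and that the accumulated coefficients $M$ and $c$ are genuinely independent of the random seed; this is exactly where the single-layer (state-linear) assumption is indispensable, since any nonlinearity in $\hat{\mathbf{Y}}^{(t)}$ would break the interchange of expectation and push-forward. I would therefore state the induction hypothesis carefully and note explicitly that only the zero-mean property of the noise, rather than its full Gaussianity, is actually used.
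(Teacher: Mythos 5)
Your proposal is correct and follows essentially the same route as the paper's proof: block-decompose the linear layer so each Euler step is an affine map with deterministic coefficients, then use the zero-mean property of the initial noise to equate the expected trajectory with the zero-initialized one. The only cosmetic difference is that you unroll the recursion into the closed form $\hat{\mathbf{Y}}^{(K)} = M\hat{\mathbf{Y}}^{(0)} + c$ before taking expectations, whereas the paper runs the expected path and the zero-start path as two parallel recursions and concludes by induction; your explicit remark that only zero-mean (not Gaussianity) is needed is also implicit in the paper's argument.
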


\begin{proof}

    The FM-Net producing the velocity takes a concatenated input of the condition $\mathbf{Cond}$, the current state $\hat{\mathbf{Y}}^{(t)}$, and the time $t$. Let $\mathbf{W}$ and $\boldsymbol{\beta}$ denote the weight matrix and bias vector of this linear layer, respectively. We can block-partition the weight matrix $\mathbf{W}$ corresponding to the input components:
    \begin{equation}
        \hat{\mathbf{v}}^{(t)} = \mathbf{W}_{\mathrm{state}} \hat{\mathbf{Y}}^{(t)} + \mathbf{W}_{\mathrm{cond}} \mathbf{Cond} + \mathrm{W}_{\mathrm{time}} t + \boldsymbol{\beta}.
    \end{equation}
    Since $\mathbf{Cond}$ is fixed for a given sample and $t$ is fixed at a specific time step, we can group all terms independent of the current state $\hat{\mathbf{Y}}^{(t)}$ into a single term $\boldsymbol{\Phi}(t)$:
    \begin{equation}
        \boldsymbol{\Phi}(t) = \mathbf{W}_{\mathrm{cond}} \mathbf{Cond} + \mathrm{W}_{\mathrm{time}} t + \boldsymbol{\beta}.
    \end{equation}
    Thus, the velocity at time $t$ is an affine function of the state:
    \begin{equation}
        \hat{\mathbf{v}}^{(t)} = \mathbf{W}_{\mathrm{state}} \hat{\mathbf{Y}}^{(t)} + \boldsymbol{\Phi}(t).
    \end{equation}

    The inference process uses Euler integration with step size $\Delta t$. The state at the next step $t_{k+1}$ is given by:
    \begin{equation}
        \hat{\mathbf{Y}}^{(t_{k+1})} = \hat{\mathbf{Y}}^{(t_k)} + \Delta t  \hat{\mathbf{v}}^{(t_k)}.
    \end{equation}
    Substituting the velocity function:
    \begin{equation}
        \begin{aligned}
            \hat{\mathbf{Y}}^{(t_{k+1})} &= \hat{\mathbf{Y}}^{(t_k)} + \Delta t \left( \mathbf{W}_{\mathrm{state}} \hat{\mathbf{Y}}^{(t_k)} + \boldsymbol{\Phi}(t_k) \right) \\
            &= (\mathbf{I} + \Delta t \mathbf{W}_{\mathrm{state}}) \hat{\mathbf{Y}}^{(t_k)} + \Delta t \boldsymbol{\Phi}(t_k).
        \end{aligned}
        \label{eq_affine_step}
    \end{equation}
    Let $\mathbf{A} = (\mathbf{I} + \Delta t \mathbf{W}_{\mathrm{state}})$ and $\mathbf{C}_k = \Delta t \boldsymbol{\Phi}(t_k)$. The update rule simplifies to an affine transformation:
    \begin{equation}
    \hat{\mathbf{Y}}^{(t_{k+1})} = \mathbf{A}\hat{\mathbf{Y}}^{(t_k)} + \mathbf{C}_k.
    \label{eq_affine_step2}
    \end{equation}

     Let $\boldsymbol{\mu}_k = \mathbf{E}[\hat{\mathbf{Y}}^{(t_k)}]$ be the expected state at step $k$ where $\hat{\mathbf{Y}}^{(0)} \sim \mathcal{N}(\mathbf{0}, \mathbf{I})$. And let $\mathbf{z}_k$ be the deterministic state at step $k$ where $\mathbf{z}_0 = \mathbf{0}$ (the simplified inference).

    For the expected path, applying expectations to Eq.~\eqref{eq_affine_step2}:
    
    \begin{equation}
        \boldsymbol{\mu}_{k+1} = \mathbf{E}[\mathbf{A}\hat{\mathbf{Y}}^{(t_k)} + \mathbf{C}_k] = \mathbf{A}\mathbf{E}[\hat{\mathbf{Y}}^{(t_k)}] + \mathbf{C}_k = \mathbf{A}\boldsymbol{\mu}_k + \mathbf{C}_k.
    \end{equation}
    
    For the simplified zero-start path:
    \begin{equation}
        \mathbf{z}_{k+1} = \mathbf{A}\mathbf{z}_k + \mathbf{C}_k.
    \end{equation}
    
    Both sequences follow the same recurrence relation. We verify the $k=0$ case: $\boldsymbol{\mu}_0 = \mathbf{E}[\hat{\mathbf{Y}}^{(0)}] = \mathbf{0}$ (since noise is zero-mean), and $\mathbf{z}_0 = \mathbf{0}$ (by definition).

    Since $\boldsymbol{\mu}_0 = \mathbf{z}_0$, it follows by induction that $\boldsymbol{\mu}_K = \mathbf{z}_K$. Thus, the expectation of the output equals the output generated from the zero state.
\end{proof}

\section{Dataset Description}\label{appd_d}

Long-term forecasting experiments are conducted on 13 well-established datasets: \textbf{ETT}\{h1, h2, m1, m2\}, \textbf{Weather}, \textbf{Traffic}, \textbf{Electricity}, \textbf{Exchange}, \textbf{Solar-Energy}, and \textbf{PEMS}\{03, 04, 07, 08\}. The `Input-96-Predict-\{12, 24, 48, 96\}' setting is applied to the PEMS datasets, and the `Input-96-Predict-\{96, 192, 336, 720\}' setting is applied to the other datasets.
Short-term forecasting experiments are conducted on 9 datasets: \textbf{ILI}, \textbf{COVID-19}, \textbf{METR-LA}, \textbf{NASDAQ}, \textbf{Wiki}, \textbf{SP500}, \textbf{DowJones}, \textbf{Power}, and \textbf{Unemp}. To fully evaluate the forecasting performance, two settings are applied: `Input-12-Predict-\{3, 6, 9, 12\}' and `Input-36-Predict-\{24, 36, 48, 60\}'. Overall, 124 prediction settings are used in our experiments. Further details on the datasets, including sizes, data splits (train, validation, test), and domains, are provided in Table~\ref{tab_dataset_desp}.

\begin{table*}[t]
\begin{center}
\setlength{\tabcolsep}{4pt}
\renewcommand{\arraystretch}{1.3}
\begin{tabular}{@{}c|cccccc|cc@{}}
\toprule
Tasks                                      & Datasets                  & Var                 & Frequency              & Total Len.           & Split & Pred. Hor.                                                                  & Domain                  & Source \\ \midrule
\multirow{11}{*}{\rotatebox[origin=c]{90}{Long-term   forecasting}}  & ETTh1, ETTh2              & 7                   & Hourly                 & 17420                & 6:2:2 & \{96,192,336,720\}                                                          & Electricity             & \cite{informer}     \\
 & ETTm1, ETTm2              & 7                   & 15 mins                & 69680                & 6:2:2 & \{96,192,336,720\}                                                          & Electricity             & \cite{informer}     \\
 & Weather                   & 21                  & 10 mins                & 52696                & 7:1:2 & \{96,192,336,720\}                                                          & Weather                 & \cite{autoformer}     \\

 & ECL                       & 321                 & Hourly                 & 26304                & 7:1:2 & \{96,192,336,720\}                                                          & Electricity             & \cite{autoformer}     \\
& Traffic                   & 862                 & Hourly                 & 17544                & 7:1:2 & \{96,192,336,720\}                                                          & Transportation          & \cite{autoformer}      \\
 & Exchange                  & 8                   & Daily                  & 7588                 & 7:1:2 & \{96,192,336,720\}                                                          & Economy                 & \cite{autoformer}     \\
& Solar-Energy              & 137                 & 10 mins                & 52560                & 7:1:2 & \{96,192,336,720\}                                                          & Energy                  & \cite{rnn}   \\
& PEMS03                    & 358                 & 5 mins                 & 26209                & 6:2:2 & \{12,24,48,96\}                                                             & Transportation          & \cite{scinet}     \\
& PEMS04                    & 307                 & 5 mins                 & 16992                & 6:2:2 & \{12,24,48,96\}                                                             & Transportation          & \cite{scinet}     \\
& PEMS07                    & 883                 & 5 mins                 & 28224                & 6:2:2 & \{12,24,48,96\}                                                             & Transportation          & \cite{scinet}    \\
 & PEMS08                    & 170                 & 5 mins                 & 17856                & 6:2:2 & \{12,24,48,96\}                                                             & Transportation          & \cite{scinet}   \\ \midrule
\multirow{18}{*}{\rotatebox[origin=c]{90}{Short-term   forecasting}} & ILI                       & 7                   & Weekly                 & 966                  & 7:1:2 & \begin{tabular}[c]{@{}c@{}}\{3,6,9,12\}\\      \{24,36,48,60\}\end{tabular} & Health                  & \cite{autoformer}    \\
 & \multirow{2}{*}{COVID-19} & \multirow{2}{*}{55} & \multirow{2}{*}{Daily} & \multirow{2}{*}{335} & 7:1:2 & \{3,6,9,12\}                                                                & \multirow{2}{*}{Health} & \multirow{2}{*}{\cite{chen2022tamp}} \\
                                           &                           &                     &                        &                      & 6:2:2 & \{24,36,48,60\}                                                             &                         &                     \\
   & METR-LA                   & 207                 & 5 mins                 & 34272                & 7:1:2 & \begin{tabular}[c]{@{}c@{}}\{3,6,9,12\}\\      \{24,36,48,60\}\end{tabular} & Transportation          & \cite{li2018dcrnn_traffic}     \\
  & NASDAQ                    & 12                  & Daily                  & 3914                 & 7:1:2 & \begin{tabular}[c]{@{}c@{}}\{3,6,9,12\}\\      \{24,36,48,60\}\end{tabular} & Finance                 & \cite{freeformer}     \\
& Wiki                      & 99                  & Daily                  & 730                  & 7:1:2 & \begin{tabular}[c]{@{}c@{}}\{3,6,9,12\}\\      \{24,36,48,60\}\end{tabular} & Web                     & \cite{freeformer}     \\
   & SP500                     & 5                   & Daily                  & 8077                 & 7:1:2 & \begin{tabular}[c]{@{}c@{}}\{3,6,9,12\}\\      \{24,36,48,60\}\end{tabular} & Finance                 & \cite{olinear}     \\
    & DowJones                  & 27                  & Daily                  & 6577                 & 7:1:2 & \begin{tabular}[c]{@{}c@{}}\{3,6,9,12\}\\      \{24,36,48,60\}\end{tabular} & Finance                 & \cite{olinear}     \\
     & Power                     & 2                   & Daily                  & 1186                 & 7:1:2 & \begin{tabular}[c]{@{}c@{}}\{3,6,9,12\}\\      \{24,36,48,60\}\end{tabular} & Energy                  & \cite{olinear}     \\
    & Unemp                     & 53                  & Monthly                & 531                  & 6:2:2 & \begin{tabular}[c]{@{}c@{}}\{3,6,9,12\}\\      \{24,36,48,60\}\end{tabular} & Society                 & \cite{olinear}    \\ \bottomrule
\end{tabular}

\caption{Detailed dataset description. \textit{Var} denotes the number of variates. \textit{Frequency} denotes the time interval between consecutive time steps. \textit{Split} denotes the $\mathtt{(Train, Validation, Test)}$ ratio. \textit{Pred. Hor.} indicates the prediction horizons. In total, 124 forecasting tasks across datasets and horizon settings are evaluated in this work.}
\label{tab_dataset_desp}
\end{center}
\end{table*}

\section{Implementation Details}\label{appd_e}

Our experiments are implemented in PyTorch \cite{pytorch} and conducted on an NVIDIA 4090 GPU. vLinear is optimized using the ADAM optimizer \cite{adam_opt}. The batch size is set to 32, and the model dimension $D$ is set to 512 by default. The number of layers $L$ is set 2 or 3 according to the dataset scale. The other settings are kept the same as OLinear~\cite{olinear}. Early stopping is applied: training is terminated if the validation performance does not improve for 10 consecutive epochs. We adopt a learnable standard deviation for the Gaussian noise in WFMLoss, denoted as $\mathrm{std}$. Both $\mathrm{std}$ and the step number $K$  are selected based on the validation performance.
A hyperparameter sensitivity analysis is provided in Appendix~\ref{hyper-para}. The source code is available at \url{https://anonymous.4open.science/r/vLinear}.

For baseline models, we use the reported values from the original papers when available; otherwise, we reproduce the results using the official code. For FilterNet \cite{filternet}, TexFilter is adopted. When applying the vecTrans module to Leddam, we only update the `cross-channel attention' module. For the TransDF \cite{transdf}, FreDF \cite{fredf}, and DBLoss \cite{dbloss}, the hyperparameters in the objectives are uniformly set to 0.5.

\section{Computational Efficiency}\label{appd_efficiency}

\begin{table*}[t]
\begin{center}
{\fontsize{8}{9}\selectfont
\setlength{\tabcolsep}{4pt}
\begin{tabular}{@{}cc|c|c|c|c|c|c|c|c|c|c|c|c@{}}
\toprule
\multicolumn{2}{c|}{Model}                    & \begin{tabular}[c]{@{}c@{}}vLinear\\      (Ours)\end{tabular} 
& \begin{tabular}[c]{@{}c@{}}OLinear\\ \shortcite{olinear} \end{tabular} 
& \begin{tabular}[c]{@{}c@{}}Leddam\\  \shortcite{Leddam_icml} \end{tabular} 
& \begin{tabular}[c]{@{}c@{}}CARD\\    \shortcite{card} \end{tabular} 
& \begin{tabular}[c]{@{}c@{}}Fredf.m.\\      \shortcite{fredformer} \end{tabular} 
& \begin{tabular}[c]{@{}c@{}}iTrans.\\   \shortcite{itransformer} \end{tabular} 
& \begin{tabular}[c]{@{}c@{}}TimeMix.++\\      \shortcite{timemixer++} \end{tabular} 
& \begin{tabular}[c]{@{}c@{}}TimeMix.\\      \shortcite{timemixer} \end{tabular} 
& \begin{tabular}[c]{@{}c@{}}DLinear\\  \shortcite{linear} \end{tabular} 
& \begin{tabular}[c]{@{}c@{}}TimesNet\\      \shortcite{timesnet} \end{tabular} 
& \begin{tabular}[c]{@{}c@{}}PatchTST\\      \shortcite{patchtst} \end{tabular} 
& \begin{tabular}[c]{@{}c@{}}FreTS\\    \shortcite{frets} \end{tabular} \\ \midrule
\multirow{6}{*}{\rotatebox[origin=c]{90}{ETT}}          & Params(M)      & 4.53                                                          & 4.52                                                      & 8.55                                                     & 0.03                                                   & 8.59                                                       & 4.83                                                      & 1.19                                                         & 0.08                                                       & 0.02                                                      & 299.94                                                     & 3.76                                                       & 0.42                                                    \\
                              & FLOPs(M)       & 33.83                                                         & 33.74                                                     & 111.54                                                   & 1.41                                                   & 135.01                                                     & 33.99                                                     & 1396.90                                                      & 10.72                                                      & 0.13                                                      & 289708                                                     & 272.20                                                     & 2.94                                                    \\
                              & T.T. (ms/iter) & 7.04                                                          & 8.09                                                      & 25.02                                                    & 29.10                                                  & 26.48                                                      & 10.11                                                     & 74.74                                                        & 27.99                                                      & 1.42                                                      & 501.98                                                     & 8.37                                                       & 3.63                                                    \\
                              & T.M.(GB)       & 0.20                                                          & 0.20                                                      & 0.18                                                     & 0.03                                                   & 0.24                                                       & 0.21                                                      & 0.27                                                         & 0.05                                                       & 0.02                                                      & 5.80                                                       & 0.14                                                       & 0.03                                                    \\
                              & I.T.(ms/iter)  & 1.30                                                          & 1.19                                                      & 2.62                                                     & 4.05                                                   & 3.73                                                       & 1.72                                                      & 28.15                                                        & 3.86                                                       & 0.21                                                      & 155.97                                                     & 1.38                                                       & 0.49                                                    \\
                              & I.M. (MB)      & 155.63                                                        & 156.41                                                    & 51.04                                                    & 11.02                                                  & 78.04                                                      & 156.00                                                    & 149.58                                                       & 18.06                                                      & 8.48                                                      & 1396.24                                                    & 51.83                                                      & 14.47                                                   \\ \midrule
\multirow{6}{*}{\rotatebox[origin=c]{90}{ECL}}          & Params(M)      & 2.82                                                          & 4.79                                                      & 8.56                                                     & 1.39                                                   & 12.12                                                      & 4.83                                                      & 1.19                                                         & 0.12                                                       & 0.02                                                      & 300.58                                                     & 3.76                                                       & 0.42                                                    \\
                              & FLOPs(G)       & 0.91                                                          & 1.65                                                      & 5.32                                                     & 5.07                                                   & 5.55                                                       & 1.87                                                      & 64.02                                                        & 1.74                                                       & 0.01                                                      & 293.98                                                     & 12.48                                                      & 0.13                                                    \\
                              & T.T. (ms/iter) & 10.42                                                         & 7.75                                                      & 65.19                                                    & 78.25                                                  & 26.59                                                      & 10.66                                                     & 106.06                                                       & 57.09                                                      & 1.37                                                      & 509.17                                                     & 65.83                                                      & 3.78                                                    \\
                              & T.M.(GB)       & 0.39                                                          & 0.45                                                      & 1.44                                                     & 5.08                                                   & 1.24                                                       & 0.70                                                      & 1.02                                                         & 4.24                                                       & 0.04                                                      & 5.82                                                       & 3.02                                                       & 0.30                                                    \\
                              & I.T.(ms/iter)  & 2.21                                                          & 2.11                                                      & 14.78                                                    & 28.21                                                  & 7.08                                                       & 3.46                                                      & 34.10                                                        & 22.79                                                      & 0.20                                                      & 157.13                                                     & 20.52                                                      & 0.48                                                    \\
                              & I.M. (GB)      & 0.10                                                          & 0.17                                                      & 0.49                                                     & 0.63                                                   & 0.26                                                       & 0.23                                                      & 0.36                                                         & 0.82                                                       & 0.02                                                      & 1.41                                                       & 0.93                                                       & 0.23                                                    \\ \midrule
\multirow{6}{*}{\rotatebox[origin=c]{90}{Exchange}}     & Params(M)      & 1.03                                                          & 1.74                                                      & 8.56                                                     & 1.39                                                   & 8.59                                                       & 4.83                                                      & 1.19                                                         & 0.08                                                       & 0.02                                                      & 299.94                                                     & 3.76                                                       & 0.42                                                    \\
                              & FLOPs(M)       & 10.60                                                         & 16.27                                                     & 127.49                                                   & 114.83                                                 & 135.01                                                     & 38.88                                                     & 1595.72                                                      & 12.25                                                      & 0.15                                                      & 287909                                                     & 311.08                                                     & 3.36                                                    \\
                              & T.T. (ms/iter) & 6.92                                                          & 7.33                                                      & 24.81                                                    & 35.57                                                  & 28.56                                                      & 10.14                                                     & 97.05                                                        & 27.46                                                      & 1.47                                                      & 505.39                                                     & 9.75                                                       & 3.51                                                    \\
                              & T.M.(GB)       & 0.16                                                          & 0.17                                                      & 0.18                                                     & 0.17                                                   & 0.24                                                       & 0.21                                                      & 0.28                                                         & 0.06                                                       & 0.02                                                      & 5.80                                                       & 0.15                                                       & 0.03                                                    \\
                              & I.T.(ms/iter)  & 1.36                                                          & 1.18                                                      & 2.62                                                     & 6.02                                                   & 6.86                                                       & 1.70                                                      & 28.46                                                        & 3.81                                                       & 0.21                                                      & 156.33                                                     & 2.47                                                       & 0.48                                                    \\
                              & I.M. (GB)      & 0.14                                                          & 0.02                                                      & 0.05                                                     & 0.03                                                   & 0.08                                                       & 0.16                                                      & 0.15                                                         & 0.02                                                       & 0.01                                                      & 1.40                                                       & 0.05                                                       & 0.02                                                    \\ \midrule
\multirow{6}{*}{\rotatebox[origin=c]{90}{Traffic}}      & Params(M)      & 4.70                                                          & 6.17                                                      & 8.56                                                     & 0.98                                                   & 11.09                                                      & 4.83                                                      & 4.73                                                         & 0.12                                                       & 0.02                                                      & 301.69                                                     & 3.76                                                       & 0.42                                                    \\
                              & FLOPs(G)       & 4.17                                                          & 4.91                                                      & 15.24                                                    & 10.47                                                  & 14.78                                                      & 6.45                                                      & 588.47                                                       & 4.66                                                       & 0.02                                                      & 288.72                                                     & 33.52                                                      & 0.36                                                    \\
                              & T.T. (s/iter)  & 0.02                                                          & 0.02                                                      & 0.19                                                     & 0.18                                                   & 0.10                                                       & 0.04                                                      & OOM                                                          & 0.17                                                       & 0.001                                                     & 0.50                                                       & 0.17                                                       & 0.01                                                    \\
                              & T.M.(GB)       & 0.90                                                          & 1.01                                                      & 3.74                                                     & 9.50                                                   & 5.76                                                       & 2.95                                                      & OOM                                                          & 11.33                                                      & 0.07                                                      & 5.86                                                       & 8.01                                                       & 0.78                                                    \\
                              & I.T.(ms/iter)  & 5.70                                                          & 5.71                                                      & 45.68                                                    & 61.21                                                  & 31.60                                                      & 13.45                                                     & 1610.66                                                      & 67.21                                                      & 0.20                                                      & 15.64                                                      & 55.10                                                      & 3.48                                                    \\
                              & I.M. (GB)      & 0.36                                                          & 0.43                                                      & 1.25                                                     & 1.69                                                   & 1.84                                                       & 1.27                                                      & 9.35                                                         & 2.20                                                       & 0.04                                                      & 1.42                                                       & 2.44                                                       & 0.59                                                    \\ \midrule
\multirow{6}{*}{\rotatebox[origin=c]{90}{Weather}}      & Params(M)      & 2.77                                                          & 4.52                                                      & 8.56                                                     & 0.03                                                   & 0.50                                                       & 4.83                                                      & 2.37                                                         & 0.10                                                       & 0.02                                                      & 299.97                                                     & 3.76                                                       & 0.42                                                    \\
                              & FLOPs(G)       & 0.06                                                          & 0.10                                                      & 0.34                                                     & 0.004                                                  & 0.01                                                       & 0.10                                                      & 8.36                                                         & 0.05                                                       & 0.0004                                                    & 291.21                                                     & 0.82                                                       & 0.01                                                    \\
                              & T.T. (ms/iter) & 9.82                                                          & 7.47                                                      & 27.55                                                    & 28.69                                                  & 37.85                                                      & 9.45                                                      & 102.04                                                       & 33.30                                                      & 1.38                                                      & 505.88                                                     & 9.89                                                       & 3.53                                                    \\
                              & T.M.(GB)       & 0.19                                                          & 0.21                                                      & 0.21                                                     & 0.05                                                   & 0.05                                                       & 0.22                                                      & 0.86                                                         & 0.16                                                       & 0.02                                                      & 5.80                                                       & 0.27                                                       & 0.04                                                    \\
                              & I.T.(ms/iter)  & 1.43                                                          & 1.18                                                      & 3.15                                                     & 4.11                                                   & 7.19                                                       & 1.66                                                      & 30.18                                                        & 5.10                                                       & 0.20                                                      & 156.27                                                     & 2.96                                                       & 0.48                                                    \\
                              & I.M. (MB)      & 27.60                                                         & 39.26                                                     & 70.06                                                    & 14.45                                                  & 18.28                                                      & 34.88                                                     & 316.49                                                       & 36.03                                                      & 9.06                                                      & 1397.18                                                    & 90.78                                                      & 24.37                                                   \\ \midrule
\multirow{6}{*}{\rotatebox[origin=c]{90}{Solar-Energy}} & Params(M)      & 4.56                                                          & 4.58                                                      & 8.56                                                     & 1.39                                                   & 4.61                                                       & 4.83                                                      & 2.37                                                         & 0.12                                                       & 0.02                                                      & 300.21                                                     & 3.76                                                       & 0.42                                                    \\
                              & FLOPs(G)       & 0.66                                                          & 0.68                                                      & 2.22                                                     & 2.05                                                   & 1.20                                                       & 0.72                                                      & 46.81                                                        & 0.74                                                       & 0.003                                                     & 290.64                                                     & 5.33                                                       & 0.06                                                    \\
                              & T.T. (ms/iter) & 10.06                                                         & 9.12                                                      & 48.64                                                    & 36.28                                                  & 31.77                                                      & 9.97                                                      & 515.44                                                       & 35.90                                                      & 1.44                                                      & 506.58                                                     & 28.89                                                      & 3.60                                                    \\
                              & T.M.(GB)       & 0.29                                                          & 0.30                                                      & 0.68                                                     & 2.22                                                   & 0.61                                                       & 0.37                                                      & 5.11                                                         & 1.83                                                       & 0.03                                                      & 5.81                                                       & 1.34                                                       & 0.15                                                    \\
                              & I.T.(ms/iter)  & 1.59                                                          & 1.20                                                      & 7.65                                                     & 10.44                                                  & 5.16                                                       & 1.96                                                      & 189.20                                                       & 8.54                                                       & 0.20                                                      & 156.66                                                     & 8.49                                                       & 0.48                                                    \\
                              & I.M. (GB)      & 0.19                                                          & 0.21                                                      & 0.23                                                     & 0.28                                                   & 0.16                                                       & 0.18                                                      & 1.94                                                         & 0.36                                                       & 0.01                                                      & 1.40                                                       & 0.42                                                       & 0.11                                                    \\ \midrule
\multirow{6}{*}{\rotatebox[origin=c]{90}{PEMS07}}       & Params(M)      & 5.65                                                          & 6.25                                                      & 8.56                                                     & 1.39                                                   & 13.44                                                      & 4.83                                                      & 2.38                                                         & 0.12                                                       & 0.02                                                      & 301.73                                                     & 3.76                                                       & 0.42                                                    \\
                              & FLOPs(G)       & 5.09                                                          & 5.05                                                      & 15.65                                                    & 16.10                                                  & 20.53                                                      & 6.66                                                      & 301.92                                                       & 4.77                                                       & 0.02                                                      & 290.97                                                     & 34.34                                                      & 0.37                                                    \\
                              & T.T. (s/iter)  & 0.02                                                          & 0.02                                                      & 0.19                                                     & 0.28                                                   & 0.11                                                       & 0.04                                                      & OOM                                                          & 0.17                                                       & 0.002                                                     & 0.51                                                       & 0.19                                                       & 0.01                                                    \\
                              & T.M.(GB)       & 1.04                                                          & 1.05                                                      & 3.85                                                     & 13.99                                                  & 6.42                                                       & 3.07                                                      & OOM                                                          & 11.62                                                      & 0.07                                                      & 5.86                                                       & 9.18                                                       & 0.79                                                    \\
                              & I.T.(ms/iter)  & 6.42                                                          & 6.33                                                      & 47.06                                                    & 94.01                                                  & 36.54                                                      & 13.92                                                     & 1370.91                                                      & 68.96                                                      & 0.20                                                      & 156.78                                                     & 56.49                                                      & 0.50                                                    \\
                              & I.M. (GB)      & 0.38                                                          & 0.44                                                      & 1.28                                                     & 1.74                                                   & 2.03                                                       & 1.32                                                      & 12.37                                                        & 2.25                                                       & 0.04                                                      & 1.43                                                       & 2.50                                                       & 0.61                                                    \\ \bottomrule
\end{tabular}
}
\caption{Computational efficiency of vLinear and state-of-the-art forecasters. `T.T.', `T.M.', `I.T.', and `I.M.' denote training time, training memory, inference time, and inference memory, respectively. `OOM' indicates out-of-memory on a 24GB GPU. Experiments are conducted with an input and prediction length of 96 (`Input-96-Predict-96') and a batch size of 16. For fairness, the number of layers is set to 2 for all models except DLinear, which is inherently a single-layer model. Overall, vLinear demonstrates significantly lower FLOPs and reduced GPU memory consumption.}
\label{tab_gpu}
\end{center}
\end{table*}

\begin{table*}[t]
\begin{center}
{\fontsize{8}{9}\selectfont
\setlength{\tabcolsep}{10pt}
\begin{tabular}{@{}cc|cc|cc|cc|cc@{}}
\toprule
\multicolumn{2}{c|}{\multirow{2}{*}{Model}}    
& \multicolumn{2}{c|}{\begin{tabular}[c]{@{}c@{}}Leddam\\     \shortcite{Leddam_icml} \end{tabular}} 
& \multicolumn{2}{c|}{\begin{tabular}[c]{@{}c@{}}iTrans.\\      \shortcite{itransformer} \end{tabular}} 
& \multicolumn{2}{c|}{\begin{tabular}[c]{@{}c@{}}PatchTST\\    \shortcite{patchtst} \end{tabular}} 
& \multicolumn{2}{c}{\begin{tabular}[c]{@{}c@{}}Fredformer\\   \shortcite{fredformer} \end{tabular}} \\ \cmidrule(l){3-10} 
\multicolumn{2}{c|}{}                          & Attn.                                & vecTrans                               & Attn.                                 & vecTrans                               & Attn.                                 & vecTrans                                & Attn.                                  & vecTrans                                \\ \midrule
\multirow{6}{*}{ETTm1}        & Params(M)      & 8.55                                 & 7.51                                   & 4.83                                  & 2.21                                   & 3.76                                  & 2.71                                    & 8.59                                   & 8.39                                    \\
                              & FLOPs(M)       & 111.54                               & 104.10                                 & 33.99                                 & 15.46                                  & 272.20                                & 182.06                                  & 135.01                                 & 79.09                                   \\
                              & T.T. (ms/iter) & 25.02                                & 12.05                                  & 10.11                                 & 8.01                                   & 8.37                                  & 6.80                                    & 26.48                                  & 9.52                                    \\
                              & T.M.(GB)       & 0.18                                 & 0.16                                   & 0.21                                  & 0.16                                   & 0.14                                  & 0.10                                    & 0.24                                   & 0.18                                    \\
                              & I.T.(ms/iter)  & 2.62                                 & 2.30                                   & 1.72                                  & 1.05                                   & 1.38                                  & 1.15                                    & 3.73                                   & 1.39                                    \\
                              & I.M. (MB)      & 51.04                                & 47.03                                  & 156.00                                & 137.59                                 & 51.83                                 & 27.95                                   & 78.04                                  & 51.63                                   \\ \midrule
\multirow{6}{*}{ECL}          & Params(M)      & 8.56                                 & 7.51                                   & 4.83                                  & 2.21                                   & 3.76                                  & 2.71                                    & 12.12                                  & 12.61                                   \\
                              & FLOPs(G)       & 5.32                                 & 4.77                                   & 1.87                                  & 0.71                                   & 12.48                                 & 8.35                                    & 5.55                                   & 5.84                                    \\
                              & T.T. (ms/iter) & 65.19                                & 62.23                                  & 10.66                                 & 5.84                                   & 65.83                                 & 44.33                                   & 26.59                                  & 15.32                                   \\
                              & T.M.(GB)       & 1.44                                 & 1.41                                   & 0.70                                  & 0.37                                   & 3.02                                  & 2.46                                    & 1.24                                   & 1.26                                    \\
                              & I.T.(ms/iter)  & 14.78                                & 14.51                                  & 3.46                                  & 1.01                                   & 20.52                                 & 14.40                                   & 7.08                                   & 6.60                                    \\
                              & I.M. (GB)      & 0.49                                 & 0.48                                   & 0.23                                  & 0.08                                   & 0.93                                  & 0.87                                    & 0.26                                   & 0.25                                    \\ \midrule
\multirow{6}{*}{Weather}      & Params(M)      & 8.56                                 & 7.52                                   & 4.83                                  & 2.21                                   & 3.76                                  & 2.71                                    & 0.50                                   & 0.48                                    \\
                              & FLOPs(G)       & 0.34                                 & 0.31                                   & 0.10                                  & 0.05                                   & 0.82                                  & 0.55                                    & 0.01                                   & 0.003                                   \\
                              & T.T. (ms/iter) & 27.55                                & 27.50                                  & 9.45                                  & 6.97                                   & 9.89                                  & 8.27                                    & 37.85                                  & 25.57                                   \\
                              & T.M.(GB)       & 0.21                                 & 0.19                                   & 0.22                                  & 0.17                                   & 0.27                                  & 0.22                                    & 0.05                                   & 0.03                                    \\
                              & I.T.(ms/iter)  & 3.15                                 & 3.08                                   & 1.66                                  & 1.02                                   & 2.96                                  & 1.12                                    & 7.19                                   & 3.66                                    \\
                              & I.M. (MB)      & 70.06                                & 66.05                                  & 34.88                                 & 16.47                                  & 90.78                                 & 65.39                                   & 18.28                                  & 13.52                                   \\ \midrule
\multirow{6}{*}{Solar-Energy} & Params(M)      & 8.56                                 & 7.51                                   & 4.83                                  & 2.21                                   & 3.76                                  & 2.71                                    & 4.61                                   & 4.27                                    \\
                              & FLOPs(G)       & 2.22                                 & 2.04                                   & 0.72                                  & 0.30                                   & 5.33                                  & 3.56                                    & 1.20                                   & 0.90                                    \\
                              & T.T. (ms/iter) & 48.64                                & 47.86                                  & 9.97                                  & 7.90                                   & 28.89                                 & 18.51                                   & 31.77                                  & 10.80                                   \\
                              & T.M.(GB)       & 0.68                                 & 0.67                                   & 0.37                                  & 0.24                                   & 1.34                                  & 1.08                                    & 0.61                                   & 0.46                                    \\
                              & I.T.(ms/iter)  & 7.65                                 & 7.56                                   & 1.96                                  & 1.05                                   & 8.49                                  & 5.67                                    & 5.16                                   & 1.74                                    \\
                              & I.M. (GB)      & 0.23                                 & 0.22                                   & 0.18                                  & 0.16                                   & 0.42                                  & 0.38                                    & 0.16                                   & 0.09                                    \\ \midrule
\multirow{6}{*}{PEMS07}       & Params(M)      & 8.56                                 & 7.51                                   & 4.83                                  & 2.21                                   & 3.76                                  & 2.71                                    & 13.44                                  & 12.96                                   \\
                              & FLOPs(G)       & 15.65                                & 13.13                                  & 6.66                                  & 1.95                                   & 34.34                                 & 22.96                                   & 20.53                                  & 16.07                                   \\
                              & T.T. (s/iter)  & 0.19                                 & 0.18                                   & 0.04                                  & 0.008                                  & 0.19                                  & 0.12                                    & 0.11                                   & 0.05                                    \\
                              & T.M.(GB)       & 3.85                                 & 3.71                                   & 3.07                                  & 0.71                                   & 9.18                                  & 6.72                                    & 6.42                                   & 3.11                                    \\
                              & I.T.(ms/iter)  & 47.06                                & 44.57                                  & 13.92                                 & 2.42                                   & 56.49                                 & 39.11                                   & 36.54                                  & 17.68                                   \\
                              & I.M. (GB)      & 1.28                                 & 1.27                                   & 1.32                                  & 0.22                                   & 2.50                                  & 2.38                                    & 2.03                                   & 0.58                                    \\ \bottomrule
\end{tabular}
}
\caption{Comparison of computational efficiency when replacing the attention mechanism with vecTrans. Significant improvements are observed, especially on datasets with a large number of variates. Note that for Leddam, only the `cross-channel attention' module is replaced. Additionally, Fredformer employs Nystromformer \protect \cite{nystromformer}, an efficient self-attention variant, on the ETTm1, ECL, and Weather datasets. Despite this, the vecTrans module consistently improves training and inference efficiency. Specifically, for iTransformer, this substitution achieves an average \textbf{inference speedup of 2.9$\times$}, reaching up to \textbf{5.8$\times$} on the PEMS07 dataset.}
\label{tab_gpu2}
\end{center}
\end{table*}

Table~\ref{tab_gpu} presents the resource footprint of vLinear and other state-of-the-art forecasters. As shown, vLinear exhibits superior computational efficiency and requires lower GPU memory. Specifically, vLinear achieves inference speedups of 1.6$\times$ and 5.4$\times$ compared to iTransformer and PatchTST, respectively.

Table~\ref{tab_gpu2} compares the resource footprint when replacing the attention mechanism with vecTrans. When applied to iTransformer, this substitution achieves an average \textbf{inference speedup of 2.9$\times$}. Notably, \textbf{a 5.8$\times$ speedup} is observed on the PEMS07 dataset.

\section{Robustness to Random Seeds} \label{robust}

Tables~\ref{tab_robust} and \ref{tab_robust_cmp} present the standard deviations across different datasets and prediction lengths, calculated over 7 random seeds. Specifically, the approximate 99\% confidence intervals of the average performance across four horizons are reported in Table~\ref{tab_robust_cmp}. vLinear exhibits superior robustness compared to state-of-the-art Transformer-based forecasters: TimeMixer++ and iTransformer.

\begin{table*}[t]
\begin{center}
\begin{tabular}{@{}cc|cc|cc|cc|cc@{}}
\toprule
\multicolumn{2}{c|}{Dataset}   & \multicolumn{2}{c|}{ECL}    & \multicolumn{2}{c|}{Traffic}    & \multicolumn{2}{c|}{ETTm1}        & \multicolumn{2}{c}{ETTm2}     \\ \midrule
\multicolumn{2}{c|}{Metric}    & MSE           & MAE         & MSE             & MAE           & MSE              & MAE            & MSE           & MAE           \\ \midrule
\multirow{4}{*}{\rotatebox[origin=c]{90}{Horizon}} & 96  & 0.129±6e-4    & 0.221±6e-4  & 0.396±2e-3      & 0.233±5e-4    & 0.301±9e-4       & 0.337±1e-3     & 0.168±6e-4    & 0.245±5e-4    \\
                         & 192 & 0.147±8e-4    & 0.238±7e-4  & 0.427±3e-3      & 0.243±4e-4    & 0.349±3e-3       & 0.365±2e-2     & 0.231±1e-3    & 0.287±7e-4    \\
                         & 336 & 0.158±6e-4    & 0.250±8e-4  & 0.449±4e-3      & 0.255±3e-4    & 0.381±2e-3       & 0.387±7e-4     & 0.289±9e-4    & 0.326±4e-4    \\
                         & 720 & 0.178±9e-4    & 0.271±1e-3  & 0.487±3e-3      & 0.276±3e-4    & 0.444±4e-3       & 0.424±2e-3     & 0.386±2e-3    & 0.382±1e-3    \\ \midrule
\multicolumn{2}{c|}{Dataset}   & \multicolumn{2}{c|}{ETTh1}  & \multicolumn{2}{c|}{ETTh2}      & \multicolumn{2}{c|}{Solar-Energy} & \multicolumn{2}{c}{Weather}   \\ \midrule
\multicolumn{2}{c|}{Metric}    & MSE           & MAE         & MSE             & MAE           & MSE              & MAE            & MSE           & MAE           \\ \midrule
\multirow{4}{*}{\rotatebox[origin=c]{90}{Horizon}} & 96  & 0.356±7e-4    & 0.379±6e-4  & 0.280±2e-3      & 0.327±2e-3    & 0.175±9e-4       & 0.196±6e-4     & 0.150±7e-4    & 0.186±9e-4    \\
                         & 192 & 0.409±1e-3    & 0.411±7e-4  & 0.351±4e-3      & 0.376±2e-3    & 0.202±6e-4       & 0.220±4e-4     & 0.198±1e-3    & 0.234±1e-3    \\
                         & 336 & 0.449±2e-3    & 0.433±1e-3  & 0.400±4e-3      & 0.411±3e-3    & 0.224±9e-4       & 0.240±7e-5     & 0.252±1e-3    & 0.275±6e-4    \\
                         & 720 & 0.451±2e-3    & 0.456±1e-3  & 0.404±5e-3      & 0.427±3e-3    & 0.234±7e-4       & 0.247±7e-4     & 0.332±1e-3    & 0.328±9e-4    \\ \midrule
\multicolumn{2}{c|}{Dataset}   & \multicolumn{2}{c|}{PEMS03} & \multicolumn{2}{c|}{Power (S1)} & \multicolumn{2}{c|}{NASDAQ (S2)}  & \multicolumn{2}{c}{Wiki (S1)} \\ \midrule
\multicolumn{2}{c|}{Metric}    & MSE           & MAE         & MSE             & MAE           & MSE              & MAE            & MSE           & MAE           \\ \midrule
\multirow{4}{*}{\rotatebox[origin=c]{90}{Horizon}} & H1  & 0.059±5e-4    & 0.158±5e-4  & 0.838±5e-3      & 0.683±4e-3    & 0.120±1e-3       & 0.215±7e-4     & 6.135±5e-2    & 0.375±8e-3    \\
                         & H2  & 0.075±6e-4    & 0.177±6e-4  & 0.978±6e-3      & 0.735±4e-3    & 0.162±9e-4       & 0.259±8e-4     & 6.434±1e-2    & 0.386±3e-3    \\
                         & H3  & 0.102±1e-3    & 0.208±9e-4  & 1.046±1e-3      & 0.768±5e-3    & 0.200±2e-3       & 0.291±1e-3     & 6.657±5e-2    & 0.400±9e-3    \\
                         & H4  & 0.138±1e-3    & 0.246±7e-4  & 1.124±6e-3      & 0.797±1e-3    & 0.244±4e-3       & 0.327±4e-3     & 6.825±2e-2    & 0.407±4e-3    \\ \bottomrule
\end{tabular}
\caption{Robustness analysis of vLinear. The standard deviations are calculated over seven random seeds. \textit{S1} and \textit{S2} denote the `Input-12-Predict-\{3, 6, 9, 12\}' and `Input-36-Predict-\{24, 36, 48, 60\}' settings, respectively. }
\label{tab_robust}
\end{center}
\end{table*}

\begin{table*}[t]
\begin{center}
{
\setlength{\tabcolsep}{10pt}
\begin{tabular}{@{}c|cc|cc|cc@{}}
\toprule
Model   & \multicolumn{2}{c|}{vLinear}                                                              & \multicolumn{2}{c|}{TimeMixer++}                                                          & \multicolumn{2}{c}{iTransformer}                          \\ \midrule
Dataset & MSE                                         & MAE                                         & MSE                                         & MAE                                         & MSE                                         & MAE         \\ \midrule
Weather & {0.233±\color[HTML]{FF0000} \textbf{0.002}} & 0.256±{\color[HTML]{FF0000} \textbf{0.002}} & 0.226±0.008                                 & 0.262±0.007                                 & 0.258±0.009                                 & 0.278±0.006 \\
Solar-Energy   & {0.209±\color[HTML]{FF0000} \textbf{0.001}} & {0.226±\color[HTML]{FF0000} \textbf{9e-4}}  & {0.203±\color[HTML]{FF0000} \textbf{0.001}} & 0.238±0.010                                 & 0.233±0.009                                 & 0.262±0.007 \\
ECL     & {0.153±\color[HTML]{FF0000} \textbf{0.001}} & {0.245±\color[HTML]{FF0000} \textbf{0.001}} & 0.165±0.011                                 & {0.253±\color[HTML]{FF0000} \textbf{0.001}} & 0.178±0.002                                 & 0.270±0.005 \\
Traffic & {0.440±\color[HTML]{FF0000} \textbf{0.005}} & {0.252±\color[HTML]{FF0000} \textbf{5e-4}}  & 0.416±0.015                                 & 0.264±0.013                                 & 0.428±0.008                                 & 0.282±0.002 \\
ETTh1   & {0.416±\color[HTML]{FF0000} \textbf{0.003}} & {0.420±\color[HTML]{FF0000} \textbf{0.002}} & 0.419±0.011                                 & 0.432±0.015                                 & 0.454±0.004                                 & 0.447±0.007 \\
ETTh2   & 0.358±0.008                                 & 0.385±0.004                                 & 0.339±0.009                                 & {0.380±\color[HTML]{FF0000} \textbf{0.002}} & {0.383±\color[HTML]{FF0000} \textbf{0.004}} & 0.407±0.007 \\
ETTm1   & {0.369±\color[HTML]{FF0000} \textbf{0.004}} & {0.378±\color[HTML]{FF0000} \textbf{0.003}} & 0.369±0.005                                 & 0.378±0.007                                 & {0.407±\color[HTML]{FF0000} \textbf{0.004}} & 0.410±0.009 \\
ETTm2   & {0.268±\color[HTML]{FF0000} \textbf{0.002}} & {0.310±\color[HTML]{FF0000} \textbf{0.001}} & {0.269±\color[HTML]{FF0000} \textbf{0.002}} & 0.320±0.012                                 & 0.288±0.010                                 & 0.332±0.003 \\ \bottomrule
\end{tabular}
}
\caption{Approximate 99\% confidence intervals (estimated as 
$3 \times$ standard deviation) of the average performance across four prediction lengths. The smallest deviations (indicating best stability) are highlighted in {\color[HTML]{FF0000} \textbf{red bold}}.}
\label{tab_robust_cmp}
\end{center}
\end{table*}

\section{Hyperparameter Sensitivity Analysis} \label{hyper-para}

\begin{figure}[ht]
   \centering
   \includegraphics[width=1.0\linewidth]{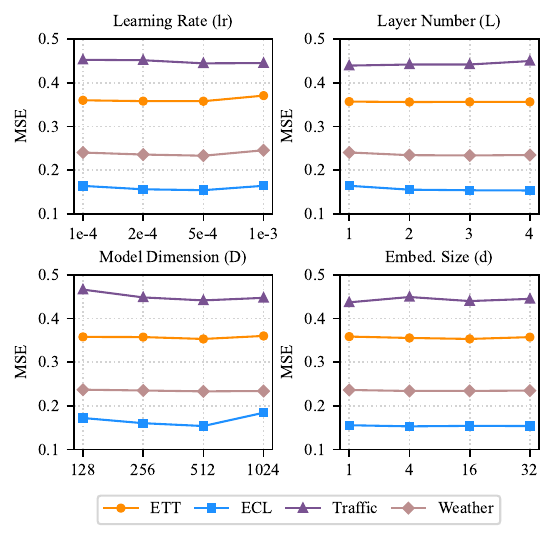}
   \caption{Hyperparameter sensitivity of vLinear with respect to the learning rate $\mathrm{lr}$, the number of layers $L$, the model dimension $D$, and the embedding size ($d$). Performance is averaged across $H \in \left \{ 96,192,336,720 \right \}$. ETT denotes the average performance over  $\mathrm{ETT} \left \{ \mathrm{h} 1,\mathrm{h} 2,\mathrm{m} 1,\mathrm{m} 2 \right \} $. The lookback length $T$ is fixed at 96.}
   \label{fig_hyperpara}
\end{figure}

We conduct extensive experiments to evaluate the hyperparameter sensitivity of vLinear with respect to the  following factors: the learning rate ($\mathrm{lr}$), the number of layers ($L$), the model dimension ($D$), the embedding size ($d$, see Section~\ref{overall_arch} in the main paper) \cite{frets}, the steps $K$ in WFMLoss, and the initial noise standard deviation (std). 

Figure~\ref{fig_hyperpara} illustrates the performance variation concerning the learning rate ($\mathrm{lr}$), the number of layers ($L$), the model dimension ($D$), the embedding size ($d$). Overall, vLinear exhibits robust performance across these four hyperparameters. Notably, for $D \in \left \{ 128,256,512,1024 \right \} $, the MSE variation is only 1.2\% and 1.6\% on the ETTh1 and Weather datasets, respectively. 

\begin{figure}[ht]
   \centering
   \includegraphics[width=0.8\linewidth]{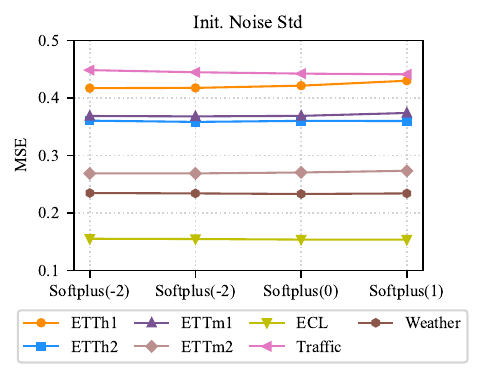}
   \caption{Hyperparameter sensitivity of vLinear with respect to the initial noise standard deviation (std). Performance is averaged across four horizons $H \in \left \{ 96,192,336,720 \right \}$. The lookback length $T$ is fixed at 96.}
   \label{fig_hyperpara_std}
\end{figure}

In vLinear, we adopt a learnable noise standard deviation (denoted as $\mathrm{std}$) in WFMLoss. Since the initialization of such learnable parameters may affect performance, we evaluate the model's sensitivity to the initial $\mathrm{std}$ value.
Figure~\ref{fig_hyperpara_std} depicts the impact of the initial noise standard deviation. Results show that vLinear is robust to changes in this hyperparameter.  For $\mathrm{std} \in \{ \mathrm{Softplus}(k) \mid k \in \{-2, -1, 0, 1\} \}$, the MSE variation is less than 1\% on the ETTh2, ECL, and Weather datasets.

\begin{figure}[ht]
   \centering
   \includegraphics[width=1.0\linewidth]{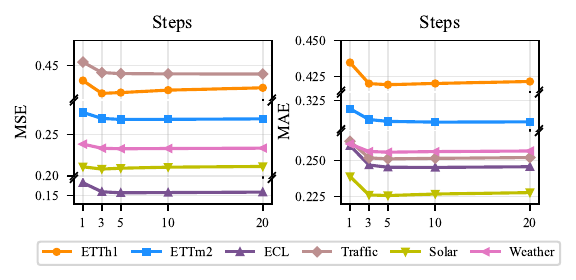}
   \caption{Hyperparameter sensitivity of vLinear with respect to the inference steps ($K$) in WFMBlock. Performance is averaged across four prediction horizons $H \in \left \{ 96,192,336,720 \right \}$. The lookback length $T$ is fixed at 96.}
   \label{fig_hyperpara_steps}
\end{figure}

Figure~\ref{fig_hyperpara_steps} presents the performance variation with respect to the inference steps ($K$) in WFMBlock. vLinear maintains stability regarding this hyperparameter. For $K\ge 5$, the variation in MSE and MAE is less than 1\%. Specifically, on the ETTm2 dataset, the MSE and MAE variation for $K \in \{5,10,20\}$ is merely 0.2\% and 0.1\%, respectively.

\begin{table*}[t]
\begin{center}
\renewcommand{\arraystretch}{1.3}
{\fontsize{8}{10}\selectfont
\setlength{\tabcolsep}{4.5pt}

}
\caption{Performance of vLinear under different $\alpha$ and $\beta$ settings in WFMLoss. The default setting ($\alpha=-0.5, \beta=-0.5$) yields the best performance.}
\label{tab_alpha_beta}
\end{center}
\end{table*}

In vLinear, we employ the following WFMLoss, where $\alpha$ and $\beta$ are set to -0.5 by default:

\begin{equation}
\begin{aligned}
    \mathrm{WFMLoss}^{(t)} =\frac{1}{H} (2-t)^{\alpha}\sum_{i=1}^{H}i^{\beta}\left \| \hat{\mathrm{Y}}^{(1)}_{:,i}- \bar{\mathrm{Y}}^{\mathrm{GT}}_{:,i}\right \|  _1,
\end{aligned}
\label{eq_wfmloss_appd}
\end{equation}

Table~\ref{tab_alpha_beta} presents the performance under various $\alpha$ and $\beta$ configurations. Overall, the default setting ($\alpha=-0.5$, $\beta=-0.5$) yields superior results compared to other settings. Notably, on the PEMS03 dataset, setting $\beta=-0.5$  results in a 2\% reduction in MSE compared to $\beta=0.5$, validating the effectiveness of Theorem~\ref{theorem3_appd}.

\section{vecTrans Visualization}\label{appd_vectrans_visual}

vecTrans employs a learnable vector to model token correlations. We integrate vecTrans into state-of-the-art forecasters, including OLinear, iTransformer, and PatchTST, and compare the weight distributions of the learned vector against the original attention (or NormLin) matrices. Figure~\ref{fig_more_heatmap} presents the visualization results on the ECL, Solar-Energy, and PEMS03 datasets. The learned vector exhibits a distribution similar to the column-wise average of the attention (for iTransformer and PatchTST) or NormLin (for OLinear and vLinear) matrices. Remarkably, despite using only a single vector, vecTrans achieves superior performance compared to the full attention mechanisms (see Table~\ref{tab_vec_attn_appd}). This suggests that a single vector is sufficient to effectively capture token dependencies in time series forecasting.

\begin{figure*}[t]
   \centering
   \includegraphics[width=1.0\linewidth]{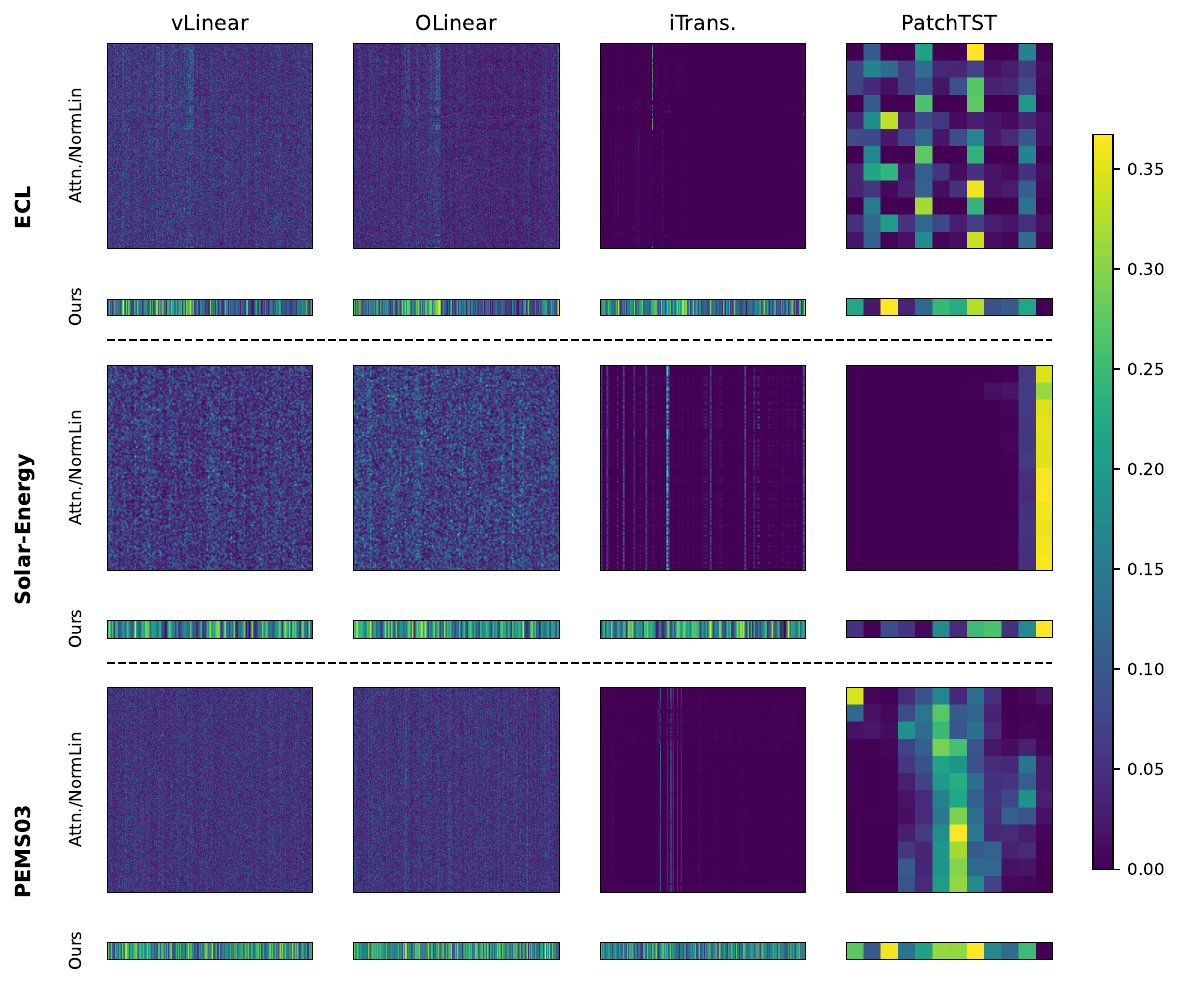}
   \caption{Visualization of Attention/NormLin matrices and the learned vector in vecTrans on the ECL, Solar-Energy and PEMS03 datasets.  As variants, vecTrans is applied to OLinear, iTransformer, and PatchTST, while NormLin is applied to vLinear. Generally, the learned vector approximates the column-wise weight distribution of the matrices, justifying our vector-based design.}
   \label{fig_more_heatmap}
\end{figure*}

\begin{table*}[t]
\begin{center}
\renewcommand{\arraystretch}{1.3}
{\fontsize{8}{10}\selectfont
\setlength{\tabcolsep}{4.5pt}
\begin{tabular}{@{}cc|cc|cc|cc|cc|cc|cc|cc@{}}
\toprule
\multicolumn{2}{c|}{vecTrans   Fun.} & \multicolumn{2}{c|}{Sigmoid}                                                  & \multicolumn{2}{c|}{Softmax}                                                  & \multicolumn{2}{c|}{Softplus}                                                 & \multicolumn{2}{c|}{Exp}                                                      & \multicolumn{2}{c|}{ReLU}                                                     & \multicolumn{2}{c|}{GELU}                                                     & \multicolumn{2}{c}{Identity}                                                  \\ \midrule
\multicolumn{2}{c|}{Metric}          & MSE                                   & MAE                                   & MSE                                   & MAE                                   & MSE                                   & MAE                                   & MSE                                   & MAE                                   & MSE                                   & MAE                                   & MSE                                   & MAE                                   & MSE                                   & MAE                                   \\ \midrule
                             & 96    & {\color[HTML]{FF0000} \textbf{0.356}} & {\color[HTML]{FF0000} \textbf{0.379}} & {\color[HTML]{0000FF} {\ul 0.357}}    & {\color[HTML]{FF0000} \textbf{0.379}} & {\color[HTML]{0000FF} {\ul 0.357}}    & {\color[HTML]{0000FF} {\ul 0.380}}    & {\color[HTML]{0000FF} {\ul 0.357}}    & {\color[HTML]{FF0000} \textbf{0.379}} & 0.358                                 & {\color[HTML]{0000FF} {\ul 0.380}}    & 0.358                                 & {\color[HTML]{0000FF} {\ul 0.380}}    & 0.356                                 & {\color[HTML]{FF0000} \textbf{0.379}} \\
                             & 192   & 0.409                                 & {\color[HTML]{0000FF} {\ul 0.411}}    & 0.409                                 & {\color[HTML]{0000FF} {\ul 0.411}}    & 0.410                                 & {\color[HTML]{0000FF} {\ul 0.411}}    & 0.409                                 & {\color[HTML]{0000FF} {\ul 0.411}}    & 0.409                                 & {\color[HTML]{FF0000} \textbf{0.410}} & {\color[HTML]{0000FF} {\ul 0.408}}    & {\color[HTML]{FF0000} \textbf{0.410}} & {\color[HTML]{FF0000} \textbf{0.407}} & {\color[HTML]{FF0000} \textbf{0.410}} \\
                             & 336   & 0.449                                 & 0.433                                 & 0.449                                 & {\color[HTML]{0000FF} {\ul 0.432}}    & 0.449                                 & 0.433                                 & 0.449                                 & 0.433                                 & 0.449                                 & {\color[HTML]{0000FF} {\ul 0.432}}    & {\color[HTML]{0000FF} {\ul 0.447}}    & {\color[HTML]{0000FF} {\ul 0.432}}    & {\color[HTML]{FF0000} \textbf{0.444}} & {\color[HTML]{FF0000} \textbf{0.431}} \\
                             & 720   & {\color[HTML]{FF0000} \textbf{0.451}} & {\color[HTML]{FF0000} \textbf{0.456}} & {\color[HTML]{0000FF} {\ul 0.452}}    & {\color[HTML]{0000FF} {\ul 0.457}}    & 0.453                                 & 0.458                                 & {\color[HTML]{FF0000} \textbf{0.451}} & {\color[HTML]{0000FF} {\ul 0.457}}    & {\color[HTML]{0000FF} {\ul 0.452}}    & {\color[HTML]{0000FF} {\ul 0.457}}    & {\color[HTML]{0000FF} {\ul 0.452}}    & {\color[HTML]{0000FF} {\ul 0.457}}    & 0.458                                 & 0.460                                 \\ \cmidrule(l){2-16} 
\multirow{-5}{*}{ETTh1}      & Avg   & {\color[HTML]{FF0000} \textbf{0.416}} & {\color[HTML]{FF0000} \textbf{0.420}} & {\color[HTML]{FF0000} \textbf{0.416}} & {\color[HTML]{FF0000} \textbf{0.420}} & {\color[HTML]{0000FF} {\ul 0.417}}    & {\color[HTML]{FF0000} \textbf{0.420}} & {\color[HTML]{FF0000} \textbf{0.416}} & {\color[HTML]{FF0000} \textbf{0.420}} & {\color[HTML]{0000FF} {\ul 0.417}}    & {\color[HTML]{FF0000} \textbf{0.420}} & {\color[HTML]{FF0000} \textbf{0.416}} & {\color[HTML]{FF0000} \textbf{0.420}} & {\color[HTML]{FF0000} \textbf{0.416}} & {\color[HTML]{FF0000} \textbf{0.420}} \\ \midrule
                             & 96    & {\color[HTML]{FF0000} \textbf{0.168}} & {\color[HTML]{FF0000} \textbf{0.245}} & {\color[HTML]{0000FF} {\ul 0.169}}    & {\color[HTML]{0000FF} {\ul 0.246}}    & {\color[HTML]{FF0000} \textbf{0.168}} & {\color[HTML]{0000FF} {\ul 0.246}}    & {\color[HTML]{FF0000} \textbf{0.168}} & {\color[HTML]{0000FF} {\ul 0.246}}    & {\color[HTML]{FF0000} \textbf{0.168}} & {\color[HTML]{0000FF} {\ul 0.246}}    & 0.170                                 & {\color[HTML]{0000FF} {\ul 0.246}}    & {\color[HTML]{0000FF} {\ul 0.169}}    & 0.247                                 \\
                             & 192   & {\color[HTML]{FF0000} \textbf{0.231}} & {\color[HTML]{FF0000} \textbf{0.287}} & {\color[HTML]{FF0000} \textbf{0.231}} & {\color[HTML]{FF0000} \textbf{0.287}} & {\color[HTML]{0000FF} {\ul 0.232}}    & {\color[HTML]{0000FF} {\ul 0.288}}    & {\color[HTML]{FF0000} \textbf{0.231}} & {\color[HTML]{FF0000} \textbf{0.287}} & {\color[HTML]{0000FF} {\ul 0.232}}    & {\color[HTML]{FF0000} \textbf{0.287}} & {\color[HTML]{0000FF} {\ul 0.232}}    & {\color[HTML]{0000FF} {\ul 0.288}}    & {\color[HTML]{0000FF} {\ul 0.232}}    & {\color[HTML]{0000FF} {\ul 0.288}}    \\
                             & 336   & {\color[HTML]{FF0000} \textbf{0.289}} & {\color[HTML]{FF0000} \textbf{0.326}} & 0.292                                 & {\color[HTML]{0000FF} {\ul 0.327}}    & {\color[HTML]{0000FF} {\ul 0.291}}    & {\color[HTML]{FF0000} \textbf{0.326}} & {\color[HTML]{0000FF} {\ul 0.291}}    & {\color[HTML]{FF0000} \textbf{0.326}} & 0.293                                 & {\color[HTML]{0000FF} {\ul 0.327}}    & 0.293                                 & 0.328                                 & 0.292                                 & {\color[HTML]{0000FF} {\ul 0.327}}    \\
                             & 720   & {\color[HTML]{FF0000} \textbf{0.386}} & {\color[HTML]{FF0000} \textbf{0.382}} & 0.388                                 & 0.384                                 & 0.388                                 & {\color[HTML]{0000FF} {\ul 0.383}}    & {\color[HTML]{0000FF} {\ul 0.387}}    & {\color[HTML]{0000FF} {\ul 0.383}}    & 0.390                                 & 0.385                                 & 0.389                                 & 0.384                                 & 0.388                                 & 0.384                                 \\ \cmidrule(l){2-16} 
\multirow{-5}{*}{ETTm2}      & Avg   & {\color[HTML]{FF0000} \textbf{0.268}} & {\color[HTML]{FF0000} \textbf{0.310}} & 0.270                                 & {\color[HTML]{0000FF} {\ul 0.311}}    & 0.270                                 & {\color[HTML]{0000FF} {\ul 0.311}}    & {\color[HTML]{0000FF} {\ul 0.269}}    & {\color[HTML]{FF0000} \textbf{0.310}} & 0.271                                 & {\color[HTML]{0000FF} {\ul 0.311}}    & 0.271                                 & {\color[HTML]{0000FF} {\ul 0.311}}    & 0.270                                 & {\color[HTML]{0000FF} {\ul 0.311}}    \\ \midrule
                             & 96    & {\color[HTML]{FF0000} \textbf{0.129}} & {\color[HTML]{FF0000} \textbf{0.221}} & 0.131                                 & {\color[HTML]{0000FF} {\ul 0.222}}    & {\color[HTML]{0000FF} {\ul 0.130}}    & {\color[HTML]{FF0000} \textbf{0.221}} & {\color[HTML]{0000FF} {\ul 0.130}}    & {\color[HTML]{FF0000} \textbf{0.221}} & 0.131                                 & {\color[HTML]{0000FF} {\ul 0.222}}    & 0.133                                 & 0.224                                 & 0.137                                 & 0.228                                 \\
                             & 192   & {\color[HTML]{FF0000} \textbf{0.147}} & {\color[HTML]{FF0000} \textbf{0.238}} & {\color[HTML]{0000FF} {\ul 0.148}}    & {\color[HTML]{0000FF} {\ul 0.240}}    & {\color[HTML]{FF0000} \textbf{0.147}} & {\color[HTML]{FF0000} \textbf{0.238}} & {\color[HTML]{0000FF} {\ul 0.148}}    & {\color[HTML]{FF0000} \textbf{0.238}} & 0.150                                 & 0.240                                 & 0.152                                 & 0.243                                 & 0.154                                 & 0.244                                 \\
                             & 336   & {\color[HTML]{0000FF} {\ul 0.158}}    & {\color[HTML]{FF0000} \textbf{0.250}} & {\color[HTML]{0000FF} {\ul 0.158}}    & {\color[HTML]{FF0000} \textbf{0.250}} & {\color[HTML]{0000FF} {\ul 0.158}}    & {\color[HTML]{FF0000} \textbf{0.250}} & {\color[HTML]{FF0000} \textbf{0.157}} & {\color[HTML]{FF0000} \textbf{0.250}} & {\color[HTML]{0000FF} {\ul 0.158}}    & {\color[HTML]{0000FF} {\ul 0.251}}    & 0.164                                 & 0.256                                 & 0.165                                 & 0.258                                 \\
                             & 720   & {\color[HTML]{FF0000} \textbf{0.178}} & {\color[HTML]{FF0000} \textbf{0.271}} & {\color[HTML]{0000FF} {\ul 0.179}}    & {\color[HTML]{0000FF} {\ul 0.272}}    & 0.181                                 & 0.274                                 & 0.180                                 & 0.274                                 & {\color[HTML]{0000FF} {\ul 0.179}}    & {\color[HTML]{0000FF} {\ul 0.272}}    & 0.182                                 & 0.275                                 & 0.190                                 & 0.280                                 \\ \cmidrule(l){2-16} 
\multirow{-5}{*}{ECL}        & Avg   & {\color[HTML]{FF0000} \textbf{0.153}} & {\color[HTML]{FF0000} \textbf{0.245}} & {\color[HTML]{0000FF} {\ul 0.154}}    & {\color[HTML]{0000FF} {\ul 0.246}}    & {\color[HTML]{0000FF} {\ul 0.154}}    & {\color[HTML]{0000FF} {\ul 0.246}}    & {\color[HTML]{0000FF} {\ul 0.154}}    & {\color[HTML]{0000FF} {\ul 0.246}}    & {\color[HTML]{0000FF} {\ul 0.154}}    & {\color[HTML]{0000FF} {\ul 0.246}}    & 0.158                                 & 0.249                                 & 0.161                                 & 0.252                                 \\ \midrule
                             & 96    & {\color[HTML]{FF0000} \textbf{0.175}} & {\color[HTML]{FF0000} \textbf{0.196}} & 0.182                                 & 0.200                                 & {\color[HTML]{FF0000} \textbf{0.175}} & {\color[HTML]{0000FF} {\ul 0.197}}    & {\color[HTML]{0000FF} {\ul 0.177}}    & {\color[HTML]{0000FF} {\ul 0.197}}    & 0.181                                 & 0.200                                 & 0.183                                 & 0.201                                 & 0.198                                 & 0.211                                 \\
                             & 192   & {\color[HTML]{FF0000} \textbf{0.202}} & {\color[HTML]{FF0000} \textbf{0.220}} & 0.204                                 & 0.222                                 & {\color[HTML]{0000FF} {\ul 0.203}}    & {\color[HTML]{0000FF} {\ul 0.221}}    & {\color[HTML]{FF0000} \textbf{0.202}} & {\color[HTML]{0000FF} {\ul 0.221}}    & {\color[HTML]{FF0000} \textbf{0.202}} & {\color[HTML]{0000FF} {\ul 0.221}}    & 0.204                                 & 0.222                                 & 0.213                                 & 0.228                                 \\
                             & 336   & {\color[HTML]{FF0000} \textbf{0.224}} & {\color[HTML]{FF0000} \textbf{0.240}} & 0.227                                 & {\color[HTML]{0000FF} {\ul 0.241}}    & {\color[HTML]{0000FF} {\ul 0.225}}    & {\color[HTML]{FF0000} \textbf{0.240}} & {\color[HTML]{0000FF} {\ul 0.225}}    & {\color[HTML]{FF0000} \textbf{0.240}} & 0.227                                 & {\color[HTML]{0000FF} {\ul 0.241}}    & 0.230                                 & 0.243                                 & 0.236                                 & 0.248                                 \\
                             & 720   & {\color[HTML]{FF0000} \textbf{0.234}} & {\color[HTML]{FF0000} \textbf{0.247}} & 0.238                                 & 0.251                                 & {\color[HTML]{0000FF} {\ul 0.235}}    & {\color[HTML]{0000FF} {\ul 0.249}}    & 0.236                                 & {\color[HTML]{0000FF} {\ul 0.249}}    & 0.239                                 & 0.251                                 & 0.241                                 & 0.252                                 & 0.243                                 & 0.253                                 \\ \cmidrule(l){2-16} 
\multirow{-5}{*}{Solar}      & Avg   & {\color[HTML]{FF0000} \textbf{0.209}} & {\color[HTML]{FF0000} \textbf{0.226}} & 0.213                                 & 0.228                                 & {\color[HTML]{FF0000} \textbf{0.209}} & {\color[HTML]{0000FF} {\ul 0.227}}    & {\color[HTML]{0000FF} {\ul 0.210}}    & {\color[HTML]{0000FF} {\ul 0.227}}    & 0.212                                 & 0.228                                 & 0.214                                 & 0.229                                 & 0.222                                 & 0.235                                 \\ \midrule
                             & 96    & {\color[HTML]{0000FF} {\ul 0.150}}    & {\color[HTML]{0000FF} {\ul 0.187}}    & {\color[HTML]{0000FF} {\ul 0.150}}    & {\color[HTML]{FF0000} \textbf{0.186}} & {\color[HTML]{FF0000} \textbf{0.149}} & {\color[HTML]{FF0000} \textbf{0.186}} & 0.151                                 & {\color[HTML]{0000FF} {\ul 0.187}}    & 0.151                                 & 0.188                                 & 0.151                                 & 0.188                                 & 0.154                                 & 0.189                                 \\
                             & 192   & {\color[HTML]{FF0000} \textbf{0.198}} & {\color[HTML]{FF0000} \textbf{0.234}} & {\color[HTML]{0000FF} {\ul 0.199}}    & {\color[HTML]{FF0000} \textbf{0.234}} & {\color[HTML]{0000FF} {\ul 0.199}}    & {\color[HTML]{FF0000} \textbf{0.234}} & {\color[HTML]{0000FF} {\ul 0.199}}    & {\color[HTML]{0000FF} {\ul 0.235}}    & 0.200                                 & {\color[HTML]{0000FF} {\ul 0.235}}    & 0.201                                 & 0.236                                 & 0.200                                 & 0.236                                 \\
                             & 336   & {\color[HTML]{FF0000} \textbf{0.253}} & {\color[HTML]{FF0000} \textbf{0.275}} & {\color[HTML]{0000FF} {\ul 0.254}}    & {\color[HTML]{0000FF} {\ul 0.276}}    & {\color[HTML]{FF0000} \textbf{0.253}} & {\color[HTML]{FF0000} \textbf{0.275}} & {\color[HTML]{FF0000} \textbf{0.253}} & {\color[HTML]{FF0000} \textbf{0.275}} & 0.256                                 & 0.278                                 & 0.258                                 & 0.279                                 & 0.258                                 & 0.279                                 \\
                             & 720   & {\color[HTML]{FF0000} \textbf{0.333}} & {\color[HTML]{FF0000} \textbf{0.329}} & {\color[HTML]{FF0000} \textbf{0.333}} & {\color[HTML]{0000FF} {\ul 0.330}}    & {\color[HTML]{FF0000} \textbf{0.333}} & {\color[HTML]{FF0000} \textbf{0.329}} & {\color[HTML]{FF0000} \textbf{0.333}} & {\color[HTML]{FF0000} \textbf{0.329}} & {\color[HTML]{0000FF} {\ul 0.335}}    & 0.331                                 & 0.338                                 & 0.333                                 & {\color[HTML]{0000FF} {\ul 0.335}}    & 0.332                                 \\ \cmidrule(l){2-16} 
\multirow{-5}{*}{Weather}    & Avg   & {\color[HTML]{FF0000} \textbf{0.233}} & {\color[HTML]{FF0000} \textbf{0.256}} & {\color[HTML]{0000FF} {\ul 0.234}}    & {\color[HTML]{FF0000} \textbf{0.256}} & {\color[HTML]{FF0000} \textbf{0.233}} & {\color[HTML]{FF0000} \textbf{0.256}} & {\color[HTML]{0000FF} {\ul 0.234}}    & {\color[HTML]{FF0000} \textbf{0.256}} & 0.235                                 & {\color[HTML]{0000FF} {\ul 0.258}}    & 0.237                                 & 0.259                                 & 0.237                                 & 0.259                                 \\ \midrule
                             & 12    & {\color[HTML]{FF0000} \textbf{0.059}} & {\color[HTML]{FF0000} \textbf{0.158}} & {\color[HTML]{FF0000} \textbf{0.059}} & {\color[HTML]{0000FF} {\ul 0.159}}    & {\color[HTML]{FF0000} \textbf{0.059}} & {\color[HTML]{FF0000} \textbf{0.158}} & {\color[HTML]{FF0000} \textbf{0.059}} & {\color[HTML]{FF0000} \textbf{0.158}} & {\color[HTML]{FF0000} \textbf{0.059}} & {\color[HTML]{0000FF} {\ul 0.159}}    & {\color[HTML]{0000FF} {\ul 0.060}}    & 0.160                                 & 0.061                                 & 0.161                                 \\
                             & 24    & {\color[HTML]{FF0000} \textbf{0.075}} & {\color[HTML]{FF0000} \textbf{0.177}} & {\color[HTML]{FF0000} \textbf{0.075}} & {\color[HTML]{0000FF} {\ul 0.178}}    & {\color[HTML]{FF0000} \textbf{0.075}} & {\color[HTML]{0000FF} {\ul 0.178}}    & {\color[HTML]{0000FF} {\ul 0.076}}    & {\color[HTML]{0000FF} {\ul 0.178}}    & {\color[HTML]{FF0000} \textbf{0.075}} & {\color[HTML]{0000FF} {\ul 0.178}}    & 0.077                                 & 0.180                                 & 0.079                                 & 0.182                                 \\
                             & 48    & {\color[HTML]{FF0000} \textbf{0.102}} & {\color[HTML]{FF0000} \textbf{0.208}} & 0.105                                 & 0.210                                 & {\color[HTML]{0000FF} {\ul 0.103}}    & {\color[HTML]{FF0000} \textbf{0.208}} & {\color[HTML]{0000FF} {\ul 0.103}}    & {\color[HTML]{FF0000} \textbf{0.208}} & {\color[HTML]{0000FF} {\ul 0.103}}    & {\color[HTML]{0000FF} {\ul 0.209}}    & 0.107                                 & 0.213                                 & 0.120                                 & 0.225                                 \\
                             & 96    & {\color[HTML]{FF0000} \textbf{0.138}} & {\color[HTML]{FF0000} \textbf{0.246}} & 0.143                                 & 0.250                                 & {\color[HTML]{0000FF} {\ul 0.140}}    & {\color[HTML]{0000FF} {\ul 0.247}}    & 0.142                                 & 0.249                                 & 0.142                                 & 0.249                                 & 0.146                                 & 0.253                                 & 0.193                                 & 0.278                                 \\ \cmidrule(l){2-16} 
\multirow{-5}{*}{PEMS03}     & Avg   & {\color[HTML]{FF0000} \textbf{0.093}} & {\color[HTML]{FF0000} \textbf{0.197}} & 0.095                                 & 0.199                                 & {\color[HTML]{0000FF} {\ul 0.094}}    & {\color[HTML]{0000FF} {\ul 0.198}}    & 0.095                                 & {\color[HTML]{0000FF} {\ul 0.198}}    & 0.095                                 & {\color[HTML]{0000FF} {\ul 0.198}}    & 0.097                                 & 0.201                                 & 0.113                                 & 0.212                                 \\ \bottomrule
\end{tabular}
}
\caption{Different transformation functions in vecTrans. \textit{Identity} denotes no transformation applied.}
\label{tab_vectrans_fun}
\end{center}
\end{table*}

\begin{table*}[t]
\begin{center}
\renewcommand{\arraystretch}{1.3}
{\fontsize{9}{10}\selectfont
\setlength{\tabcolsep}{8pt}
\begin{tabular}{@{}cc|cc|cc|cc|cc|cc|cc@{}}
\toprule
                       &                        & \multicolumn{2}{c|}{ETTh1}                                                    & \multicolumn{2}{c|}{ETTm2}                                                    & \multicolumn{2}{c|}{ECL}                                                      & \multicolumn{2}{c|}{Solar}                                                    & \multicolumn{2}{c|}{Weather}                                                  & \multicolumn{2}{c}{PEMS03}                                                    \\ \cmidrule(l){3-14} 
\multirow{-2}{*}{Norm} & \multirow{-2}{*}{Hor.} & MSE                                   & MAE                                   & MSE                                   & MAE                                   & MSE                                   & MAE                                   & MSE                                   & MAE                                   & MSE                                   & MAE                                   & MSE                                   & MAE                                   \\ \midrule
                       & H1                     & {\color[HTML]{FF0000} \textbf{0.356}} & {\color[HTML]{FF0000} \textbf{0.379}} & {\color[HTML]{FF0000} \textbf{0.168}} & {\color[HTML]{FF0000} \textbf{0.245}} & {\color[HTML]{FF0000} \textbf{0.129}} & {\color[HTML]{FF0000} \textbf{0.221}} & {\color[HTML]{FF0000} \textbf{0.175}} & {\color[HTML]{FF0000} \textbf{0.196}} & {\color[HTML]{FF0000} \textbf{0.150}} & {\color[HTML]{FF0000} \textbf{0.186}} & {\color[HTML]{FF0000} \textbf{0.059}} & {\color[HTML]{FF0000} \textbf{0.158}} \\
                       & H2                     & {\color[HTML]{FF0000} \textbf{0.409}} & {\color[HTML]{FF0000} \textbf{0.411}} & {\color[HTML]{FF0000} \textbf{0.231}} & {\color[HTML]{FF0000} \textbf{0.287}} & {\color[HTML]{FF0000} \textbf{0.147}} & {\color[HTML]{FF0000} \textbf{0.238}} & {\color[HTML]{FF0000} \textbf{0.202}} & {\color[HTML]{FF0000} \textbf{0.220}} & {\color[HTML]{FF0000} \textbf{0.198}} & {\color[HTML]{FF0000} \textbf{0.234}} & {\color[HTML]{FF0000} \textbf{0.075}} & {\color[HTML]{FF0000} \textbf{0.177}} \\
                       & H3                     & {\color[HTML]{FF0000} \textbf{0.449}} & {\color[HTML]{FF0000} \textbf{0.433}} & {\color[HTML]{FF0000} \textbf{0.289}} & {\color[HTML]{FF0000} \textbf{0.326}} & {\color[HTML]{FF0000} \textbf{0.158}} & {\color[HTML]{FF0000} \textbf{0.250}} & {\color[HTML]{FF0000} \textbf{0.224}} & {\color[HTML]{FF0000} \textbf{0.240}} & {\color[HTML]{FF0000} \textbf{0.252}} & {\color[HTML]{FF0000} \textbf{0.275}} & {\color[HTML]{FF0000} \textbf{0.102}} & {\color[HTML]{FF0000} \textbf{0.208}} \\
                       & H4                     & {\color[HTML]{FF0000} \textbf{0.451}} & {\color[HTML]{FF0000} \textbf{0.456}} & {\color[HTML]{FF0000} \textbf{0.386}} & {\color[HTML]{FF0000} \textbf{0.382}} & {\color[HTML]{FF0000} \textbf{0.178}} & {\color[HTML]{FF0000} \textbf{0.271}} & {\color[HTML]{FF0000} \textbf{0.234}} & {\color[HTML]{FF0000} \textbf{0.247}} & {\color[HTML]{FF0000} \textbf{0.332}} & {\color[HTML]{FF0000} \textbf{0.328}} & {\color[HTML]{FF0000} \textbf{0.138}} & {\color[HTML]{FF0000} \textbf{0.246}} \\ \cmidrule(l){2-14} 
\multirow{-5}{*}{L1}   & Avg                    & {\color[HTML]{FF0000} \textbf{0.416}} & {\color[HTML]{FF0000} \textbf{0.420}} & {\color[HTML]{FF0000} \textbf{0.268}} & {\color[HTML]{FF0000} \textbf{0.310}} & {\color[HTML]{FF0000} \textbf{0.153}} & {\color[HTML]{FF0000} \textbf{0.245}} & {\color[HTML]{FF0000} \textbf{0.209}} & {\color[HTML]{FF0000} \textbf{0.226}} & {\color[HTML]{FF0000} \textbf{0.233}} & {\color[HTML]{FF0000} \textbf{0.256}} & {\color[HTML]{FF0000} \textbf{0.093}} & {\color[HTML]{FF0000} \textbf{0.197}} \\ \midrule
                       & H1                     & {\color[HTML]{0000FF} {\ul 0.357}}    & {\color[HTML]{0000FF} {\ul 0.380}}    & {\color[HTML]{0000FF} {\ul 0.169}}    & {\color[HTML]{0000FF} {\ul 0.247}}    & {\color[HTML]{0000FF} {\ul 0.239}}    & {\color[HTML]{0000FF} {\ul 0.314}}    & {\color[HTML]{0000FF} {\ul 0.243}}    & {\color[HTML]{0000FF} {\ul 0.259}}    & {\color[HTML]{0000FF} {\ul 0.151}}    & {\color[HTML]{0000FF} {\ul 0.188}}    & 0.096                                 & {\color[HTML]{0000FF} {\ul 0.199}}    \\
                       & H2                     & {\color[HTML]{0000FF} {\ul 0.410}}    & {\color[HTML]{0000FF} {\ul 0.412}}    & {\color[HTML]{0000FF} {\ul 0.232}}    & {\color[HTML]{0000FF} {\ul 0.288}}    & {\color[HTML]{0000FF} {\ul 0.238}}    & {\color[HTML]{0000FF} {\ul 0.317}}    & 0.281                                 & 0.280                                 & {\color[HTML]{0000FF} {\ul 0.201}}    & {\color[HTML]{0000FF} {\ul 0.235}}    & {\color[HTML]{0000FF} {\ul 0.141}}    & 0.250                                 \\
                       & H3                     & {\color[HTML]{0000FF} {\ul 0.450}}    & {\color[HTML]{0000FF} {\ul 0.434}}    & {\color[HTML]{0000FF} {\ul 0.317}}    & {\color[HTML]{0000FF} {\ul 0.348}}    & {\color[HTML]{0000FF} {\ul 0.247}}    & {\color[HTML]{0000FF} {\ul 0.327}}    & {\color[HTML]{0000FF} {\ul 0.324}}    & {\color[HTML]{0000FF} {\ul 0.301}}    & {\color[HTML]{0000FF} {\ul 0.254}}    & {\color[HTML]{0000FF} {\ul 0.276}}    & 1.507                                 & 1.033                                 \\
                       & H4                     & {\color[HTML]{0000FF} {\ul 0.453}}    & {\color[HTML]{0000FF} {\ul 0.458}}    & {\color[HTML]{0000FF} {\ul 0.386}}    & {\color[HTML]{0000FF} {\ul 0.382}}    & 0.268                                 & 0.344                                 & 0.336                                 & 0.304                                 & {\color[HTML]{0000FF} {\ul 0.340}}    & {\color[HTML]{0000FF} {\ul 0.334}}    & {\color[HTML]{0000FF} {\ul 0.279}}    & {\color[HTML]{0000FF} {\ul 0.342}}    \\ \cmidrule(l){2-14} 
\multirow{-5}{*}{L2}   & Avg                    & {\color[HTML]{0000FF} {\ul 0.418}}    & {\color[HTML]{0000FF} {\ul 0.421}}    & {\color[HTML]{0000FF} {\ul 0.276}}    & {\color[HTML]{0000FF} {\ul 0.316}}    & 0.248                                 & {\color[HTML]{0000FF} {\ul 0.325}}    & {\color[HTML]{0000FF} {\ul 0.296}}    & {\color[HTML]{0000FF} {\ul 0.286}}    & {\color[HTML]{0000FF} {\ul 0.236}}    & {\color[HTML]{0000FF} {\ul 0.258}}    & 0.506                                 & 0.456                                 \\ \midrule
                       & H1                     & 0.360                                 & 0.383                                 & 0.195                                 & 0.273                                 & {\color[HTML]{0000FF} {\ul 0.239}}    & {\color[HTML]{0000FF} {\ul 0.314}}    & 0.244                                 & 0.266                                 & 0.156                                 & 0.193                                 & {\color[HTML]{0000FF} {\ul 0.094}}    & {\color[HTML]{0000FF} {\ul 0.199}}    \\
                       & H2                     & 0.412                                 & 0.414                                 & 0.254                                 & 0.308                                 & {\color[HTML]{0000FF} {\ul 0.238}}    & {\color[HTML]{0000FF} {\ul 0.317}}    & {\color[HTML]{0000FF} {\ul 0.279}}    & {\color[HTML]{0000FF} {\ul 0.278}}    & 0.255                                 & 0.289                                 & {\color[HTML]{0000FF} {\ul 0.141}}    & {\color[HTML]{0000FF} {\ul 0.249}}    \\
                       & H3                     & 0.451                                 & {\color[HTML]{0000FF} {\ul 0.434}}    & {\color[HTML]{0000FF} {\ul 0.317}}    & {\color[HTML]{0000FF} {\ul 0.348}}    & 0.248                                 & {\color[HTML]{0000FF} {\ul 0.327}}    & 0.326                                 & {\color[HTML]{0000FF} {\ul 0.301}}    & 0.302                                 & 0.317                                 & {\color[HTML]{0000FF} {\ul 0.203}}    & {\color[HTML]{0000FF} {\ul 0.298}}    \\
                       & H4                     & 0.454                                 & {\color[HTML]{0000FF} {\ul 0.458}}    & 0.388                                 & 0.384                                 & {\color[HTML]{0000FF} {\ul 0.265}}    & {\color[HTML]{0000FF} {\ul 0.342}}    & {\color[HTML]{0000FF} {\ul 0.335}}    & {\color[HTML]{0000FF} {\ul 0.303}}    & 0.378                                 & 0.368                                 & 0.298                                 & 0.351                                 \\ \cmidrule(l){2-14} 
\multirow{-5}{*}{None} & Avg                    & 0.419                                 & 0.422                                 & 0.289                                 & 0.328                                 & {\color[HTML]{0000FF} {\ul 0.247}}    & {\color[HTML]{0000FF} {\ul 0.325}}    & {\color[HTML]{0000FF} {\ul 0.296}}    & 0.287                                 & 0.273                                 & 0.292                                 & {\color[HTML]{0000FF} {\ul 0.184}}    & {\color[HTML]{0000FF} {\ul 0.274}}    \\ \bottomrule
\end{tabular}
}
\caption{Different normalization strategies in vecTrans. \textit{None} denotes no normalization applied. $\left \{ \mathrm{H1}, \mathrm{H2}, \mathrm{H3}, \mathrm{H4 } \right \}$ corresponds to $\{12,24,48,96\}$ for PEMS03, and $\{96,192,336,720\}$ for the other datasets.}
\label{tab_vectrans_normset}
\end{center}
\end{table*}

\section{Ablation Studies on vecTrans Design}
\label{appd_ablation_vectrans}

In vecTrans, we apply $\mathrm{Sigmoid}$ to ensure the positivity of the values in the learnable vector $\mathbf{a}$, followed by L1 normalization. In this section, we compare this default configuration with alternative transformation functions and normalization settings.

Table~\ref{tab_vectrans_fun} compares $\mathrm{Sigmoid}$ with $\mathrm{Softmax}$, $\mathrm{Softplus}$, $\mathrm{Exp}$, $\mathrm{ReLU}$, $\mathrm{GELU}$, and $\mathrm{Identity}$. Results show that while these variants yield comparable performance, $\mathrm{Sigmoid}$ achieves marginally better results.

Table~\ref{tab_vectrans_normset} compares $\mathrm{L1}$ normalization with $\mathrm{L2}$ normalization and the absence of normalization (None). Results demonstrate that $\mathrm{L1}$ consistently outperforms other variants. We hypothesize that $\mathrm{L1}$ normalization encourages the model to focus on the relative weight distribution among tokens rather than the absolute amplitude, which appears less critical.

\section{Multi-Layer WFMLin}
\label{appd_wfmlin}

\begin{table*}[t]
\begin{center}
\renewcommand{\arraystretch}{1.3}
{\fontsize{9}{10}\selectfont
\setlength{\tabcolsep}{10pt}
\begin{tabular}{@{}cc|cc|cc|cc@{}}
\toprule
\multicolumn{2}{c|}{WFMLin}     & \multicolumn{2}{c|}{Layer: 1}                                                 & \multicolumn{2}{c|}{Layer: 2}                                              & \multicolumn{2}{c}{Layer: 3}                                            \\ \midrule
\multicolumn{2}{c|}{Metric}     & MSE                                   & MAE                                   & MSE                                   & MAE                                & MSE                                & MAE                                \\ \midrule
                          & 96  & {\color[HTML]{FF0000} \textbf{0.356}} & {\color[HTML]{FF0000} \textbf{0.379}} & {\color[HTML]{0000FF} {\ul 0.364}}    & {\color[HTML]{0000FF} {\ul 0.386}} & 0.370                              & 0.391                              \\
                          & 192 & {\color[HTML]{FF0000} \textbf{0.409}} & {\color[HTML]{FF0000} \textbf{0.411}} & {\color[HTML]{0000FF} {\ul 0.416}}    & {\color[HTML]{0000FF} {\ul 0.416}} & 0.428                              & 0.421                              \\
                          & 336 & {\color[HTML]{FF0000} \textbf{0.449}} & {\color[HTML]{FF0000} \textbf{0.433}} & {\color[HTML]{0000FF} {\ul 0.452}}    & {\color[HTML]{0000FF} {\ul 0.434}} & 0.473                              & 0.441                              \\
                          & 720 & {\color[HTML]{FF0000} \textbf{0.451}} & {\color[HTML]{FF0000} \textbf{0.456}} & {\color[HTML]{0000FF} {\ul 0.462}}    & {\color[HTML]{0000FF} {\ul 0.459}} & 0.507                              & 0.486                              \\ \cmidrule(l){2-8} 
\multirow{-5}{*}{ETTh1}   & Avg & {\color[HTML]{FF0000} \textbf{0.416}} & {\color[HTML]{FF0000} \textbf{0.420}} & {\color[HTML]{0000FF} {\ul 0.423}}    & {\color[HTML]{0000FF} {\ul 0.424}} & 0.444                              & 0.435                              \\ \midrule
                          & 96  & {\color[HTML]{FF0000} \textbf{0.168}} & {\color[HTML]{FF0000} \textbf{0.245}} & {\color[HTML]{0000FF} {\ul 0.171}}    & {\color[HTML]{0000FF} {\ul 0.248}} & 0.175                              & 0.255                              \\
                          & 192 & {\color[HTML]{FF0000} \textbf{0.231}} & {\color[HTML]{FF0000} \textbf{0.287}} & {\color[HTML]{0000FF} {\ul 0.234}}    & {\color[HTML]{0000FF} {\ul 0.290}} & 0.240                              & 0.298                              \\
                          & 336 & {\color[HTML]{FF0000} \textbf{0.289}} & {\color[HTML]{FF0000} \textbf{0.326}} & {\color[HTML]{0000FF} {\ul 0.299}}    & {\color[HTML]{0000FF} {\ul 0.332}} & 0.323                              & 0.356                              \\
                          & 720 & {\color[HTML]{FF0000} \textbf{0.386}} & {\color[HTML]{FF0000} \textbf{0.382}} & {\color[HTML]{0000FF} {\ul 0.396}}    & {\color[HTML]{0000FF} {\ul 0.390}} & 0.405                              & 0.395                              \\ \cmidrule(l){2-8} 
\multirow{-5}{*}{ETTm2}   & Avg & {\color[HTML]{FF0000} \textbf{0.268}} & {\color[HTML]{FF0000} \textbf{0.310}} & {\color[HTML]{0000FF} {\ul 0.275}}    & {\color[HTML]{0000FF} {\ul 0.315}} & 0.286                              & 0.326                              \\ \midrule
                          & 96  & {\color[HTML]{FF0000} \textbf{0.129}} & {\color[HTML]{FF0000} \textbf{0.221}} & {\color[HTML]{0000FF} {\ul 0.131}}    & {\color[HTML]{0000FF} {\ul 0.222}} & 0.134                              & 0.224                              \\
                          & 192 & {\color[HTML]{FF0000} \textbf{0.147}} & {\color[HTML]{FF0000} \textbf{0.238}} & {\color[HTML]{0000FF} {\ul 0.149}}    & {\color[HTML]{0000FF} {\ul 0.239}} & 0.155                              & 0.243                              \\
                          & 336 & {\color[HTML]{FF0000} \textbf{0.158}} & {\color[HTML]{FF0000} \textbf{0.250}} & {\color[HTML]{0000FF} {\ul 0.164}}    & {\color[HTML]{0000FF} {\ul 0.254}} & 0.172                              & 0.260                              \\
                          & 720 & {\color[HTML]{FF0000} \textbf{0.178}} & {\color[HTML]{FF0000} \textbf{0.271}} & {\color[HTML]{0000FF} {\ul 0.186}}    & {\color[HTML]{0000FF} {\ul 0.275}} & 0.207                              & 0.291                              \\ \cmidrule(l){2-8} 
\multirow{-5}{*}{ECL}     & Avg & {\color[HTML]{FF0000} \textbf{0.153}} & {\color[HTML]{FF0000} \textbf{0.245}} & {\color[HTML]{0000FF} {\ul 0.157}}    & {\color[HTML]{0000FF} {\ul 0.247}} & 0.167                              & 0.255                              \\ \midrule
                          & 96  & {\color[HTML]{FF0000} \textbf{0.175}} & {\color[HTML]{FF0000} \textbf{0.196}} & {\color[HTML]{0000FF} {\ul 0.181}}    & {\color[HTML]{0000FF} {\ul 0.210}} & 0.186                              & 0.216                              \\
                          & 192 & {\color[HTML]{FF0000} \textbf{0.202}} & {\color[HTML]{FF0000} \textbf{0.220}} & {\color[HTML]{0000FF} {\ul 0.209}}    & {\color[HTML]{0000FF} {\ul 0.229}} & 0.212                              & 0.232                              \\
                          & 336 & {\color[HTML]{FF0000} \textbf{0.224}} & {\color[HTML]{FF0000} \textbf{0.240}} & {\color[HTML]{0000FF} {\ul 0.231}}    & {\color[HTML]{0000FF} {\ul 0.244}} & 0.238                              & 0.248                              \\
                          & 720 & {\color[HTML]{FF0000} \textbf{0.234}} & {\color[HTML]{FF0000} \textbf{0.247}} & {\color[HTML]{0000FF} {\ul 0.247}}    & {\color[HTML]{0000FF} {\ul 0.250}} & 0.249                              & 0.252                              \\ \cmidrule(l){2-8} 
\multirow{-5}{*}{Solar}   & Avg & {\color[HTML]{FF0000} \textbf{0.209}} & {\color[HTML]{FF0000} \textbf{0.226}} & {\color[HTML]{0000FF} {\ul 0.217}}    & {\color[HTML]{0000FF} {\ul 0.233}} & 0.221                              & 0.237                              \\ \midrule
                          & 96  & {\color[HTML]{FF0000} \textbf{0.150}} & {\color[HTML]{FF0000} \textbf{0.186}} & {\color[HTML]{FF0000} \textbf{0.150}} & {\color[HTML]{0000FF} {\ul 0.188}} & {\color[HTML]{0000FF} {\ul 0.153}} & 0.189                              \\
                          & 192 & {\color[HTML]{FF0000} \textbf{0.198}} & {\color[HTML]{FF0000} \textbf{0.234}} & {\color[HTML]{0000FF} {\ul 0.201}}    & {\color[HTML]{0000FF} {\ul 0.236}} & 0.206                              & 0.238                              \\
                          & 336 & {\color[HTML]{FF0000} \textbf{0.252}} & {\color[HTML]{FF0000} \textbf{0.275}} & {\color[HTML]{0000FF} {\ul 0.255}}    & {\color[HTML]{0000FF} {\ul 0.277}} & 0.263                              & 0.280                              \\
                          & 720 & {\color[HTML]{FF0000} \textbf{0.332}} & {\color[HTML]{FF0000} \textbf{0.328}} & {\color[HTML]{0000FF} {\ul 0.343}}    & {\color[HTML]{0000FF} {\ul 0.335}} & 0.346                              & {\color[HTML]{0000FF} {\ul 0.335}} \\ \cmidrule(l){2-8} 
\multirow{-5}{*}{Weather} & Avg & {\color[HTML]{FF0000} \textbf{0.233}} & {\color[HTML]{FF0000} \textbf{0.256}} & {\color[HTML]{0000FF} {\ul 0.237}}    & {\color[HTML]{0000FF} {\ul 0.259}} & 0.242                              & 0.261                              \\ \midrule
                          & 12  & {\color[HTML]{FF0000} \textbf{0.059}} & {\color[HTML]{FF0000} \textbf{0.158}} & {\color[HTML]{FF0000} \textbf{0.059}} & {\color[HTML]{0000FF} {\ul 0.159}} & {\color[HTML]{0000FF} {\ul 0.060}} & 0.160                              \\
                          & 24  & {\color[HTML]{FF0000} \textbf{0.075}} & {\color[HTML]{FF0000} \textbf{0.177}} & {\color[HTML]{0000FF} {\ul 0.076}}    & {\color[HTML]{0000FF} {\ul 0.179}} & 0.077                              & {\color[HTML]{0000FF} {\ul 0.179}} \\
                          & 48  & {\color[HTML]{FF0000} \textbf{0.102}} & {\color[HTML]{FF0000} \textbf{0.208}} & {\color[HTML]{0000FF} {\ul 0.103}}    & {\color[HTML]{0000FF} {\ul 0.209}} & 0.106                              & 0.211                              \\
                          & 96  & {\color[HTML]{FF0000} \textbf{0.138}} & {\color[HTML]{FF0000} \textbf{0.246}} & {\color[HTML]{0000FF} {\ul 0.139}}    & {\color[HTML]{0000FF} {\ul 0.249}} & {\color[HTML]{0000FF} {\ul 0.139}} & {\color[HTML]{0000FF} {\ul 0.249}} \\ \cmidrule(l){2-8} 
\multirow{-5}{*}{PEMS03}  & Avg & {\color[HTML]{FF0000} \textbf{0.093}} & {\color[HTML]{FF0000} \textbf{0.197}} & {\color[HTML]{0000FF} {\ul 0.094}}    & {\color[HTML]{0000FF} {\ul 0.199}} & 0.095                              & 0.200                              \\ \bottomrule
\end{tabular}
}
\caption{Multi-layer WFMLin.}
\label{tab_wfmlin}
\end{center}
\end{table*}

In WFMBlock, we employ a one-layer WFMLin to generate the velocity. For comparison, we implement variants that stack multiple linear layers with GELU activation between them.

As shown in Table~\ref{tab_wfmlin}, the single-layer setting consistently outperforms the multi-layer variants. This suggests that overly complex WFMLin designs are unnecessary for effective time series forecasting.

\section{Additional Experiments}
\label{appd_additional_exp}

\subsection{vLinear as a Probabilistic Forecaster}
\label{appd_prob_forecaster}

Probabilistic time series forecasting \cite{quantileformer,CSDI} aims to provide a distribution of potential outcomes to support downstream decision-making. While vLinear typically adopts a deterministic strategy during inference by initializing with an all-zero state, it can also function as a probabilistic forecaster by replacing this initial state with Gaussian noise. In our experiments, we employ the Q-Risk metric to evaluate this capability, which is defined as \cite{rnn_2020}:

\begin{equation}
\begin{aligned}
\text{Q-Risk}_q = \frac{\sum_{n=1}^{N} \sum_{i=1}^{H} \mathcal{L}_q\left(\mathrm{y}_{n,i}, \hat{\mathrm{y}}_{n,i}^{(q)}\right)}{\sum_{n=1}^{N} \sum_{i=1}^{H} |\mathrm{y}_{n,i}|}, 
\end{aligned}
\end{equation}

\noindent where:
\begin{itemize}
    \item $\quad \mathcal{L}_q(y, \hat{y}) = \max\left( q (y - \hat{y}), (1-q) (\hat{y} - y) \right)$ is the quantile loss function,
    \item $q \in (0, 1)$ is the quantile level,
    \item $\mathrm{y}_{n,i}$ represents the ground truth value of the $n$-th variate at time step $i$,
    \item $\hat{\mathrm{y}}_{n,i}^{(q)}$ denotes the predicted value at the $q$-th quantile for the $n$-th series at time step $i$.
\end{itemize}

Table~\ref{tab_vlinear_as_prob} reports the Q-Risk values at various quantiles for vLinear and other probabilistic forecasters. As shown, vLinear exhibits state-of-the-art quantile forecasting performance, demonstrating its strong representation capabilities.

\begin{table*}[ht]
\begin{center}
{\fontsize{8}{9}\selectfont
\setlength{\tabcolsep}{8pt}
\begin{tabular}{@{}c|c|c|c|c|c|c|c|c|c@{}}
\toprule
Dataset                   
& Quantile & \begin{tabular}[c]{@{}c@{}}vLinear\\      (Ours)\end{tabular} 
& \begin{tabular}[c]{@{}c@{}}QuantileFormer\\      \shortcite{quantileformer} \end{tabular} 
& \begin{tabular}[c]{@{}c@{}}iTransformer\\      \shortcite{itransformer} \end{tabular} 
& \begin{tabular}[c]{@{}c@{}}PatchTST\\      \shortcite{patchtst} \end{tabular} 
& \begin{tabular}[c]{@{}c@{}}FEDformer\\      \shortcite{fedformer} \end{tabular} 
& \begin{tabular}[c]{@{}c@{}}Autoformer\\      \shortcite{autoformer} \end{tabular} 
& \begin{tabular}[c]{@{}c@{}}Transformer\\      \shortcite{transformer} \end{tabular} 
& \begin{tabular}[c]{@{}c@{}}DeepAR\\      \shortcite{rnn_2020} \end{tabular} \\ \midrule
                          & 0.5      & {\color[HTML]{0000FF} {\ul 0.4264}}                           & {\color[HTML]{FF0000} \textbf{0.1536}}                           & 0.7514                                                         & 1.4268                                                     & 0.6619                                                      & 1.8463                                                       & 1.0397                                                        & 1.2026                                                   \\
                          & 0.6      & 0.4199                                                        & {\color[HTML]{FF0000} \textbf{0.1642}}                           & {\color[HTML]{0000FF} {\ul 0.4112}}                            & 1.3088                                                     & 0.8673                                                      & 1.3424                                                       & 0.8740                                                        & 1.1749                                                   \\
                          & 0.7      & {\color[HTML]{0000FF} {\ul 0.3954}}                           & {\color[HTML]{FF0000} \textbf{0.2689}}                           & 0.8834                                                         & 1.0240                                                     & 0.4927                                                      & 1.1008                                                       & 0.7372                                                        & 0.7901                                                   \\
                          & 0.8      & {\color[HTML]{FF0000} \textbf{0.3448}}                        & {\color[HTML]{0000FF} {\ul 0.4340}}                              & 0.5824                                                         & 0.5100                                                     & 0.5491                                                      & 0.8392                                                       & 0.4998                                                        & 1.0616                                                   \\
\multirow{-5}{*}{ETTm1}   & 0.9      & {\color[HTML]{0000FF} {\ul 0.2483}}                           & {\color[HTML]{FF0000} \textbf{0.0596}}                           & 0.1228                                                         & 0.2816                                                     & 0.3865                                                      & 0.4774                                                       & 0.3618                                                        & 0.5388                                                   \\ \midrule
                          & 0.5      & {\color[HTML]{0000FF} {\ul 0.4765}}                           & {\color[HTML]{FF0000} \textbf{0.3007}}                           & 0.8850                                                         & 1.4719                                                     & 0.9480                                                      & 1.7221                                                       & 1.1989                                                        & 2.3414                                                   \\
                          & 0.6      & {\color[HTML]{FF0000} \textbf{0.4707}}                        & {\color[HTML]{0000FF} {\ul 0.6130}}                              & 0.9508                                                         & 1.4558                                                     & 0.8875                                                      & 1.2556                                                       & 0.8805                                                        & 0.7631                                                   \\
                          & 0.7      & {\color[HTML]{0000FF} {\ul 0.4423}}                           & {\color[HTML]{FF0000} \textbf{0.2912}}                           & 0.8607                                                         & 1.1307                                                     & 0.8328                                                      & 1.1977                                                       & 0.7284                                                        & 1.2217                                                   \\
                          & 0.8      & {\color[HTML]{FF0000} \textbf{0.3822}}                        & {\color[HTML]{0000FF} {\ul 0.4273}}                              & 0.4721                                                         & 0.4275                                                     & 0.7208                                                      & 0.9091                                                       & 0.4868                                                        & 1.0815                                                   \\
\multirow{-5}{*}{ETTh1}   & 0.9      & {\color[HTML]{FF0000} \textbf{0.2672}}                        & 0.3388                                                           & {\color[HTML]{0000FF} {\ul 0.3129}}                            & 0.3166                                                     & 0.4582                                                      & 0.4569                                                       & 0.5546                                                        & 1.9889                                                   \\ \midrule
                          & 0.5      & {\color[HTML]{FF0000} \textbf{0.2525}}                        & 1.0641                                                           & 1.0705                                                         & 1.0806                                                     & 1.0363                                                      & 1.1641                                                       & 1.0391                                                        & {\color[HTML]{0000FF} {\ul 0.8666}}                      \\
                          & 0.6      & {\color[HTML]{FF0000} \textbf{0.2516}}                        & {\color[HTML]{0000FF} {\ul 1.0480}}                              & 1.1843                                                         & 1.1242                                                     & 1.1708                                                      & 1.2367                                                       & 1.1617                                                        & 1.1173                                                   \\
                          & 0.7      & {\color[HTML]{FF0000} \textbf{0.2490}}                        & 1.1832                                                           & 1.1845                                                         & 1.2547                                                     & {\color[HTML]{0000FF} {\ul 1.0261}}                         & 1.2088                                                       & 1.1381                                                        & 1.2854                                                   \\
                          & 0.8      & {\color[HTML]{FF0000} \textbf{0.2427}}                        & {\color[HTML]{0000FF} {\ul 1.0008}}                              & 1.3705                                                         & 1.1935                                                     & 1.5427                                                      & 1.0030                                                       & 1.0794                                                        & 1.4512                                                   \\
\multirow{-5}{*}{Solar}   & 0.9      & {\color[HTML]{FF0000} \textbf{0.2214}}                        & {\color[HTML]{0000FF} {\ul 0.5883}}                              & 1.6083                                                         & 0.5950                                                     & 0.6414                                                      & 0.6167                                                       & 1.0777                                                        & 1.6117                                                   \\ \midrule
                          & 0.5      & {\color[HTML]{FF0000} \textbf{0.2922}}                        & {\color[HTML]{0000FF} {\ul 0.8489}}                              & 1.8998                                                         & 0.9775                                                     & 2.4497                                                      & 0.9908                                                       & 0.9664                                                        & 1.0502                                                   \\
                          & 0.6      & {\color[HTML]{FF0000} \textbf{0.2974}}                        & {\color[HTML]{0000FF} {\ul 0.8291}}                              & 1.3545                                                         & 1.6937                                                     & 0.9188                                                      & 1.1109                                                       & 0.9325                                                        & 0.8813                                                   \\
                          & 0.7      & {\color[HTML]{FF0000} \textbf{0.2944}}                        & {\color[HTML]{0000FF} {\ul 0.8489}}                              & 1.1941                                                         & 1.1269                                                     & 2.3784                                                      & 0.8686                                                       & 1.0574                                                        & 1.2484                                                   \\
                          & 0.8      & {\color[HTML]{FF0000} \textbf{0.2817}}                        & 0.5998                                                           & 0.8247                                                         & {\color[HTML]{0000FF} {\ul 0.5962}}                        & 1.7356                                                      & 0.6064                                                       & 0.8679                                                        & 0.9394                                                   \\
\multirow{-5}{*}{Traffic} & 0.9      & {\color[HTML]{FF0000} \textbf{0.2542}}                        & {\color[HTML]{0000FF} {\ul 0.4688}}                              & 1.5621                                                         & 1.1450                                                     & 0.8770                                                      & 0.4970                                                       & 0.9247                                                        & 1.1539                                                   \\ \midrule
                          & 0.5      & {\color[HTML]{FF0000} \textbf{0.2663}}                        & {\color[HTML]{0000FF} {\ul 0.7469}}                              & 1.3430                                                         & 1.8354                                                     & 1.9429                                                      & 1.0584                                                       & 1.3703                                                        & 1.0002                                                   \\
                          & 0.6      & {\color[HTML]{FF0000} \textbf{0.2663}}                        & {\color[HTML]{0000FF} {\ul 0.8136}}                              & 1.0348                                                         & 1.3134                                                     & 1.0447                                                      & 0.9191                                                       & 0.8873                                                        & 1.1177                                                   \\
                          & 0.7      & {\color[HTML]{FF0000} \textbf{0.2593}}                        & {\color[HTML]{0000FF} {\ul 0.3330}}                              & 1.2174                                                         & 1.0657                                                     & 0.9669                                                      & 1.0301                                                       & 1.0098                                                        & 1.9544                                                   \\
                          & 0.8      & {\color[HTML]{FF0000} \textbf{0.2442}}                        & {\color[HTML]{0000FF} {\ul 0.4340}}                              & 0.9072                                                         & 0.8800                                                     & 3.0007                                                      & 0.8786                                                       & 0.9005                                                        & 1.2077                                                   \\
\multirow{-5}{*}{ECL}     & 0.9      & {\color[HTML]{FF0000} \textbf{0.2169}}                        & {\color[HTML]{0000FF} {\ul 0.5121}}                              & 1.2742                                                         & 0.7567                                                     & 1.0618                                                      & 0.6420                                                       & 0.9439                                                        & 1.0830                                                   \\ \bottomrule
\end{tabular}
}
\caption{Probabilistic forecasting performance of vLinear. Q-Risk at various quantiles is reported. {\color[HTML]{FF0000} \textbf{Red}}: Best; {\color[HTML]{0000FF} {\ul Blue}}: Second-Best. Note that iTransformer, PatchTST, FEDformer, Autoformer, and Transformer were adapted for quantile prediction by training with the quantile loss \protect \cite{quantileformer}. The results of the other models are taken from QuantileFormer. The `Input-96-Predict-96' setting is applied.}
\label{tab_vlinear_as_prob}
\end{center}
\end{table*}

\begin{figure*}[t]
   \centering
   \includegraphics[width=1.0\linewidth]{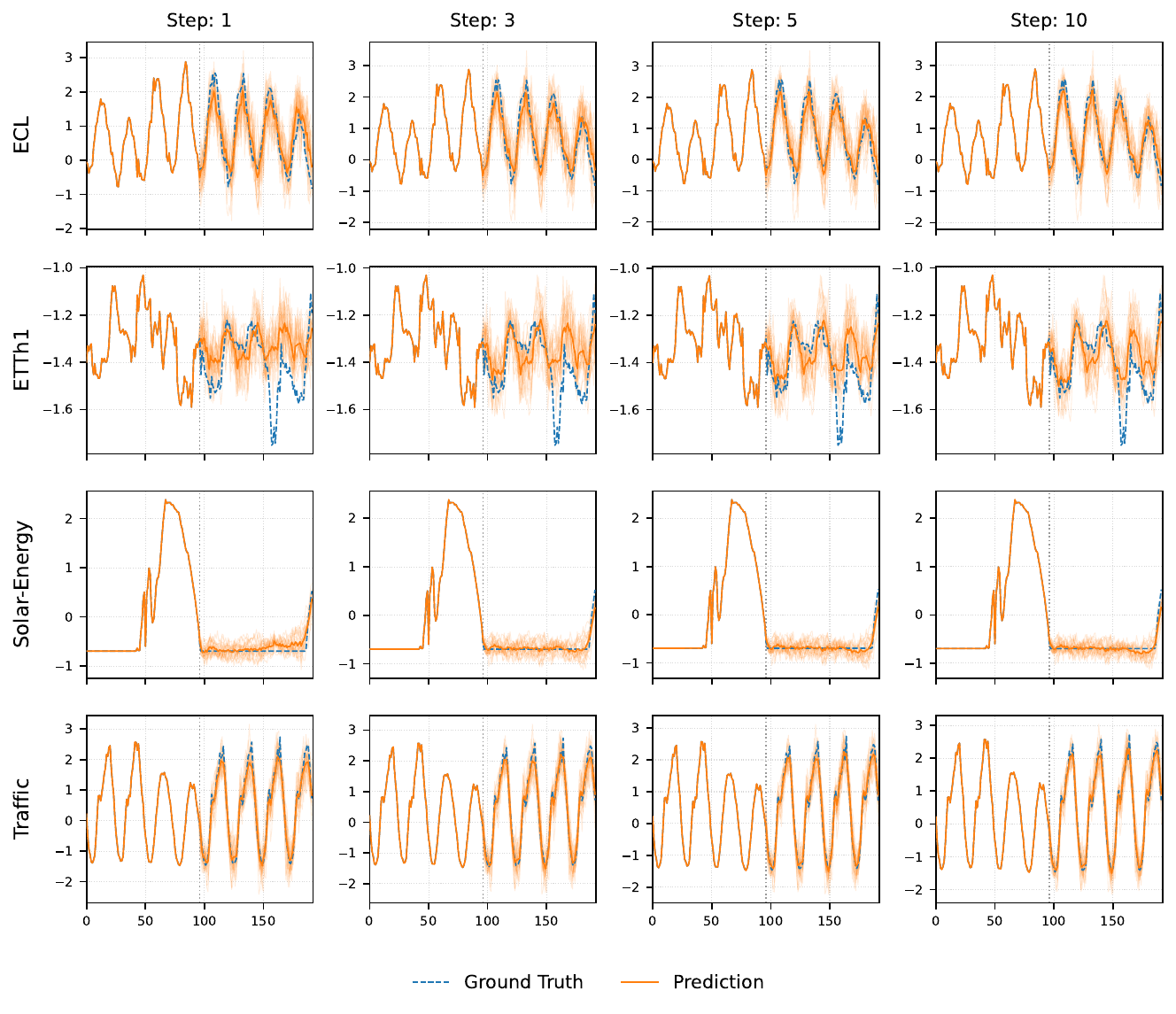}
   \caption{Visualization of probabilistic forecasting results (Input: 96, Predict: 96). The faint lines depict 20 individual prediction samples, while the solid line represents their average. Generally, prediction accuracy improves as the inference steps increase.}
   \label{fig_probts_visual}
\end{figure*}

Figure~\ref{fig_probts_visual} visualizes the probabilistic forecasting results. vLinear exhibits robust prediction accuracy, with the predictions becoming increasingly refined as the number of inference steps increases. This observation aligns well with the quantitative results reported in Figure~\ref{fig_hyperpara_steps}.

\subsection{Importance of Noise in Training}
\label{appd_noise}

\begin{table*}[ht]
\begin{center}
\setlength{\tabcolsep}{15pt}
\renewcommand{\arraystretch}{1.0} 
\renewcommand{\topfraction}{0.9}
{\fontsize{8}{11}\selectfont
\begin{tabular}{@{}cc|cc|cc@{}}
\toprule
\multicolumn{2}{c|}{Model}      & \multicolumn{2}{c|}{vLinear}                                                  & \multicolumn{2}{c}{vLinear$\dagger$}                                                  \\ \midrule
\multicolumn{2}{c|}{Metric}     & MSE                                   & MAE                                   & MSE                                   & MAE                                   \\ \midrule
                          & 96  & {\color[HTML]{FF0000} \textbf{0.356}} & {\color[HTML]{FF0000} \textbf{0.379}} & 0.357                                 & 0.380                                 \\
                          & 192 & {\color[HTML]{FF0000} \textbf{0.409}} & {\color[HTML]{FF0000} \textbf{0.411}} & {\color[HTML]{FF0000} \textbf{0.409}} & {\color[HTML]{FF0000} \textbf{0.411}} \\
                          & 336 & {\color[HTML]{FF0000} \textbf{0.449}} & {\color[HTML]{FF0000} \textbf{0.433}} & {\color[HTML]{FF0000} \textbf{0.449}} & 0.434                                 \\
                          & 720 & {\color[HTML]{FF0000} \textbf{0.451}} & {\color[HTML]{FF0000} \textbf{0.456}} & 0.457                                 & 0.460                                 \\ \cmidrule(l){2-6} 
\multirow{-5}{*}{ETTh1}   & Avg & {\color[HTML]{FF0000} \textbf{0.416}} & {\color[HTML]{FF0000} \textbf{0.420}} & 0.418                                 & 0.421                                 \\ \midrule
                          & 96  & {\color[HTML]{FF0000} \textbf{0.168}} & {\color[HTML]{FF0000} \textbf{0.245}} & 0.169                                 & 0.246                                 \\
                          & 192 & {\color[HTML]{FF0000} \textbf{0.231}} & {\color[HTML]{FF0000} \textbf{0.287}} & 0.232                                 & {\color[HTML]{FF0000} \textbf{0.287}} \\
                          & 336 & {\color[HTML]{FF0000} \textbf{0.289}} & {\color[HTML]{FF0000} \textbf{0.326}} & 0.292                                 & 0.328                                 \\
                          & 720 & {\color[HTML]{FF0000} \textbf{0.386}} & {\color[HTML]{FF0000} \textbf{0.382}} & 0.388                                 & 0.384                                 \\ \cmidrule(l){2-6} 
\multirow{-5}{*}{ETTm2}   & Avg & {\color[HTML]{FF0000} \textbf{0.268}} & {\color[HTML]{FF0000} \textbf{0.310}} & 0.270                                 & 0.311                                 \\ \midrule
                          & 96  & {\color[HTML]{FF0000} \textbf{0.129}} & {\color[HTML]{FF0000} \textbf{0.221}} & 0.134                                 & 0.226                                 \\
                          & 192 & {\color[HTML]{FF0000} \textbf{0.147}} & {\color[HTML]{FF0000} \textbf{0.238}} & 0.152                                 & 0.243                                 \\
                          & 336 & {\color[HTML]{FF0000} \textbf{0.158}} & {\color[HTML]{FF0000} \textbf{0.250}} & 0.159                                 & 0.253                                 \\
                          & 720 & {\color[HTML]{FF0000} \textbf{0.178}} & {\color[HTML]{FF0000} \textbf{0.271}} & 0.192                                 & 0.282                                 \\ \cmidrule(l){2-6} 
\multirow{-5}{*}{ECL}     & Avg & {\color[HTML]{FF0000} \textbf{0.153}} & {\color[HTML]{FF0000} \textbf{0.245}} & 0.159                                 & 0.251                                 \\ \midrule
                          & 96  & {\color[HTML]{FF0000} \textbf{0.396}} & {\color[HTML]{FF0000} \textbf{0.233}} & 0.403                                 & 0.238                                 \\
                          & 192 & {\color[HTML]{FF0000} \textbf{0.427}} & {\color[HTML]{FF0000} \textbf{0.243}} & 0.444                                 & 0.253                                 \\
                          & 336 & {\color[HTML]{FF0000} \textbf{0.449}} & {\color[HTML]{FF0000} \textbf{0.255}} & 0.463                                 & 0.264                                 \\
                          & 720 & {\color[HTML]{FF0000} \textbf{0.487}} & {\color[HTML]{FF0000} \textbf{0.276}} & 0.491                                 & 0.283                                 \\ \cmidrule(l){2-6} 
\multirow{-5}{*}{Traffic} & Avg & {\color[HTML]{FF0000} \textbf{0.440}} & {\color[HTML]{FF0000} \textbf{0.252}} & 0.450                                 & 0.259                                 \\ \midrule
                          & 96  & {\color[HTML]{FF0000} \textbf{0.150}} & {\color[HTML]{FF0000} \textbf{0.186}} & 0.152                                 & 0.189                                 \\
                          & 192 & {\color[HTML]{FF0000} \textbf{0.198}} & {\color[HTML]{FF0000} \textbf{0.234}} & 0.203                                 & 0.238                                 \\
                          & 336 & {\color[HTML]{FF0000} \textbf{0.252}} & {\color[HTML]{FF0000} \textbf{0.275}} & 0.254                                 & 0.276                                 \\
                          & 720 & {\color[HTML]{FF0000} \textbf{0.332}} & {\color[HTML]{FF0000} \textbf{0.328}} & 0.378                                 & 0.368                                 \\ \cmidrule(l){2-6} 
\multirow{-5}{*}{Weather} & Avg & {\color[HTML]{FF0000} \textbf{0.233}} & {\color[HTML]{FF0000} \textbf{0.256}} & 0.247                                 & 0.268                                 \\ \midrule
                          & 12  & {\color[HTML]{FF0000} \textbf{0.059}} & {\color[HTML]{FF0000} \textbf{0.158}} & 0.060                                 & 0.159                                 \\
                          & 24  & {\color[HTML]{FF0000} \textbf{0.075}} & {\color[HTML]{FF0000} \textbf{0.177}} & 0.076                                 & 0.180                                 \\
                          & 48  & {\color[HTML]{FF0000} \textbf{0.102}} & {\color[HTML]{FF0000} \textbf{0.208}} & 0.105                                 & 0.212                                 \\
                          & 96  & {\color[HTML]{FF0000} \textbf{0.138}} & {\color[HTML]{FF0000} \textbf{0.246}} & 0.142                                 & 0.252                                 \\ \cmidrule(l){2-6} 
\multirow{-5}{*}{PEMS03}  & Avg & {\color[HTML]{FF0000} \textbf{0.093}} & {\color[HTML]{FF0000} \textbf{0.197}} & 0.096                                 & 0.200                                 \\ \bottomrule
\end{tabular}
}
\caption{Performance comparison with and without noise in $\hat{\mathbf{Y}} ^{(0)}$ during training. vLinear$\dagger$ denotes the variant where $\hat{\mathbf{Y}} ^{(0)}$ is set to an all-zero matrix in the training phase. The better results are highlighted in {\color[HTML]{FF0000} \textbf{bold}}.}
\label{tab_important_noise}
\end{center}
\end{table*}

In vLinear, the initial state $\hat{\mathbf{Y}} ^{(0)}$ is initialized with Gaussian noise during training. To evaluate the impact of this noise, we conduct an ablation study by setting $\hat{\mathbf{Y}} ^{(0)}$ to an all-zero matrix.
As shown in Table~\ref{tab_important_noise}, training without noise leads to performance degradation. In contrast, incorporating noise into $\hat{\mathbf{Y}} ^{(0)}$ yields average MSE improvements of 3.9\% and 5.6\% on the ECL and Weather datasets, respectively. This observation suggests that the noise plays an important role in enhancing the model's robustness. We hypothesize that relying solely on the trajectory from the origin (all-zero state) to the ground truth limits vLinear's capability to estimate accurate velocity for \textbf{deviated or `outlier' intermediate states} encountered in the unseen test set.

\subsection{Additional Evaluation Metrics}
\label{appd_more_metrics}

We further employ the following scale-free metrics for comprehensive evaluation: the Coefficient of Determination ($R^2$), the Pearson Correlation Coefficient ($r$), and the Mean Absolute Scaled Error (MASE), which are defined as:

\begin{equation}
\begin{aligned}
  &R^2=\frac{1}{N}\sum_{n=1}^{N}\left ( 1-\frac{\left \| \hat{\mathrm{Y}} _{n:}-\mathrm{Y}_{n:} \right \| ^2_2 }{\left \| \mathrm{Y} _{n:}-\overline{\mathrm{Y}_{n:}}  \right \| ^2_2}  \right )  , \\
  &r =  \frac{1}{N}\sum_{n=1}^{N}\frac{ {\textstyle \sum_{i=1}^{H}}  \left ( \hat{\mathrm{y}} _{n,i}-\overline{\hat{\mathrm{Y}} _{n:}}  \right )\left ( \mathrm{y} _{n,i}-\overline{\mathrm{Y} _{n:}}  \right )  }
{  \sqrt{ {\textstyle \sum_{i=1}^{H}} \left ( \hat{\mathrm{y}} _{n,i}-\overline{\hat{\mathrm{Y}} _{n:}}  \right )^2} \sqrt{ {\textstyle \sum_{i=1}^{H}} \left ( \mathrm{y} _{n,i}-\overline{\mathrm{Y} _{n:}}  \right )^2}     } ,   \\
  &\mathrm{MASE} = \frac{1}{N}\sum_{n=1}^{N}\frac{\frac{1}{H} \left \| \hat{\mathrm{Y}} _{n:}-\mathrm{Y}_{n:} \right \|_1  }{\frac{1}{H-1}  {\textstyle \sum_{i=2}^{H} \left | \mathrm{y} _{n,i}-\mathrm{y}_{n,i-1} \right | } },
\end{aligned}
\label{more_metrics}
\end{equation}

\noindent where $\mathrm{Y} _{n:}$ and $\hat{\mathrm{Y}} _{n:}$ denote the $n$-th variate of the ground truth $\mathbf{Y}$ and the prediction $\hat{\mathbf{Y}}$, respectively. $\mathrm{y} _{n,i}$ denotes the $i$-th time step of the $n$-th variate of $\mathbf{Y}$, and $\overline{(\cdot)}$ denotes the mean.

Table~\ref{tab_more_metrics} shows that vLinear consistently outperforms state-of-the-art forecasters in terms of these additional metrics, demonstrating the comprehensive superiority and robustness of our method.

\begin{table*}[t]
\begin{center}
\renewcommand{\arraystretch}{1.2}
{\fontsize{8}{9}\selectfont
\setlength{\tabcolsep}{3pt}
\begin{tabular}{@{}cc|ccc|ccc|ccc|ccc|ccc@{}}
\toprule
\multicolumn{2}{c|}{Model}      & \multicolumn{3}{c|}{\begin{tabular}[c]{@{}c@{}}vLinear\\      (Ours)\end{tabular}}                                    
& \multicolumn{3}{c|}{\begin{tabular}[c]{@{}c@{}}OLinear\\   \shortcite{olinear} \end{tabular}}                         
& \multicolumn{3}{c|}{\begin{tabular}[c]{@{}c@{}}Leddam\\      \shortcite{Leddam_icml} \end{tabular}}  
& \multicolumn{3}{c|}{\begin{tabular}[c]{@{}c@{}}CARD\\     \shortcite{card} \end{tabular}}     
& \multicolumn{3}{c}{\begin{tabular}[c]{@{}c@{}}iTrans.\\      \shortcite{itransformer} \end{tabular}}  \\ \midrule
\multicolumn{2}{c|}{Metric}     & $R^2$ ($\uparrow$)                                   & $r$ ($\uparrow$)                                     & MASE ($\downarrow$)                                 & $R^2$ ($\uparrow$)                                   & $r$ ($\uparrow$)                                     & MASE ($\downarrow$)                                 & $R^2$ ($\uparrow$)             & $r$ ($\uparrow$)                                           & MASE ($\downarrow$)          & $R^2$ ($\uparrow$)              & $r$ ($\uparrow$)              & MASE ($\downarrow$)                                       & $R^2$ ($\uparrow$)                                          & $r$ ($\uparrow$)              & MASE ($\downarrow$)          \\ \midrule
                          & 96  & {\color[HTML]{FF0000} \textbf{0.652}} & {\color[HTML]{FF0000} \textbf{0.916}} & {\color[HTML]{0000FF} {\ul 0.903}}    & {\color[HTML]{0000FF} {\ul 0.640}}    & {\color[HTML]{FF0000} \textbf{0.916}} & {\color[HTML]{FF0000} \textbf{0.902}} & 0.527           & {\color[HTML]{0000FF} {\ul 0.911}}          & 0.964          & 0.533            & 0.905          & 0.975                                       & 0.550                                        & 0.907          & 0.990          \\
                          & 192 & {\color[HTML]{FF0000} \textbf{0.663}} & {\color[HTML]{FF0000} \textbf{0.908}} & {\color[HTML]{0000FF} {\ul 0.984}}    & {\color[HTML]{0000FF} {\ul 0.638}}    & {\color[HTML]{FF0000} \textbf{0.908}} & {\color[HTML]{FF0000} \textbf{0.983}} & 0.593           & {\color[HTML]{0000FF} {\ul 0.902}}          & 1.041          & 0.547            & 0.899          & 1.028                                       & 0.590                                        & 0.900          & 1.057          \\
                          & 336 & {\color[HTML]{FF0000} \textbf{0.717}} & {\color[HTML]{FF0000} \textbf{0.902}} & {\color[HTML]{FF0000} \textbf{1.051}} & {\color[HTML]{0000FF} {\ul 0.710}}    & {\color[HTML]{0000FF} {\ul 0.901}}    & {\color[HTML]{0000FF} {\ul 1.061}}    & 0.670           & 0.893                                       & 1.134          & 0.678            & 0.891          & 1.113                                       & 0.678                                        & 0.893          & 1.133          \\
                          & 720 & {\color[HTML]{FF0000} \textbf{0.709}} & {\color[HTML]{FF0000} \textbf{0.892}} & {\color[HTML]{FF0000} \textbf{1.163}} & {\color[HTML]{0000FF} {\ul 0.701}}    & {\color[HTML]{0000FF} {\ul 0.889}}    & {\color[HTML]{0000FF} {\ul 1.174}}    & 0.661           & 0.882                                       & 1.251          & 0.653            & 0.877          & 1.273                                       & 0.638                                        & 0.872          & 1.315          \\ \cmidrule(l){2-17} 
\multirow{-5}{*}{ECL}     & Avg & {\color[HTML]{FF0000} \textbf{0.685}} & {\color[HTML]{FF0000} \textbf{0.904}} & {\color[HTML]{FF0000} \textbf{1.025}} & {\color[HTML]{0000FF} {\ul 0.672}}    & {\color[HTML]{0000FF} {\ul 0.903}}    & {\color[HTML]{0000FF} {\ul 1.030}}    & 0.613           & 0.897                                       & 1.098          & 0.603            & 0.893          & 1.097                                       & 0.614                                        & 0.893          & 1.124          \\ \midrule
                          & 96  & {\color[HTML]{FF0000} \textbf{0.738}} & {\color[HTML]{FF0000} \textbf{0.900}} & {\color[HTML]{0000FF} {\ul 0.730}}    & {\color[HTML]{0000FF} {\ul 0.737}}    & {\color[HTML]{0000FF} {\ul 0.899}}    & {\color[HTML]{FF0000} \textbf{0.729}} & 0.681           & 0.885                                       & 0.909          & 0.690            & 0.882          & 0.855                                       & 0.690                                        & 0.889          & 0.860          \\
                          & 192 & {\color[HTML]{FF0000} \textbf{0.719}} & {\color[HTML]{FF0000} \textbf{0.885}} & {\color[HTML]{FF0000} \textbf{0.756}} & {\color[HTML]{FF0000} \textbf{0.719}} & {\color[HTML]{0000FF} {\ul 0.884}}    & {\color[HTML]{FF0000} \textbf{0.756}} & 0.666           & 0.865                                       & 0.949          & 0.683            & 0.869          & {\color[HTML]{0000FF} {\ul 0.856}}          & {\color[HTML]{0000FF} {\ul 0.689}}           & 0.877          & 0.879          \\
                          & 336 & {\color[HTML]{FF0000} \textbf{0.731}} & {\color[HTML]{FF0000} \textbf{0.876}} & {\color[HTML]{0000FF} {\ul 0.762}}    & {\color[HTML]{0000FF} {\ul 0.730}}    & {\color[HTML]{0000FF} {\ul 0.875}}    & {\color[HTML]{FF0000} \textbf{0.758}} & 0.707           & 0.867                                       & 0.897          & 0.702            & 0.862          & 0.843                                       & 0.713                                        & 0.867          & 0.883          \\
                          & 720 & {\color[HTML]{FF0000} \textbf{0.705}} & {\color[HTML]{FF0000} \textbf{0.861}} & {\color[HTML]{0000FF} {\ul 0.819}}    & {\color[HTML]{0000FF} {\ul 0.703}}    & {\color[HTML]{0000FF} {\ul 0.860}}    & {\color[HTML]{FF0000} \textbf{0.815}} & 0.687           & 0.853                                       & 0.947          & 0.679            & 0.848          & 0.894                                       & 0.693                                        & 0.857          & 0.913          \\ \cmidrule(l){2-17} 
\multirow{-5}{*}{Traffic} & Avg & {\color[HTML]{FF0000} \textbf{0.723}} & {\color[HTML]{FF0000} \textbf{0.880}} & {\color[HTML]{0000FF} {\ul 0.767}}    & {\color[HTML]{0000FF} {\ul 0.722}}    & {\color[HTML]{0000FF} {\ul 0.879}}    & {\color[HTML]{FF0000} \textbf{0.764}} & 0.685           & 0.868                                       & 0.926          & 0.689            & 0.865          & 0.862                                       & 0.696                                        & 0.873          & 0.884          \\ \midrule
\multicolumn{2}{c|}{Model}      & \multicolumn{3}{c|}{\begin{tabular}[c]{@{}c@{}}TimeMixer\\      \shortcite{timemixer} \end{tabular}}                                      
& \multicolumn{3}{c|}{\begin{tabular}[c]{@{}c@{}}FilterNet\\    \shortcite{filternet} \end{tabular}}                                      
& \multicolumn{3}{c|}{\begin{tabular}[c]{@{}c@{}}DLinear\\      \shortcite{linear} \end{tabular}} 
& \multicolumn{3}{c|}{\begin{tabular}[c]{@{}c@{}}PatchTST\\      \shortcite{patchtst} \end{tabular}} 
& \multicolumn{3}{c}{\begin{tabular}[c]{@{}c@{}}TimesNet\\      \shortcite{timesnet} \end{tabular}} \\ \midrule
\multicolumn{2}{c|}{Metric}     & $R^2$ ($\uparrow$)                                   & $r$ ($\uparrow$)                                     & MASE ($\downarrow$)                                 & $R^2$ ($\uparrow$)                                   & $r$ ($\uparrow$)                                     & MASE ($\downarrow$)                                 & $R^2$ ($\uparrow$)             & $r$ ($\uparrow$)                                           & MASE ($\downarrow$)          & $R^2$ ($\uparrow$)              & $r$ ($\uparrow$)              & MASE ($\downarrow$)                                       & $R^2$ ($\uparrow$)                                          & $r$ ($\uparrow$)              & MASE ($\downarrow$)          \\ \midrule
                          & 96  & 0.456                                 & 0.897                                 & 1.022                                 & 0.537                                 & 0.905                                 & 0.996                                 & 0.120           & 0.870                                       & 1.300          & 0.426            & 0.890          & 1.119                                       & 0.405                                        & 0.892          & 1.118          \\
                          & 192 & 0.575                                 & 0.893                                 & 1.076                                 & 0.576                                 & 0.898                                 & 1.052                                 & 0.215           & 0.867                                       & 1.312          & 0.509            & 0.887          & 1.140                                       & 0.503                                        & 0.885          & 1.195          \\
                          & 336 & 0.640                                 & 0.886                                 & 1.155                                 & 0.670                                 & 0.891                                 & 1.139                                 & 0.247           & 0.858                                       & 1.387          & 0.604            & 0.880          & 1.216                                       & 0.592                                        & 0.875          & 1.262          \\
                          & 720 & 0.634                                 & 0.871                                 & 1.314                                 & 0.628                                 & 0.875                                 & 1.328                                 & 0.148           & 0.844                                       & 1.536          & 0.605            & 0.864          & 1.371                                       & 0.603                                        & 0.864          & 1.388          \\ \cmidrule(l){2-17} 
\multirow{-5}{*}{ECL}     & Avg & 0.576                                 & 0.887                                 & 1.142                                 & 0.603                                 & 0.892                                 & 1.129                                 & 0.182           & 0.860                                       & 1.384          & 0.536            & 0.880          & 1.211                                       & 0.526                                        & 0.879          & 1.241          \\ \midrule
                          & 96  & 0.640                                 & 0.888                                 & 0.979                                 & 0.637                                 & 0.876                                 & 0.987                                 & 0.441           & 0.811                                       & 1.346          & 0.668            & 0.880          & 0.903                                       & 0.625                                        & 0.877          & 1.023          \\
                          & 192 & 0.660                                 & 0.862                                 & 0.959                                 & 0.665                                 & 0.867                                 & 0.950                                 & 0.551           & 0.815                                       & 1.189          & 0.677            & 0.871          & 0.887                                       & 0.635                                        & 0.867          & 1.009          \\
                          & 336 & 0.685                                 & 0.852                                 & 0.960                                 & 0.696                                 & 0.861                                 & 0.930                                 & 0.582           & 0.810                                       & 1.167          & 0.700            & 0.863          & 0.874                                       & 0.654                                        & 0.852          & 1.079          \\
                          & 720 & 0.672                                 & 0.842                                 & 0.986                                 & 0.678                                 & 0.847                                 & 0.982                                 & 0.564           & 0.794                                       & 1.224          & 0.681            & 0.850          & 0.925                                       & 0.661                                        & 0.850          & 1.047          \\ \cmidrule(l){2-17} 
\multirow{-5}{*}{Traffic} & Avg & 0.664                                 & 0.861                                 & 0.971                                 & 0.669                                 & 0.862                                 & 0.962                                 & 0.535           & 0.808                                       & 1.232          & 0.682            & 0.866          & 0.897                                       & 0.644                                        & 0.862          & 1.040          \\ \bottomrule
\end{tabular}
}
\caption{Performance on the scale-free metrics: Coefficient of Determination ($R^2$), Pearson Correlation Coefficient ($r$), and MASE. The symbols $\downarrow$ and $\uparrow$ indicate that lower and higher values are better, respectively. The best and second-best results are highlighted in {\color[HTML]{FF0000} \textbf{bold}} and {\color[HTML]{0000FF} {\ul underlined}}, respectively. The lookback length $T$ is uniformly set as 96.}
\label{tab_more_metrics}
\end{center}
\end{table*}

\subsection{Pre- and Post-Linear Layers in vecTrans Module} \label{appd_pre_post_lin}

In the vecTrans module, linear layers are applied both before and after the core vecTrans operation, formulated as: $\mathrm{Linear} \left ( \mathtt{vecTrans} \left ( \mathrm{Linear} \left ( \cdot \right )  \right )  \right )$.

We conducted an ablation study to evaluate the contributions of these two layers. As shown in Table~\ref{tab:abl_pre_post_lin}, incorporating both pre- and post-linear layers yields an average performance gain of 7.6\% compared to the variant without them. This highlights their effectiveness in refining features for multivariate correlation modeling and facilitating downstream temporal dynamic learning.

\begin{table*}[ht]
\begin{center}
{\fontsize{8}{9}\selectfont
\renewcommand{\arraystretch}{1.3}
\setlength{\tabcolsep}{7pt}
\begin{tabular}{@{}cc|c|cc|cc|cc|cc|cc|cc@{}}
\toprule
                         &                           &                        & \multicolumn{2}{c|}{ETTh1}                                                     & \multicolumn{2}{c|}{ETTm2}                                                     & \multicolumn{2}{c|}{ECL}                                                       & \multicolumn{2}{c|}{Solar}                                                     & \multicolumn{2}{c|}{Weather}                                                   & \multicolumn{2}{c}{PEMS03}                                                    \\ \cmidrule(l){4-15} 
\multirow{-2}{*}{PreLin} & \multirow{-2}{*}{PostLin} & \multirow{-2}{*}{Hor.} & MSE                                   & MAE                                   & MSE                                   & MAE                                   & MSE                                   & MAE                                   & MSE                                   & MAE                                   & MSE                                   & MAE                                   & MSE                                   & MAE                                   \\ \midrule
                         &                           & H1                     & 0.385                                 & 0.404                                 & {\color[HTML]{0000FF} {\ul 0.177}}    & 0.257                                 & 0.141                                 & 0.236                                 & 0.203                                 & 0.222                                 & 0.160                                 & 0.196                                 & 0.067                                 & 0.170                                 \\
                         &                           & H2                     & 0.440                                 & 0.437                                 & 0.246                                 & {\color[HTML]{0000FF} {\ul 0.303}}    & 0.155                                 & 0.248                                 & 0.226                                 & 0.241                                 & 0.209                                 & 0.244                                 & 0.092                                 & 0.198                                 \\
                         &                           & H3                     & 0.473                                 & 0.453                                 & 0.298                                 & 0.335                                 & 0.167                                 & 0.262                                 & 0.249                                 & 0.259                                 & 0.261                                 & 0.282                                 & 0.140                                 & 0.243                                 \\
                         &                           & H4                     & 0.461                                 & 0.465                                 & 0.401                                 & 0.394                                 & 0.194                                 & 0.286                                 & 0.257                                 & 0.264                                 & 0.344                                 & 0.338                                 & 0.185                                 & 0.285                                 \\ \cmidrule(l){3-15} 
\multirow{-5}{*}{\ding{55}}    & \multirow{-5}{*}{\ding{55}}     & Avg                    & 0.440                                 & 0.440                                 & 0.280                                 & 0.322                                 & 0.164                                 & 0.258                                 & 0.234                                 & 0.246                                 & 0.243                                 & 0.265                                 & 0.121                                 & 0.224                                 \\ \midrule
                         &                           & H1                     & {\color[HTML]{0000FF} {\ul 0.358}}    & {\color[HTML]{0000FF} {\ul 0.381}}    & {\color[HTML]{FF0000} \textbf{0.168}} & {\color[HTML]{0000FF} {\ul 0.246}}    & {\color[HTML]{FF0000} \textbf{0.129}} & {\color[HTML]{FF0000} \textbf{0.220}} & {\color[HTML]{0000FF} {\ul 0.180}}    & {\color[HTML]{0000FF} {\ul 0.201}}    & {\color[HTML]{0000FF} {\ul 0.151}}    & {\color[HTML]{0000FF} {\ul 0.187}}    & {\color[HTML]{0000FF} {\ul 0.060}}    & {\color[HTML]{0000FF} {\ul 0.160}}    \\
                         &                           & H2                     & {\color[HTML]{0000FF} {\ul 0.410}}    & {\color[HTML]{0000FF} {\ul 0.412}}    & {\color[HTML]{0000FF} {\ul 0.233}}    & {\color[HTML]{FF0000} \textbf{0.287}} & {\color[HTML]{0000FF} {\ul 0.148}}    & {\color[HTML]{0000FF} {\ul 0.240}}    & {\color[HTML]{0000FF} {\ul 0.207}}    & {\color[HTML]{0000FF} {\ul 0.224}}    & {\color[HTML]{0000FF} {\ul 0.199}}    & {\color[HTML]{0000FF} {\ul 0.235}}    & {\color[HTML]{0000FF} {\ul 0.078}}    & {\color[HTML]{0000FF} {\ul 0.179}}    \\
                         &                           & H3                     & {\color[HTML]{0000FF} {\ul 0.452}}    & {\color[HTML]{0000FF} {\ul 0.434}}    & {\color[HTML]{0000FF} {\ul 0.293}}    & {\color[HTML]{0000FF} {\ul 0.328}}    & {\color[HTML]{0000FF} {\ul 0.159}}    & {\color[HTML]{0000FF} {\ul 0.252}}    & {\color[HTML]{0000FF} {\ul 0.228}}    & {\color[HTML]{0000FF} {\ul 0.243}}    & {\color[HTML]{0000FF} {\ul 0.254}}    & {\color[HTML]{0000FF} {\ul 0.276}}    & {\color[HTML]{0000FF} {\ul 0.105}}    & {\color[HTML]{0000FF} {\ul 0.210}}    \\
                         &                           & H4                     & {\color[HTML]{0000FF} {\ul 0.459}}    & {\color[HTML]{0000FF} {\ul 0.461}}    & {\color[HTML]{0000FF} {\ul 0.390}}    & {\color[HTML]{0000FF} {\ul 0.385}}    & {\color[HTML]{0000FF} {\ul 0.183}}    & {\color[HTML]{0000FF} {\ul 0.276}}    & {\color[HTML]{0000FF} {\ul 0.237}}    & {\color[HTML]{0000FF} {\ul 0.251}}    & 0.336                                 & 0.332                                 & 0.146                                 & 0.254                                 \\ \cmidrule(l){3-15} 
\multirow{-5}{*}{\ding{55}}    & \multirow{-5}{*}{\ding{52}}    & Avg                    & {\color[HTML]{0000FF} {\ul 0.420}}    & {\color[HTML]{0000FF} {\ul 0.422}}    & {\color[HTML]{0000FF} {\ul 0.271}}    & {\color[HTML]{0000FF} {\ul 0.311}}    & {\color[HTML]{0000FF} {\ul 0.155}}    & {\color[HTML]{0000FF} {\ul 0.247}}    & {\color[HTML]{0000FF} {\ul 0.213}}    & {\color[HTML]{0000FF} {\ul 0.230}}    & {\color[HTML]{0000FF} {\ul 0.235}}    & {\color[HTML]{0000FF} {\ul 0.257}}    & {\color[HTML]{0000FF} {\ul 0.097}}    & 0.201                                 \\ \midrule
                         &                           & H1                     & {\color[HTML]{0000FF} {\ul 0.358}}    & {\color[HTML]{0000FF} {\ul 0.381}}    & {\color[HTML]{FF0000} \textbf{0.168}} & {\color[HTML]{0000FF} {\ul 0.246}}    & {\color[HTML]{FF0000} \textbf{0.129}} & {\color[HTML]{FF0000} \textbf{0.220}} & {\color[HTML]{0000FF} {\ul 0.180}}    & {\color[HTML]{0000FF} {\ul 0.201}}    & {\color[HTML]{0000FF} {\ul 0.151}}    & {\color[HTML]{0000FF} {\ul 0.187}}    & {\color[HTML]{0000FF} {\ul 0.060}}    & {\color[HTML]{0000FF} {\ul 0.160}}    \\
                         &                           & H2                     & {\color[HTML]{0000FF} {\ul 0.410}}    & {\color[HTML]{0000FF} {\ul 0.412}}    & {\color[HTML]{0000FF} {\ul 0.233}}    & {\color[HTML]{FF0000} \textbf{0.287}} & 0.149                                 & {\color[HTML]{0000FF} {\ul 0.240}}    & {\color[HTML]{0000FF} {\ul 0.207}}    & {\color[HTML]{0000FF} {\ul 0.224}}    & {\color[HTML]{0000FF} {\ul 0.199}}    & {\color[HTML]{0000FF} {\ul 0.235}}    & {\color[HTML]{0000FF} {\ul 0.078}}    & {\color[HTML]{0000FF} {\ul 0.179}}    \\
                         &                           & H3                     & {\color[HTML]{0000FF} {\ul 0.452}}    & {\color[HTML]{0000FF} {\ul 0.434}}    & {\color[HTML]{0000FF} {\ul 0.293}}    & {\color[HTML]{0000FF} {\ul 0.328}}    & {\color[HTML]{0000FF} {\ul 0.159}}    & {\color[HTML]{0000FF} {\ul 0.252}}    & {\color[HTML]{0000FF} {\ul 0.228}}    & {\color[HTML]{0000FF} {\ul 0.243}}    & {\color[HTML]{0000FF} {\ul 0.254}}    & {\color[HTML]{0000FF} {\ul 0.276}}    & {\color[HTML]{0000FF} {\ul 0.105}}    & {\color[HTML]{0000FF} {\ul 0.210}}    \\
                         &                           & H4                     & 0.460                                 & {\color[HTML]{0000FF} {\ul 0.461}}    & 0.391                                 & {\color[HTML]{0000FF} {\ul 0.385}}    & {\color[HTML]{0000FF} {\ul 0.183}}    & {\color[HTML]{0000FF} {\ul 0.276}}    & {\color[HTML]{0000FF} {\ul 0.237}}    & {\color[HTML]{0000FF} {\ul 0.251}}    & {\color[HTML]{0000FF} {\ul 0.335}}    & {\color[HTML]{0000FF} {\ul 0.331}}    & {\color[HTML]{0000FF} {\ul 0.145}}    & {\color[HTML]{0000FF} {\ul 0.253}}    \\ \cmidrule(l){3-15} 
\multirow{-5}{*}{\ding{52}}   & \multirow{-5}{*}{\ding{55}}     & Avg                    & {\color[HTML]{0000FF} {\ul 0.420}}    & {\color[HTML]{0000FF} {\ul 0.422}}    & {\color[HTML]{0000FF} {\ul 0.271}}    & 0.312                                 & {\color[HTML]{0000FF} {\ul 0.155}}    & {\color[HTML]{0000FF} {\ul 0.247}}    & {\color[HTML]{0000FF} {\ul 0.213}}    & {\color[HTML]{0000FF} {\ul 0.230}}    & {\color[HTML]{0000FF} {\ul 0.235}}    & {\color[HTML]{0000FF} {\ul 0.257}}    & {\color[HTML]{0000FF} {\ul 0.097}}    & {\color[HTML]{0000FF} {\ul 0.200}}    \\ \midrule
                         &                           & H1                     & {\color[HTML]{FF0000} \textbf{0.356}} & {\color[HTML]{FF0000} \textbf{0.379}} & {\color[HTML]{FF0000} \textbf{0.168}} & {\color[HTML]{FF0000} \textbf{0.245}} & {\color[HTML]{FF0000} \textbf{0.129}} & {\color[HTML]{0000FF} {\ul 0.221}}    & {\color[HTML]{FF0000} \textbf{0.175}} & {\color[HTML]{FF0000} \textbf{0.196}} & {\color[HTML]{FF0000} \textbf{0.150}} & {\color[HTML]{FF0000} \textbf{0.186}} & {\color[HTML]{FF0000} \textbf{0.059}} & {\color[HTML]{FF0000} \textbf{0.158}} \\
                         &                           & H2                     & {\color[HTML]{FF0000} \textbf{0.409}} & {\color[HTML]{FF0000} \textbf{0.411}} & {\color[HTML]{FF0000} \textbf{0.231}} & {\color[HTML]{FF0000} \textbf{0.287}} & {\color[HTML]{FF0000} \textbf{0.147}} & {\color[HTML]{FF0000} \textbf{0.238}} & {\color[HTML]{FF0000} \textbf{0.202}} & {\color[HTML]{FF0000} \textbf{0.220}} & {\color[HTML]{FF0000} \textbf{0.198}} & {\color[HTML]{FF0000} \textbf{0.234}} & {\color[HTML]{FF0000} \textbf{0.075}} & {\color[HTML]{FF0000} \textbf{0.177}} \\
                         &                           & H3                     & {\color[HTML]{FF0000} \textbf{0.449}} & {\color[HTML]{FF0000} \textbf{0.433}} & {\color[HTML]{FF0000} \textbf{0.289}} & {\color[HTML]{FF0000} \textbf{0.326}} & {\color[HTML]{FF0000} \textbf{0.158}} & {\color[HTML]{FF0000} \textbf{0.250}} & {\color[HTML]{FF0000} \textbf{0.224}} & {\color[HTML]{FF0000} \textbf{0.240}} & {\color[HTML]{FF0000} \textbf{0.252}} & {\color[HTML]{FF0000} \textbf{0.275}} & {\color[HTML]{FF0000} \textbf{0.102}} & {\color[HTML]{FF0000} \textbf{0.208}} \\
                         &                           & H4                     & {\color[HTML]{FF0000} \textbf{0.451}} & {\color[HTML]{FF0000} \textbf{0.456}} & {\color[HTML]{FF0000} \textbf{0.386}} & {\color[HTML]{FF0000} \textbf{0.382}} & {\color[HTML]{FF0000} \textbf{0.178}} & {\color[HTML]{FF0000} \textbf{0.271}} & {\color[HTML]{FF0000} \textbf{0.234}} & {\color[HTML]{FF0000} \textbf{0.247}} & {\color[HTML]{FF0000} \textbf{0.332}} & {\color[HTML]{FF0000} \textbf{0.328}} & {\color[HTML]{FF0000} \textbf{0.138}} & {\color[HTML]{FF0000} \textbf{0.246}} \\ \cmidrule(l){3-15} 
\multirow{-5}{*}{\ding{52}}   & \multirow{-5}{*}{\ding{52}}    & Avg                    & {\color[HTML]{FF0000} \textbf{0.416}} & {\color[HTML]{FF0000} \textbf{0.420}} & {\color[HTML]{FF0000} \textbf{0.268}} & {\color[HTML]{FF0000} \textbf{0.310}} & {\color[HTML]{FF0000} \textbf{0.153}} & {\color[HTML]{FF0000} \textbf{0.245}} & {\color[HTML]{FF0000} \textbf{0.209}} & {\color[HTML]{FF0000} \textbf{0.226}} & {\color[HTML]{FF0000} \textbf{0.233}} & {\color[HTML]{FF0000} \textbf{0.256}} & {\color[HTML]{FF0000} \textbf{0.093}} & {\color[HTML]{FF0000} \textbf{0.197}} \\ \bottomrule
\end{tabular}
}
\caption{Ablation study on pre- and post-linear layers in the vecTrans module. `Hor.' denotes horizons. $\left \{ \mathrm{H1}, \mathrm{H2}, \mathrm{H3}, \mathrm{H4 } \right \}$ corresponds to $\{12,24,48,96\}$ for PEMS03, and $\{96,192,336,720\}$ for the other datasets.}
\label{tab:abl_pre_post_lin}
\end{center}
\end{table*}

\subsection{Increasing Lookback Length}

\begin{figure}[H]
   \centering
   \includegraphics[width=1.0\linewidth]{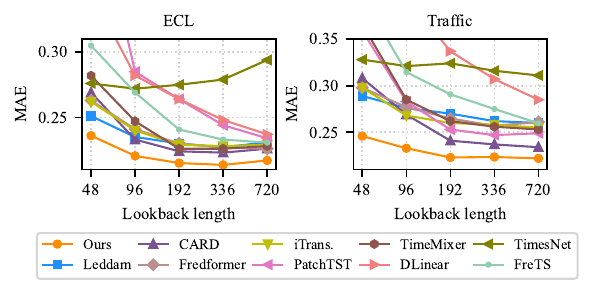}
   \caption{Performance change with increasing lookback lengths. The horizon is uniformly set to $H=96$.}
   \label{fig_lookback}
\end{figure}

Performance under increasing lookback lengths reflects a model’s capability to effectively exploit historical information \cite{itransformer}. While linear-based forecasters theoretically benefit from longer lookback windows \cite{linear}, vLinear empirically demonstrates this advantage. As shown in Figure~\ref{fig_lookback}, vLinear exhibits consistent improvements as the lookback length extends from 48 to 720, consistently surpassing state-of-the-art forecasters.

\subsection{Training with Less Data}

\begin{figure}[H]
   \centering
   \includegraphics[width=1.0\linewidth]{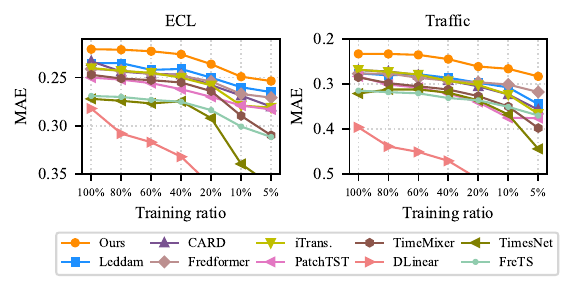}
   \caption{Performance change with less training data. The `Input-96-Predict-96' setting is applied. The Y-axis is inverted for clarity.}
   \label{fig_part_train}
\end{figure}

A model's performance under limited training data reflects its adaptability and learning efficacy. As shown in Figure~\ref{fig_part_train}, vLinear adapts well to reduced training set sizes, ranging from 100\% down to 5\%. Our model consistently outperforms others across all training ratios, highlighting its learning efficacy and robustness to data-sparse conditions.

\section{Full Results}\label{appd_full_results}

This section provides the complete results of some experiments in the main paper.

\subsection{Forecasting Performance}
\label{appd_forecast}

Tables~\ref{tab_long_term_appd} and \ref{tab_long_term_PEMS} present the full long-term forecasting results, corresponding to Table~\ref{tab_long_term} in the main paper. Similarly, Tables~\ref{tab_short_term_p1} and \ref{tab_short_term_p2} detail the short-term forecasting performance (corresponding to Table~\ref{tab_short_term} in the main paper). 

We also provide a comparison with additional state-of-the-art forecasters in Table~\ref{tab_more_baseline_appd} (corresponding to Table~\ref{tab_more_baseline} in the main paper).

\subsection{Ablation Studies}
\label{appd_full_ablation}

Table~\ref{tab_var_tmp_appd} details the ablation studies on variate (\textit{Var.}) and temporal (\textit{Temp.}) dimension representation learning (corresponding to Table~\ref{tab_var_temp} in the main paper). 

Furthermore, Table~\ref{tab_vec_attn_appd} compares the vecTrans module against various attention mechanisms (including the recent Gated Attention \cite{gatedattn}) and Mamba (corresponding to Table~\ref{tab_vec_attn} in the main paper), while Table~\ref{tab_wfmloss_comp_appd} compares WFMLoss with state-of-the-art losses for time series forecasting (corresponding to Table~\ref{tab_wfmloss_compare} in the main paper).

\subsection{Generalization Analysis}
\label{appd_full_general}
Finally, we demonstrate the generalizability of our components. Table~\ref{tab_vecTrans_plugin_appd} and Table~\ref{tab_wfmloss_plugin_appd} report the performance gains when applying the vecTrans module and WFMLoss to existing forecasters, corresponding to Tables~\ref{tab_vecTrans_plugin} and Table~\ref{tab_wfmloss_plugin} in the main paper, respectively.

\begin{table*}[t]
\begin{center}
{\fontsize{6.7}{8}\selectfont
\setlength{\tabcolsep}{2pt}
\begin{tabular}{@{}cc|cc|cc|cc|cc|cc|cc|cc|cc|cc|cc|cc|cc@{}}
\toprule
\multicolumn{2}{c|}{Model}                          & \multicolumn{2}{c|}{\begin{tabular}[c]{@{}c@{}}vLinear\\      (Ours)\end{tabular}} & \multicolumn{2}{c|}{\begin{tabular}[c]{@{}c@{}}OLinear\\  \shortcite{olinear} \end{tabular}} 
& \multicolumn{2}{c|}{\begin{tabular}[c]{@{}c@{}}TimeMixer\\      \shortcite{timemixer} \end{tabular}} 
& \multicolumn{2}{c|}{\begin{tabular}[c]{@{}c@{}}FilterNet\\   \shortcite{filternet} \end{tabular}} 
& \multicolumn{2}{c|}{\begin{tabular}[c]{@{}c@{}}FITS\\      \shortcite{fits} \end{tabular}} 
& \multicolumn{2}{c|}{\begin{tabular}[c]{@{}c@{}}DLinear\\      \shortcite{linear} \end{tabular}} 
& \multicolumn{2}{c|}{\begin{tabular}[c]{@{}c@{}}TimeMix.++\\    \shortcite{timemixer++} \end{tabular}} 
& \multicolumn{2}{c|}{\begin{tabular}[c]{@{}c@{}}Leddam\\      \shortcite{Leddam_icml} \end{tabular}} 
& \multicolumn{2}{c|}{\begin{tabular}[c]{@{}c@{}}Fredformer\\   \shortcite{fredformer} \end{tabular}} 
& \multicolumn{2}{c|}{\begin{tabular}[c]{@{}c@{}}iTrans.\\      \shortcite{itransformer} \end{tabular}} 
& \multicolumn{2}{c|}{\begin{tabular}[c]{@{}c@{}}PatchTST\\     \shortcite{patchtst} \end{tabular}} 
& \multicolumn{2}{c}{\begin{tabular}[c]{@{}c@{}}TimesNet\\      \shortcite{timesnet} \end{tabular}} \\ \midrule
\multicolumn{2}{c|}{Metric}          & MSE                                      & MAE                                     & MSE                                  & MAE                                     & MSE                                                   & MAE                      & MSE                                     & MAE                                    & MSE                                  & MAE                                  & MSE                                                  & MAE                     & MSE                                      & MAE                                     & MSE                                                  & MAE                    & MSE                                     & MAE                                     & MSE                                                  & MAE                     & MSE                                    & MAE                                    & MSE                                    & MAE                                   \\ \midrule
                               & 96  & {\color[HTML]{FF0000} \textbf{0.301}}    & {\color[HTML]{0000FF} {\ul 0.337}}      & {\color[HTML]{0000FF} {\ul 0.302}}   & {\color[HTML]{FF0000} \textbf{0.334}}   & 0.320                                                 & 0.357                    & 0.321                                   & 0.361                                  & 0.353                                & 0.375                                & 0.345                                                & 0.372                   & 0.310                                    & {\color[HTML]{FF0000} \textbf{0.334}}   & 0.319                                                & 0.359                  & 0.326                                   & 0.361                                   & 0.334                                                & 0.368                   & 0.329                                  & 0.367                                  & 0.338                                  & 0.375                                 \\
                               & 192 & {\color[HTML]{0000FF} {\ul 0.349}}       & 0.365                                   & 0.357                                & {\color[HTML]{0000FF} {\ul 0.363}}      & 0.361                                                 & 0.381                    & 0.367                                   & 0.387                                  & 0.486                                & 0.445                                & 0.380                                                & 0.389                   & {\color[HTML]{FF0000} \textbf{0.348}}    & {\color[HTML]{FF0000} \textbf{0.362}}   & 0.369                                                & 0.383                  & 0.363                                   & 0.380                                   & 0.377                                                & 0.391                   & 0.367                                  & 0.385                                  & 0.374                                  & 0.387                                 \\
                               & 336 & {\color[HTML]{0000FF} {\ul 0.381}}       & {\color[HTML]{0000FF} {\ul 0.387}}      & 0.387                                & {\color[HTML]{FF0000} \textbf{0.385}}   & 0.390                                                 & 0.404                    & 0.401                                   & 0.409                                  & 0.531                                & 0.475                                & 0.413                                                & 0.413                   & {\color[HTML]{FF0000} \textbf{0.376}}    & 0.391                                   & 0.394                                                & 0.402                  & 0.395                                   & 0.403                                   & 0.426                                                & 0.420                   & 0.399                                  & 0.410                                  & 0.410                                  & 0.411                                 \\
                               & 720 & {\color[HTML]{0000FF} {\ul 0.444}}       & {\color[HTML]{0000FF} {\ul 0.424}}      & 0.452                                & 0.426                                   & 0.454                                                 & 0.441                    & 0.477                                   & 0.448                                  & 0.600                                & 0.513                                & 0.474                                                & 0.453                   & {\color[HTML]{FF0000} \textbf{0.440}}    & {\color[HTML]{FF0000} \textbf{0.423}}   & 0.460                                                & 0.442                  & 0.453                                   & 0.438                                   & 0.491                                                & 0.459                   & 0.454                                  & 0.439                                  & 0.478                                  & 0.450                                 \\ \cmidrule(l){2-26} 
\multirow{-5}{*}{\rotatebox[origin=c]{90}{ETTm1}}        & Avg & {\color[HTML]{FF0000} \textbf{0.369}}    & {\color[HTML]{0000FF} {\ul 0.378}}      & {\color[HTML]{0000FF} {\ul 0.374}}   & {\color[HTML]{FF0000} \textbf{0.377}}   & 0.381                                                 & 0.395                    & 0.392                                   & 0.401                                  & 0.493                                & 0.452                                & 0.403                                                & 0.407                   & {\color[HTML]{FF0000} \textbf{0.369}}    & {\color[HTML]{0000FF} {\ul 0.378}}      & 0.386                                                & 0.397                  & 0.384                                   & 0.395                                   & 0.407                                                & 0.410                   & 0.387                                  & 0.400                                  & 0.400                                  & 0.406                                 \\ \midrule
                               & 96  & {\color[HTML]{FF0000} \textbf{0.168}}    & {\color[HTML]{FF0000} \textbf{0.245}}   & {\color[HTML]{0000FF} {\ul 0.169}}   & {\color[HTML]{0000FF} {\ul 0.249}}      & 0.175                                                 & 0.258                    & 0.175                                   & 0.258                                  & 0.182                                & 0.266                                & 0.193                                                & 0.292                   & 0.170                                    & {\color[HTML]{FF0000} \textbf{0.245}}   & 0.176                                                & 0.257                  & 0.177                                   & 0.259                                   & 0.180                                                & 0.264                   & 0.175                                  & 0.259                                  & 0.187                                  & 0.267                                 \\
                               & 192 & {\color[HTML]{0000FF} {\ul 0.231}}       & {\color[HTML]{FF0000} \textbf{0.287}}   & 0.232                                & {\color[HTML]{0000FF} {\ul 0.290}}      & 0.237                                                 & 0.299                    & 0.240                                   & 0.301                                  & 0.253                                & 0.312                                & 0.284                                                & 0.362                   & {\color[HTML]{FF0000} \textbf{0.229}}    & 0.291                                   & 0.243                                                & 0.303                  & 0.243                                   & 0.301                                   & 0.250                                                & 0.309                   & 0.241                                  & 0.302                                  & 0.249                                  & 0.309                                 \\
                               & 336 & {\color[HTML]{FF0000} \textbf{0.289}}    & {\color[HTML]{FF0000} \textbf{0.326}}   & {\color[HTML]{0000FF} {\ul 0.291}}   & {\color[HTML]{0000FF} {\ul 0.328}}      & 0.298                                                 & 0.340                    & 0.311                                   & 0.347                                  & 0.313                                & 0.349                                & 0.369                                                & 0.427                   & 0.303                                    & 0.343                                   & 0.303                                                & 0.341                  & 0.302                                   & 0.340                                   & 0.311                                                & 0.348                   & 0.305                                  & 0.343                                  & 0.321                                  & 0.351                                 \\
                               & 720 & {\color[HTML]{0000FF} {\ul 0.386}}       & {\color[HTML]{FF0000} \textbf{0.382}}   & 0.389                                & {\color[HTML]{0000FF} {\ul 0.387}}      & 0.391                                                 & 0.396                    & 0.414                                   & 0.405                                  & 0.416                                & 0.406                                & 0.554                                                & 0.522                   & {\color[HTML]{FF0000} \textbf{0.373}}    & 0.399                                   & 0.400                                                & 0.398                  & 0.397                                   & 0.396                                   & 0.412                                                & 0.407                   & 0.402                                  & 0.400                                  & 0.408                                  & 0.403                                 \\ \cmidrule(l){2-26} 
\multirow{-5}{*}{\rotatebox[origin=c]{90}{ETTm2}}        & Avg & {\color[HTML]{FF0000} \textbf{0.268}}    & {\color[HTML]{FF0000} \textbf{0.310}}   & 0.270                                & {\color[HTML]{0000FF} {\ul 0.313}}      & 0.275                                                 & 0.323                    & 0.285                                   & 0.328                                  & 0.291                                & 0.333                                & 0.350                                                & 0.401                   & {\color[HTML]{0000FF} {\ul 0.269}}       & 0.320                                   & 0.281                                                & 0.325                  & 0.279                                   & 0.324                                   & 0.288                                                & 0.332                   & 0.281                                  & 0.326                                  & 0.291                                  & 0.333                                 \\ \midrule
                               & 96  & {\color[HTML]{FF0000} \textbf{0.356}}    & {\color[HTML]{FF0000} \textbf{0.379}}   & {\color[HTML]{0000FF} {\ul 0.360}}   & {\color[HTML]{0000FF} {\ul 0.382}}      & 0.375                                                 & 0.400                    & 0.382                                   & 0.402                                  & 0.385                                & 0.394                                & 0.386                                                & 0.400                   & 0.361                                    & 0.403                                   & 0.377                                                & 0.394                  & 0.373                                   & 0.392                                   & 0.386                                                & 0.405                   & 0.414                                  & 0.419                                  & 0.384                                  & 0.402                                 \\
                               & 192 & {\color[HTML]{FF0000} \textbf{0.409}}    & {\color[HTML]{FF0000} \textbf{0.411}}   & {\color[HTML]{0000FF} {\ul 0.416}}   & {\color[HTML]{0000FF} {\ul 0.414}}      & 0.429                                                 & 0.421                    & 0.430                                   & 0.429                                  & 0.434                                & 0.422                                & 0.437                                                & 0.432                   & {\color[HTML]{0000FF} {\ul 0.416}}       & 0.441                                   & 0.424                                                & 0.422                  & 0.433                                   & 0.420                                   & 0.441                                                & 0.436                   & 0.460                                  & 0.445                                  & 0.436                                  & 0.429                                 \\
                               & 336 & {\color[HTML]{0000FF} {\ul 0.449}}       & {\color[HTML]{FF0000} \textbf{0.433}}   & 0.457                                & 0.438                                   & 0.484                                                 & 0.458                    & 0.472                                   & 0.451                                  & 0.476                                & 0.444                                & 0.481                                                & 0.459                   & {\color[HTML]{FF0000} \textbf{0.430}}    & {\color[HTML]{0000FF} {\ul 0.434}}      & 0.459                                                & 0.442                  & 0.470                                   & 0.437                                   & 0.487                                                & 0.458                   & 0.501                                  & 0.466                                  & 0.491                                  & 0.469                                 \\
                               & 720 & {\color[HTML]{FF0000} \textbf{0.451}}    & {\color[HTML]{0000FF} {\ul 0.456}}      & {\color[HTML]{0000FF} {\ul 0.463}}   & 0.462                                   & 0.498                                                 & 0.482                    & 0.481                                   & 0.473                                  & 0.465                                & 0.462                                & 0.519                                                & 0.516                   & 0.467                                    & {\color[HTML]{FF0000} \textbf{0.451}}   & {\color[HTML]{0000FF} {\ul 0.463}}                   & 0.459                  & 0.467                                   & {\color[HTML]{0000FF} {\ul 0.456}}      & 0.503                                                & 0.491                   & 0.500                                  & 0.488                                  & 0.521                                  & 0.500                                 \\ \cmidrule(l){2-26} 
\multirow{-5}{*}{\rotatebox[origin=c]{90}{ETTh1}}        & Avg & {\color[HTML]{FF0000} \textbf{0.416}}    & {\color[HTML]{FF0000} \textbf{0.420}}   & 0.424                                & {\color[HTML]{0000FF} {\ul 0.424}}      & 0.447                                                 & 0.440                    & 0.441                                   & 0.439                                  & 0.440                                & 0.431                                & 0.456                                                & 0.452                   & {\color[HTML]{0000FF} {\ul 0.419}}       & 0.432                                   & 0.431                                                & 0.429                  & 0.435                                   & 0.426                                   & 0.454                                                & 0.447                   & 0.469                                  & 0.454                                  & 0.458                                  & 0.450                                 \\ \midrule
                               & 96  & {\color[HTML]{0000FF} {\ul 0.280}}       & {\color[HTML]{FF0000} \textbf{0.327}}   & 0.284                                & 0.329                                   & 0.289                                                 & 0.341                    & 0.293                                   & 0.343                                  & 0.292                                & 0.340                                & 0.333                                                & 0.387                   & {\color[HTML]{FF0000} \textbf{0.276}}    & {\color[HTML]{0000FF} {\ul 0.328}}      & 0.292                                                & 0.343                  & 0.293                                   & 0.342                                   & 0.297                                                & 0.349                   & 0.292                                  & 0.342                                  & 0.340                                  & 0.374                                 \\
                               & 192 & {\color[HTML]{0000FF} {\ul 0.351}}       & {\color[HTML]{FF0000} \textbf{0.376}}   & 0.360                                & {\color[HTML]{0000FF} {\ul 0.379}}      & 0.372                                                 & 0.392                    & 0.374                                   & 0.396                                  & 0.377                                & 0.391                                & 0.477                                                & 0.476                   & {\color[HTML]{FF0000} \textbf{0.342}}    & {\color[HTML]{0000FF} {\ul 0.379}}      & 0.367                                                & 0.389                  & 0.371                                   & 0.389                                   & 0.380                                                & 0.400                   & 0.387                                  & 0.400                                  & 0.402                                  & 0.414                                 \\
                               & 336 & 0.400                                    & 0.411                                   & 0.409                                & 0.415                                   & 0.386                                                 & 0.414                    & 0.417                                   & 0.430                                  & 0.416                                & 0.425                                & 0.594                                                & 0.541                   & {\color[HTML]{FF0000} \textbf{0.346}}    & {\color[HTML]{FF0000} \textbf{0.398}}   & 0.412                                                & 0.424                  & {\color[HTML]{0000FF} {\ul 0.382}}      & {\color[HTML]{0000FF} {\ul 0.409}}      & 0.428                                                & 0.432                   & 0.426                                  & 0.433                                  & 0.452                                  & 0.452                                 \\
                               & 720 & {\color[HTML]{0000FF} {\ul 0.404}}       & {\color[HTML]{0000FF} {\ul 0.427}}      & 0.415                                & 0.431                                   & 0.412                                                 & 0.434                    & 0.449                                   & 0.460                                  & 0.418                                & 0.437                                & 0.831                                                & 0.657                   & {\color[HTML]{FF0000} \textbf{0.392}}    & {\color[HTML]{FF0000} \textbf{0.415}}   & 0.419                                                & 0.438                  & 0.415                                   & 0.434                                   & 0.427                                                & 0.445                   & 0.431                                  & 0.446                                  & 0.462                                  & 0.468                                 \\ \cmidrule(l){2-26} 
\multirow{-5}{*}{\rotatebox[origin=c]{90}{ETTh2}}        & Avg & {\color[HTML]{0000FF} {\ul 0.358}}       & {\color[HTML]{0000FF} {\ul 0.385}}      & 0.367                                & 0.388                                   & 0.365                                                 & 0.395                    & 0.383                                   & 0.407                                  & 0.376                                & 0.398                                & 0.559                                                & 0.515                   & {\color[HTML]{FF0000} \textbf{0.339}}    & {\color[HTML]{FF0000} \textbf{0.380}}   & 0.373                                                & 0.399                  & 0.365                                   & 0.393                                   & 0.383                                                & 0.407                   & 0.384                                  & 0.405                                  & 0.414                                  & 0.427                                 \\ \midrule
                               & 96  & {\color[HTML]{FF0000} \textbf{0.129}}    & {\color[HTML]{FF0000} \textbf{0.221}}   & {\color[HTML]{0000FF} {\ul 0.131}}   & {\color[HTML]{FF0000} \textbf{0.221}}   & 0.153                                                 & 0.247                    & 0.147                                   & 0.245                                  & 0.198                                & 0.274                                & 0.197                                                & 0.282                   & 0.135                                    & {\color[HTML]{0000FF} {\ul 0.222}}      & 0.141                                                & 0.235                  & 0.147                                   & 0.241                                   & 0.148                                                & 0.240                   & 0.161                                  & 0.250                                  & 0.168                                  & 0.272                                 \\
                               & 192 & {\color[HTML]{FF0000} \textbf{0.147}}    & {\color[HTML]{0000FF} {\ul 0.238}}      & {\color[HTML]{0000FF} {\ul 0.150}}   & {\color[HTML]{0000FF} {\ul 0.238}}      & 0.166                                                 & 0.256                    & 0.160                                   & 0.250                                  & 0.363                                & 0.422                                & 0.196                                                & 0.285                   & {\color[HTML]{FF0000} \textbf{0.147}}    & {\color[HTML]{FF0000} \textbf{0.235}}   & 0.159                                                & 0.252                  & 0.165                                   & 0.258                                   & 0.162                                                & 0.253                   & 0.199                                  & 0.289                                  & 0.184                                  & 0.289                                 \\
                               & 336 & {\color[HTML]{FF0000} \textbf{0.158}}    & {\color[HTML]{0000FF} {\ul 0.250}}      & 0.165                                & 0.254                                   & 0.185                                                 & 0.277                    & 0.173                                   & 0.267                                  & 0.444                                & 0.490                                & 0.209                                                & 0.301                   & {\color[HTML]{0000FF} {\ul 0.164}}       & {\color[HTML]{FF0000} \textbf{0.245}}   & 0.173                                                & 0.268                  & 0.177                                   & 0.273                                   & 0.178                                                & 0.269                   & 0.215                                  & 0.305                                  & 0.198                                  & 0.300                                 \\
                               & 720 & {\color[HTML]{FF0000} \textbf{0.178}}    & {\color[HTML]{FF0000} \textbf{0.271}}   & {\color[HTML]{0000FF} {\ul 0.191}}   & {\color[HTML]{0000FF} {\ul 0.279}}      & 0.225                                                 & 0.310                    & 0.210                                   & 0.309                                  & 0.532                                & 0.551                                & 0.245                                                & 0.333                   & 0.212                                    & 0.310                                   & 0.201                                                & 0.295                  & 0.213                                   & 0.304                                   & 0.225                                                & 0.317                   & 0.256                                  & 0.337                                  & 0.220                                  & 0.320                                 \\ \cmidrule(l){2-26} 
\multirow{-5}{*}{\rotatebox[origin=c]{90}{ECL}}          & Avg & {\color[HTML]{FF0000} \textbf{0.153}}    & {\color[HTML]{FF0000} \textbf{0.245}}   & {\color[HTML]{0000FF} {\ul 0.159}}   & {\color[HTML]{0000FF} {\ul 0.248}}      & 0.182                                                 & 0.273                    & 0.173                                   & 0.268                                  & 0.384                                & 0.434                                & 0.212                                                & 0.300                   & 0.165                                    & 0.253                                   & 0.169                                                & 0.263                  & 0.176                                   & 0.269                                   & 0.178                                                & 0.270                   & 0.208                                  & 0.295                                  & 0.192                                  & 0.295                                 \\ \midrule
                               & 96  & {\color[HTML]{FF0000} \textbf{0.079}}    & {\color[HTML]{FF0000} \textbf{0.197}}   & {\color[HTML]{0000FF} {\ul 0.082}}   & {\color[HTML]{0000FF} {\ul 0.200}}      & 0.086                                                 & 0.205                    & 0.091                                   & 0.211                                  & 0.087                                & 0.208                                & 0.088                                                & 0.218                   & 0.085                                    & 0.214                                   & 0.086                                                & 0.207                  & 0.084                                   & 0.202                                   & 0.086                                                & 0.206                   & 0.088                                  & 0.205                                  & 0.107                                  & 0.234                                 \\
                               & 192 & {\color[HTML]{FF0000} \textbf{0.168}}    & {\color[HTML]{FF0000} \textbf{0.291}}   & {\color[HTML]{0000FF} {\ul 0.171}}   & {\color[HTML]{0000FF} {\ul 0.293}}      & 0.193                                                 & 0.312                    & 0.186                                   & 0.305                                  & 0.185                                & 0.306                                & 0.176                                                & 0.315                   & 0.175                                    & 0.313                                   & 0.175                                                & 0.301                  & 0.178                                   & 0.302                                   & 0.177                                                & 0.299                   & 0.176                                  & 0.299                                  & 0.226                                  & 0.344                                 \\
                               & 336 & 0.321                                    & {\color[HTML]{0000FF} {\ul 0.408}}      & 0.331                                & 0.414                                   & 0.356                                                 & 0.433                    & 0.380                                   & 0.449                                  & 0.342                                & 0.425                                & {\color[HTML]{0000FF} {\ul 0.313}}                   & 0.427                   & 0.316                                    & 0.420                                   & 0.325                                                & 0.415                  & 0.319                                   & {\color[HTML]{0000FF} {\ul 0.408}}      & 0.331                                                & 0.417                   & {\color[HTML]{FF0000} \textbf{0.301}}  & {\color[HTML]{FF0000} \textbf{0.397}}  & 0.367                                  & 0.448                                 \\
                               & 720 & {\color[HTML]{0000FF} {\ul 0.796}}       & {\color[HTML]{0000FF} {\ul 0.670}}      & 0.837                                & 0.688                                   & 0.912                                                 & 0.712                    & 0.896                                   & 0.712                                  & 0.846                                & 0.694                                & 0.839                                                & 0.695                   & 0.851                                    & 0.689                                   & 0.831                                                & 0.686                  & {\color[HTML]{FF0000} \textbf{0.749}}   & {\color[HTML]{FF0000} \textbf{0.651}}   & 0.847                                                & 0.691                   & 0.901                                  & 0.714                                  & 0.964                                  & 0.746                                 \\ \cmidrule(l){2-26} 
\multirow{-5}{*}{\rotatebox[origin=c]{90}{Exchange}}     & Avg & {\color[HTML]{0000FF} {\ul 0.341}}       & {\color[HTML]{FF0000} \textbf{0.391}}   & 0.355                                & {\color[HTML]{0000FF} {\ul 0.399}}      & 0.387                                                 & 0.416                    & 0.388                                   & 0.419                                  & 0.365                                & 0.408                                & 0.354                                                & 0.414                   & 0.357                                    & 0.409                                   & 0.354                                                & 0.402                  & {\color[HTML]{FF0000} \textbf{0.333}}   & {\color[HTML]{FF0000} \textbf{0.391}}   & 0.360                                                & 0.403                   & 0.367                                  & 0.404                                  & 0.416                                  & 0.443                                 \\ \midrule
                               & 96  & 0.396                                    & {\color[HTML]{0000FF} {\ul 0.233}}      & 0.398                                & {\color[HTML]{FF0000} \textbf{0.226}}   & 0.462                                                 & 0.285                    & 0.430                                   & 0.294                                  & 0.601                                & 0.361                                & 0.650                                                & 0.396                   & {\color[HTML]{FF0000} \textbf{0.392}}    & 0.253                                   & 0.426                                                & 0.276                  & 0.406                                   & 0.277                                   & {\color[HTML]{0000FF} {\ul 0.395}}                   & 0.268                   & 0.446                                  & 0.283                                  & 0.593                                  & 0.321                                 \\
                               & 192 & 0.427                                    & {\color[HTML]{0000FF} {\ul 0.243}}      & 0.439                                & {\color[HTML]{FF0000} \textbf{0.241}}   & 0.473                                                 & 0.296                    & 0.452                                   & 0.307                                  & 0.603                                & 0.365                                & 0.598                                                & 0.370                   & {\color[HTML]{FF0000} \textbf{0.402}}    & 0.258                                   & 0.458                                                & 0.289                  & 0.426                                   & 0.290                                   & {\color[HTML]{0000FF} {\ul 0.417}}                   & 0.276                   & 0.540                                  & 0.354                                  & 0.617                                  & 0.336                                 \\
                               & 336 & 0.449                                    & {\color[HTML]{0000FF} {\ul 0.255}}      & 0.464                                & {\color[HTML]{FF0000} \textbf{0.250}}   & 0.498                                                 & 0.296                    & 0.470                                   & 0.316                                  & 0.609                                & 0.366                                & 0.605                                                & 0.373                   & {\color[HTML]{FF0000} \textbf{0.428}}    & 0.263                                   & 0.486                                                & 0.297                  & 0.437                                   & 0.292                                   & {\color[HTML]{0000FF} {\ul 0.433}}                   & 0.283                   & 0.551                                  & 0.358                                  & 0.629                                  & 0.336                                 \\
                               & 720 & 0.487                                    & {\color[HTML]{0000FF} {\ul 0.276}}      & 0.502                                & {\color[HTML]{FF0000} \textbf{0.270}}   & 0.506                                                 & 0.313                    & 0.498                                   & 0.323                                  & 0.648                                & 0.387                                & 0.645                                                & 0.394                   & {\color[HTML]{FF0000} \textbf{0.441}}    & 0.282                                   & 0.498                                                & 0.313                  & 0.462                                   & 0.305                                   & {\color[HTML]{0000FF} {\ul 0.467}}                   & 0.302                   & 0.586                                  & 0.375                                  & 0.640                                  & 0.350                                 \\ \cmidrule(l){2-26} 
\multirow{-5}{*}{\rotatebox[origin=c]{90}{Traffic}}      & Avg & 0.440                                    & {\color[HTML]{0000FF} {\ul 0.252}}      & 0.451                                & {\color[HTML]{FF0000} \textbf{0.247}}   & 0.485                                                 & 0.298                    & 0.463                                   & 0.310                                  & 0.615                                & 0.370                                & 0.625                                                & 0.383                   & {\color[HTML]{FF0000} \textbf{0.416}}    & 0.264                                   & 0.467                                                & 0.294                  & 0.433                                   & 0.291                                   & {\color[HTML]{0000FF} {\ul 0.428}}                   & 0.282                   & 0.531                                  & 0.343                                  & 0.620                                  & 0.336                                 \\ \midrule
                               & 96  & {\color[HTML]{FF0000} \textbf{0.150}}    & {\color[HTML]{FF0000} \textbf{0.186}}   & {\color[HTML]{0000FF} {\ul 0.153}}   & {\color[HTML]{0000FF} {\ul 0.190}}      & 0.163                                                 & 0.209                    & 0.162                                   & 0.207                                  & 0.196                                & 0.236                                & 0.196                                                & 0.255                   & 0.155                                    & 0.205                                   & 0.156                                                & 0.202                  & 0.163                                   & 0.207                                   & 0.174                                                & 0.214                   & 0.177                                  & 0.218                                  & 0.172                                  & 0.220                                 \\
                               & 192 & {\color[HTML]{FF0000} \textbf{0.198}}    & {\color[HTML]{FF0000} \textbf{0.234}}   & {\color[HTML]{0000FF} {\ul 0.200}}   & {\color[HTML]{0000FF} {\ul 0.235}}      & 0.208                                                 & 0.250                    & 0.210                                   & 0.250                                  & 0.240                                & 0.271                                & 0.237                                                & 0.296                   & 0.201                                    & 0.245                                   & 0.207                                                & 0.250                  & 0.211                                   & 0.251                                   & 0.221                                                & 0.254                   & 0.225                                  & 0.259                                  & 0.219                                  & 0.261                                 \\
                               & 336 & 0.252                                    & {\color[HTML]{0000FF} {\ul 0.275}}      & 0.258                                & 0.280                                   & {\color[HTML]{0000FF} {\ul 0.251}}                    & 0.287                    & 0.265                                   & 0.290                                  & 0.292                                & 0.307                                & 0.283                                                & 0.335                   & {\color[HTML]{FF0000} \textbf{0.237}}    & {\color[HTML]{FF0000} \textbf{0.265}}   & 0.262                                                & 0.291                  & 0.267                                   & 0.292                                   & 0.278                                                & 0.296                   & 0.278                                  & 0.297                                  & 0.280                                  & 0.306                                 \\
                               & 720 & {\color[HTML]{0000FF} {\ul 0.332}}       & {\color[HTML]{FF0000} \textbf{0.328}}   & 0.337                                & {\color[HTML]{0000FF} {\ul 0.333}}      & 0.339                                                 & 0.341                    & 0.342                                   & 0.340                                  & 0.365                                & 0.354                                & 0.345                                                & 0.381                   & {\color[HTML]{FF0000} \textbf{0.312}}    & 0.334                                   & 0.343                                                & 0.343                  & 0.343                                   & 0.341                                   & 0.358                                                & 0.349                   & 0.354                                  & 0.348                                  & 0.365                                  & 0.359                                 \\ \cmidrule(l){2-26} 
\multirow{-5}{*}{\rotatebox[origin=c]{90}{Weather}}      & Avg & {\color[HTML]{0000FF} {\ul 0.233}}       & {\color[HTML]{FF0000} \textbf{0.256}}   & 0.237                                & {\color[HTML]{0000FF} {\ul 0.260}}      & 0.240                                                 & 0.272                    & 0.245                                   & 0.272                                  & 0.273                                & 0.292                                & 0.265                                                & 0.317                   & {\color[HTML]{FF0000} \textbf{0.226}}    & 0.262                                   & 0.242                                                & 0.272                  & 0.246                                   & 0.272                                   & 0.258                                                & 0.279                   & 0.259                                  & 0.281                                  & 0.259                                  & 0.287                                 \\ \midrule
                               & 96  & {\color[HTML]{0000FF} {\ul 0.175}}       & {\color[HTML]{0000FF} {\ul 0.196}}      & 0.179                                & {\color[HTML]{FF0000} \textbf{0.191}}   & 0.189                                                 & 0.259                    & 0.205                                   & 0.242                                  & 0.319                                & 0.353                                & 0.290                                                & 0.378                   & {\color[HTML]{FF0000} \textbf{0.171}}    & 0.231                                   & 0.197                                                & 0.241                  & 0.185                                   & 0.233                                   & 0.203                                                & 0.237                   & 0.234                                  & 0.286                                  & 0.250                                  & 0.292                                 \\
                               & 192 & {\color[HTML]{FF0000} \textbf{0.202}}    & {\color[HTML]{0000FF} {\ul 0.220}}      & {\color[HTML]{0000FF} {\ul 0.209}}   & {\color[HTML]{FF0000} \textbf{0.213}}   & 0.222                                                 & 0.283                    & 0.233                                   & 0.265                                  & 0.367                                & 0.387                                & 0.320                                                & 0.398                   & 0.218                                    & 0.263                                   & 0.231                                                & 0.264                  & 0.227                                   & 0.253                                   & 0.233                                                & 0.261                   & 0.267                                  & 0.310                                  & 0.296                                  & 0.318                                 \\
                               & 336 & {\color[HTML]{0000FF} {\ul 0.224}}       & {\color[HTML]{0000FF} {\ul 0.240}}      & 0.231                                & {\color[HTML]{FF0000} \textbf{0.229}}   & 0.231                                                 & 0.292                    & 0.249                                   & 0.278                                  & 0.408                                & 0.403                                & 0.353                                                & 0.415                   & {\color[HTML]{FF0000} \textbf{0.212}}    & 0.269                                   & 0.241                                                & 0.268                  & 0.246                                   & 0.284                                   & 0.248                                                & 0.273                   & 0.290                                  & 0.315                                  & 0.319                                  & 0.330                                 \\
                               & 720 & 0.234                                    & {\color[HTML]{0000FF} {\ul 0.247}}      & 0.241                                & {\color[HTML]{FF0000} \textbf{0.236}}   & {\color[HTML]{0000FF} {\ul 0.223}}                    & 0.285                    & 0.253                                   & 0.281                                  & 0.411                                & 0.395                                & 0.356                                                & 0.413                   & {\color[HTML]{FF0000} \textbf{0.212}}    & 0.270                                   & 0.250                                                & 0.281                  & 0.247                                   & 0.276                                   & 0.249                                                & 0.275                   & 0.289                                  & 0.317                                  & 0.338                                  & 0.337                                 \\ \cmidrule(l){2-26} 
\multirow{-5}{*}{\rotatebox[origin=c]{90}{Solar-Energy}} & Avg & {\color[HTML]{0000FF} {\ul 0.209}}       & {\color[HTML]{0000FF} {\ul 0.226}}      & 0.215                                & {\color[HTML]{FF0000} \textbf{0.217}}   & 0.216                                                 & 0.280                    & 0.235                                   & 0.266                                  & 0.376                                & 0.384                                & 0.330                                                & 0.401                   & {\color[HTML]{FF0000} \textbf{0.203}}    & 0.258                                   & 0.230                                                & 0.264                  & 0.226                                   & 0.262                                   & 0.233                                                & 0.262                   & 0.270                                  & 0.307                                  & 0.301                                  & 0.319                                 \\ \midrule
\multicolumn{2}{c|}{1\textsuperscript{st} Count}                             & 19                                          & 21                                                               & 0                                  & 14                                                               & 0                                          & 0                                  & 0                            & 0                                                 & 0                          & 0                                              & 0                                         & 0                                 & 25                                          & 11                                                               & 0                                         & 0                                 & 2                                     & 2                                                                & 0                                          & 0                                 & 1                                     & 1                                                          & 0                                      & 0    \\ \midrule
\multicolumn{2}{c|}{Top-2 Count}      & 36                                      & 43                                      & 16                                   & 34                                     & 2                                                     & 0                       & 0                                      & 0                                      & 0                                    & 0                                   & 1                                                    & 0                      & 29                                      & 16                                      & 0                                                   & 0                      & 3                                       & 5                                      & 5                                                    & 0                      & 1                                      & 1                                     & 0                                      & 0                                     \\ \bottomrule
\end{tabular}
}
\caption{Full results of long-term forecasting under the `Input-96-Predict-\{96, 192, 336, 720\}' setting (excluding PEMS datasets). This table provides the complete results corresponding to Table~\ref{tab_long_term} in the main paper. Results for PEMS datasets are presented in Table~\ref{tab_long_term_PEMS}.}
\label{tab_long_term_appd}
\end{center}
\end{table*}

\begin{table*}[t]
\begin{center}
\renewcommand{\arraystretch}{1.3}
{\fontsize{6.7}{8}\selectfont
\setlength{\tabcolsep}{2.2pt}
\begin{tabular}{@{}cc|cc|cc|cc|cc|cc|cc|cc|cc|cc|cc|cc|cc@{}}
\toprule
\multicolumn{2}{c|}{Model}                          & \multicolumn{2}{c|}{\begin{tabular}[c]{@{}c@{}}vLinear\\      (Ours)\end{tabular}} & \multicolumn{2}{c|}{\begin{tabular}[c]{@{}c@{}}OLinear\\  \shortcite{olinear} \end{tabular}} 
& \multicolumn{2}{c|}{\begin{tabular}[c]{@{}c@{}}TimeMixer\\      \shortcite{timemixer} \end{tabular}} 
& \multicolumn{2}{c|}{\begin{tabular}[c]{@{}c@{}}FilterNet\\   \shortcite{filternet} \end{tabular}} 
& \multicolumn{2}{c|}{\begin{tabular}[c]{@{}c@{}}FITS\\      \shortcite{fits} \end{tabular}} 
& \multicolumn{2}{c|}{\begin{tabular}[c]{@{}c@{}}DLinear\\      \shortcite{linear} \end{tabular}} 
& \multicolumn{2}{c|}{\begin{tabular}[c]{@{}c@{}}TimeMix.++\\    \shortcite{timemixer++} \end{tabular}} 
& \multicolumn{2}{c|}{\begin{tabular}[c]{@{}c@{}}Leddam\\      \shortcite{Leddam_icml} \end{tabular}} 
& \multicolumn{2}{c|}{\begin{tabular}[c]{@{}c@{}}Fredformer\\   \shortcite{fredformer} \end{tabular}} 
& \multicolumn{2}{c|}{\begin{tabular}[c]{@{}c@{}}iTrans.\\      \shortcite{itransformer} \end{tabular}} 
& \multicolumn{2}{c|}{\begin{tabular}[c]{@{}c@{}}PatchTST\\     \shortcite{patchtst} \end{tabular}} 
& \multicolumn{2}{c}{\begin{tabular}[c]{@{}c@{}}TimesNet\\      \shortcite{timesnet} \end{tabular}} \\ \midrule
\multicolumn{2}{c|}{Metric}     & MSE                                      & MAE                                     & MSE                                    & MAE                                   & MSE                                     & MAE                                    & MSE                                     & MAE                                    & MSE                                  & MAE                                  & MSE                                    & MAE                                   & MSE                                      & MAE                                     & MSE                                   & MAE                                   & MSE                                     & MAE                                     & MSE                                    & MAE                                   & MSE                                    & MAE                                    & MSE                                    & MAE                                   \\ \midrule
                          & 12  & {\color[HTML]{FF0000} \textbf{0.059}}    & {\color[HTML]{FF0000} \textbf{0.158}}   & {\color[HTML]{0000FF} {\ul 0.060}}     & {\color[HTML]{0000FF} {\ul 0.159}}    & 0.076                                   & 0.188                                  & 0.071                                   & 0.177                                  & 0.117                                & 0.226                                & 0.122                                  & 0.243                                 & 0.097                                    & 0.208                                   & 0.063                                 & 0.164                                 & 0.068                                   & 0.174                                   & 0.071                                  & 0.174                                 & 0.099                                  & 0.216                                  & 0.085                                  & 0.192                                 \\
                          & 24  & {\color[HTML]{FF0000} \textbf{0.075}}    & {\color[HTML]{FF0000} \textbf{0.177}}   & {\color[HTML]{0000FF} {\ul 0.078}}     & {\color[HTML]{0000FF} {\ul 0.179}}    & 0.113                                   & 0.226                                  & 0.102                                   & 0.213                                  & 0.235                                & 0.324                                & 0.201                                  & 0.317                                 & 0.120                                    & 0.230                                   & 0.080                                 & 0.185                                 & 0.093                                   & 0.202                                   & 0.093                                  & 0.201                                 & 0.142                                  & 0.259                                  & 0.118                                  & 0.223                                 \\
                          & 48  & {\color[HTML]{FF0000} \textbf{0.102}}    & {\color[HTML]{FF0000} \textbf{0.208}}   & {\color[HTML]{0000FF} {\ul 0.104}}     & {\color[HTML]{0000FF} {\ul 0.210}}    & 0.191                                   & 0.292                                  & 0.162                                   & 0.272                                  & 0.541                                & 0.521                                & 0.333                                  & 0.425                                 & 0.170                                    & 0.272                                   & 0.124                                 & 0.226                                 & 0.146                                   & 0.258                                   & 0.125                                  & 0.236                                 & 0.211                                  & 0.319                                  & 0.155                                  & 0.260                                 \\
                          & 96  & {\color[HTML]{FF0000} \textbf{0.138}}    & {\color[HTML]{FF0000} \textbf{0.246}}   & {\color[HTML]{0000FF} {\ul 0.140}}     & {\color[HTML]{0000FF} {\ul 0.247}}    & 0.288                                   & 0.363                                  & 0.244                                   & 0.340                                  & 1.062                                & 0.790                                & 0.457                                  & 0.515                                 & 0.274                                    & 0.342                                   & 0.160                                 & 0.266                                 & 0.228                                   & 0.330                                   & 0.164                                  & 0.275                                 & 0.269                                  & 0.370                                  & 0.228                                  & 0.317                                 \\ \cmidrule(l){2-26} 
\multirow{-5}{*}{\rotatebox[origin=c]{90}{PEMS03}} & Avg & {\color[HTML]{FF0000} \textbf{0.093}}    & {\color[HTML]{FF0000} \textbf{0.197}}   & {\color[HTML]{0000FF} {\ul 0.095}}     & {\color[HTML]{0000FF} {\ul 0.199}}    & 0.167                                   & 0.267                                  & 0.145                                   & 0.251                                  & 0.489                                & 0.465                                & 0.278                                  & 0.375                                 & 0.165                                    & 0.263                                   & 0.107                                 & 0.210                                 & 0.135                                   & 0.243                                   & 0.113                                  & 0.221                                 & 0.180                                  & 0.291                                  & 0.147                                  & 0.248                                 \\ \midrule
                          & 12  & {\color[HTML]{FF0000} \textbf{0.067}}    & {\color[HTML]{FF0000} \textbf{0.163}}   & {\color[HTML]{0000FF} {\ul 0.068}}     & {\color[HTML]{FF0000} \textbf{0.163}} & 0.092                                   & 0.204                                  & 0.082                                   & 0.190                                  & 0.129                                & 0.239                                & 0.148                                  & 0.272                                 & 0.099                                    & 0.214                                   & 0.071                                 & {\color[HTML]{0000FF} {\ul 0.172}}    & 0.085                                   & 0.189                                   & 0.078                                  & 0.183                                 & 0.105                                  & 0.224                                  & 0.087                                  & 0.195                                 \\
                          & 24  & {\color[HTML]{FF0000} \textbf{0.077}}    & {\color[HTML]{FF0000} \textbf{0.176}}   & {\color[HTML]{0000FF} {\ul 0.079}}     & {\color[HTML]{FF0000} \textbf{0.176}} & 0.128                                   & 0.243                                  & 0.110                                   & 0.224                                  & 0.246                                & 0.337                                & 0.224                                  & 0.340                                 & 0.115                                    & 0.231                                   & 0.087                                 & {\color[HTML]{0000FF} {\ul 0.193}}    & 0.117                                   & 0.224                                   & 0.095                                  & 0.205                                 & 0.153                                  & 0.275                                  & 0.103                                  & 0.215                                 \\
                          & 48  & {\color[HTML]{FF0000} \textbf{0.095}}    & {\color[HTML]{FF0000} \textbf{0.196}}   & {\color[HTML]{FF0000} \textbf{0.095}}  & {\color[HTML]{0000FF} {\ul 0.197}}    & 0.213                                   & 0.315                                  & 0.160                                   & 0.276                                  & 0.568                                & 0.539                                & 0.355                                  & 0.437                                 & 0.144                                    & 0.261                                   & {\color[HTML]{0000FF} {\ul 0.113}}    & 0.222                                 & 0.174                                   & 0.276                                   & 0.120                                  & 0.233                                 & 0.229                                  & 0.339                                  & 0.136                                  & 0.250                                 \\
                          & 96  & {\color[HTML]{FF0000} \textbf{0.120}}    & {\color[HTML]{FF0000} \textbf{0.225}}   & {\color[HTML]{0000FF} {\ul 0.122}}     & {\color[HTML]{0000FF} {\ul 0.226}}    & 0.307                                   & 0.384                                  & 0.234                                   & 0.343                                  & 1.181                                & 0.843                                & 0.452                                  & 0.504                                 & 0.185                                    & 0.297                                   & 0.141                                 & 0.252                                 & 0.273                                   & 0.354                                   & 0.150                                  & 0.262                                 & 0.291                                  & 0.389                                  & 0.190                                  & 0.303                                 \\ \cmidrule(l){2-26} 
\multirow{-5}{*}{\rotatebox[origin=c]{90}{PEMS04}} & Avg & {\color[HTML]{FF0000} \textbf{0.090}}    & {\color[HTML]{FF0000} \textbf{0.190}}   & {\color[HTML]{0000FF} {\ul 0.091}}     & {\color[HTML]{FF0000} \textbf{0.190}} & 0.185                                   & 0.287                                  & 0.146                                   & 0.258                                  & 0.531                                & 0.489                                & 0.295                                  & 0.388                                 & 0.136                                    & 0.251                                   & 0.103                                 & {\color[HTML]{0000FF} {\ul 0.210}}    & 0.162                                   & 0.261                                   & 0.111                                  & 0.221                                 & 0.195                                  & 0.307                                  & 0.129                                  & 0.241                                 \\ \midrule
                          & 12  & {\color[HTML]{FF0000} \textbf{0.052}}    & {\color[HTML]{FF0000} \textbf{0.138}}   & {\color[HTML]{FF0000} \textbf{0.052}}  & {\color[HTML]{FF0000} \textbf{0.138}} & 0.073                                   & 0.184                                  & 0.064                                   & 0.163                                  & 0.109                                & 0.222                                & 0.115                                  & 0.242                                 & 0.090                                    & 0.197                                   & {\color[HTML]{0000FF} {\ul 0.055}}    & {\color[HTML]{0000FF} {\ul 0.145}}    & 0.063                                   & 0.158                                   & 0.067                                  & 0.165                                 & 0.095                                  & 0.207                                  & 0.082                                  & 0.181                                 \\
                          & 24  & {\color[HTML]{FF0000} \textbf{0.064}}    & {\color[HTML]{0000FF} {\ul 0.152}}      & {\color[HTML]{0000FF} {\ul 0.065}}     & {\color[HTML]{FF0000} \textbf{0.151}} & 0.111                                   & 0.219                                  & 0.093                                   & 0.200                                  & 0.230                                & 0.327                                & 0.210                                  & 0.329                                 & 0.110                                    & 0.219                                   & 0.070                                 & 0.164                                 & 0.089                                   & 0.192                                   & 0.088                                  & 0.190                                 & 0.150                                  & 0.262                                  & 0.101                                  & 0.204                                 \\
                          & 48  & {\color[HTML]{FF0000} \textbf{0.082}}    & {\color[HTML]{0000FF} {\ul 0.172}}      & {\color[HTML]{0000FF} {\ul 0.084}}     & {\color[HTML]{FF0000} \textbf{0.171}} & 0.237                                   & 0.328                                  & 0.137                                   & 0.248                                  & 0.551                                & 0.531                                & 0.398                                  & 0.458                                 & 0.149                                    & 0.256                                   & 0.094                                 & 0.192                                 & 0.136                                   & 0.241                                   & 0.110                                  & 0.215                                 & 0.253                                  & 0.340                                  & 0.134                                  & 0.238                                 \\
                          & 96  & {\color[HTML]{FF0000} \textbf{0.107}}    & {\color[HTML]{FF0000} \textbf{0.195}}   & {\color[HTML]{0000FF} {\ul 0.108}}     & {\color[HTML]{0000FF} {\ul 0.196}}    & 0.303                                   & 0.354                                  & 0.198                                   & 0.306                                  & 1.112                                & 0.809                                & 0.594                                  & 0.553                                 & 0.258                                    & 0.359                                   & 0.117                                 & 0.217                                 & 0.197                                   & 0.298                                   & 0.139                                  & 0.245                                 & 0.346                                  & 0.404                                  & 0.181                                  & 0.279                                 \\ \cmidrule(l){2-26} 
\multirow{-5}{*}{\rotatebox[origin=c]{90}{PEMS07}} & Avg & {\color[HTML]{FF0000} \textbf{0.076}}    & {\color[HTML]{FF0000} \textbf{0.164}}   & {\color[HTML]{0000FF} {\ul 0.077}}     & {\color[HTML]{FF0000} \textbf{0.164}} & 0.181                                   & 0.271                                  & 0.123                                   & 0.229                                  & 0.500                                & 0.472                                & 0.329                                  & 0.395                                 & 0.152                                    & 0.258                                   & 0.084                                 & {\color[HTML]{0000FF} {\ul 0.180}}    & 0.121                                   & 0.222                                   & 0.101                                  & 0.204                                 & 0.211                                  & 0.303                                  & 0.124                                  & 0.225                                 \\ \midrule
                          & 12  & {\color[HTML]{FF0000} \textbf{0.067}}    & {\color[HTML]{FF0000} \textbf{0.159}}   & {\color[HTML]{0000FF} {\ul 0.068}}     & {\color[HTML]{FF0000} \textbf{0.159}} & 0.091                                   & 0.201                                  & 0.080                                   & 0.182                                  & 0.122                                & 0.233                                & 0.154                                  & 0.276                                 & 0.119                                    & 0.222                                   & 0.071                                 & {\color[HTML]{0000FF} {\ul 0.171}}    & 0.081                                   & 0.185                                   & 0.079                                  & 0.182                                 & 0.168                                  & 0.232                                  & 0.112                                  & 0.212                                 \\
                          & 24  & {\color[HTML]{FF0000} \textbf{0.086}}    & {\color[HTML]{FF0000} \textbf{0.178}}   & {\color[HTML]{0000FF} {\ul 0.089}}     & {\color[HTML]{FF0000} \textbf{0.178}} & 0.137                                   & 0.246                                  & 0.114                                   & 0.219                                  & 0.236                                & 0.330                                & 0.248                                  & 0.353                                 & 0.149                                    & 0.249                                   & 0.091                                 & {\color[HTML]{0000FF} {\ul 0.189}}    & 0.112                                   & 0.214                                   & 0.115                                  & 0.219                                 & 0.224                                  & 0.281                                  & 0.141                                  & 0.238                                 \\
                          & 48  & {\color[HTML]{FF0000} \textbf{0.119}}    & {\color[HTML]{0000FF} {\ul 0.205}}      & {\color[HTML]{0000FF} {\ul 0.123}}     & {\color[HTML]{FF0000} \textbf{0.204}} & 0.265                                   & 0.343                                  & 0.184                                   & 0.284                                  & 0.562                                & 0.540                                & 0.440                                  & 0.470                                 & 0.206                                    & 0.292                                   & 0.128                                 & 0.219                                 & 0.174                                   & 0.267                                   & 0.186                                  & 0.235                                 & 0.321                                  & 0.354                                  & 0.198                                  & 0.283                                 \\
                          & 96  & {\color[HTML]{FF0000} \textbf{0.173}}    & {\color[HTML]{0000FF} {\ul 0.239}}      & {\color[HTML]{FF0000} \textbf{0.173}}  & {\color[HTML]{FF0000} \textbf{0.236}} & 0.410                                   & 0.407                                  & 0.309                                   & 0.356                                  & 1.216                                & 0.846                                & 0.674                                  & 0.565                                 & 0.329                                    & 0.355                                   & {\color[HTML]{0000FF} {\ul 0.198}}    & 0.266                                 & 0.277                                   & 0.335                                   & 0.221                                  & 0.267                                 & 0.408                                  & 0.417                                  & 0.320                                  & 0.351                                 \\ \cmidrule(l){2-26} 
\multirow{-5}{*}{\rotatebox[origin=c]{90}{PEMS08}} & Avg & {\color[HTML]{FF0000} \textbf{0.111}}    & {\color[HTML]{0000FF} {\ul 0.195}}      & {\color[HTML]{0000FF} {\ul 0.113}}     & {\color[HTML]{FF0000} \textbf{0.194}} & 0.226                                   & 0.299                                  & 0.172                                   & 0.260                                  & 0.534                                & 0.487                                & 0.379                                  & 0.416                                 & 0.200                                    & 0.279                                   & 0.122                                 & 0.211                                 & 0.161                                   & 0.250                                   & 0.150                                  & 0.226                                 & 0.280                                  & 0.321                                  & 0.193                                  & 0.271                                 \\ \midrule
\multicolumn{2}{c|}{1\textsuperscript{st} Count}  & 20                                       & 15                                      & 3                                      & 12                                    & 0                                       & 0                                      & 0                                       & 0                                      & 0                                    & 0                                    & 0                                      & 0                                     & 0                                        & 0                                       & 0                                     & 0                                     & 0                                       & 0                                       & 0                                      & 0                                     & 0                                      & 0                                      & 0                                      & 0                                     \\ \bottomrule
\end{tabular}
}
\caption{Full results of long-term forecasting on the PEMS datasets under the `Input-96-Predict-\{12, 24, 48, 96\}' setting. This table serves as the PEMS-specific extension to Table~\ref{tab_long_term} in the main paper.}
\label{tab_long_term_PEMS}
\end{center}
\end{table*}

\begin{table*}[t]
\begin{center}
\renewcommand{\arraystretch}{1.0}
{\fontsize{6.4}{8}\selectfont
\setlength{\tabcolsep}{1.7pt}
\begin{tabular}{@{}cc|cc|cc|cc|cc|cc|cc|cc|cc|cc|cc|cc|cc@{}}
\toprule
\multicolumn{2}{c|}{Model}        & \multicolumn{2}{c|}{\begin{tabular}[c]{@{}c@{}}vLinear\\      (Ours)\end{tabular}} 
& \multicolumn{2}{c|}{\begin{tabular}[c]{@{}c@{}}OLinear\\     \shortcite{olinear} \end{tabular}} 
& \multicolumn{2}{c|}{\begin{tabular}[c]{@{}c@{}}TimeMixer\\    \shortcite{timemixer} \end{tabular}} 
& \multicolumn{2}{c|}{\begin{tabular}[c]{@{}c@{}}FilterNet\\     \shortcite{filternet}  \end{tabular}} 
& \multicolumn{2}{c|}{\begin{tabular}[c]{@{}c@{}}DLinear\\      \shortcite{linear} \end{tabular}} 
& \multicolumn{2}{c|}{\begin{tabular}[c]{@{}c@{}}TimeMix.++\\    \shortcite{timemixer++} \end{tabular}} 
& \multicolumn{2}{c|}{\begin{tabular}[c]{@{}c@{}}Leddam\\      \shortcite{Leddam_icml} \end{tabular}} 
& \multicolumn{2}{c|}{\begin{tabular}[c]{@{}c@{}}CARD\\      \shortcite{card} \end{tabular}} 
& \multicolumn{2}{c|}{\begin{tabular}[c]{@{}c@{}}Fredformer\\   \shortcite{fredformer} \end{tabular}} 
& \multicolumn{2}{c|}{\begin{tabular}[c]{@{}c@{}}iTrans.\\     \shortcite{itransformer} \end{tabular}} 
& \multicolumn{2}{c|}{\begin{tabular}[c]{@{}c@{}}PatchTST\\     \shortcite{patchtst} \end{tabular}} 
& \multicolumn{2}{c}{\begin{tabular}[c]{@{}c@{}}TimesNet\\      \shortcite{timesnet} \end{tabular}} \\ \midrule
\multicolumn{2}{c|}{Metric}       & MSE                                      & MAE                                     & MSE                                    & MAE                                   & MSE                                      & MAE                                   & MSE                                    & MAE                                     & MSE                                                  & MAE                     & MSE                                       & MAE                                   & MSE                                                   & MAE                   & MSE                                  & MAE                                  & MSE                                       & MAE                                   & MSE                                    & MAE                                   & MSE                                    & MAE                                    & MSE                                    & MAE                                   \\ \midrule
                            & 3   & {\color[HTML]{FF0000} \textbf{0.437}}    & {\color[HTML]{FF0000} \textbf{0.346}}   & {\color[HTML]{0000FF} {\ul 0.468}}     & {\color[HTML]{0000FF} {\ul 0.349}}    & 0.659                                    & 0.435                                 & 0.660                                  & 0.437                                   & 1.280                                                & 0.747                   & 0.658                                     & 0.430                                 & 0.551                                                 & 0.388                 & 0.597                                & 0.392                                & 0.528                                     & 0.403                                 & 0.555                                  & 0.395                                 & 0.646                                  & 0.417                                  & 0.627                                  & 0.420                                 \\
                            & 6   & {\color[HTML]{FF0000} \textbf{0.839}}    & {\color[HTML]{FF0000} \textbf{0.499}}   & {\color[HTML]{0000FF} {\ul 0.923}}     & {\color[HTML]{0000FF} {\ul 0.516}}    & 1.306                                    & 0.643                                 & 1.140                                  & 0.606                                   & 2.054                                                & 0.967                   & 1.273                                     & 0.645                                 & 1.021                                                 & 0.569                 & 1.246                                & 0.610                                & 1.128                                     & 0.604                                 & 1.124                                  & 0.586                                 & 1.269                                  & 0.629                                  & 1.147                                  & 0.610                                 \\
                            & 9   & {\color[HTML]{FF0000} \textbf{1.194}}    & {\color[HTML]{FF0000} \textbf{0.648}}   & {\color[HTML]{0000FF} {\ul 1.289}}     & {\color[HTML]{0000FF} {\ul 0.655}}    & 2.070                                    & 0.842                                 & 1.815                                  & 0.798                                   & 2.771                                                & 1.138                   & 2.009                                     & 0.840                                 & 1.881                                                 & 0.795                 & 2.041                                & 0.829                                & 1.804                                     & 0.786                                 & 1.794                                  & 0.772                                 & 2.021                                  & 0.830                                  & 1.796                                  & 0.785                                 \\
                            & 12  & {\color[HTML]{0000FF} {\ul 1.754}}       & {\color[HTML]{0000FF} {\ul 0.807}}      & {\color[HTML]{FF0000} \textbf{1.698}}  & {\color[HTML]{FF0000} \textbf{0.791}} & 2.792                                    & 1.018                                 & 2.435                                  & 0.953                                   & 3.497                                                & 1.284                   & 2.683                                     & 1.004                                 & 2.421                                                 & 0.964                 & 2.746                                & 0.996                                & 2.610                                     & 0.989                                 & 2.273                                  & 0.884                                 & 2.788                                  & 1.016                                  & 2.349                                  & 0.920                                 \\ \cmidrule(l){2-26} 
                            & Avg & {\color[HTML]{FF0000} \textbf{1.056}}    & {\color[HTML]{FF0000} \textbf{0.575}}   & {\color[HTML]{0000FF} {\ul 1.094}}     & {\color[HTML]{0000FF} {\ul 0.578}}    & 1.707                                    & 0.734                                 & 1.512                                  & 0.698                                   & 2.400                                                & 1.034                   & 1.656                                     & 0.730                                 & 1.468                                                 & 0.679                 & 1.658                                & 0.707                                & 1.518                                     & 0.696                                 & 1.437                                  & 0.659                                 & 1.681                                  & 0.723                                  & 1.480                                  & 0.684                                 \\ \cmidrule(l){2-26} 
                            & 24  & {\color[HTML]{0000FF} {\ul 1.819}}       & 0.853                                   & {\color[HTML]{FF0000} \textbf{1.737}}  & {\color[HTML]{FF0000} \textbf{0.800}} & 2.110                                    & 0.879                                 & 2.190                                  & 0.870                                   & 3.158                                                & 1.243                   & 1.877                                     & {\color[HTML]{0000FF} {\ul 0.826}}    & 2.085                                                 & 0.883                 & 2.407                                & 0.970                                & 2.098                                     & 0.894                                 & 2.004                                  & 0.860                                 & 2.046                                  & 0.849                                  & 2.317                                  & 0.934                                 \\
                            & 36  & {\color[HTML]{FF0000} \textbf{1.667}}    & {\color[HTML]{0000FF} {\ul 0.804}}      & 1.714                                  & {\color[HTML]{FF0000} \textbf{0.795}} & 2.084                                    & 0.890                                 & 1.902                                  & 0.862                                   & 3.009                                                & 1.200                   & 2.276                                     & 0.912                                 & 2.017                                                 & 0.892                 & 2.324                                & 0.948                                & {\color[HTML]{0000FF} {\ul 1.712}}        & 0.867                                 & 1.910                                  & 0.880                                 & 2.344                                  & 0.912                                  & 1.972                                  & 0.920                                 \\
                            & 48  & {\color[HTML]{FF0000} \textbf{1.791}}    & {\color[HTML]{FF0000} \textbf{0.801}}   & {\color[HTML]{0000FF} {\ul 1.821}}     & {\color[HTML]{0000FF} {\ul 0.804}}    & 1.961                                    & 0.866                                 & 2.051                                  & 0.882                                   & 2.994                                                & 1.194                   & 1.921                                     & 0.850                                 & 1.860                                                 & 0.847                 & 2.133                                & 0.911                                & 2.054                                     & 0.922                                 & 2.036                                  & 0.891                                 & 2.123                                  & 0.883                                  & 2.238                                  & 0.940                                 \\
                            & 60  & {\color[HTML]{FF0000} \textbf{1.629}}    & {\color[HTML]{FF0000} \textbf{0.781}}   & 1.785                                  & {\color[HTML]{0000FF} {\ul 0.810}}    & 1.926                                    & 0.878                                 & 2.151                                  & 0.925                                   & 3.172                                                & 1.232                   & {\color[HTML]{0000FF} {\ul 1.745}}        & 0.838                                 & 1.967                                                 & 0.879                 & 2.177                                & 0.921                                & 1.925                                     & 0.913                                 & 2.022                                  & 0.919                                 & 2.001                                  & 0.895                                  & 2.027                                  & 0.928                                 \\ \cmidrule(l){2-26} 
\multirow{-10}{*}{\rotatebox[origin=c]{90}{ILI}}      & Avg & {\color[HTML]{FF0000} \textbf{1.726}}    & {\color[HTML]{0000FF} {\ul 0.810}}      & {\color[HTML]{0000FF} {\ul 1.764}}     & {\color[HTML]{FF0000} \textbf{0.802}} & 2.020                                    & 0.878                                 & 2.073                                  & 0.885                                   & 3.083                                                & 1.217                   & 1.955                                     & 0.857                                 & 1.982                                                 & 0.875                 & 2.260                                & 0.938                                & 1.947                                     & 0.899                                 & 1.993                                  & 0.887                                 & 2.128                                  & 0.885                                  & 2.139                                  & 0.931                                 \\ \midrule
                            & 3   & {\color[HTML]{FF0000} \textbf{1.080}}    & {\color[HTML]{0000FF} {\ul 0.494}}      & {\color[HTML]{0000FF} {\ul 1.100}}     & {\color[HTML]{FF0000} \textbf{0.487}} & 1.237                                    & 0.547                                 & 1.195                                  & 0.555                                   & 2.386                                                & 0.909                   & 1.298                                     & 0.584                                 & 1.216                                                 & 0.570                 & 1.103                                & 0.521                                & 1.165                                     & 0.548                                 & 1.193                                  & 0.561                                 & 1.220                                  & 0.573                                  & 2.021                                  & 0.704                                 \\
                            & 6   & {\color[HTML]{0000FF} {\ul 1.678}}       & {\color[HTML]{0000FF} {\ul 0.644}}      & 1.750                                  & {\color[HTML]{FF0000} \textbf{0.619}} & 2.003                                    & 0.739                                 & 1.839                                  & 0.711                                   & 3.220                                                & 1.053                   & 1.833                                     & 0.682                                 & 1.782                                                 & 0.689                 & 1.919                                & 0.735                                & {\color[HTML]{FF0000} \textbf{1.465}}     & 0.685                                 & 1.933                                  & 0.755                                 & 1.982                                  & 0.762                                  & 2.405                                  & 0.808                                 \\
                            & 9   & {\color[HTML]{0000FF} {\ul 2.164}}       & {\color[HTML]{FF0000} \textbf{0.720}}   & 2.239                                  & {\color[HTML]{0000FF} {\ul 0.734}}    & 2.594                                    & 0.860                                 & 2.537                                  & 0.897                                   & 3.803                                                & 1.160                   & 2.472                                     & 0.822                                 & 2.407                                                 & 0.866                 & 2.358                                & 0.841                                & {\color[HTML]{FF0000} \textbf{2.145}}     & 0.845                                 & 2.441                                  & 0.879                                 & 2.633                                  & 0.916                                  & 2.858                                  & 0.969                                 \\
                            & 12  & {\color[HTML]{FF0000} \textbf{2.500}}    & {\color[HTML]{0000FF} {\ul 0.833}}      & {\color[HTML]{0000FF} {\ul 2.538}}     & {\color[HTML]{FF0000} \textbf{0.831}} & 3.103                                    & 0.981                                 & 2.782                                  & 0.956                                   & 4.524                                                & 1.288                   & 3.273                                     & 1.084                                 & 2.851                                                 & 0.991                 & 2.857                                & 0.971                                & 2.833                                     & 0.984                                 & 2.819                                  & 0.984                                 & 3.050                                  & 1.030                                  & 2.993                                  & 0.964                                 \\ \cmidrule(l){2-26} 
                            & Avg & {\color[HTML]{FF0000} \textbf{1.856}}    & {\color[HTML]{0000FF} {\ul 0.673}}      & 1.907                                  & {\color[HTML]{FF0000} \textbf{0.668}} & 2.234                                    & 0.782                                 & 2.088                                  & 0.780                                   & 3.483                                                & 1.102                   & 2.219                                     & 0.793                                 & 2.064                                                 & 0.779                 & 2.059                                & 0.767                                & {\color[HTML]{0000FF} {\ul 1.902}}        & 0.765                                 & 2.096                                  & 0.795                                 & 2.221                                  & 0.820                                  & 2.569                                  & 0.861                                 \\ \cmidrule(l){2-26} 
                            & 24  & {\color[HTML]{FF0000} \textbf{4.425}}    & {\color[HTML]{0000FF} {\ul 1.214}}      & {\color[HTML]{0000FF} {\ul 4.474}}     & {\color[HTML]{FF0000} \textbf{1.180}} & 6.335                                    & 1.554                                 & 5.926                                  & 1.517                                   & 9.780                                                & 1.851                   & 6.539                                     & 1.618                                 & 4.860                                                 & 1.342                 & 5.133                                & 1.394                                & 4.799                                     & 1.347                                 & 4.715                                  & 1.321                                 & 5.528                                  & 1.450                                  & 5.634                                  & 1.442                                 \\
                            & 36  & {\color[HTML]{FF0000} \textbf{6.563}}    & {\color[HTML]{FF0000} \textbf{1.579}}   & {\color[HTML]{0000FF} {\ul 7.241}}     & {\color[HTML]{0000FF} {\ul 1.670}}    & 8.222                                    & 1.787                                 & 7.696                                  & 1.733                                   & 12.804                                               & 2.083                   & 7.986                                     & 1.770                                 & 7.378                                                 & 1.708                 & 7.377                                & 1.725                                & 7.536                                     & 1.727                                 & 7.299                                  & 1.681                                 & 8.351                                  & 1.830                                  & 9.114                                  & 1.848                                 \\
                            & 48  & {\color[HTML]{FF0000} \textbf{9.372}}    & {\color[HTML]{FF0000} \textbf{1.923}}   & 10.076                                 & 1.985                                 & 11.669                                   & 2.157                                 & 11.572                                 & 2.141                                   & 14.244                                               & 2.189                   & 11.655                                    & 2.156                                 & 10.051                                                & 1.999                 & 11.013                               & 2.103                                & {\color[HTML]{0000FF} {\ul 9.833}}        & {\color[HTML]{0000FF} {\ul 1.951}}    & 10.141                                 & 2.012                                 & 11.259                                 & 2.114                                  & 10.940                                 & 2.033                                 \\
                            & 60  & {\color[HTML]{FF0000} \textbf{10.891}}   & {\color[HTML]{0000FF} {\ul 2.088}}      & 12.079                                 & 2.182                                 & 12.188                                   & 2.173                                 & {\color[HTML]{0000FF} {\ul 11.311}}    & {\color[HTML]{FF0000} \textbf{2.066}}   & 15.472                                               & 2.275                   & 12.734                                    & 2.235                                 & 11.467                                                & 2.119                 & 12.528                               & 2.227                                & 12.455                                    & 2.209                                 & 11.871                                 & 2.156                                 & 12.666                                 & 2.225                                  & 12.888                                 & 2.186                                 \\ \cmidrule(l){2-26} 
\multirow{-10}{*}{\rotatebox[origin=c]{90}{COVID-19}} & Avg & {\color[HTML]{FF0000} \textbf{7.813}}    & {\color[HTML]{FF0000} \textbf{1.701}}   & 8.467                                  & {\color[HTML]{0000FF} {\ul 1.754}}    & 9.604                                    & 1.918                                 & 9.126                                  & 1.864                                   & 13.075                                               & 2.099                   & 9.728                                     & 1.945                                 & {\color[HTML]{0000FF} {\ul 8.439}}                    & 1.792                 & 9.013                                & 1.862                                & 8.656                                     & 1.808                                 & 8.506                                  & 1.792                                 & 9.451                                  & 1.905                                  & 9.644                                  & 1.877                                 \\ \midrule
                            & 3   & {\color[HTML]{FF0000} \textbf{0.202}}    & {\color[HTML]{0000FF} {\ul 0.178}}      & 0.207                                  & {\color[HTML]{FF0000} \textbf{0.171}} & 0.205                                    & 0.192                                 & {\color[HTML]{0000FF} {\ul 0.204}}     & 0.189                                   & 0.218                                                & 0.231                   & {\color[HTML]{0000FF} {\ul 0.204}}        & 0.189                                 & {\color[HTML]{0000FF} {\ul 0.204}}                    & 0.191                 & 0.210                                & 0.180                                & 0.205                                     & 0.188                                 & 0.205                                  & 0.188                                 & {\color[HTML]{0000FF} {\ul 0.204}}     & 0.190                                  & 0.221                                  & 0.204                                 \\
                            & 6   & {\color[HTML]{FF0000} \textbf{0.292}}    & 0.220                                   & 0.301                                  & {\color[HTML]{FF0000} \textbf{0.207}} & 0.297                                    & 0.230                                 & 0.296                                  & 0.228                                   & 0.307                                                & 0.278                   & 0.298                                     & 0.230                                 & {\color[HTML]{0000FF} {\ul 0.293}}                    & 0.227                 & 0.311                                & {\color[HTML]{0000FF} {\ul 0.219}}   & 0.298                                     & 0.227                                 & 0.300                                  & 0.229                                 & 0.298                                  & 0.227                                  & 0.308                                  & 0.238                                 \\
                            & 9   & {\color[HTML]{0000FF} {\ul 0.371}}       & {\color[HTML]{0000FF} {\ul 0.249}}      & 0.382                                  & {\color[HTML]{FF0000} \textbf{0.238}} & 0.381                                    & 0.264                                 & 0.377                                  & 0.261                                   & 0.386                                                & 0.316                   & 0.384                                     & 0.266                                 & {\color[HTML]{FF0000} \textbf{0.369}}                 & 0.264                 & 0.401                                & 0.253                                & 0.385                                     & 0.263                                 & 0.386                                  & 0.265                                 & 0.382                                  & 0.263                                  & 0.387                                  & 0.273                                 \\
                            & 12  & {\color[HTML]{FF0000} \textbf{0.441}}    & {\color[HTML]{0000FF} {\ul 0.276}}      & 0.452                                  & {\color[HTML]{FF0000} \textbf{0.263}} & 0.455                                    & 0.295                                 & 0.449                                  & 0.290                                   & 0.452                                                & 0.353                   & 0.456                                     & 0.294                                 & {\color[HTML]{0000FF} {\ul 0.442}}                    & 0.292                 & 0.474                                & 0.281                                & 0.457                                     & 0.292                                 & 0.460                                  & 0.295                                 & 0.456                                  & 0.292                                  & 0.462                                  & 0.298                                 \\ \cmidrule(l){2-26} 
                            & Avg & {\color[HTML]{FF0000} \textbf{0.326}}    & {\color[HTML]{0000FF} {\ul 0.231}}      & 0.335                                  & {\color[HTML]{FF0000} \textbf{0.220}} & 0.334                                    & 0.245                                 & 0.331                                  & 0.242                                   & 0.341                                                & 0.294                   & 0.335                                     & 0.245                                 & {\color[HTML]{0000FF} {\ul 0.327}}                    & 0.243                 & 0.349                                & 0.233                                & 0.336                                     & 0.242                                 & 0.338                                  & 0.244                                 & 0.335                                  & 0.243                                  & 0.344                                  & 0.253                                 \\ \cmidrule(l){2-26} 
                            & 24  & {\color[HTML]{0000FF} {\ul 0.625}}       & {\color[HTML]{0000FF} {\ul 0.346}}      & 0.650                                  & {\color[HTML]{FF0000} \textbf{0.337}} & 0.671                                    & 0.413                                 & 0.670                                  & 0.402                                   & 0.645                                                & 0.458                   & {\color[HTML]{FF0000} \textbf{0.617}}     & 0.394                                 & 0.680                                                 & 0.405                 & 0.700                                & 0.378                                & 0.676                                     & 0.408                                 & 0.700                                  & 0.413                                 & 0.679                                  & 0.410                                  & 0.698                                  & 0.415                                 \\
                            & 36  & {\color[HTML]{FF0000} \textbf{0.771}}    & {\color[HTML]{0000FF} {\ul 0.394}}      & 0.800                                  & {\color[HTML]{FF0000} \textbf{0.388}} & 0.841                                    & 0.480                                 & 0.824                                  & 0.471                                   & 0.785                                                & 0.533                   & {\color[HTML]{0000FF} {\ul 0.781}}        & 0.457                                 & 0.841                                                 & 0.471                 & 0.874                                & 0.448                                & 0.852                                     & 0.477                                 & 0.867                                  & 0.480                                 & 0.845                                  & 0.484                                  & 0.856                                  & 0.475                                 \\
                            & 48  & 0.886                                    & {\color[HTML]{0000FF} {\ul 0.436}}      & 0.905                                  & {\color[HTML]{FF0000} \textbf{0.427}} & 0.964                                    & 0.531                                 & 0.955                                  & 0.521                                   & {\color[HTML]{0000FF} {\ul 0.885}}                   & 0.585                   & {\color[HTML]{FF0000} \textbf{0.842}}     & 0.520                                 & 0.963                                                 & 0.528                 & 1.017                                & 0.498                                & 0.982                                     & 0.526                                 & 1.017                                  & 0.539                                 & 0.972                                  & 0.536                                  & 0.972                                  & 0.518                                 \\
                            & 60  & 0.980                                    & {\color[HTML]{0000FF} {\ul 0.464}}      & 0.999                                  & {\color[HTML]{FF0000} \textbf{0.457}} & 1.047                                    & 0.573                                 & 1.050                                  & 0.563                                   & {\color[HTML]{0000FF} {\ul 0.959}}                   & 0.623                   & {\color[HTML]{FF0000} \textbf{0.958}}     & 0.551                                 & 1.029                                                 & 0.556                 & 1.126                                & 0.541                                & 1.084                                     & 0.569                                 & 1.079                                  & 0.572                                 & 1.077                                  & 0.578                                  & 1.033                                  & 0.543                                 \\ \cmidrule(l){2-26} 
\multirow{-10}{*}{\rotatebox[origin=c]{90}{METR-LA}}  & Avg & {\color[HTML]{0000FF} {\ul 0.815}}       & {\color[HTML]{0000FF} {\ul 0.410}}      & 0.838                                  & {\color[HTML]{FF0000} \textbf{0.402}} & 0.881                                    & 0.499                                 & 0.875                                  & 0.489                                   & 0.819                                                & 0.550                   & {\color[HTML]{FF0000} \textbf{0.799}}     & 0.480                                 & 0.878                                                 & 0.490                 & 0.929                                & 0.466                                & 0.898                                     & 0.495                                 & 0.916                                  & 0.501                                 & 0.893                                  & 0.502                                  & 0.890                                  & 0.488                                 \\ \midrule
                            & 3   & {\color[HTML]{FF0000} \textbf{0.035}}    & {\color[HTML]{FF0000} \textbf{0.092}}   & {\color[HTML]{0000FF} {\ul 0.036}}     & {\color[HTML]{FF0000} \textbf{0.092}} & {\color[HTML]{FF0000} \textbf{0.035}}    & {\color[HTML]{0000FF} {\ul 0.093}}    & 0.038                                  & 0.100                                   & 0.044                                                & 0.123                   & {\color[HTML]{0000FF} {\ul 0.036}}        & 0.098                                 & 0.040                                                 & 0.103                 & 0.037                                & 0.096                                & 0.039                                     & 0.102                                 & 0.040                                  & 0.105                                 & 0.038                                  & 0.099                                  & 0.049                                  & 0.123                                 \\
                            & 6   & {\color[HTML]{FF0000} \textbf{0.049}}    & {\color[HTML]{0000FF} {\ul 0.118}}      & {\color[HTML]{FF0000} \textbf{0.049}}  & {\color[HTML]{FF0000} \textbf{0.117}} & {\color[HTML]{FF0000} \textbf{0.049}}    & {\color[HTML]{0000FF} {\ul 0.118}}    & 0.052                                  & 0.126                                   & 0.062                                                & 0.155                   & {\color[HTML]{0000FF} {\ul 0.050}}        & 0.121                                 & 0.054                                                 & 0.128                 & 0.052                                & 0.123                                & 0.053                                     & 0.126                                 & 0.054                                  & 0.129                                 & 0.053                                  & 0.124                                  & 0.061                                  & 0.142                                 \\
                            & 9   & {\color[HTML]{FF0000} \textbf{0.061}}    & {\color[HTML]{FF0000} \textbf{0.137}}   & {\color[HTML]{0000FF} {\ul 0.062}}     & {\color[HTML]{FF0000} \textbf{0.137}} & {\color[HTML]{0000FF} {\ul 0.062}}       & {\color[HTML]{0000FF} {\ul 0.139}}    & 0.065                                  & 0.145                                   & 0.082                                                & 0.189                   & 0.063                                     & 0.144                                 & 0.066                                                 & 0.147                 & 0.063                                & 0.141                                & 0.067                                     & 0.147                                 & 0.068                                  & 0.150                                 & 0.065                                  & 0.145                                  & 0.073                                  & 0.161                                 \\
                            & 12  & {\color[HTML]{FF0000} \textbf{0.073}}    & {\color[HTML]{FF0000} \textbf{0.154}}   & {\color[HTML]{FF0000} \textbf{0.073}}  & {\color[HTML]{FF0000} \textbf{0.154}} & {\color[HTML]{FF0000} \textbf{0.073}}    & {\color[HTML]{0000FF} {\ul 0.156}}    & 0.076                                  & 0.162                                   & 0.100                                                & 0.215                   & {\color[HTML]{0000FF} {\ul 0.075}}        & 0.161                                 & 0.078                                                 & 0.164                 & {\color[HTML]{0000FF} {\ul 0.075}}   & 0.158                                & 0.079                                     & 0.165                                 & 0.078                                  & 0.165                                 & 0.077                                  & 0.161                                  & 0.088                                  & 0.179                                 \\ \cmidrule(l){2-26} 
                            & Avg & {\color[HTML]{FF0000} \textbf{0.054}}    & {\color[HTML]{FF0000} \textbf{0.125}}   & {\color[HTML]{0000FF} {\ul 0.055}}     & {\color[HTML]{FF0000} \textbf{0.125}} & {\color[HTML]{0000FF} {\ul 0.055}}       & {\color[HTML]{0000FF} {\ul 0.126}}    & 0.058                                  & 0.133                                   & 0.072                                                & 0.170                   & 0.056                                     & 0.131                                 & 0.059                                                 & 0.135                 & 0.057                                & 0.130                                & 0.059                                     & 0.135                                 & 0.060                                  & 0.137                                 & 0.058                                  & 0.132                                  & 0.068                                  & 0.151                                 \\ \cmidrule(l){2-26} 
                            & 24  & {\color[HTML]{FF0000} \textbf{0.120}}    & {\color[HTML]{FF0000} \textbf{0.215}}   & {\color[HTML]{0000FF} {\ul 0.121}}     & {\color[HTML]{0000FF} {\ul 0.216}}    & 0.122                                    & 0.221                                 & 0.130                                  & 0.230                                   & 0.155                                                & 0.274                   & 0.132                                     & 0.233                                 & 0.125                                                 & 0.222                 & 0.124                                & 0.220                                & 0.128                                     & 0.226                                 & 0.137                                  & 0.237                                 & 0.127                                  & 0.224                                  & 0.198                                  & 0.299                                 \\
                            & 36  & {\color[HTML]{FF0000} \textbf{0.162}}    & {\color[HTML]{FF0000} \textbf{0.259}}   & {\color[HTML]{0000FF} {\ul 0.163}}     & {\color[HTML]{0000FF} {\ul 0.261}}    & 0.183                                    & 0.279                                 & 0.175                                  & 0.273                                   & 0.196                                                & 0.306                   & 0.177                                     & 0.278                                 & 0.174                                                 & 0.271                 & 0.167                                & 0.266                                & 0.170                                     & 0.268                                 & 0.184                                  & 0.280                                 & 0.174                                  & 0.269                                  & 0.229                                  & 0.326                                 \\
                            & 48  & {\color[HTML]{FF0000} \textbf{0.200}}    & {\color[HTML]{FF0000} \textbf{0.291}}   & {\color[HTML]{0000FF} {\ul 0.205}}     & {\color[HTML]{0000FF} {\ul 0.296}}    & {\color[HTML]{FF0000} \textbf{0.200}}    & 0.298                                 & 0.224                                  & 0.314                                   & 0.244                                                & 0.344                   & 0.216                                     & 0.311                                 & 0.222                                                 & 0.312                 & 0.218                                & 0.307                                & 0.218                                     & 0.306                                 & 0.229                                  & 0.318                                 & 0.225                                  & 0.314                                  & 0.267                                  & 0.352                                 \\
                            & 60  & {\color[HTML]{0000FF} {\ul 0.244}}       & {\color[HTML]{FF0000} \textbf{0.327}}   & 0.259                                  & 0.336                                 & {\color[HTML]{FF0000} \textbf{0.238}}    & {\color[HTML]{0000FF} {\ul 0.328}}    & 0.259                                  & 0.340                                   & 0.318                                                & 0.401                   & 0.249                                     & 0.337                                 & 0.264                                                 & 0.341                 & 0.264                                & 0.341                                & 0.262                                     & 0.339                                 & 0.279                                  & 0.352                                 & 0.265                                  & 0.339                                  & 0.327                                  & 0.394                                 \\ \cmidrule(l){2-26} 
\multirow{-10}{*}{\rotatebox[origin=c]{90}{NASDAQ}}   & Avg & {\color[HTML]{FF0000} \textbf{0.181}}    & {\color[HTML]{FF0000} \textbf{0.273}}   & 0.187                                  & {\color[HTML]{0000FF} {\ul 0.277}}    & {\color[HTML]{0000FF} {\ul 0.186}}       & 0.281                                 & 0.197                                  & 0.289                                   & 0.228                                                & 0.331                   & 0.193                                     & 0.290                                 & 0.196                                                 & 0.286                 & 0.193                                & 0.284                                & 0.194                                     & 0.285                                 & 0.207                                  & 0.297                                 & 0.198                                  & 0.286                                  & 0.255                                  & 0.343                                 \\ \midrule
                            & 3   & {\color[HTML]{0000FF} {\ul 6.135}}       & {\color[HTML]{0000FF} {\ul 0.375}}      & 6.161                                  & {\color[HTML]{FF0000} \textbf{0.368}} & 6.209                                    & 0.392                                 & 6.234                                  & 0.402                                   & 6.254                                                & 0.438                   & 6.149                                     & 0.389                                 & 6.148                                                 & 0.383                 & 6.183                                & 0.378                                & 6.190                                     & 0.387                                 & 6.237                                  & 0.393                                 & {\color[HTML]{FF0000} \textbf{6.112}}  & 0.380                                  & 7.597                                  & 0.510                                 \\
                            & 6   & {\color[HTML]{0000FF} {\ul 6.434}}       & {\color[HTML]{0000FF} {\ul 0.386}}      & 6.453                                  & {\color[HTML]{FF0000} \textbf{0.385}} & 6.475                                    & 0.402                                 & 6.460                                  & 0.401                                   & 6.579                                                & 0.467                   & 6.436                                     & 0.401                                 & 6.455                                                 & 0.397                 & 6.465                                & 0.393                                & 6.696                                     & 0.404                                 & 6.484                                  & 0.400                                 & {\color[HTML]{FF0000} \textbf{6.425}}  & 0.395                                  & 7.962                                  & 0.515                                 \\
                            & 9   & {\color[HTML]{FF0000} \textbf{6.657}}    & {\color[HTML]{0000FF} {\ul 0.400}}      & {\color[HTML]{0000FF} {\ul 6.666}}     & {\color[HTML]{FF0000} \textbf{0.398}} & 6.702                                    & 0.418                                 & 6.697                                  & 0.416                                   & 6.776                                                & 0.508                   & 6.714                                     & 0.420                                 & 6.687                                                 & 0.412                 & 6.714                                & 0.415                                & 6.768                                     & 0.411                                 & 6.689                                  & 0.411                                 & 6.743                                  & 0.426                                  & 8.150                                  & 0.524                                 \\
                            & 12  & {\color[HTML]{0000FF} {\ul 6.825}}       & {\color[HTML]{0000FF} {\ul 0.407}}      & 6.834                                  & {\color[HTML]{FF0000} \textbf{0.406}} & 6.902                                    & 0.426                                 & 6.899                                  & 0.426                                   & 6.927                                                & 0.513                   & 6.852                                     & 0.421                                 & 6.899                                                 & 0.424                 & 6.852                                & 0.415                                & 7.168                                     & 0.424                                 & 6.868                                  & 0.419                                 & {\color[HTML]{FF0000} \textbf{6.814}}  & 0.414                                  & 8.117                                  & 0.533                                 \\ \cmidrule(l){2-26} 
                            & Avg & {\color[HTML]{FF0000} \textbf{6.513}}    & {\color[HTML]{0000FF} {\ul 0.392}}      & 6.528                                  & {\color[HTML]{FF0000} \textbf{0.389}} & 6.572                                    & 0.409                                 & 6.572                                  & 0.411                                   & 6.634                                                & 0.481                   & 6.538                                     & 0.408                                 & 6.547                                                 & 0.404                 & 6.553                                & 0.400                                & 6.705                                     & 0.406                                 & 6.569                                  & 0.405                                 & {\color[HTML]{0000FF} {\ul 6.523}}     & 0.404                                  & 7.956                                  & 0.520                                 \\ \cmidrule(l){2-26} 
                            & 24  & 6.879                                    & {\color[HTML]{0000FF} {\ul 0.426}}      & 6.894                                  & {\color[HTML]{FF0000} \textbf{0.423}} & 6.900                                    & 0.446                                 & 6.907                                  & 0.443                                   & 6.883                                                & 0.520                   & 6.902                                     & 0.460                                 & 6.919                                                 & 0.450                 & 6.925                                & 0.440                                & {\color[HTML]{FF0000} \textbf{6.531}}     & 0.432                                 & 6.886                                  & 0.437                                 & {\color[HTML]{0000FF} {\ul 6.858}}     & 0.430                                  & 8.023                                  & 0.612                                 \\
                            & 36  & 6.421                                    & {\color[HTML]{0000FF} {\ul 0.440}}      & 6.446                                  & {\color[HTML]{FF0000} \textbf{0.439}} & 6.520                                    & 0.467                                 & 6.514                                  & 0.467                                   & 6.393                                                & 0.538                   & 6.539                                     & 0.473                                 & 6.456                                                 & 0.457                 & 6.463                                & 0.451                                & {\color[HTML]{FF0000} \textbf{5.935}}     & 0.453                                 & 6.431                                  & 0.452                                 & {\color[HTML]{0000FF} {\ul 6.400}}     & 0.445                                  & 7.229                                  & 0.595                                 \\
                            & 48  & 5.997                                    & 0.452                                   & 6.004                                  & {\color[HTML]{FF0000} \textbf{0.446}} & 6.108                                    & 0.484                                 & 6.135                                  & 0.478                                   & {\color[HTML]{0000FF} {\ul 5.940}}                   & 0.547                   & 6.115                                     & 0.487                                 & 6.031                                                 & 0.468                 & 6.031                                & 0.460                                & {\color[HTML]{FF0000} \textbf{5.871}}     & 0.464                                 & 6.101                                  & 0.483                                 & 5.959                                  & {\color[HTML]{0000FF} {\ul 0.449}}     & 7.184                                  & 0.641                                 \\
                            & 60  & 5.681                                    & 0.459                                   & 5.705                                  & {\color[HTML]{0000FF} {\ul 0.454}}    & 5.732                                    & 0.476                                 & 5.811                                  & 0.482                                   & {\color[HTML]{0000FF} {\ul 5.605}}                   & 0.552                   & 5.736                                     & 0.497                                 & 5.740                                                 & 0.478                 & 5.723                                & 0.463                                & {\color[HTML]{FF0000} \textbf{5.389}}     & 0.463                                 & 5.681                                  & 0.462                                 & 5.633                                  & {\color[HTML]{FF0000} \textbf{0.452}}  & 6.805                                  & 0.645                                 \\ \cmidrule(l){2-26} 
\multirow{-10}{*}{\rotatebox[origin=c]{90}{Wiki}}     & Avg & 6.244                                    & {\color[HTML]{0000FF} {\ul 0.444}}      & 6.262                                  & {\color[HTML]{FF0000} \textbf{0.440}} & 6.315                                    & 0.468                                 & 6.342                                  & 0.467                                   & {\color[HTML]{0000FF} {\ul 6.205}}                   & 0.539                   & 6.323                                     & 0.479                                 & 6.286                                                 & 0.463                 & 6.285                                & 0.453                                & {\color[HTML]{FF0000} \textbf{5.931}}     & 0.453                                 & 6.275                                  & 0.458                                 & 6.212                                  & {\color[HTML]{0000FF} {\ul 0.444}}     & 7.310                                  & 0.623                                 \\ \midrule
\multicolumn{2}{c|}{1\textsuperscript{st} Count}    & 32                                       & 19                                      & 4                                      & 33                                    & 5                                        & 0                                     & 0                                      & 1                                       & 0                                                    & 0                       & 4                                         & 0                                     & 1                                                     & 0                     & 0                                    & 0                                    & 7                                         & 0                                     & 0                                      & 0                                     & 3                                      & 1                                      & 0                                      & 0                                     \\ \midrule
\multicolumn{2}{c|}{Top-2 Count}  & 43                                       & 46                                      & 20                                     & 47                                    & 8                                        & 6                                     & 2                                      & 1                                       & 5                                                    & 0                       & 10                                        & 1                                     & 6                                                     & 0                     & 1                                    & 1                                    & 10                                        & 1                                     & 0                                      & 0                                     & 7                                      & 3                                      & 0                                      & 0                                     \\ \bottomrule
\end{tabular}
}
\caption{Full results of short-term forecasting under the `Input-12-Predict-\{3, 6, 9, 12\}' and `Input-36-Predict-\{24, 36, 48, 60\}' settings (Part 1). This table corresponds to Table~\ref{tab_short_term} in the main paper.}
\label{tab_short_term_p1}
\end{center}
\end{table*}

\begin{table*}[t]
\begin{center}
\renewcommand{\arraystretch}{1.3}
{\fontsize{6.4}{8}\selectfont
\setlength{\tabcolsep}{1.7pt}
\begin{tabular}{@{}cc|cc|cc|cc|cc|cc|cc|cc|cc|cc|cc|cc|cc@{}}
\toprule
\multicolumn{2}{c|}{Model}        & \multicolumn{2}{c|}{\begin{tabular}[c]{@{}c@{}}vLinear\\      (Ours)\end{tabular}} 
& \multicolumn{2}{c|}{\begin{tabular}[c]{@{}c@{}}OLinear\\     \shortcite{olinear} \end{tabular}} 
& \multicolumn{2}{c|}{\begin{tabular}[c]{@{}c@{}}TimeMixer\\    \shortcite{timemixer} \end{tabular}} 
& \multicolumn{2}{c|}{\begin{tabular}[c]{@{}c@{}}FilterNet\\     \shortcite{filternet}  \end{tabular}} 
& \multicolumn{2}{c|}{\begin{tabular}[c]{@{}c@{}}DLinear\\      \shortcite{linear} \end{tabular}} 
& \multicolumn{2}{c|}{\begin{tabular}[c]{@{}c@{}}TimeMix.++\\    \shortcite{timemixer++} \end{tabular}} 
& \multicolumn{2}{c|}{\begin{tabular}[c]{@{}c@{}}Leddam\\      \shortcite{Leddam_icml} \end{tabular}} 
& \multicolumn{2}{c|}{\begin{tabular}[c]{@{}c@{}}CARD\\      \shortcite{card} \end{tabular}} 
& \multicolumn{2}{c|}{\begin{tabular}[c]{@{}c@{}}Fredformer\\   \shortcite{fredformer} \end{tabular}} 
& \multicolumn{2}{c|}{\begin{tabular}[c]{@{}c@{}}iTrans.\\     \shortcite{itransformer} \end{tabular}} 
& \multicolumn{2}{c|}{\begin{tabular}[c]{@{}c@{}}PatchTST\\     \shortcite{patchtst} \end{tabular}} 
& \multicolumn{2}{c}{\begin{tabular}[c]{@{}c@{}}TimesNet\\      \shortcite{timesnet} \end{tabular}} \\ \midrule
\multicolumn{2}{c|}{Metric}       & MSE                                      & MAE                                     & MSE                                    & MAE                                   & MSE                                     & MAE                                    & MSE                                      & MAE                                   & MSE                                    & MAE                                   & MSE                                      & MAE                                     & MSE                                                  & MAE                    & MSE                                    & MAE                                   & MSE                                                    & MAE                      & MSE                                     & MAE                                  & MSE                                      & MAE                                  & MSE                                                  & MAE                     \\ \midrule
                            & 3   & {\color[HTML]{FF0000} \textbf{0.035}}    & {\color[HTML]{FF0000} \textbf{0.125}}   & {\color[HTML]{FF0000} \textbf{0.035}}  & {\color[HTML]{0000FF} {\ul 0.126}}    & 0.038                                   & 0.137                                  & 0.046                                    & 0.154                                 & 0.047                                  & 0.152                                 & 0.040                                    & 0.141                                   & 0.047                                                & 0.155                  & {\color[HTML]{0000FF} {\ul 0.036}}     & 0.130                                 & 0.045                                                  & 0.153                    & 0.046                                   & 0.155                                & 0.042                                    & 0.146                                & 0.037                                                & 0.133                   \\
                            & 6   & {\color[HTML]{FF0000} \textbf{0.053}}    & {\color[HTML]{FF0000} \textbf{0.158}}   & {\color[HTML]{FF0000} \textbf{0.053}}  & {\color[HTML]{FF0000} \textbf{0.158}} & 0.056                                   & 0.167                                  & 0.071                                    & 0.193                                 & 0.070                                  & 0.197                                 & 0.057                                    & 0.168                                   & 0.064                                                & 0.181                  & {\color[HTML]{0000FF} {\ul 0.054}}     & {\color[HTML]{0000FF} {\ul 0.162}}    & 0.066                                                  & 0.187                    & 0.066                                   & 0.187                                & 0.063                                    & 0.183                                & 0.055                                                & 0.165                   \\
                            & 9   & {\color[HTML]{FF0000} \textbf{0.070}}    & {\color[HTML]{0000FF} {\ul 0.183}}      & {\color[HTML]{FF0000} \textbf{0.070}}  & {\color[HTML]{FF0000} \textbf{0.181}} & 0.074                                   & 0.192                                  & 0.080                                    & 0.203                                 & 0.091                                  & 0.220                                 & 0.076                                    & 0.196                                   & 0.081                                                & 0.206                  & 0.072                                  & 0.184                                 & 0.080                                                  & 0.205                    & 0.080                                   & 0.203                                & 0.081                                    & 0.207                                & {\color[HTML]{0000FF} {\ul 0.071}}                   & 0.187                   \\
                            & 12  & {\color[HTML]{FF0000} \textbf{0.088}}    & {\color[HTML]{FF0000} \textbf{0.204}}   & {\color[HTML]{FF0000} \textbf{0.088}}  & {\color[HTML]{FF0000} \textbf{0.204}} & 0.092                                   & 0.213                                  & 0.097                                    & 0.224                                 & 0.113                                  & 0.251                                 & 0.093                                    & 0.213                                   & 0.104                                                & 0.236                  & 0.089                                  & {\color[HTML]{0000FF} {\ul 0.206}}    & 0.103                                                  & 0.234                    & 0.099                                   & 0.228                                & 0.101                                    & 0.233                                & {\color[HTML]{0000FF} {\ul 0.089}}                   & 0.210                   \\ \cmidrule(l){2-26} 
                            & Avg & {\color[HTML]{FF0000} \textbf{0.061}}    & {\color[HTML]{0000FF} {\ul 0.168}}      & {\color[HTML]{FF0000} \textbf{0.061}}  & {\color[HTML]{FF0000} \textbf{0.167}} & 0.065                                   & 0.177                                  & 0.073                                    & 0.193                                 & 0.080                                  & 0.205                                 & 0.066                                    & 0.179                                   & 0.074                                                & 0.194                  & {\color[HTML]{0000FF} {\ul 0.063}}     & 0.170                                 & 0.073                                                  & 0.195                    & 0.073                                   & 0.193                                & 0.072                                    & 0.192                                & {\color[HTML]{0000FF} {\ul 0.063}}                   & 0.174                   \\ \cmidrule(l){2-26} 
                            & 24  & {\color[HTML]{FF0000} \textbf{0.154}}    & {\color[HTML]{0000FF} {\ul 0.273}}      & {\color[HTML]{0000FF} {\ul 0.155}}     & {\color[HTML]{FF0000} \textbf{0.271}} & 0.159                                   & 0.288                                  & 0.181                                    & 0.317                                 & 0.189                                  & 0.330                                 & 0.172                                    & 0.305                                   & 0.175                                                & 0.308                  & 0.156                                  & 0.276                                 & 0.181                                                  & 0.315                    & 0.180                                   & 0.309                                & 0.164                                    & 0.298                                & 0.162                                                & 0.291                   \\
                            & 36  & {\color[HTML]{FF0000} \textbf{0.205}}    & {\color[HTML]{FF0000} \textbf{0.317}}   & 0.209                                  & {\color[HTML]{FF0000} \textbf{0.317}} & 0.218                                   & 0.343                                  & 0.224                                    & 0.341                                 & 0.250                                  & 0.363                                 & 0.227                                    & 0.344                                   & 0.232                                                & 0.358                  & {\color[HTML]{0000FF} {\ul 0.206}}     & {\color[HTML]{0000FF} {\ul 0.319}}    & 0.239                                                  & 0.365                    & 0.225                                   & 0.346                                & 0.221                                    & 0.341                                & 0.219                                                & 0.344                   \\
                            & 48  & {\color[HTML]{FF0000} \textbf{0.253}}    & {\color[HTML]{FF0000} \textbf{0.354}}   & {\color[HTML]{0000FF} {\ul 0.258}}     & {\color[HTML]{0000FF} {\ul 0.358}}    & 0.264                                   & 0.367                                  & 0.280                                    & 0.384                                 & 0.291                                  & 0.398                                 & 0.272                                    & 0.383                                   & 0.276                                                & 0.388                  & {\color[HTML]{0000FF} {\ul 0.258}}     & {\color[HTML]{FF0000} \textbf{0.354}} & 0.283                                                  & 0.394                    & 0.275                                   & 0.383                                & 0.278                                    & 0.397                                & 0.262                                                & 0.371                   \\
                            & 60  & {\color[HTML]{FF0000} \textbf{0.295}}    & {\color[HTML]{FF0000} \textbf{0.381}}   & 0.305                                  & 0.387                                 & 0.322                                   & 0.416                                  & 0.332                                    & 0.416                                 & 0.377                                  & 0.475                                 & 0.319                                    & 0.413                                   & 0.325                                                & 0.423                  & {\color[HTML]{0000FF} {\ul 0.303}}     & {\color[HTML]{0000FF} {\ul 0.385}}    & 0.341                                                  & 0.438                    & 0.322                                   & 0.418                                & 0.321                                    & 0.409                                & 0.305                                                & 0.399                   \\ \cmidrule(l){2-26} 
\multirow{-10}{*}{\rotatebox[origin=c]{90}{SP500}}    & Avg & {\color[HTML]{FF0000} \textbf{0.227}}    & {\color[HTML]{FF0000} \textbf{0.331}}   & {\color[HTML]{0000FF} {\ul 0.231}}     & {\color[HTML]{0000FF} {\ul 0.333}}    & 0.241                                   & 0.353                                  & 0.254                                    & 0.365                                 & 0.277                                  & 0.391                                 & 0.247                                    & 0.361                                   & 0.252                                                & 0.369                  & {\color[HTML]{0000FF} {\ul 0.231}}     & {\color[HTML]{0000FF} {\ul 0.333}}    & 0.261                                                  & 0.378                    & 0.250                                   & 0.364                                & 0.246                                    & 0.361                                & 0.237                                                & 0.351                   \\ \midrule
                            & 3   & {\color[HTML]{FF0000} \textbf{1.532}}    & {\color[HTML]{FF0000} \textbf{0.275}}   & 1.550                                  & {\color[HTML]{0000FF} {\ul 0.276}}    & 1.568                                   & {\color[HTML]{0000FF} {\ul 0.276}}     & 1.587                                    & 0.278                                 & 1.574                                  & 0.280                                 & 1.555                                    & 0.278                                   & {\color[HTML]{0000FF} {\ul 1.538}}                   & 0.277                  & 1.563                                  & {\color[HTML]{0000FF} {\ul 0.276}}    & 1.563                                                  & 0.277                    & 1.551                                   & {\color[HTML]{0000FF} {\ul 0.276}}   & 1.556                                    & {\color[HTML]{0000FF} {\ul 0.276}}   & 2.269                                                & 0.325                   \\
                            & 6   & {\color[HTML]{FF0000} \textbf{2.559}}    & {\color[HTML]{FF0000} \textbf{0.361}}   & {\color[HTML]{0000FF} {\ul 2.569}}     & {\color[HTML]{FF0000} \textbf{0.361}} & 2.593                                   & {\color[HTML]{0000FF} {\ul 0.362}}     & 2.634                                    & 0.364                                 & 2.613                                  & 0.367                                 & 2.591                                    & 0.364                                   & 2.588                                                & 0.364                  & 2.579                                  & {\color[HTML]{0000FF} {\ul 0.362}}    & 2.619                                                  & 0.364                    & 2.589                                   & 0.363                                & 2.588                                    & {\color[HTML]{0000FF} {\ul 0.362}}   & 3.934                                                & 0.430                   \\
                            & 9   & {\color[HTML]{FF0000} \textbf{3.535}}    & {\color[HTML]{FF0000} \textbf{0.430}}   & 3.565                                  & {\color[HTML]{FF0000} \textbf{0.430}} & {\color[HTML]{0000FF} {\ul 3.564}}      & {\color[HTML]{0000FF} {\ul 0.431}}     & 3.651                                    & 0.434                                 & 3.621                                  & 0.436                                 & 3.572                                    & 0.433                                   & 3.644                                                & 0.440                  & 3.569                                  & {\color[HTML]{FF0000} \textbf{0.430}} & 3.575                                                  & 0.432                    & 3.599                                   & 0.433                                & 3.585                                    & 0.432                                & 4.715                                                & 0.492                   \\
                            & 12  & {\color[HTML]{0000FF} {\ul 4.477}}       & {\color[HTML]{0000FF} {\ul 0.492}}      & 4.517                                  & {\color[HTML]{FF0000} \textbf{0.490}} & {\color[HTML]{FF0000} \textbf{4.472}}   & {\color[HTML]{FF0000} \textbf{0.490}}  & 4.634                                    & 0.496                                 & 4.586                                  & 0.497                                 & 4.630                                    & 0.496                                   & 4.622                                                & 0.498                  & 4.528                                  & {\color[HTML]{FF0000} \textbf{0.490}} & 4.605                                                  & 0.493                    & 4.579                                   & 0.493                                & 4.533                                    & {\color[HTML]{0000FF} {\ul 0.492}}   & 5.999                                                & 0.561                   \\ \cmidrule(l){2-26} 
                            & Avg & {\color[HTML]{FF0000} \textbf{3.025}}    & {\color[HTML]{0000FF} {\ul 0.390}}      & 3.050                                  & {\color[HTML]{FF0000} \textbf{0.389}} & {\color[HTML]{0000FF} {\ul 3.049}}      & {\color[HTML]{0000FF} {\ul 0.390}}     & 3.126                                    & 0.393                                 & 3.099                                  & 0.395                                 & 3.087                                    & 0.393                                   & 3.098                                                & 0.395                  & 3.060                                  & {\color[HTML]{FF0000} \textbf{0.389}} & 3.090                                                  & 0.391                    & 3.079                                   & 0.391                                & 3.065                                    & {\color[HTML]{0000FF} {\ul 0.390}}   & 4.229                                                & 0.452                   \\ \cmidrule(l){2-26} 
                            & 24  & 7.444                                    & {\color[HTML]{0000FF} {\ul 0.665}}      & {\color[HTML]{0000FF} {\ul 7.432}}     & {\color[HTML]{FF0000} \textbf{0.664}} & 8.327                                   & 0.683                                  & 8.000                                    & 0.683                                 & 7.590                                  & 0.670                                 & 8.283                                    & 0.689                                   & 8.029                                                & 0.679                  & {\color[HTML]{FF0000} \textbf{7.416}}  & {\color[HTML]{0000FF} {\ul 0.665}}    & 7.758                                                  & 0.672                    & 7.925                                   & 0.677                                & 7.641                                    & 0.670                                & 11.535                                               & 0.834                   \\
                            & 36  & {\color[HTML]{0000FF} {\ul 10.801}}      & 0.801                                   & 10.848                                 & {\color[HTML]{0000FF} {\ul 0.799}}    & 11.192                                  & 0.813                                  & 12.011                                   & 0.823                                 & 10.986                                 & 0.803                                 & 14.754                                   & 0.856                                   & 11.962                                               & 0.828                  & {\color[HTML]{FF0000} \textbf{10.799}} & {\color[HTML]{FF0000} \textbf{0.798}} & 11.456                                                 & 0.808                    & 12.087                                  & 0.827                                & 11.210                                   & 0.807                                & 16.922                                               & 0.982                   \\
                            & 48  & 14.102                                   & 0.915                                   & {\color[HTML]{0000FF} {\ul 14.045}}    & {\color[HTML]{0000FF} {\ul 0.914}}    & 15.278                                  & 0.945                                  & 14.814                                   & 0.933                                 & 14.157                                 & 0.922                                 & 16.893                                   & 0.970                                   & 15.266                                               & 0.937                  & {\color[HTML]{FF0000} \textbf{13.881}} & {\color[HTML]{FF0000} \textbf{0.912}} & 14.696                                                 & 0.921                    & 14.787                                  & 0.930                                & 14.866                                   & 0.935                                & 19.501                                               & 1.093                   \\
                            & 60  & {\color[HTML]{0000FF} {\ul 17.057}}      & 1.023                                   & {\color[HTML]{FF0000} \textbf{16.959}} & {\color[HTML]{FF0000} \textbf{1.017}} & 20.997                                  & 1.067                                  & 18.932                                   & 1.054                                 & 18.018                                 & 1.035                                 & 18.881                                   & 1.059                                   & 18.407                                               & 1.045                  & 17.257                                 & {\color[HTML]{0000FF} {\ul 1.021}}    & 18.058                                                 & 1.032                    & 18.298                                  & 1.041                                & 17.947                                   & 1.036                                & 22.804                                               & 1.177                   \\ \cmidrule(l){2-26} 
\multirow{-10}{*}{\rotatebox[origin=c]{90}{DowJones}} & Avg & 12.351                                   & 0.851                                   & {\color[HTML]{FF0000} \textbf{12.321}} & {\color[HTML]{FF0000} \textbf{0.848}} & 13.948                                  & 0.877                                  & 13.439                                   & 0.873                                 & 12.688                                 & 0.857                                 & 14.703                                   & 0.893                                   & 13.416                                               & 0.872                  & {\color[HTML]{0000FF} {\ul 12.338}}    & {\color[HTML]{0000FF} {\ul 0.849}}    & 12.992                                                 & 0.858                    & 13.274                                  & 0.869                                & 12.916                                   & 0.862                                & 17.690                                               & 1.021                   \\ \midrule
                            & 3   & {\color[HTML]{0000FF} {\ul 0.838}}       & {\color[HTML]{FF0000} \textbf{0.683}}   & 0.864                                  & 0.688                                 & 0.850                                   & {\color[HTML]{FF0000} \textbf{0.683}}  & 0.842                                    & {\color[HTML]{0000FF} {\ul 0.685}}    & 0.876                                  & 0.708                                 & 0.843                                    & {\color[HTML]{0000FF} {\ul 0.685}}      & 0.859                                                & 0.696                  & 0.896                                  & 0.691                                 & 0.865                                                  & 0.697                    & 0.863                                   & 0.696                                & {\color[HTML]{FF0000} \textbf{0.830}}    & 0.687                                & 0.929                                                & 0.716                   \\
                            & 6   & 0.978                                    & {\color[HTML]{FF0000} \textbf{0.735}}   & 0.991                                  & 0.742                                 & {\color[HTML]{0000FF} {\ul 0.971}}      & 0.743                                  & 0.988                                    & 0.747                                 & 0.991                                  & 0.761                                 & 0.997                                    & 0.748                                   & 1.023                                                & 0.761                  & 1.013                                  & 0.750                                 & 0.994                                                  & 0.751                    & 0.990                                   & 0.748                                & {\color[HTML]{FF0000} \textbf{0.948}}    & {\color[HTML]{0000FF} {\ul 0.741}}   & 1.070                                                & 0.773                   \\
                            & 9   & 1.046                                    & 0.768                                   & 1.062                                  & 0.770                                 & {\color[HTML]{FF0000} \textbf{1.024}}   & {\color[HTML]{FF0000} \textbf{0.763}}  & 1.063                                    & 0.778                                 & 1.051                                  & 0.790                                 & 1.050                                    & {\color[HTML]{0000FF} {\ul 0.767}}      & 1.071                                                & 0.783                  & 1.085                                  & 0.781                                 & 1.073                                                  & 0.784                    & 1.065                                   & 0.780                                & {\color[HTML]{0000FF} {\ul 1.027}}       & 0.773                                & 1.104                                                & 0.788                   \\
                            & 12  & 1.124                                    & 0.797                                   & 1.119                                  & {\color[HTML]{0000FF} {\ul 0.789}}    & {\color[HTML]{0000FF} {\ul 1.087}}      & {\color[HTML]{FF0000} \textbf{0.788}}  & 1.125                                    & 0.803                                 & 1.110                                  & 0.814                                 & 1.107                                    & 0.792                                   & 1.130                                                & 0.804                  & 1.124                                  & 0.792                                 & 1.135                                                  & 0.807                    & 1.142                                   & 0.808                                & {\color[HTML]{FF0000} \textbf{1.083}}    & 0.797                                & 1.212                                                & 0.825                   \\ \cmidrule(l){2-26} 
                            & Avg & 0.996                                    & {\color[HTML]{0000FF} {\ul 0.746}}      & 1.009                                  & 0.747                                 & {\color[HTML]{0000FF} {\ul 0.983}}      & {\color[HTML]{FF0000} \textbf{0.744}}  & 1.004                                    & 0.753                                 & 1.007                                  & 0.768                                 & 0.999                                    & 0.748                                   & 1.021                                                & 0.761                  & 1.029                                  & 0.753                                 & 1.017                                                  & 0.760                    & 1.015                                   & 0.758                                & {\color[HTML]{FF0000} \textbf{0.972}}    & 0.749                                & 1.079                                                & 0.775                   \\ \cmidrule(l){2-26} 
                            & 24  & {\color[HTML]{FF0000} \textbf{1.335}}    & {\color[HTML]{0000FF} {\ul 0.877}}      & 1.343                                  & {\color[HTML]{FF0000} \textbf{0.870}} & 1.341                                   & 0.881                                  & 1.410                                    & 0.916                                 & 1.390                                  & 0.916                                 & {\color[HTML]{0000FF} {\ul 1.340}}       & {\color[HTML]{0000FF} {\ul 0.877}}      & 1.397                                                & 0.909                  & 1.406                                  & 0.886                                 & 1.410                                                  & 0.913                    & 1.462                                   & 0.924                                & 1.468                                    & 0.935                                & 1.494                                                & 0.924                   \\
                            & 36  & {\color[HTML]{0000FF} {\ul 1.442}}       & {\color[HTML]{0000FF} {\ul 0.912}}      & 1.445                                  & {\color[HTML]{FF0000} \textbf{0.903}} & {\color[HTML]{FF0000} \textbf{1.420}}   & 0.914                                  & 1.590                                    & 0.968                                 & 1.518                                  & 0.957                                 & 1.446                                    & 0.920                                   & 1.509                                                & 0.951                  & 1.506                                  & 0.921                                 & 1.538                                                  & 0.953                    & 1.582                                   & 0.964                                & 1.593                                    & 0.972                                & 1.526                                                & 0.950                   \\
                            & 48  & {\color[HTML]{0000FF} {\ul 1.510}}       & {\color[HTML]{0000FF} {\ul 0.935}}      & 1.559                                  & 0.946                                 & 1.567                                   & 0.963                                  & 1.680                                    & 1.009                                 & 1.610                                  & 0.995                                 & {\color[HTML]{FF0000} \textbf{1.467}}    & {\color[HTML]{FF0000} \textbf{0.933}}   & 1.646                                                & 0.999                  & 1.583                                  & 0.957                                 & 1.652                                                  & 1.008                    & 1.696                                   & 1.011                                & 1.710                                    & 1.020                                & 1.581                                                & 0.981                   \\
                            & 60  & {\color[HTML]{FF0000} \textbf{1.573}}    & {\color[HTML]{FF0000} \textbf{0.971}}   & {\color[HTML]{0000FF} {\ul 1.602}}     & {\color[HTML]{FF0000} \textbf{0.971}} & 1.609                                   & {\color[HTML]{0000FF} {\ul 0.988}}     & 1.776                                    & 1.053                                 & 1.679                                  & 1.020                                 & 1.626                                    & 1.006                                   & 1.727                                                & 1.043                  & 1.693                                  & 1.003                                 & 1.752                                                  & 1.049                    & 1.796                                   & 1.061                                & 1.829                                    & 1.064                                & 1.625                                                & 1.010                   \\ \cmidrule(l){2-26} 
\multirow{-10}{*}{\rotatebox[origin=c]{90}{Power}}    & Avg & {\color[HTML]{FF0000} \textbf{1.465}}    & {\color[HTML]{0000FF} {\ul 0.924}}      & 1.487                                  & {\color[HTML]{FF0000} \textbf{0.922}} & 1.484                                   & 0.937                                  & 1.614                                    & 0.986                                 & 1.549                                  & 0.972                                 & {\color[HTML]{0000FF} {\ul 1.470}}       & 0.934                                   & 1.570                                                & 0.975                  & 1.547                                  & 0.942                                 & 1.588                                                  & 0.981                    & 1.634                                   & 0.990                                & 1.650                                    & 0.998                                & 1.556                                                & 0.966                   \\ \midrule
                            & 3   & {\color[HTML]{FF0000} \textbf{0.012}}    & {\color[HTML]{0000FF} {\ul 0.048}}      & {\color[HTML]{FF0000} \textbf{0.012}}  & {\color[HTML]{FF0000} \textbf{0.047}} & 0.015                                   & 0.074                                  & {\color[HTML]{FF0000} \textbf{0.012}}    & 0.062                                 & 0.072                                  & 0.200                                 & 0.014                                    & 0.069                                   & {\color[HTML]{0000FF} {\ul 0.013}}                   & 0.061                  & 0.014                                  & 0.070                                 & {\color[HTML]{0000FF} {\ul 0.013}}                     & 0.068                    & {\color[HTML]{FF0000} \textbf{0.012}}   & 0.061                                & {\color[HTML]{FF0000} \textbf{0.012}}    & 0.060                                & 0.062                                                & 0.171                   \\
                            & 6   & {\color[HTML]{FF0000} \textbf{0.038}}    & {\color[HTML]{0000FF} {\ul 0.109}}      & 0.041                                  & {\color[HTML]{FF0000} \textbf{0.108}} & 0.057                                   & 0.154                                  & 0.043                                    & 0.130                                 & 0.115                                  & 0.255                                 & 0.079                                    & 0.194                                   & {\color[HTML]{0000FF} {\ul 0.040}}                   & 0.122                  & 0.042                                  & 0.127                                 & 0.046                                                  & 0.139                    & 0.047                                   & 0.136                                & 0.043                                    & 0.127                                & 0.111                                                & 0.227                   \\
                            & 9   & {\color[HTML]{FF0000} \textbf{0.081}}    & {\color[HTML]{FF0000} \textbf{0.162}}   & {\color[HTML]{0000FF} {\ul 0.084}}     & {\color[HTML]{0000FF} {\ul 0.170}}    & 0.109                                   & 0.213                                  & 0.107                                    & 0.216                                 & 0.191                                  & 0.329                                 & 0.101                                    & 0.200                                   & 0.119                                                & 0.226                  & 0.093                                  & 0.194                                 & 0.095                                                  & 0.198                    & 0.098                                   & 0.199                                & 0.093                                    & 0.190                                & 0.176                                                & 0.281                   \\
                            & 12  & {\color[HTML]{FF0000} \textbf{0.119}}    & {\color[HTML]{FF0000} \textbf{0.210}}   & {\color[HTML]{0000FF} {\ul 0.131}}     & {\color[HTML]{0000FF} {\ul 0.220}}    & 0.195                                   & 0.293                                  & 0.155                                    & 0.255                                 & 0.240                                  & 0.386                                 & 0.196                                    & 0.291                                   & 0.141                                                & 0.239                  & 0.155                                  & 0.256                                 & 0.148                                                  & 0.250                    & 0.158                                   & 0.256                                & 0.164                                    & 0.261                                & 0.240                                                & 0.326                   \\ \cmidrule(l){2-26} 
                            & Avg & {\color[HTML]{FF0000} \textbf{0.062}}    & {\color[HTML]{FF0000} \textbf{0.132}}   & {\color[HTML]{0000FF} {\ul 0.067}}     & {\color[HTML]{0000FF} {\ul 0.136}}    & 0.094                                   & 0.183                                  & 0.079                                    & 0.166                                 & 0.154                                  & 0.292                                 & 0.097                                    & 0.188                                   & 0.078                                                & 0.162                  & 0.076                                  & 0.162                                 & 0.075                                                  & 0.163                    & 0.079                                   & 0.163                                & 0.078                                    & 0.160                                & 0.147                                                & 0.251                   \\ \cmidrule(l){2-26} 
                            & 24  & {\color[HTML]{FF0000} \textbf{0.362}}    & {\color[HTML]{FF0000} \textbf{0.395}}   & {\color[HTML]{0000FF} {\ul 0.458}}     & {\color[HTML]{0000FF} {\ul 0.447}}    & 0.668                                   & 0.549                                  & 0.655                                    & 0.548                                 & 0.951                                  & 0.645                                 & 0.779                                    & 0.578                                   & 0.861                                                & 0.645                  & 0.583                                  & 0.522                                 & 0.545                                                  & 0.496                    & 0.719                                   & 0.595                                & 0.620                                    & 0.536                                & 1.798                                                & 0.967                   \\
                            & 36  & {\color[HTML]{FF0000} \textbf{0.652}}    & {\color[HTML]{FF0000} \textbf{0.527}}   & {\color[HTML]{0000FF} {\ul 0.870}}     & {\color[HTML]{0000FF} {\ul 0.619}}    & 1.580                                   & 0.866                                  & 1.247                                    & 0.769                                 & 0.906                                  & 0.717                                 & 1.318                                    & 0.777                                   & 1.741                                                & 0.938                  & 1.205                                  & 0.762                                 & 1.107                                                  & 0.725                    & 2.212                                   & 1.103                                & 2.245                                    & 1.120                                & 2.820                                                & 1.171                   \\
                            & 48  & {\color[HTML]{0000FF} {\ul 1.484}}       & {\color[HTML]{0000FF} {\ul 0.829}}      & 1.643                                  & 0.900                                 & 3.702                                   & 1.483                                  & 2.700                                    & 1.209                                 & {\color[HTML]{FF0000} \textbf{0.974}}  & {\color[HTML]{FF0000} \textbf{0.747}} & 2.977                                    & 1.258                                   & 3.526                                                & 1.427                  & 2.737                                  & 1.218                                 & 6.192                                                  & 2.055                    & 2.577                                   & 1.178                                & 2.644                                    & 1.191                                & 5.909                                                & 1.620                   \\
                            & 60  & {\color[HTML]{0000FF} {\ul 2.496}}       & {\color[HTML]{0000FF} {\ul 1.133}}      & 2.593                                  & 1.182                                 & 6.320                                   & 2.030                                  & 5.370                                    & 1.830                                 & {\color[HTML]{FF0000} \textbf{1.071}}  & {\color[HTML]{FF0000} \textbf{0.798}} & 6.582                                    & 2.060                                   & 5.574                                                & 1.857                  & 4.475                                  & 1.624                                 & 8.236                                                  & 2.384                    & 5.441                                   & 1.799                                & 4.074                                    & 1.511                                & 7.511                                                & 2.053                   \\ \cmidrule(l){2-26} 
\multirow{-10}{*}{\rotatebox[origin=c]{90}{Unemp}}       & Avg & {\color[HTML]{0000FF} {\ul 1.249}}       & {\color[HTML]{FF0000} \textbf{0.721}}   & 1.391                                  & 0.787                                 & 3.068                                   & 1.232                                  & 2.493                                    & 1.089                                 & {\color[HTML]{FF0000} \textbf{0.975}}  & {\color[HTML]{0000FF} {\ul 0.727}}    & 2.914                                    & 1.168                                   & 2.925                                                & 1.217                  & 2.250                                  & 1.031                                 & 4.020                                                  & 1.415                    & 2.737                                   & 1.169                                & 2.396                                    & 1.089                                & 4.510                                                & 1.452                   \\ \midrule
\multicolumn{2}{c|}{1\textsuperscript{st} Count}    & 24                                       & 19                                      & 20                                     & 19                                    & 8                                       & 5                                      & 1                                        & 0                                     & 3                                      & 2                                     & 3                                        & 1                                       & 3                                                    & 0                      & 11                                     & 6                                     & 1                                                      & 0                        & 1                                       & 0                                    & 6                                        & 0                                    & 3                                                    & 0                       \\ \midrule
\multicolumn{2}{c|}{Top-2 Count}  & 33                                       & 34                                      & 20                                     & 31                                    & 8                                       & 10                                     & 1                                        & 1                                     & 3                                      & 3                                     & 3                                        & 4                                       & 3                                                    & 0                      & 11                                     & 16                                    & 1                                                      & 0                        & 1                                       & 1                                    & 6                                        & 5                                    & 3                                                    & 0                       \\ \bottomrule
\end{tabular}
}
\caption{Full results of short-term forecasting under the `Input-12-Predict-\{3, 6, 9, 12\}' and `Input-36-Predict-\{24, 36, 48, 60\}' settings (Part 2). This table corresponds to Table~\ref{tab_short_term} in the main paper.}
\label{tab_short_term_p2}
\end{center}
\end{table*}


\begin{table*}[t]
\begin{center}
{\fontsize{8}{9}\selectfont
\setlength{\tabcolsep}{8pt}

\begin{tabular}{@{}cc|cc|cc|cc|cc|cc@{}}
\toprule
\multicolumn{2}{c|}{Model}           & \multicolumn{2}{c|}{\begin{tabular}[c]{@{}c@{}}vLinear\\      (Ours)\end{tabular}} 
& \multicolumn{2}{c|}{\begin{tabular}[c]{@{}c@{}}SimpleTM\\     \shortcite{simpletm} \end{tabular}} 
& \multicolumn{2}{c|}{\begin{tabular}[c]{@{}c@{}}TQNet\\      \shortcite{tqnet} \end{tabular}} 
& \multicolumn{2}{c|}{\begin{tabular}[c]{@{}c@{}}TimePro\\      \shortcite{timepro} \end{tabular}} 
& \multicolumn{2}{c}{\begin{tabular}[c]{@{}c@{}}TimeBase\\      \shortcite{timebase} \end{tabular}} \\ \midrule
\multicolumn{2}{c|}{Metric}          & MSE                                      & MAE                                     & MSE                                    & MAE                                    & MSE                                   & MAE                                  & MSE                                                    & MAE                   & MSE                     & MAE                                                  \\ \midrule
                               & 96  & {\color[HTML]{FF0000} \textbf{0.301}}    & {\color[HTML]{FF0000} \textbf{0.337}}   & 0.321                                  & 0.361                                  & {\color[HTML]{0000FF} {\ul 0.311}}    & {\color[HTML]{0000FF} {\ul 0.353}}   & 0.326                                                  & 0.364                 & 0.373                   & 0.388                                                \\
                               & 192 & {\color[HTML]{FF0000} \textbf{0.349}}    & {\color[HTML]{FF0000} \textbf{0.365}}   & 0.360                                  & 0.380                                  & {\color[HTML]{0000FF} {\ul 0.356}}    & {\color[HTML]{0000FF} {\ul 0.378}}   & 0.367                                                  & 0.383                 & 0.411                   & 0.409                                                \\
                               & 336 & {\color[HTML]{FF0000} \textbf{0.381}}    & {\color[HTML]{FF0000} \textbf{0.387}}   & 0.390                                  & 0.404                                  & {\color[HTML]{0000FF} {\ul 0.390}}    & {\color[HTML]{0000FF} {\ul 0.401}}   & 0.402                                                  & 0.409                 & 0.436                   & 0.421                                                \\
                               & 720 & {\color[HTML]{FF0000} \textbf{0.444}}    & {\color[HTML]{FF0000} \textbf{0.424}}   & 0.454                                  & {\color[HTML]{0000FF} {\ul 0.438}}     & {\color[HTML]{0000FF} {\ul 0.452}}    & 0.440                                & 0.469                                                  & 0.446                 & 0.503                   & 0.461                                                \\ \cmidrule(l){2-12} 
\multirow{-5}{*}{ETTm1}        & Avg & {\color[HTML]{FF0000} \textbf{0.369}}    & {\color[HTML]{FF0000} \textbf{0.378}}   & 0.381                                  & 0.396                                  & {\color[HTML]{0000FF} {\ul 0.377}}    & {\color[HTML]{0000FF} {\ul 0.393}}   & 0.391                                                  & 0.400                 & 0.431                   & 0.420                                                \\ \midrule
                               & 96  & {\color[HTML]{FF0000} \textbf{0.168}}    & {\color[HTML]{FF0000} \textbf{0.245}}   & 0.173                                  & 0.257                                  & {\color[HTML]{0000FF} {\ul 0.173}}    & {\color[HTML]{0000FF} {\ul 0.256}}   & 0.178                                                  & 0.260                 & 0.188                   & 0.271                                                \\
                               & 192 & {\color[HTML]{FF0000} \textbf{0.231}}    & {\color[HTML]{FF0000} \textbf{0.287}}   & 0.238                                  & 0.299                                  & {\color[HTML]{0000FF} {\ul 0.238}}    & {\color[HTML]{0000FF} {\ul 0.298}}   & 0.242                                                  & 0.303                 & 0.251                   & 0.309                                                \\
                               & 336 & {\color[HTML]{FF0000} \textbf{0.289}}    & {\color[HTML]{FF0000} \textbf{0.326}}   & {\color[HTML]{0000FF} {\ul 0.296}}     & {\color[HTML]{0000FF} {\ul 0.338}}     & 0.301                                 & 0.340                                & 0.303                                                  & 0.342                 & 0.311                   & 0.346                                                \\
                               & 720 & {\color[HTML]{FF0000} \textbf{0.386}}    & {\color[HTML]{FF0000} \textbf{0.382}}   & {\color[HTML]{0000FF} {\ul 0.393}}     & {\color[HTML]{0000FF} {\ul 0.395}}     & 0.397                                 & 0.396                                & 0.400                                                  & 0.399                 & 0.411                   & 0.401                                                \\ \cmidrule(l){2-12} 
\multirow{-5}{*}{ETTm2}        & Avg & {\color[HTML]{FF0000} \textbf{0.268}}    & {\color[HTML]{FF0000} \textbf{0.310}}   & {\color[HTML]{0000FF} {\ul 0.275}}     & {\color[HTML]{0000FF} {\ul 0.322}}     & 0.277                                 & 0.323                                & 0.281                                                  & 0.326                 & 0.290                   & 0.332                                                \\ \midrule
                               & 96  & {\color[HTML]{FF0000} \textbf{0.356}}    & {\color[HTML]{FF0000} \textbf{0.379}}   & {\color[HTML]{0000FF} {\ul 0.366}}     & {\color[HTML]{0000FF} {\ul 0.392}}     & 0.371                                 & 0.393                                & 0.375                                                  & 0.398                 & 0.399                   & 0.392                                                \\
                               & 192 & {\color[HTML]{FF0000} \textbf{0.409}}    & {\color[HTML]{FF0000} \textbf{0.411}}   & {\color[HTML]{0000FF} {\ul 0.422}}     & {\color[HTML]{0000FF} {\ul 0.421}}     & 0.428                                 & 0.426                                & 0.427                                                  & 0.429                 & 0.455                   & 0.423                                                \\
                               & 336 & {\color[HTML]{0000FF} {\ul 0.449}}       & {\color[HTML]{FF0000} \textbf{0.433}}   & {\color[HTML]{FF0000} \textbf{0.440}}  & {\color[HTML]{0000FF} {\ul 0.438}}     & 0.476                                 & 0.446                                & 0.472                                                  & 0.450                 & 0.501                   & 0.443                                                \\
                               & 720 & {\color[HTML]{FF0000} \textbf{0.451}}    & {\color[HTML]{FF0000} \textbf{0.456}}   & {\color[HTML]{0000FF} {\ul 0.463}}     & 0.462                                  & 0.487                                 & 0.470                                & 0.476                                                  & 0.474                 & 0.498                   & {\color[HTML]{0000FF} {\ul 0.458}}                   \\ \cmidrule(l){2-12} 
\multirow{-5}{*}{ETTh1}        & Avg & {\color[HTML]{FF0000} \textbf{0.416}}    & {\color[HTML]{FF0000} \textbf{0.420}}   & {\color[HTML]{0000FF} {\ul 0.422}}     & {\color[HTML]{0000FF} {\ul 0.428}}     & 0.441                                 & 0.434                                & 0.438                                                  & 0.438                 & 0.463                   & 0.429                                                \\ \midrule
                               & 96  & {\color[HTML]{FF0000} \textbf{0.280}}    & {\color[HTML]{FF0000} \textbf{0.327}}   & {\color[HTML]{0000FF} {\ul 0.281}}     & {\color[HTML]{0000FF} {\ul 0.338}}     & 0.295                                 & 0.343                                & 0.293                                                  & 0.345                 & 0.338                   & 0.376                                                \\
                               & 192 & {\color[HTML]{FF0000} \textbf{0.351}}    & {\color[HTML]{FF0000} \textbf{0.376}}   & {\color[HTML]{0000FF} {\ul 0.355}}     & {\color[HTML]{0000FF} {\ul 0.387}}     & 0.367                                 & 0.393                                & 0.367                                                  & 0.394                 & 0.402                   & 0.405                                                \\
                               & 336 & {\color[HTML]{0000FF} {\ul 0.400}}       & {\color[HTML]{0000FF} {\ul 0.411}}      & {\color[HTML]{FF0000} \textbf{0.365}}  & {\color[HTML]{FF0000} \textbf{0.401}}  & 0.417                                 & 0.427                                & 0.419                                                  & 0.431                 & 0.434                   & 0.440                                                \\
                               & 720 & {\color[HTML]{FF0000} \textbf{0.404}}    & {\color[HTML]{FF0000} \textbf{0.427}}   & {\color[HTML]{0000FF} {\ul 0.413}}     & {\color[HTML]{0000FF} {\ul 0.436}}     & 0.433                                 & 0.446                                & 0.427                                                  & 0.445                 & 0.460                   & 0.477                                                \\ \cmidrule(l){2-12} 
\multirow{-5}{*}{ETTh2}        & Avg & {\color[HTML]{0000FF} {\ul 0.358}}       & {\color[HTML]{FF0000} \textbf{0.385}}   & {\color[HTML]{FF0000} \textbf{0.353}}  & {\color[HTML]{0000FF} {\ul 0.391}}     & 0.378                                 & 0.402                                & 0.377                                                  & 0.403                 & 0.408                   & 0.424                                                \\ \midrule
                               & 96  & {\color[HTML]{FF0000} \textbf{0.129}}    & {\color[HTML]{FF0000} \textbf{0.221}}   & 0.141                                  & 0.235                                  & {\color[HTML]{0000FF} {\ul 0.134}}    & {\color[HTML]{0000FF} {\ul 0.229}}   & 0.139                                                  & 0.234                 & 0.212                   & 0.279                                                \\
                               & 192 & {\color[HTML]{FF0000} \textbf{0.147}}    & {\color[HTML]{FF0000} \textbf{0.238}}   & {\color[HTML]{0000FF} {\ul 0.151}}     & {\color[HTML]{0000FF} {\ul 0.247}}     & 0.154                                 & {\color[HTML]{0000FF} {\ul 0.247}}   & 0.156                                                  & 0.249                 & 0.209                   & 0.281                                                \\
                               & 336 & {\color[HTML]{FF0000} \textbf{0.158}}    & {\color[HTML]{FF0000} \textbf{0.250}}   & 0.173                                  & 0.267                                  & {\color[HTML]{0000FF} {\ul 0.169}}    & {\color[HTML]{0000FF} {\ul 0.264}}   & 0.172                                                  & 0.267                 & 0.222                   & 0.295                                                \\
                               & 720 & {\color[HTML]{FF0000} \textbf{0.178}}    & {\color[HTML]{FF0000} \textbf{0.271}}   & {\color[HTML]{0000FF} {\ul 0.201}}     & {\color[HTML]{0000FF} {\ul 0.293}}     & {\color[HTML]{0000FF} {\ul 0.201}}    & 0.294                                & 0.209                                                  & 0.299                 & 0.264                   & 0.327                                                \\ \cmidrule(l){2-12} 
\multirow{-5}{*}{ECL}          & Avg & {\color[HTML]{FF0000} \textbf{0.153}}    & {\color[HTML]{FF0000} \textbf{0.245}}   & 0.166                                  & 0.260                                  & {\color[HTML]{0000FF} {\ul 0.164}}    & {\color[HTML]{0000FF} {\ul 0.259}}   & 0.169                                                  & 0.262                 & 0.227                   & 0.295                                                \\ \midrule
                               & 96  & {\color[HTML]{FF0000} \textbf{0.396}}    & {\color[HTML]{FF0000} \textbf{0.233}}   & 0.410                                  & 0.274                                  & 0.413                                 & {\color[HTML]{0000FF} {\ul 0.261}}   & {\color[HTML]{0000FF} {\ul 0.408}}                     & 0.269                 & 0.713                   & 0.384                                                \\
                               & 192 & {\color[HTML]{0000FF} {\ul 0.427}}       & {\color[HTML]{FF0000} \textbf{0.243}}   & 0.430                                  & 0.280                                  & 0.432                                 & {\color[HTML]{0000FF} {\ul 0.271}}   & {\color[HTML]{FF0000} \textbf{0.424}}                  & 0.276                 & 0.652                   & 0.362                                                \\
                               & 336 & {\color[HTML]{0000FF} {\ul 0.449}}       & {\color[HTML]{FF0000} \textbf{0.255}}   & {\color[HTML]{0000FF} {\ul 0.449}}     & 0.290                                  & 0.450                                 & {\color[HTML]{0000FF} {\ul 0.277}}   & {\color[HTML]{FF0000} \textbf{0.444}}                  & 0.287                 & 0.660                   & 0.365                                                \\
                               & 720 & 0.487                                    & {\color[HTML]{FF0000} \textbf{0.276}}   & {\color[HTML]{0000FF} {\ul 0.486}}     & 0.309                                  & {\color[HTML]{0000FF} {\ul 0.486}}    & {\color[HTML]{0000FF} {\ul 0.295}}   & {\color[HTML]{FF0000} \textbf{0.484}}                  & 0.312                 & 0.702                   & 0.386                                                \\ \cmidrule(l){2-12} 
\multirow{-5}{*}{Traffic}      & Avg & {\color[HTML]{FF0000} \textbf{0.440}}    & {\color[HTML]{FF0000} \textbf{0.252}}   & {\color[HTML]{0000FF} {\ul 0.444}}     & 0.289                                  & 0.445                                 & {\color[HTML]{0000FF} {\ul 0.276}}   & {\color[HTML]{FF0000} \textbf{0.440}}                  & 0.286                 & 0.682                   & 0.374                                                \\ \midrule
                               & 96  & {\color[HTML]{FF0000} \textbf{0.150}}    & {\color[HTML]{FF0000} \textbf{0.186}}   & 0.162                                  & 0.207                                  & {\color[HTML]{0000FF} {\ul 0.157}}    & {\color[HTML]{0000FF} {\ul 0.200}}   & 0.166                                                  & 0.207                 & 0.170                   & 0.215                                                \\
                               & 192 & {\color[HTML]{FF0000} \textbf{0.198}}    & {\color[HTML]{FF0000} \textbf{0.234}}   & 0.208                                  & 0.248                                  & {\color[HTML]{0000FF} {\ul 0.206}}    & {\color[HTML]{0000FF} {\ul 0.245}}   & 0.216                                                  & 0.254                 & 0.216                   & 0.256                                                \\
                               & 336 & {\color[HTML]{FF0000} \textbf{0.252}}    & {\color[HTML]{FF0000} \textbf{0.275}}   & 0.263                                  & 0.290                                  & {\color[HTML]{0000FF} {\ul 0.262}}    & {\color[HTML]{0000FF} {\ul 0.287}}   & 0.273                                                  & 0.296                 & 0.272                   & 0.297                                                \\
                               & 720 & {\color[HTML]{FF0000} \textbf{0.332}}    & {\color[HTML]{FF0000} \textbf{0.328}}   & {\color[HTML]{0000FF} {\ul 0.340}}     & {\color[HTML]{0000FF} {\ul 0.341}}     & 0.344                                 & 0.342                                & 0.351                                                  & 0.346                 & 0.351                   & 0.348                                                \\ \cmidrule(l){2-12} 
\multirow{-5}{*}{Weather}      & Avg & {\color[HTML]{FF0000} \textbf{0.233}}    & {\color[HTML]{FF0000} \textbf{0.256}}   & 0.243                                  & 0.271                                  & {\color[HTML]{0000FF} {\ul 0.242}}    & {\color[HTML]{0000FF} {\ul 0.269}}   & 0.251                                                  & 0.276                 & 0.252                   & 0.279                                                \\ \midrule
                               & 96  & 0.175                                    & {\color[HTML]{FF0000} \textbf{0.196}}   & {\color[HTML]{FF0000} \textbf{0.163}}  & {\color[HTML]{0000FF} {\ul 0.232}}     & {\color[HTML]{0000FF} {\ul 0.173}}    & 0.233                                & 0.196                                                  & 0.237                 & 0.356                   & 0.363                                                \\
                               & 192 & 0.202                                    & {\color[HTML]{FF0000} \textbf{0.220}}   & {\color[HTML]{FF0000} \textbf{0.182}}  & {\color[HTML]{0000FF} {\ul 0.247}}     & {\color[HTML]{0000FF} {\ul 0.199}}    & 0.257                                & 0.231                                                  & 0.263                 & 0.407                   & 0.404                                                \\
                               & 336 & 0.224                                    & {\color[HTML]{FF0000} \textbf{0.240}}   & {\color[HTML]{FF0000} \textbf{0.193}}  & {\color[HTML]{0000FF} {\ul 0.257}}     & {\color[HTML]{0000FF} {\ul 0.211}}    & 0.263                                & 0.250                                                  & 0.281                 & 0.430                   & 0.398                                                \\
                               & 720 & 0.234                                    & {\color[HTML]{FF0000} \textbf{0.247}}   & {\color[HTML]{FF0000} \textbf{0.199}}  & {\color[HTML]{0000FF} {\ul 0.252}}     & {\color[HTML]{0000FF} {\ul 0.209}}    & 0.270                                & 0.253                                                  & 0.285                 & 0.425                   & 0.388                                                \\ \cmidrule(l){2-12} 
\multirow{-5}{*}{Solar-Energy} & Avg & 0.209                                    & {\color[HTML]{FF0000} \textbf{0.226}}   & {\color[HTML]{FF0000} \textbf{0.184}}  & {\color[HTML]{0000FF} {\ul 0.247}}     & {\color[HTML]{0000FF} {\ul 0.198}}    & 0.256                                & 0.232                                                  & 0.266                 & 0.404                   & 0.388                                                \\ \midrule
\multicolumn{2}{c|}{1\textsuperscript{st} Count}       & 29                                       & 39                                      & 8                                      & 1                                      & 0                                     & 0                                    & 4                                                      & 0                     & 0                       & 0                                                    \\ \bottomrule
\end{tabular}
}
\caption{Comparison between vLinear and more state-of-the-art efficient forecasters under the `Input-96-Predict-\{96, 192, 336, 720\}' setting. This table corresponds to Table~\ref{tab_more_baseline} in the main paper.}
\label{tab_more_baseline_appd}
\end{center}
\end{table*}


\begin{table*}[t]
\begin{center}
{\fontsize{8}{9}\selectfont
\setlength{\tabcolsep}{7pt}

}
\caption{Ablation studies on the representation learning of variate (\textit{Var.}) and temporal (\textit{Temp.}) dimensions. $\left \{ \mathrm{H1}, \mathrm{H2}, \mathrm{H3}, \mathrm{H4 } \right \}$ corresponds to $\{12,24,48,96\}$ for PEMS03, and $\{96,192,336,720\}$ for the other datasets.
This table corresponds to Table~\ref{tab_var_temp} in the main paper.}
\label{tab_var_tmp_appd}
\end{center}
\end{table*}


\begin{table*}[t]
\begin{center}
\renewcommand{\arraystretch}{1.3}
{\fontsize{8}{10}\selectfont
\setlength{\tabcolsep}{2.3pt}
\begin{tabular}{@{}cc|cc|cc|cc|cc|cc|cc|cc|cc|cc|cc@{}}
\toprule
&                        & \multicolumn{2}{c|}{\begin{tabular}[c]{@{}c@{}}vecTrans\\      (Ours)\end{tabular}} 
& \multicolumn{2}{c|}{\begin{tabular}[c]{@{}c@{}}Attn.\\      \shortcite{transformer} \end{tabular}} 
& \multicolumn{2}{c|}{\begin{tabular}[c]{@{}c@{}}Gated Attn.\\    \shortcite{gatedattn} \end{tabular}} 
& \multicolumn{2}{c|}{\begin{tabular}[c]{@{}c@{}}Enh. Attn.\\  \shortcite{freeformer} \end{tabular}} 
& \multicolumn{2}{c|}{\begin{tabular}[c]{@{}c@{}}Informer\\      \shortcite{informer} \end{tabular}} 
& \multicolumn{2}{c|}{\begin{tabular}[c]{@{}c@{}}Flowformer\\    \shortcite{flowformer} \end{tabular}} 
& \multicolumn{2}{c|}{\begin{tabular}[c]{@{}c@{}}Flashformer\\    \shortcite{flashattention} \end{tabular}} 
& \multicolumn{2}{c|}{\begin{tabular}[c]{@{}c@{}}Flatten\\      \shortcite{flattentrans} \end{tabular}} 
& \multicolumn{2}{c|}{\begin{tabular}[c]{@{}c@{}}Lin. Attn.\\    \shortcite{linear_softmax} \end{tabular}} 
& \multicolumn{2}{c}{\begin{tabular}[c]{@{}c@{}}Mamba\\      \shortcite{mamba} \end{tabular}} \\ \cmidrule(l){3-22} 
\multirow{-2}{*}{Dataset} & \multirow{-2}{*}{Hor.} & MSE                                      & MAE                                      & MSE                                    & MAE                                 & MSE                                       & MAE                                    & MSE                                     & MAE                                     & MSE                                    & MAE                                    & MSE                                     & MAE                                     & MSE                                      & MAE                                     & MSE                                    & MAE                                   & MSE                                     & MAE                                     & MSE                                                 & MAE                   \\ \midrule
                          & 96                     & {\color[HTML]{FF0000} \textbf{0.356}}    & {\color[HTML]{FF0000} \textbf{0.379}}    & 0.363                                  & 0.385                               & 0.364                                     & 0.386                                  & {\color[HTML]{FF0000} \textbf{0.356}}   & {\color[HTML]{0000FF} {\ul 0.380}}      & {\color[HTML]{0000FF} {\ul 0.357}}     & {\color[HTML]{0000FF} {\ul 0.380}}     & 0.359                                   & 0.381                                   & 0.362                                    & 0.385                                   & 0.367                                  & 0.386                                 & 0.366                                   & 0.386                                   & 0.363                                               & 0.388                 \\
                          & 192                    & {\color[HTML]{FF0000} \textbf{0.409}}    & {\color[HTML]{FF0000} \textbf{0.411}}    & 0.416                                  & 0.416                               & 0.415                                     & 0.416                                  & 0.411                                   & {\color[HTML]{0000FF} {\ul 0.412}}      & {\color[HTML]{0000FF} {\ul 0.410}}     & {\color[HTML]{0000FF} {\ul 0.412}}     & {\color[HTML]{0000FF} {\ul 0.410}}      & 0.413                                   & 0.416                                    & 0.416                                   & 0.418                                  & 0.417                                 & 0.415                                   & 0.416                                   & 0.416                                               & 0.417                 \\
                          & 336                    & {\color[HTML]{FF0000} \textbf{0.449}}    & {\color[HTML]{FF0000} \textbf{0.433}}    & 0.453                                  & 0.437                               & 0.455                                     & 0.438                                  & 0.454                                   & 0.437                                   & {\color[HTML]{0000FF} {\ul 0.450}}     & {\color[HTML]{0000FF} {\ul 0.434}}     & {\color[HTML]{0000FF} {\ul 0.450}}      & {\color[HTML]{0000FF} {\ul 0.434}}      & 0.453                                    & 0.437                                   & 0.458                                  & 0.440                                 & 0.460                                   & 0.440                                   & 0.456                                               & 0.439                 \\
                          & 720                    & {\color[HTML]{FF0000} \textbf{0.451}}    & {\color[HTML]{FF0000} \textbf{0.456}}    & 0.457                                  & 0.462                               & 0.459                                     & 0.463                                  & 0.461                                   & 0.464                                   & {\color[HTML]{0000FF} {\ul 0.456}}     & {\color[HTML]{0000FF} {\ul 0.460}}     & 0.463                                   & 0.466                                   & 0.461                                    & 0.464                                   & 0.469                                  & 0.469                                 & 0.460                                   & 0.463                                   & 0.467                                               & 0.469                 \\ \cmidrule(l){2-22} 
\multirow{-5}{*}{ETTh1}   & Avg                    & {\color[HTML]{FF0000} \textbf{0.416}}    & {\color[HTML]{FF0000} \textbf{0.420}}    & 0.422                                  & 0.425                               & 0.423                                     & 0.426                                  & 0.420                                   & 0.423                                   & {\color[HTML]{0000FF} {\ul 0.418}}     & {\color[HTML]{0000FF} {\ul 0.421}}     & 0.420                                   & 0.423                                   & 0.423                                    & 0.425                                   & 0.428                                  & 0.428                                 & 0.425                                   & 0.426                                   & 0.425                                               & 0.428                 \\ \midrule
                          & 96                     & {\color[HTML]{0000FF} {\ul 0.168}}       & {\color[HTML]{FF0000} \textbf{0.245}}    & 0.172                                  & 0.252                               & 0.169                                     & 0.247                                  & {\color[HTML]{0000FF} {\ul 0.168}}      & {\color[HTML]{0000FF} {\ul 0.246}}      & {\color[HTML]{0000FF} {\ul 0.168}}     & {\color[HTML]{0000FF} {\ul 0.246}}     & {\color[HTML]{FF0000} \textbf{0.167}}   & {\color[HTML]{0000FF} {\ul 0.246}}      & 0.173                                    & 0.251                                   & 0.169                                  & 0.247                                 & 0.167                                   & {\color[HTML]{0000FF} {\ul 0.246}}      & 0.171                                               & 0.249                 \\
                          & 192                    & {\color[HTML]{FF0000} \textbf{0.231}}    & {\color[HTML]{FF0000} \textbf{0.287}}    & 0.233                                  & 0.289                               & 0.233                                     & 0.289                                  & 0.233                                   & {\color[HTML]{0000FF} {\ul 0.288}}      & {\color[HTML]{FF0000} \textbf{0.231}}  & {\color[HTML]{0000FF} {\ul 0.288}}     & {\color[HTML]{0000FF} {\ul 0.232}}      & {\color[HTML]{0000FF} {\ul 0.288}}      & 0.233                                    & 0.289                                   & 0.236                                  & 0.290                                 & 0.234                                   & 0.290                                   & 0.234                                               & 0.291                 \\
                          & 336                    & {\color[HTML]{FF0000} \textbf{0.289}}    & {\color[HTML]{FF0000} \textbf{0.326}}    & 0.303                                  & 0.338                               & {\color[HTML]{0000FF} {\ul 0.292}}        & {\color[HTML]{0000FF} {\ul 0.327}}     & 0.294                                   & 0.328                                   & 0.295                                  & 0.329                                  & 0.295                                   & 0.329                                   & 0.302                                    & 0.336                                   & 0.297                                  & 0.331                                 & 0.295                                   & 0.329                                   & {\color[HTML]{0000FF} {\ul 0.292}}                  & 0.329                 \\
                          & 720                    & {\color[HTML]{FF0000} \textbf{0.386}}    & {\color[HTML]{FF0000} \textbf{0.382}}    & 0.393                                  & 0.387                               & 0.396                                     & 0.388                                  & 0.392                                   & {\color[HTML]{0000FF} {\ul 0.386}}      & 0.392                                  & {\color[HTML]{0000FF} {\ul 0.386}}     & 0.393                                   & 0.387                                   & 0.394                                    & 0.388                                   & 0.394                                  & 0.388                                 & 0.394                                   & 0.388                                   & {\color[HTML]{0000FF} {\ul 0.391}}                  & 0.387                 \\ \cmidrule(l){2-22} 
\multirow{-5}{*}{ETTm2}   & Avg                    & {\color[HTML]{FF0000} \textbf{0.268}}    & {\color[HTML]{FF0000} \textbf{0.310}}    & 0.275                                  & 0.316                               & 0.273                                     & 0.313                                  & 0.272                                   & {\color[HTML]{0000FF} {\ul 0.312}}      & {\color[HTML]{0000FF} {\ul 0.271}}     & {\color[HTML]{0000FF} {\ul 0.312}}     & 0.272                                   & {\color[HTML]{0000FF} {\ul 0.312}}      & 0.275                                    & 0.316                                   & 0.274                                  & 0.314                                 & 0.272                                   & 0.313                                   & 0.272                                               & 0.314                 \\ \midrule
                          & 96                     & {\color[HTML]{FF0000} \textbf{0.129}}    & {\color[HTML]{0000FF} {\ul 0.221}}       & 0.131                                  & 0.222                               & 0.133                                     & 0.223                                  & {\color[HTML]{FF0000} \textbf{0.129}}   & {\color[HTML]{FF0000} \textbf{0.220}}   & {\color[HTML]{FF0000} \textbf{0.129}}  & {\color[HTML]{0000FF} {\ul 0.221}}     & 0.131                                   & 0.222                                   & 0.131                                    & 0.222                                   & 0.131                                  & 0.222                                 & {\color[HTML]{0000FF} {\ul 0.130}}      & {\color[HTML]{0000FF} {\ul 0.221}}      & 0.140                                               & 0.231                 \\
                          & 192                    & {\color[HTML]{FF0000} \textbf{0.147}}    & {\color[HTML]{FF0000} \textbf{0.238}}    & 0.153                                  & 0.244                               & 0.153                                     & {\color[HTML]{0000FF} {\ul 0.243}}     & {\color[HTML]{FF0000} \textbf{0.147}}   & {\color[HTML]{FF0000} \textbf{0.238}}   & {\color[HTML]{0000FF} {\ul 0.148}}     & {\color[HTML]{FF0000} \textbf{0.238}}  & 0.154                                   & 0.245                                   & 0.154                                    & {\color[HTML]{0000FF} {\ul 0.243}}      & 0.154                                  & 0.244                                 & 0.153                                   & {\color[HTML]{0000FF} {\ul 0.243}}      & 0.159                                               & 0.249                 \\
                          & 336                    & {\color[HTML]{0000FF} {\ul 0.158}}       & {\color[HTML]{FF0000} \textbf{0.250}}    & 0.167                                  & 0.258                               & 0.161                                     & {\color[HTML]{0000FF} {\ul 0.252}}     & {\color[HTML]{FF0000} \textbf{0.157}}   & {\color[HTML]{FF0000} \textbf{0.250}}   & 0.159                                  & {\color[HTML]{0000FF} {\ul 0.252}}     & 0.166                                   & 0.259                                   & 0.167                                    & 0.259                                   & 0.165                                  & 0.259                                 & 0.162                                   & 0.256                                   & 0.175                                               & 0.265                 \\
                          & 720                    & {\color[HTML]{FF0000} \textbf{0.178}}    & {\color[HTML]{FF0000} \textbf{0.271}}    & 0.185                                  & 0.280                               & {\color[HTML]{FF0000} \textbf{0.178}}     & {\color[HTML]{0000FF} {\ul 0.274}}     & 0.180                                   & {\color[HTML]{0000FF} {\ul 0.274}}      & {\color[HTML]{0000FF} {\ul 0.179}}     & {\color[HTML]{0000FF} {\ul 0.274}}     & 0.180                                   & 0.277                                   & 0.183                                    & 0.279                                   & 0.183                                  & 0.279                                 & 0.184                                   & 0.281                                   & 0.204                                               & 0.292                 \\ \cmidrule(l){2-22} 
\multirow{-5}{*}{ECL}     & Avg                    & {\color[HTML]{FF0000} \textbf{0.153}}    & {\color[HTML]{FF0000} \textbf{0.245}}    & 0.159                                  & 0.251                               & 0.156                                     & 0.248                                  & {\color[HTML]{FF0000} \textbf{0.153}}   & {\color[HTML]{FF0000} \textbf{0.245}}   & {\color[HTML]{0000FF} {\ul 0.154}}     & {\color[HTML]{0000FF} {\ul 0.246}}     & 0.158                                   & 0.250                                   & 0.159                                    & 0.251                                   & 0.158                                  & 0.251                                 & 0.157                                   & 0.250                                   & 0.169                                               & 0.259                 \\ \midrule
                          & 96                     & {\color[HTML]{FF0000} \textbf{0.175}}    & {\color[HTML]{FF0000} \textbf{0.196}}    & 0.189                                  & 0.208                               & 0.205                                     & 0.209                                  & 0.181                                   & 0.201                                   & {\color[HTML]{0000FF} {\ul 0.176}}     & {\color[HTML]{0000FF} {\ul 0.198}}     & 0.187                                   & 0.201                                   & 0.188                                    & 0.203                                   & 0.191                                  & 0.201                                 & 0.195                                   & 0.204                                   & 0.189                                               & 0.214                 \\
                          & 192                    & {\color[HTML]{FF0000} \textbf{0.202}}    & {\color[HTML]{FF0000} \textbf{0.220}}    & 0.213                                  & 0.229                               & 0.220                                     & 0.228                                  & {\color[HTML]{0000FF} {\ul 0.204}}      & {\color[HTML]{0000FF} {\ul 0.222}}      & {\color[HTML]{0000FF} {\ul 0.204}}     & {\color[HTML]{0000FF} {\ul 0.222}}     & 0.207                                   & 0.224                                   & 0.213                                    & 0.229                                   & 0.219                                  & 0.228                                 & 0.217                                   & 0.227                                   & 0.217                                               & 0.235                 \\
                          & 336                    & {\color[HTML]{FF0000} \textbf{0.224}}    & {\color[HTML]{FF0000} \textbf{0.240}}    & 0.241                                  & 0.247                               & 0.235                                     & 0.245                                  & {\color[HTML]{0000FF} {\ul 0.226}}      & {\color[HTML]{FF0000} \textbf{0.240}}   & 0.227                                  & {\color[HTML]{0000FF} {\ul 0.242}}     & 0.235                                   & {\color[HTML]{0000FF} {\ul 0.242}}      & 0.239                                    & 0.247                                   & 0.236                                  & 0.244                                 & 0.237                                   & 0.245                                   & 0.248                                               & 0.256                 \\
                          & 720                    & {\color[HTML]{FF0000} \textbf{0.234}}    & {\color[HTML]{FF0000} \textbf{0.247}}    & 0.238                                  & 0.253                               & 0.243                                     & 0.252                                  & {\color[HTML]{0000FF} {\ul 0.235}}      & {\color[HTML]{FF0000} \textbf{0.247}}   & 0.238                                  & {\color[HTML]{0000FF} {\ul 0.250}}     & 0.244                                   & 0.252                                   & 0.241                                    & 0.253                                   & 0.245                                  & 0.252                                 & 0.243                                   & 0.252                                   & 0.242                                               & 0.254                 \\ \cmidrule(l){2-22} 
\multirow{-5}{*}{Solar}   & Avg                    & {\color[HTML]{FF0000} \textbf{0.209}}    & {\color[HTML]{FF0000} \textbf{0.226}}    & 0.220                                  & 0.234                               & 0.226                                     & 0.233                                  & {\color[HTML]{0000FF} {\ul 0.211}}      & {\color[HTML]{0000FF} {\ul 0.227}}      & {\color[HTML]{0000FF} {\ul 0.211}}     & 0.228                                  & 0.218                                   & 0.230                                   & 0.220                                    & 0.233                                   & 0.223                                  & 0.231                                 & 0.223                                   & 0.232                                   & 0.224                                               & 0.240                 \\ \midrule
                          & 96                     & {\color[HTML]{FF0000} \textbf{0.150}}    & {\color[HTML]{0000FF} {\ul 0.187}}       & 0.152                                  & {\color[HTML]{0000FF} {\ul 0.187}}  & 0.154                                     & 0.190                                  & {\color[HTML]{FF0000} \textbf{0.150}}   & {\color[HTML]{FF0000} \textbf{0.186}}   & {\color[HTML]{0000FF} {\ul 0.151}}     & {\color[HTML]{0000FF} {\ul 0.187}}     & {\color[HTML]{0000FF} {\ul 0.151}}      & {\color[HTML]{FF0000} \textbf{0.186}}   & 0.152                                    & 0.188                                   & 0.156                                  & 0.192                                 & 0.154                                   & 0.191                                   & 0.160                                               & 0.197                 \\
                          & 192                    & {\color[HTML]{FF0000} \textbf{0.198}}    & {\color[HTML]{FF0000} \textbf{0.234}}    & 0.204                                  & 0.239                               & 0.202                                     & 0.237                                  & 0.201                                   & {\color[HTML]{0000FF} {\ul 0.236}}      & {\color[HTML]{0000FF} {\ul 0.199}}     & {\color[HTML]{FF0000} \textbf{0.234}}  & 0.200                                   & {\color[HTML]{FF0000} \textbf{0.234}}   & 0.203                                    & 0.237                                   & 0.206                                  & 0.239                                 & 0.203                                   & 0.238                                   & 0.262                                               & 0.297                 \\
                          & 336                    & {\color[HTML]{FF0000} \textbf{0.253}}    & {\color[HTML]{FF0000} \textbf{0.275}}    & 0.257                                  & 0.279                               & 0.256                                     & 0.278                                  & {\color[HTML]{0000FF} {\ul 0.255}}      & {\color[HTML]{0000FF} {\ul 0.277}}      & 0.257                                  & {\color[HTML]{0000FF} {\ul 0.277}}     & {\color[HTML]{FF0000} \textbf{0.253}}   & {\color[HTML]{FF0000} \textbf{0.275}}   & 0.257                                    & 0.279                                   & 0.260                                  & 0.281                                 & 0.262                                   & 0.282                                   & 0.261                                               & 0.281                 \\
                          & 720                    & {\color[HTML]{FF0000} \textbf{0.333}}    & {\color[HTML]{FF0000} \textbf{0.329}}    & 0.336                                  & 0.332                               & {\color[HTML]{0000FF} {\ul 0.335}}        & 0.331                                  & {\color[HTML]{0000FF} {\ul 0.335}}      & {\color[HTML]{0000FF} {\ul 0.330}}      & 0.336                                  & 0.331                                  & {\color[HTML]{FF0000} \textbf{0.333}}   & {\color[HTML]{FF0000} \textbf{0.329}}   & {\color[HTML]{0000FF} {\ul 0.335}}       & 0.331                                   & 0.341                                  & 0.335                                 & 0.339                                   & 0.332                                   & 0.341                                               & 0.334                 \\ \cmidrule(l){2-22} 
\multirow{-5}{*}{Weather} & Avg                    & {\color[HTML]{FF0000} \textbf{0.233}}    & {\color[HTML]{FF0000} \textbf{0.256}}    & 0.237                                  & 0.259                               & 0.237                                     & 0.259                                  & 0.235                                   & {\color[HTML]{0000FF} {\ul 0.257}}      & 0.236                                  & {\color[HTML]{0000FF} {\ul 0.257}}     & {\color[HTML]{0000FF} {\ul 0.234}}      & {\color[HTML]{FF0000} \textbf{0.256}}   & 0.237                                    & 0.259                                   & 0.241                                  & 0.262                                 & 0.239                                   & 0.261                                   & 0.256                                               & 0.277                 \\ \midrule
                          & 12                     & {\color[HTML]{FF0000} \textbf{0.059}}    & {\color[HTML]{FF0000} \textbf{0.158}}    & {\color[HTML]{FF0000} \textbf{0.059}}  & {\color[HTML]{0000FF} {\ul 0.159}}  & {\color[HTML]{0000FF} {\ul 0.060}}        & 0.160                                  & {\color[HTML]{FF0000} \textbf{0.059}}   & {\color[HTML]{FF0000} \textbf{0.158}}   & {\color[HTML]{FF0000} \textbf{0.059}}  & {\color[HTML]{0000FF} {\ul 0.159}}     & {\color[HTML]{0000FF} {\ul 0.060}}      & {\color[HTML]{0000FF} {\ul 0.159}}      & {\color[HTML]{FF0000} \textbf{0.059}}    & {\color[HTML]{FF0000} \textbf{0.158}}   & 0.061                                  & 0.160                                 & {\color[HTML]{0000FF} {\ul 0.060}}      & {\color[HTML]{0000FF} {\ul 0.159}}      & 0.062                                               & 0.161                 \\
                          & 24                     & {\color[HTML]{FF0000} \textbf{0.075}}    & {\color[HTML]{FF0000} \textbf{0.177}}    & 0.078                                  & 0.182                               & 0.078                                     & 0.181                                  & {\color[HTML]{0000FF} {\ul 0.076}}      & {\color[HTML]{0000FF} {\ul 0.178}}      & {\color[HTML]{0000FF} {\ul 0.076}}     & {\color[HTML]{0000FF} {\ul 0.178}}     & {\color[HTML]{0000FF} {\ul 0.076}}      & 0.179                                   & 0.077                                    & 0.181                                   & 0.079                                  & 0.182                                 & 0.079                                   & 0.183                                   & 0.078                                               & 0.182                 \\
                          & 48                     & {\color[HTML]{FF0000} \textbf{0.102}}    & {\color[HTML]{FF0000} \textbf{0.208}}    & 0.107                                  & 0.214                               & 0.109                                     & 0.215                                  & {\color[HTML]{0000FF} {\ul 0.104}}      & {\color[HTML]{0000FF} {\ul 0.209}}      & 0.106                                  & 0.211                                  & 0.107                                   & 0.213                                   & 0.108                                    & 0.215                                   & 0.111                                  & 0.217                                 & 0.110                                   & 0.215                                   & 0.113                                               & 0.219                 \\
                          & 96                     & {\color[HTML]{FF0000} \textbf{0.138}}    & {\color[HTML]{FF0000} \textbf{0.246}}    & 0.154                                  & 0.262                               & 0.148                                     & 0.256                                  & {\color[HTML]{0000FF} {\ul 0.141}}      & {\color[HTML]{0000FF} {\ul 0.249}}      & 0.143                                  & 0.251                                  & 0.152                                   & 0.255                                   & 0.156                                    & 0.262                                   & 0.159                                  & 0.261                                 & 0.155                                   & 0.258                                   & 0.170                                               & 0.270                 \\ \cmidrule(l){2-22} 
\multirow{-5}{*}{PEMS03}  & Avg                    & {\color[HTML]{FF0000} \textbf{0.093}}    & {\color[HTML]{FF0000} \textbf{0.197}}    & 0.100                                  & 0.204                               & 0.099                                     & 0.203                                  & {\color[HTML]{0000FF} {\ul 0.095}}      & {\color[HTML]{0000FF} {\ul 0.198}}      & 0.096                                  & 0.200                                  & 0.099                                   & 0.201                                   & 0.100                                    & 0.204                                   & 0.102                                  & 0.205                                 & 0.101                                   & 0.204                                   & 0.106                                               & 0.208                 \\ \bottomrule
\end{tabular}

}
\caption{Full results of performance comparison between vecTrans and various attention mechanisms and Mamba. This table corresponds to Table~\ref{tab_vec_attn} in the main paper.}
\label{tab_vec_attn_appd}
\end{center}
\end{table*}


\begin{table*}[t]
\begin{center}
\renewcommand{\arraystretch}{1.3}
{\fontsize{8}{10}\selectfont
\setlength{\tabcolsep}{5pt}
\begin{tabular}{@{}cc|cc|cc|cc|cc|cc|cc|cc@{}}
\toprule
                          &                        & \multicolumn{2}{c|}{\begin{tabular}[c]{@{}c@{}}WFMLoss\\      (Ours)\end{tabular}} & \multicolumn{2}{c|}{\begin{tabular}[c]{@{}c@{}}TransDF\\      \shortcite{transdf} \end{tabular}} & \multicolumn{2}{c|}{\begin{tabular}[c]{@{}c@{}}DBLoss\\      \shortcite{dbloss} \end{tabular}} & \multicolumn{2}{c|}{\begin{tabular}[c]{@{}c@{}}FreDF\\      \shortcite{fredf} \end{tabular}} & \multicolumn{2}{c|}{\begin{tabular}[c]{@{}c@{}}W\_MAE\\      \shortcite{card} \end{tabular}} & \multicolumn{2}{c|}{MAE}                                                   & \multicolumn{2}{c}{MSE} \\ \cmidrule(l){3-16} 
\multirow{-2}{*}{Dataset} & \multirow{-2}{*}{Hor.} & MSE                                      & MAE                                     & MSE                                                  & MAE                     & MSE                                   & MAE                                   & MSE                                                 & MAE                    & MSE                                  & MAE                                    & MSE                                & MAE                                   & MSE        & MAE        \\ \midrule
                          & 96                     & {\color[HTML]{FF0000} \textbf{0.356}}    & {\color[HTML]{FF0000} \textbf{0.379}}   & 0.372                                                & 0.395                   & 0.377                                 & 0.395                                 & {\color[HTML]{0000FF} {\ul 0.369}}                  & 0.394                  & 0.382                                & {\color[HTML]{0000FF} {\ul 0.391}}     & 0.378                              & {\color[HTML]{0000FF} {\ul 0.391}}    & 0.386      & 0.404      \\
                          & 192                    & {\color[HTML]{FF0000} \textbf{0.409}}    & {\color[HTML]{FF0000} \textbf{0.411}}   & 0.436                                                & 0.436                   & 0.433                                 & 0.430                                 & {\color[HTML]{0000FF} {\ul 0.425}}                  & 0.431                  & 0.430                                & {\color[HTML]{0000FF} {\ul 0.422}}     & 0.429                              & 0.424                                 & 0.431      & 0.436      \\
                          & 336                    & {\color[HTML]{FF0000} \textbf{0.449}}    & {\color[HTML]{FF0000} \textbf{0.433}}   & {\color[HTML]{0000FF} {\ul 0.467}}                   & 0.452                   & 0.468                                 & 0.450                                 & 0.478                                               & 0.462                  & 0.479                                & {\color[HTML]{0000FF} {\ul 0.443}}     & 0.487                              & 0.452                                 & 0.485      & 0.463      \\
                          & 720                    & {\color[HTML]{FF0000} \textbf{0.451}}    & {\color[HTML]{FF0000} \textbf{0.456}}   & 0.726                                                & 0.605                   & 0.508                                 & 0.478                                 & 0.720                                               & 0.601                  & {\color[HTML]{0000FF} {\ul 0.492}}   & {\color[HTML]{0000FF} {\ul 0.467}}     & 0.496                              & 0.470                                 & 0.499      & 0.482      \\ \cmidrule(l){2-16} 
\multirow{-5}{*}{ETTh1}   & Avg                    & {\color[HTML]{FF0000} \textbf{0.416}}    & {\color[HTML]{FF0000} \textbf{0.420}}   & 0.500                                                & 0.472                   & {\color[HTML]{0000FF} {\ul 0.446}}    & 0.438                                 & 0.498                                               & 0.472                  & {\color[HTML]{0000FF} {\ul 0.446}}   & {\color[HTML]{0000FF} {\ul 0.431}}     & 0.447                              & 0.434                                 & 0.450      & 0.446      \\ \midrule
                          & 96                     & {\color[HTML]{FF0000} \textbf{0.168}}    & {\color[HTML]{FF0000} \textbf{0.245}}   & 0.176                                                & 0.254                   & 0.178                                 & 0.252                                 & {\color[HTML]{0000FF} {\ul 0.173}}                  & 0.251                  & 0.176                                & 0.251                                  & 0.174                              & {\color[HTML]{0000FF} {\ul 0.249}}    & 0.177      & 0.257      \\
                          & 192                    & {\color[HTML]{FF0000} \textbf{0.231}}    & {\color[HTML]{FF0000} \textbf{0.287}}   & 0.241                                                & 0.297                   & 0.239                                 & 0.296                                 & 0.238                                               & 0.295                  & {\color[HTML]{0000FF} {\ul 0.237}}   & {\color[HTML]{0000FF} {\ul 0.293}}     & 0.243                              & 0.298                                 & 0.244      & 0.302      \\
                          & 336                    & {\color[HTML]{FF0000} \textbf{0.289}}    & {\color[HTML]{FF0000} \textbf{0.326}}   & 0.339                                                & 0.370                   & 0.338                                 & 0.369                                 & 0.338                                               & 0.369                  & 0.300                                & 0.334                                  & {\color[HTML]{0000FF} {\ul 0.296}} & {\color[HTML]{0000FF} {\ul 0.332}}    & 0.338      & 0.369      \\
                          & 720                    & {\color[HTML]{FF0000} \textbf{0.386}}    & {\color[HTML]{FF0000} \textbf{0.382}}   & 0.410                                                & 0.405                   & 0.433                                 & 0.419                                 & 0.393                                               & 0.390                  & 0.398                                & 0.392                                  & {\color[HTML]{0000FF} {\ul 0.392}} & {\color[HTML]{0000FF} {\ul 0.389}}    & 0.402      & 0.398      \\ \cmidrule(l){2-16} 
\multirow{-5}{*}{ETTm2}   & Avg                    & {\color[HTML]{FF0000} \textbf{0.268}}    & {\color[HTML]{FF0000} \textbf{0.310}}   & 0.291                                                & 0.331                   & 0.297                                 & 0.334                                 & 0.285                                               & 0.326                  & 0.278                                & {\color[HTML]{0000FF} {\ul 0.317}}     & {\color[HTML]{0000FF} {\ul 0.276}} & {\color[HTML]{0000FF} {\ul 0.317}}    & 0.290      & 0.331      \\ \midrule
                          & 96                     & {\color[HTML]{FF0000} \textbf{0.129}}    & {\color[HTML]{FF0000} \textbf{0.221}}   & 0.133                                                & 0.228                   & 0.131                                 & 0.225                                 & 0.132                                               & 0.227                  & {\color[HTML]{0000FF} {\ul 0.130}}   & {\color[HTML]{0000FF} {\ul 0.221}}     & 0.131                              & 0.222                                 & 0.135      & 0.231      \\
                          & 192                    & {\color[HTML]{FF0000} \textbf{0.147}}    & {\color[HTML]{FF0000} \textbf{0.238}}   & 0.152                                                & 0.248                   & 0.163                                 & 0.254                                 & {\color[HTML]{0000FF} {\ul 0.151}}                  & 0.246                  & 0.152                                & {\color[HTML]{0000FF} {\ul 0.240}}     & 0.152                              & 0.241                                 & 0.154      & 0.249      \\
                          & 336                    & {\color[HTML]{FF0000} \textbf{0.158}}    & {\color[HTML]{FF0000} \textbf{0.250}}   & 0.178                                                & 0.275                   & 0.191                                 & 0.279                                 & 0.199                                               & 0.285                  & {\color[HTML]{0000FF} {\ul 0.165}}   & {\color[HTML]{0000FF} {\ul 0.255}}     & 0.165                              & 0.256                                 & 0.170      & 0.268      \\
                          & 720                    & {\color[HTML]{FF0000} \textbf{0.178}}    & {\color[HTML]{FF0000} \textbf{0.271}}   & 0.192                                                & 0.290                   & {\color[HTML]{0000FF} {\ul 0.188}}    & 0.283                                 & 0.205                                               & 0.292                  & 0.194                                & {\color[HTML]{0000FF} {\ul 0.280}}     & 0.191                              & {\color[HTML]{0000FF} {\ul 0.280}}    & 0.192      & 0.292      \\ \cmidrule(l){2-16} 
\multirow{-5}{*}{ECL}     & Avg                    & {\color[HTML]{FF0000} \textbf{0.153}}    & {\color[HTML]{FF0000} \textbf{0.245}}   & 0.164                                                & 0.260                   & 0.168                                 & 0.260                                 & 0.172                                               & 0.262                  & {\color[HTML]{0000FF} {\ul 0.160}}   & {\color[HTML]{0000FF} {\ul 0.249}}     & {\color[HTML]{0000FF} {\ul 0.160}} & {\color[HTML]{0000FF} {\ul 0.249}}    & 0.162      & 0.260      \\ \midrule
                          & 96                     & {\color[HTML]{FF0000} \textbf{0.175}}    & 0.196                                   & 0.189                                                & 0.215                   & 0.181                                 & 0.198                                 & 0.186                                               & 0.208                  & {\color[HTML]{0000FF} {\ul 0.180}}   & {\color[HTML]{FF0000} \textbf{0.190}}  & 0.183                              & {\color[HTML]{0000FF} {\ul 0.193}}    & 0.192      & 0.222      \\
                          & 192                    & {\color[HTML]{FF0000} \textbf{0.202}}    & 0.220                                   & 0.216                                                & 0.247                   & 0.209                                 & 0.221                                 & 0.212                                               & 0.237                  & {\color[HTML]{0000FF} {\ul 0.207}}   & {\color[HTML]{FF0000} \textbf{0.211}}  & 0.210                              & {\color[HTML]{0000FF} {\ul 0.214}}    & 0.217      & 0.250      \\
                          & 336                    & {\color[HTML]{FF0000} \textbf{0.224}}    & 0.240                                   & 0.239                                                & 0.263                   & 0.229                                 & 0.240                                 & 0.230                                               & 0.257                  & {\color[HTML]{0000FF} {\ul 0.229}}   & {\color[HTML]{FF0000} \textbf{0.229}}  & 0.230                              & {\color[HTML]{0000FF} {\ul 0.232}}    & 0.232      & 0.268      \\
                          & 720                    & {\color[HTML]{FF0000} \textbf{0.234}}    & 0.247                                   & {\color[HTML]{0000FF} {\ul 0.238}}                   & 0.268                   & 0.863                                 & 0.595                                 & 0.842                                               & 0.661                  & 0.241                                & {\color[HTML]{0000FF} {\ul 0.237}}     & 0.238                              & {\color[HTML]{FF0000} \textbf{0.236}} & 0.239      & 0.275      \\ \cmidrule(l){2-16} 
\multirow{-5}{*}{Solar}   & Avg                    & {\color[HTML]{FF0000} \textbf{0.209}}    & 0.226                                   & 0.220                                                & 0.248                   & 0.370                                 & 0.313                                 & 0.367                                               & 0.341                  & {\color[HTML]{0000FF} {\ul 0.214}}   & {\color[HTML]{FF0000} \textbf{0.217}}  & 0.215                              & {\color[HTML]{0000FF} {\ul 0.219}}    & 0.220      & 0.254      \\ \midrule
                          & 96                     & {\color[HTML]{FF0000} \textbf{0.150}}    & {\color[HTML]{FF0000} \textbf{0.187}}   & 0.157                                                & 0.201                   & {\color[HTML]{0000FF} {\ul 0.151}}    & 0.189                                 & 0.152                                               & 0.195                  & 0.154                                & 0.189                                  & 0.152                              & {\color[HTML]{0000FF} {\ul 0.189}}    & 0.158      & 0.203      \\
                          & 192                    & {\color[HTML]{FF0000} \textbf{0.198}}    & {\color[HTML]{FF0000} \textbf{0.234}}   & 0.209                                                & 0.249                   & 0.214                                 & 0.249                                 & 0.208                                               & 0.248                  & {\color[HTML]{0000FF} {\ul 0.201}}   & {\color[HTML]{0000FF} {\ul 0.236}}     & {\color[HTML]{0000FF} {\ul 0.201}} & 0.237                                 & 0.207      & 0.248      \\
                          & 336                    & {\color[HTML]{FF0000} \textbf{0.253}}    & {\color[HTML]{FF0000} \textbf{0.275}}   & 0.264                                                & 0.290                   & 0.272                                 & 0.290                                 & 0.262                                               & 0.288                  & 0.265                                & {\color[HTML]{0000FF} {\ul 0.283}}     & {\color[HTML]{0000FF} {\ul 0.261}} & {\color[HTML]{0000FF} {\ul 0.283}}    & 0.263      & 0.288      \\
                          & 720                    & {\color[HTML]{FF0000} \textbf{0.333}}    & {\color[HTML]{FF0000} \textbf{0.329}}   & 0.347                                                & 0.345                   & 0.387                                 & 0.360                                 & 0.343                                               & 0.343                  & {\color[HTML]{0000FF} {\ul 0.336}}   & {\color[HTML]{0000FF} {\ul 0.330}}     & 0.340                              & 0.332                                 & 0.346      & 0.343      \\ \cmidrule(l){2-16} 
\multirow{-5}{*}{Weather} & Avg                    & {\color[HTML]{FF0000} \textbf{0.233}}    & {\color[HTML]{FF0000} \textbf{0.256}}   & 0.244                                                & 0.271                   & 0.256                                 & 0.272                                 & 0.241                                               & 0.268                  & 0.239                                & {\color[HTML]{0000FF} {\ul 0.259}}     & {\color[HTML]{0000FF} {\ul 0.238}} & 0.260                                 & 0.243      & 0.270      \\ \midrule
                          & 12                     & {\color[HTML]{FF0000} \textbf{0.059}}    & {\color[HTML]{FF0000} \textbf{0.158}}   & 0.061                                                & 0.160                   & {\color[HTML]{0000FF} {\ul 0.059}}    & 0.159                                 & 0.060                                               & 0.160                  & {\color[HTML]{0000FF} {\ul 0.059}}   & {\color[HTML]{0000FF} {\ul 0.158}}     & 0.060                              & 0.159                                 & 0.061      & 0.162      \\
                          & 24                     & {\color[HTML]{FF0000} \textbf{0.075}}    & {\color[HTML]{FF0000} \textbf{0.177}}   & 0.078                                                & 0.181                   & {\color[HTML]{0000FF} {\ul 0.077}}    & {\color[HTML]{0000FF} {\ul 0.178}}    & {\color[HTML]{0000FF} {\ul 0.077}}                  & 0.179                  & {\color[HTML]{0000FF} {\ul 0.077}}   & 0.179                                  & {\color[HTML]{0000FF} {\ul 0.077}} & 0.179                                 & 0.080      & 0.183      \\
                          & 48                     & {\color[HTML]{FF0000} \textbf{0.102}}    & {\color[HTML]{FF0000} \textbf{0.208}}   & 0.107                                                & 0.213                   & 0.106                                 & 0.212                                 & 0.105                                               & 0.211                  & {\color[HTML]{0000FF} {\ul 0.104}}   & {\color[HTML]{0000FF} {\ul 0.209}}     & 0.105                              & 0.210                                 & 0.108      & 0.215      \\
                          & 96                     & {\color[HTML]{FF0000} \textbf{0.138}}    & {\color[HTML]{FF0000} \textbf{0.246}}   & 0.145                                                & 0.253                   & 0.144                                 & 0.250                                 & 0.141                                               & 0.251                  & {\color[HTML]{0000FF} {\ul 0.139}}   & {\color[HTML]{0000FF} {\ul 0.246}}     & {\color[HTML]{0000FF} {\ul 0.139}} & 0.248                                 & 0.144      & 0.255      \\ \cmidrule(l){2-16} 
\multirow{-5}{*}{PEMS03}  & Avg                    & {\color[HTML]{FF0000} \textbf{0.093}}    & {\color[HTML]{FF0000} \textbf{0.197}}   & 0.098                                                & 0.202                   & 0.096                                 & 0.200                                 & 0.096                                               & 0.200                  & {\color[HTML]{0000FF} {\ul 0.095}}   & {\color[HTML]{0000FF} {\ul 0.198}}     & {\color[HTML]{0000FF} {\ul 0.095}} & 0.199                                 & 0.098      & 0.204      \\ \bottomrule
\end{tabular}
}
\caption{Comparison between WFMLoss and other state-of-the-art losses. This table corresponds to Table~\ref{tab_wfmloss_compare} in the main paper.}
\label{tab_wfmloss_comp_appd}
\end{center}
\end{table*}


\begin{table*}[t]
\begin{center}
\renewcommand{\arraystretch}{1.3}
{\fontsize{8}{10}\selectfont
\setlength{\tabcolsep}{4.5pt}
\begin{tabular}{@{}cc|cccc|cccc|cccc|cccc@{}}
\toprule
\multicolumn{2}{c|}{}                        
& \multicolumn{4}{c|}{\begin{tabular}[c]{@{}c@{}}iTransformer\\   \shortcite{itransformer} \end{tabular}}                                                                                                
& \multicolumn{4}{c|}{\begin{tabular}[c]{@{}c@{}}PatchTST\\      \shortcite{patchtst} \end{tabular}}                                                                    
& \multicolumn{4}{c|}{\begin{tabular}[c]{@{}c@{}}Leddam\\      \shortcite{Leddam_icml} \end{tabular}}                                                                                                      
& \multicolumn{4}{c}{\begin{tabular}[c]{@{}c@{}}Fredformer\\     \shortcite{fredformer} \end{tabular}}                                                                   \\ \cmidrule(l){3-18} 
\multicolumn{2}{c|}{\multirow{-2}{*}{Model}} & \multicolumn{2}{c|}{Attn.}                                                                         & \multicolumn{2}{c|}{VecTrans}                                                 & \multicolumn{2}{c|}{Attn.}                                         & \multicolumn{2}{c|}{VecTrans}                                                 & \multicolumn{2}{c|}{Attn.}                                                                         & \multicolumn{2}{c|}{VecTrans}                                                 & \multicolumn{2}{c|}{Attn.}                                         & \multicolumn{2}{c}{VecTrans}                                                  \\ \midrule
\multicolumn{2}{c|}{Metric}                  & MSE                                   & \multicolumn{1}{c|}{MAE}                                   & MSE                                   & MAE                                   & MSE   & \multicolumn{1}{c|}{MAE}                                   & MSE                                   & MAE                                   & MSE                                   & \multicolumn{1}{c|}{MAE}                                   & MSE                                   & MAE                                   & MSE                                   & \multicolumn{1}{c|}{MAE}   & MSE                                   & MAE                                   \\ \midrule
                                 & 96        & 0.334                                 & \multicolumn{1}{c|}{0.368}                                 & {\color[HTML]{FF0000} \textbf{0.322}} & {\color[HTML]{FF0000} \textbf{0.359}} & 0.329 & \multicolumn{1}{c|}{0.367}                                 & {\color[HTML]{FF0000} \textbf{0.318}} & {\color[HTML]{FF0000} \textbf{0.359}} & 0.319                                 & \multicolumn{1}{c|}{0.359}                                 & {\color[HTML]{FF0000} \textbf{0.315}} & {\color[HTML]{FF0000} \textbf{0.353}} & 0.326                                 & \multicolumn{1}{c|}{0.361} & {\color[HTML]{FF0000} \textbf{0.315}} & {\color[HTML]{FF0000} \textbf{0.341}} \\
                                 & 192       & 0.377                                 & \multicolumn{1}{c|}{0.391}                                 & {\color[HTML]{FF0000} \textbf{0.369}} & {\color[HTML]{FF0000} \textbf{0.386}} & 0.367 & \multicolumn{1}{c|}{0.385}                                 & {\color[HTML]{FF0000} \textbf{0.360}} & {\color[HTML]{FF0000} \textbf{0.384}} & 0.369                                 & \multicolumn{1}{c|}{0.383}                                 & {\color[HTML]{FF0000} \textbf{0.358}} & {\color[HTML]{FF0000} \textbf{0.379}} & {\color[HTML]{FF0000} \textbf{0.363}} & \multicolumn{1}{c|}{0.380} & 0.364                                 & {\color[HTML]{FF0000} \textbf{0.368}} \\
                                 & 336       & 0.426                                 & \multicolumn{1}{c|}{0.420}                                 & {\color[HTML]{FF0000} \textbf{0.398}} & {\color[HTML]{FF0000} \textbf{0.406}} & 0.399 & \multicolumn{1}{c|}{0.410}                                 & {\color[HTML]{FF0000} \textbf{0.387}} & {\color[HTML]{FF0000} \textbf{0.405}} & 0.394                                 & \multicolumn{1}{c|}{0.402}                                 & {\color[HTML]{FF0000} \textbf{0.392}} & {\color[HTML]{FF0000} \textbf{0.402}} & {\color[HTML]{FF0000} \textbf{0.395}} & \multicolumn{1}{c|}{0.403} & 0.397                                 & {\color[HTML]{FF0000} \textbf{0.392}} \\
                                 & 720       & 0.491                                 & \multicolumn{1}{c|}{0.459}                                 & {\color[HTML]{FF0000} \textbf{0.475}} & {\color[HTML]{FF0000} \textbf{0.447}} & 0.454 & \multicolumn{1}{c|}{{\color[HTML]{FF0000} \textbf{0.439}}} & {\color[HTML]{FF0000} \textbf{0.450}} & 0.444                                 & 0.460                                 & \multicolumn{1}{c|}{0.442}                                 & {\color[HTML]{FF0000} \textbf{0.454}} & {\color[HTML]{FF0000} \textbf{0.439}} & {\color[HTML]{FF0000} \textbf{0.453}} & \multicolumn{1}{c|}{0.438} & 0.459                                 & {\color[HTML]{FF0000} \textbf{0.429}} \\ \cmidrule(l){2-18} 
\multirow{-5}{*}{ETTm1}          & Avg       & 0.407                                 & \multicolumn{1}{c|}{0.410}                                 & {\color[HTML]{FF0000} \textbf{0.391}} & {\color[HTML]{FF0000} \textbf{0.399}} & 0.387 & \multicolumn{1}{c|}{0.400}                                 & {\color[HTML]{FF0000} \textbf{0.379}} & {\color[HTML]{FF0000} \textbf{0.398}} & 0.386                                 & \multicolumn{1}{c|}{0.397}                                 & {\color[HTML]{FF0000} \textbf{0.379}} & {\color[HTML]{FF0000} \textbf{0.393}} & {\color[HTML]{FF0000} \textbf{0.384}} & \multicolumn{1}{c|}{0.396} & {\color[HTML]{FF0000} \textbf{0.384}} & {\color[HTML]{FF0000} \textbf{0.382}} \\ \midrule
                                 & 96        & 0.148                                 & \multicolumn{1}{c|}{0.240}                                 & {\color[HTML]{FF0000} \textbf{0.137}} & {\color[HTML]{FF0000} \textbf{0.232}} & 0.161 & \multicolumn{1}{c|}{0.250}                                 & {\color[HTML]{FF0000} \textbf{0.154}} & {\color[HTML]{FF0000} \textbf{0.244}} & 0.141                                 & \multicolumn{1}{c|}{0.235}                                 & {\color[HTML]{FF0000} \textbf{0.138}} & {\color[HTML]{FF0000} \textbf{0.231}} & 0.147                                 & \multicolumn{1}{c|}{0.241} & {\color[HTML]{FF0000} \textbf{0.140}} & {\color[HTML]{FF0000} \textbf{0.229}} \\
                                 & 192       & 0.162                                 & \multicolumn{1}{c|}{0.253}                                 & {\color[HTML]{FF0000} \textbf{0.154}} & {\color[HTML]{FF0000} \textbf{0.248}} & 0.199 & \multicolumn{1}{c|}{0.289}                                 & {\color[HTML]{FF0000} \textbf{0.166}} & {\color[HTML]{FF0000} \textbf{0.256}} & 0.159                                 & \multicolumn{1}{c|}{0.252}                                 & {\color[HTML]{FF0000} \textbf{0.156}} & {\color[HTML]{FF0000} \textbf{0.249}} & 0.165                                 & \multicolumn{1}{c|}{0.258} & {\color[HTML]{FF0000} \textbf{0.156}} & {\color[HTML]{FF0000} \textbf{0.243}} \\
                                 & 336       & 0.178                                 & \multicolumn{1}{c|}{0.269}                                 & {\color[HTML]{FF0000} \textbf{0.169}} & {\color[HTML]{FF0000} \textbf{0.264}} & 0.215 & \multicolumn{1}{c|}{0.305}                                 & {\color[HTML]{FF0000} \textbf{0.182}} & {\color[HTML]{FF0000} \textbf{0.272}} & 0.173                                 & \multicolumn{1}{c|}{0.268}                                 & {\color[HTML]{FF0000} \textbf{0.169}} & {\color[HTML]{FF0000} \textbf{0.266}} & 0.177                                 & \multicolumn{1}{c|}{0.273} & {\color[HTML]{FF0000} \textbf{0.172}} & {\color[HTML]{FF0000} \textbf{0.260}} \\
                                 & 720       & 0.225                                 & \multicolumn{1}{c|}{0.317}                                 & {\color[HTML]{FF0000} \textbf{0.203}} & {\color[HTML]{FF0000} \textbf{0.297}} & 0.256 & \multicolumn{1}{c|}{0.337}                                 & {\color[HTML]{FF0000} \textbf{0.220}} & {\color[HTML]{FF0000} \textbf{0.306}} & 0.201                                 & \multicolumn{1}{c|}{0.295}                                 & {\color[HTML]{FF0000} \textbf{0.190}} & {\color[HTML]{FF0000} \textbf{0.287}} & {\color[HTML]{FF0000} \textbf{0.213}} & \multicolumn{1}{c|}{0.304} & 0.215                                 & {\color[HTML]{FF0000} \textbf{0.302}} \\ \cmidrule(l){2-18} 
\multirow{-5}{*}{ECL}            & Avg       & 0.178                                 & \multicolumn{1}{c|}{0.270}                                 & {\color[HTML]{FF0000} \textbf{0.166}} & {\color[HTML]{FF0000} \textbf{0.260}} & 0.208 & \multicolumn{1}{c|}{0.295}                                 & {\color[HTML]{FF0000} \textbf{0.180}} & {\color[HTML]{FF0000} \textbf{0.269}} & 0.169                                 & \multicolumn{1}{c|}{0.263}                                 & {\color[HTML]{FF0000} \textbf{0.163}} & {\color[HTML]{FF0000} \textbf{0.258}} & 0.176                                 & \multicolumn{1}{c|}{0.269} & {\color[HTML]{FF0000} \textbf{0.171}} & {\color[HTML]{FF0000} \textbf{0.258}} \\ \midrule
                                 & 12        & 0.071                                 & \multicolumn{1}{c|}{0.174}                                 & {\color[HTML]{FF0000} \textbf{0.064}} & {\color[HTML]{FF0000} \textbf{0.167}} & 0.099 & \multicolumn{1}{c|}{0.216}                                 & {\color[HTML]{FF0000} \textbf{0.071}} & {\color[HTML]{FF0000} \textbf{0.176}} & 0.063                                 & \multicolumn{1}{c|}{0.164}                                 & {\color[HTML]{FF0000} \textbf{0.062}} & {\color[HTML]{FF0000} \textbf{0.164}} & 0.068                                 & \multicolumn{1}{c|}{0.174} & {\color[HTML]{FF0000} \textbf{0.065}} & {\color[HTML]{FF0000} \textbf{0.166}} \\
                                 & 24        & 0.093                                 & \multicolumn{1}{c|}{0.201}                                 & {\color[HTML]{FF0000} \textbf{0.083}} & {\color[HTML]{FF0000} \textbf{0.189}} & 0.142 & \multicolumn{1}{c|}{0.259}                                 & {\color[HTML]{FF0000} \textbf{0.103}} & {\color[HTML]{FF0000} \textbf{0.210}} & 0.080                                 & \multicolumn{1}{c|}{0.185}                                 & {\color[HTML]{FF0000} \textbf{0.079}} & {\color[HTML]{FF0000} \textbf{0.183}} & 0.093                                 & \multicolumn{1}{c|}{0.202} & {\color[HTML]{FF0000} \textbf{0.090}} & {\color[HTML]{FF0000} \textbf{0.196}} \\
                                 & 48        & 0.125                                 & \multicolumn{1}{c|}{0.236}                                 & {\color[HTML]{FF0000} \textbf{0.121}} & {\color[HTML]{FF0000} \textbf{0.228}} & 0.211 & \multicolumn{1}{c|}{0.319}                                 & {\color[HTML]{FF0000} \textbf{0.170}} & {\color[HTML]{FF0000} \textbf{0.268}} & 0.124                                 & \multicolumn{1}{c|}{0.226}                                 & {\color[HTML]{FF0000} \textbf{0.114}} & {\color[HTML]{FF0000} \textbf{0.222}} & 0.146                                 & \multicolumn{1}{c|}{0.258} & {\color[HTML]{FF0000} \textbf{0.144}} & {\color[HTML]{FF0000} \textbf{0.252}} \\
                                 & 96        & {\color[HTML]{FF0000} \textbf{0.164}} & \multicolumn{1}{c|}{0.275}                                 & 0.165                                 & {\color[HTML]{FF0000} \textbf{0.271}} & 0.269 & \multicolumn{1}{c|}{0.370}                                 & {\color[HTML]{FF0000} \textbf{0.252}} & {\color[HTML]{FF0000} \textbf{0.331}} & {\color[HTML]{FF0000} \textbf{0.160}} & \multicolumn{1}{c|}{{\color[HTML]{FF0000} \textbf{0.266}}} & 0.161                                 & 0.267                                 & 0.228                                 & \multicolumn{1}{c|}{0.330} & {\color[HTML]{FF0000} \textbf{0.212}} & {\color[HTML]{FF0000} \textbf{0.313}} \\ \cmidrule(l){2-18} 
\multirow{-5}{*}{PEMS03}         & Avg       & 0.113                                 & \multicolumn{1}{c|}{0.221}                                 & {\color[HTML]{FF0000} \textbf{0.108}} & {\color[HTML]{FF0000} \textbf{0.214}} & 0.180 & \multicolumn{1}{c|}{0.291}                                 & {\color[HTML]{FF0000} \textbf{0.149}} & {\color[HTML]{FF0000} \textbf{0.246}} & 0.107                                 & \multicolumn{1}{c|}{0.210}                                 & {\color[HTML]{FF0000} \textbf{0.104}} & {\color[HTML]{FF0000} \textbf{0.209}} & 0.134                                 & \multicolumn{1}{c|}{0.241} & {\color[HTML]{FF0000} \textbf{0.128}} & {\color[HTML]{FF0000} \textbf{0.232}} \\ \midrule
                                 & 12        & 0.067                                 & \multicolumn{1}{c|}{0.165}                                 & {\color[HTML]{FF0000} \textbf{0.057}} & {\color[HTML]{FF0000} \textbf{0.149}} & 0.095 & \multicolumn{1}{c|}{0.207}                                 & {\color[HTML]{FF0000} \textbf{0.075}} & {\color[HTML]{FF0000} \textbf{0.183}} & 0.055                                 & \multicolumn{1}{c|}{0.145}                                 & {\color[HTML]{FF0000} \textbf{0.054}} & {\color[HTML]{FF0000} \textbf{0.145}} & 0.063                                 & \multicolumn{1}{c|}{0.158} & {\color[HTML]{FF0000} \textbf{0.061}} & {\color[HTML]{FF0000} \textbf{0.154}} \\
                                 & 24        & 0.088                                 & \multicolumn{1}{c|}{0.190}                                 & {\color[HTML]{FF0000} \textbf{0.074}} & {\color[HTML]{FF0000} \textbf{0.169}} & 0.150 & \multicolumn{1}{c|}{0.262}                                 & {\color[HTML]{FF0000} \textbf{0.133}} & {\color[HTML]{FF0000} \textbf{0.245}} & 0.070                                 & \multicolumn{1}{c|}{0.164}                                 & {\color[HTML]{FF0000} \textbf{0.069}} & {\color[HTML]{FF0000} \textbf{0.161}} & 0.089                                 & \multicolumn{1}{c|}{0.190} & {\color[HTML]{FF0000} \textbf{0.087}} & {\color[HTML]{FF0000} \textbf{0.185}} \\
                                 & 48        & 0.110                                 & \multicolumn{1}{c|}{0.215}                                 & {\color[HTML]{FF0000} \textbf{0.094}} & {\color[HTML]{FF0000} \textbf{0.194}} & 0.253 & \multicolumn{1}{c|}{0.340}                                 & {\color[HTML]{FF0000} \textbf{0.158}} & {\color[HTML]{FF0000} \textbf{0.248}} & 0.094                                 & \multicolumn{1}{c|}{0.192}                                 & {\color[HTML]{FF0000} \textbf{0.090}} & {\color[HTML]{FF0000} \textbf{0.188}} & 0.135                                 & \multicolumn{1}{c|}{0.239} & {\color[HTML]{FF0000} \textbf{0.130}} & {\color[HTML]{FF0000} \textbf{0.229}} \\
                                 & 96        & 0.139                                 & \multicolumn{1}{c|}{0.245}                                 & {\color[HTML]{FF0000} \textbf{0.135}} & {\color[HTML]{FF0000} \textbf{0.243}} & 0.346 & \multicolumn{1}{c|}{0.404}                                 & {\color[HTML]{FF0000} \textbf{0.245}} & {\color[HTML]{FF0000} \textbf{0.308}} & 0.117                                 & \multicolumn{1}{c|}{{\color[HTML]{FF0000} \textbf{0.217}}} & {\color[HTML]{FF0000} \textbf{0.116}} & 0.218                                 & 0.196                                 & \multicolumn{1}{c|}{0.294} & {\color[HTML]{FF0000} \textbf{0.194}} & {\color[HTML]{FF0000} \textbf{0.288}} \\ \cmidrule(l){2-18} 
\multirow{-5}{*}{PEMS07}         & Avg       & 0.101                                 & \multicolumn{1}{c|}{0.204}                                 & {\color[HTML]{FF0000} \textbf{0.090}} & {\color[HTML]{FF0000} \textbf{0.188}} & 0.211 & \multicolumn{1}{c|}{0.303}                                 & {\color[HTML]{FF0000} \textbf{0.153}} & {\color[HTML]{FF0000} \textbf{0.246}} & 0.084                                 & \multicolumn{1}{c|}{0.180}                                 & {\color[HTML]{FF0000} \textbf{0.082}} & {\color[HTML]{FF0000} \textbf{0.178}} & 0.121                                 & \multicolumn{1}{c|}{0.220} & {\color[HTML]{FF0000} \textbf{0.118}} & {\color[HTML]{FF0000} \textbf{0.214}} \\ \midrule
                                 & 96        & 0.203                                 & \multicolumn{1}{c|}{0.237}                                 & {\color[HTML]{FF0000} \textbf{0.191}} & {\color[HTML]{FF0000} \textbf{0.226}} & 0.234 & \multicolumn{1}{c|}{0.286}                                 & {\color[HTML]{FF0000} \textbf{0.202}} & {\color[HTML]{FF0000} \textbf{0.244}} & 0.197                                 & \multicolumn{1}{c|}{0.241}                                 & {\color[HTML]{FF0000} \textbf{0.186}} & {\color[HTML]{FF0000} \textbf{0.222}} & {\color[HTML]{FF0000} \textbf{0.185}} & \multicolumn{1}{c|}{0.233} & 0.191                                 & {\color[HTML]{FF0000} \textbf{0.208}} \\
                                 & 192       & 0.233                                 & \multicolumn{1}{c|}{0.261}                                 & {\color[HTML]{FF0000} \textbf{0.221}} & {\color[HTML]{FF0000} \textbf{0.255}} & 0.267 & \multicolumn{1}{c|}{0.310}                                 & {\color[HTML]{FF0000} \textbf{0.234}} & {\color[HTML]{FF0000} \textbf{0.264}} & 0.231                                 & \multicolumn{1}{c|}{0.264}                                 & {\color[HTML]{FF0000} \textbf{0.222}} & {\color[HTML]{FF0000} \textbf{0.257}} & 0.227                                 & \multicolumn{1}{c|}{0.253} & {\color[HTML]{FF0000} \textbf{0.224}} & {\color[HTML]{FF0000} \textbf{0.229}} \\
                                 & 336       & 0.248                                 & \multicolumn{1}{c|}{0.273}                                 & {\color[HTML]{FF0000} \textbf{0.237}} & {\color[HTML]{FF0000} \textbf{0.269}} & 0.290 & \multicolumn{1}{c|}{0.315}                                 & {\color[HTML]{FF0000} \textbf{0.252}} & {\color[HTML]{FF0000} \textbf{0.278}} & 0.241                                 & \multicolumn{1}{c|}{{\color[HTML]{FF0000} \textbf{0.268}}} & {\color[HTML]{FF0000} \textbf{0.239}} & 0.269                                 & 0.246                                 & \multicolumn{1}{c|}{0.284} & {\color[HTML]{FF0000} \textbf{0.239}} & {\color[HTML]{FF0000} \textbf{0.246}} \\
                                 & 720       & 0.249                                 & \multicolumn{1}{c|}{0.275}                                 & {\color[HTML]{FF0000} \textbf{0.242}} & {\color[HTML]{FF0000} \textbf{0.274}} & 0.289 & \multicolumn{1}{c|}{0.317}                                 & {\color[HTML]{FF0000} \textbf{0.250}} & {\color[HTML]{FF0000} \textbf{0.277}} & 0.250                                 & \multicolumn{1}{c|}{0.281}                                 & {\color[HTML]{FF0000} \textbf{0.244}} & {\color[HTML]{FF0000} \textbf{0.276}} & 0.247                                 & \multicolumn{1}{c|}{0.276} & {\color[HTML]{FF0000} \textbf{0.246}} & {\color[HTML]{FF0000} \textbf{0.249}} \\ \cmidrule(l){2-18} 
\multirow{-5}{*}{Solar}          & Avg       & 0.233                                 & \multicolumn{1}{c|}{0.262}                                 & {\color[HTML]{FF0000} \textbf{0.223}} & {\color[HTML]{FF0000} \textbf{0.256}} & 0.270 & \multicolumn{1}{c|}{0.307}                                 & {\color[HTML]{FF0000} \textbf{0.234}} & {\color[HTML]{FF0000} \textbf{0.266}} & 0.230                                 & \multicolumn{1}{c|}{0.264}                                 & {\color[HTML]{FF0000} \textbf{0.223}} & {\color[HTML]{FF0000} \textbf{0.256}} & 0.226                                 & \multicolumn{1}{c|}{0.262} & {\color[HTML]{FF0000} \textbf{0.225}} & {\color[HTML]{FF0000} \textbf{0.233}} \\ \midrule
                                 & 96        & 0.174                                 & \multicolumn{1}{c|}{0.214}                                 & {\color[HTML]{FF0000} \textbf{0.161}} & {\color[HTML]{FF0000} \textbf{0.206}} & 0.177 & \multicolumn{1}{c|}{0.218}                                 & {\color[HTML]{FF0000} \textbf{0.162}} & {\color[HTML]{FF0000} \textbf{0.208}} & {\color[HTML]{FF0000} \textbf{0.156}} & \multicolumn{1}{c|}{{\color[HTML]{FF0000} \textbf{0.202}}} & 0.157                                 & 0.203                                 & 0.163                                 & \multicolumn{1}{c|}{0.207} & {\color[HTML]{FF0000} \textbf{0.153}} & {\color[HTML]{FF0000} \textbf{0.191}} \\
                                 & 192       & 0.221                                 & \multicolumn{1}{c|}{0.254}                                 & {\color[HTML]{FF0000} \textbf{0.212}} & {\color[HTML]{FF0000} \textbf{0.253}} & 0.225 & \multicolumn{1}{c|}{0.259}                                 & {\color[HTML]{FF0000} \textbf{0.208}} & {\color[HTML]{FF0000} \textbf{0.250}} & 0.207                                 & \multicolumn{1}{c|}{0.250}                                 & {\color[HTML]{FF0000} \textbf{0.206}} & {\color[HTML]{FF0000} \textbf{0.248}} & 0.211                                 & \multicolumn{1}{c|}{0.251} & {\color[HTML]{FF0000} \textbf{0.204}} & {\color[HTML]{FF0000} \textbf{0.241}} \\
                                 & 336       & 0.278                                 & \multicolumn{1}{c|}{{\color[HTML]{FF0000} \textbf{0.296}}} & {\color[HTML]{FF0000} \textbf{0.270}} & {\color[HTML]{FF0000} \textbf{0.296}} & 0.278 & \multicolumn{1}{c|}{0.297}                                 & {\color[HTML]{FF0000} \textbf{0.265}} & {\color[HTML]{FF0000} \textbf{0.292}} & 0.262                                 & \multicolumn{1}{c|}{0.291}                                 & {\color[HTML]{FF0000} \textbf{0.260}} & {\color[HTML]{FF0000} \textbf{0.290}} & 0.267                                 & \multicolumn{1}{c|}{0.292} & {\color[HTML]{FF0000} \textbf{0.262}} & {\color[HTML]{FF0000} \textbf{0.284}} \\
                                 & 720       & 0.358                                 & \multicolumn{1}{c|}{0.349}                                 & {\color[HTML]{FF0000} \textbf{0.349}} & {\color[HTML]{FF0000} \textbf{0.347}} & 0.354 & \multicolumn{1}{c|}{0.348}                                 & {\color[HTML]{FF0000} \textbf{0.344}} & {\color[HTML]{FF0000} \textbf{0.344}} & {\color[HTML]{FF0000} \textbf{0.343}} & \multicolumn{1}{c|}{{\color[HTML]{FF0000} \textbf{0.343}}} & 0.345                                 & 0.345                                 & 0.343                                 & \multicolumn{1}{c|}{0.341} & {\color[HTML]{FF0000} \textbf{0.337}} & {\color[HTML]{FF0000} \textbf{0.335}} \\ \cmidrule(l){2-18} 
\multirow{-5}{*}{Weather}        & Avg       & 0.258                                 & \multicolumn{1}{c|}{0.279}                                 & {\color[HTML]{FF0000} \textbf{0.248}} & {\color[HTML]{FF0000} \textbf{0.275}} & 0.259 & \multicolumn{1}{c|}{0.281}                                 & {\color[HTML]{FF0000} \textbf{0.245}} & {\color[HTML]{FF0000} \textbf{0.273}} & {\color[HTML]{FF0000} \textbf{0.242}} & \multicolumn{1}{c|}{0.272}                                 & {\color[HTML]{FF0000} \textbf{0.242}} & {\color[HTML]{FF0000} \textbf{0.271}} & 0.246                                 & \multicolumn{1}{c|}{0.273} & {\color[HTML]{FF0000} \textbf{0.239}} & {\color[HTML]{FF0000} \textbf{0.263}} \\ \midrule
                                 & 3         & 0.205                                 & \multicolumn{1}{c|}{{\color[HTML]{FF0000} \textbf{0.188}}} & {\color[HTML]{FF0000} \textbf{0.204}} & 0.189                                 & 0.204 & \multicolumn{1}{c|}{0.190}                                 & {\color[HTML]{FF0000} \textbf{0.204}} & {\color[HTML]{FF0000} \textbf{0.188}} & 0.204                                 & \multicolumn{1}{c|}{0.191}                                 & {\color[HTML]{FF0000} \textbf{0.202}} & {\color[HTML]{FF0000} \textbf{0.187}} & {\color[HTML]{FF0000} \textbf{0.205}} & \multicolumn{1}{c|}{0.188} & 0.207                                 & {\color[HTML]{FF0000} \textbf{0.178}} \\
                                 & 6         & 0.300                                 & \multicolumn{1}{c|}{0.229}                                 & {\color[HTML]{FF0000} \textbf{0.296}} & {\color[HTML]{FF0000} \textbf{0.227}} & 0.298 & \multicolumn{1}{c|}{{\color[HTML]{FF0000} \textbf{0.227}}} & {\color[HTML]{FF0000} \textbf{0.297}} & 0.229                                 & 0.293                                 & \multicolumn{1}{c|}{0.227}                                 & {\color[HTML]{FF0000} \textbf{0.289}} & {\color[HTML]{FF0000} \textbf{0.226}} & {\color[HTML]{FF0000} \textbf{0.298}} & \multicolumn{1}{c|}{0.227} & 0.300                                 & {\color[HTML]{FF0000} \textbf{0.215}} \\
                                 & 9         & 0.386                                 & \multicolumn{1}{c|}{0.265}                                 & {\color[HTML]{FF0000} \textbf{0.377}} & {\color[HTML]{FF0000} \textbf{0.261}} & 0.382 & \multicolumn{1}{c|}{0.263}                                 & {\color[HTML]{FF0000} \textbf{0.379}} & {\color[HTML]{FF0000} \textbf{0.262}} & 0.369                                 & \multicolumn{1}{c|}{0.264}                                 & {\color[HTML]{FF0000} \textbf{0.366}} & {\color[HTML]{FF0000} \textbf{0.259}} & {\color[HTML]{FF0000} \textbf{0.385}} & \multicolumn{1}{c|}{0.263} & 0.391                                 & {\color[HTML]{FF0000} \textbf{0.249}} \\
                                 & 12        & 0.460                                 & \multicolumn{1}{c|}{0.295}                                 & {\color[HTML]{FF0000} \textbf{0.450}} & {\color[HTML]{FF0000} \textbf{0.291}} & 0.456 & \multicolumn{1}{c|}{0.292}                                 & {\color[HTML]{FF0000} \textbf{0.453}} & {\color[HTML]{FF0000} \textbf{0.292}} & 0.442                                 & \multicolumn{1}{c|}{0.292}                                 & {\color[HTML]{FF0000} \textbf{0.424}} & {\color[HTML]{FF0000} \textbf{0.289}} & {\color[HTML]{FF0000} \textbf{0.457}} & \multicolumn{1}{c|}{0.292} & 0.466                                 & {\color[HTML]{FF0000} \textbf{0.277}} \\ \cmidrule(l){2-18} 
\multirow{-5}{*}{METR-LA}           & Avg       & 0.338                                 & \multicolumn{1}{c|}{0.244}                                 & {\color[HTML]{FF0000} \textbf{0.332}} & {\color[HTML]{FF0000} \textbf{0.242}} & 0.335 & \multicolumn{1}{c|}{0.243}                                 & {\color[HTML]{FF0000} \textbf{0.333}} & {\color[HTML]{FF0000} \textbf{0.243}} & 0.327                                 & \multicolumn{1}{c|}{0.243}                                 & {\color[HTML]{FF0000} \textbf{0.320}} & {\color[HTML]{FF0000} \textbf{0.240}} & {\color[HTML]{FF0000} \textbf{0.336}} & \multicolumn{1}{c|}{0.242} & 0.341                                 & {\color[HTML]{FF0000} \textbf{0.230}} \\ \bottomrule
\end{tabular}
}
\caption{Integration of vecTrans into Transformer-based forecasters by replacing the attention mechanism. Other components and loss functions remain unchanged. This table corresponds to Table~\ref{tab_vecTrans_plugin} in the main paper.}
\label{tab_vecTrans_plugin_appd}
\end{center}
\end{table*}


\begin{table*}[t]
\begin{center}
\renewcommand{\arraystretch}{1.3}
{\fontsize{8}{10}\selectfont
\setlength{\tabcolsep}{4.5pt}
\begin{tabular}{@{}cc|cccc|cccc|cccc|cccc@{}}
\toprule
\multicolumn{2}{c|}{} & \multicolumn{4}{c|}{\begin{tabular}[c]{@{}c@{}}OLinear\\ \shortcite{olinear} \end{tabular}} 
& \multicolumn{4}{c|}{\begin{tabular}[c]{@{}c@{}}iTransformer\\   \shortcite{itransformer} \end{tabular}}                                
& \multicolumn{4}{c|}{\begin{tabular}[c]{@{}c@{}}PatchTST\\      \shortcite{patchtst} \end{tabular}}     
& \multicolumn{4}{c}{\begin{tabular}[c]{@{}c@{}}DLinear\\ \shortcite{linear} \end{tabular}}                                                                           \\ \cmidrule(l){3-18} 
\multicolumn{2}{c|}{\multirow{-2}{*}{Model}} & \multicolumn{2}{c|}{Ori.}                                                                          & \multicolumn{2}{c|}{WFMLoss}                                                   & \multicolumn{2}{c|}{Ori.}          & \multicolumn{2}{c|}{WFMLoss}                                                   & \multicolumn{2}{c|}{Ori.}                                          & \multicolumn{2}{c|}{WFMLoss}                                                   & \multicolumn{2}{c|}{Ori.}                                          & \multicolumn{2}{c}{WFMLoss}                                                    \\ \midrule
\multicolumn{2}{c|}{Metric}                  & MSE                                   & \multicolumn{1}{c|}{MAE}                                   & MSE                                   & MAE                                   & MSE   & \multicolumn{1}{c|}{MAE}   & MSE                                   & MAE                                   & MSE                                   & \multicolumn{1}{c|}{MAE}   & MSE                                   & MAE                                   & MSE                                   & \multicolumn{1}{c|}{MAE}   & MSE                                   & MAE                                   \\ \midrule
                                 & 96        & 0.360                                 & \multicolumn{1}{c|}{0.382}                                 & {\color[HTML]{FF0000} \textbf{0.356}} & {\color[HTML]{FF0000} \textbf{0.379}} & 0.386 & \multicolumn{1}{c|}{0.405} & {\color[HTML]{FF0000} \textbf{0.364}} & {\color[HTML]{FF0000} \textbf{0.386}} & 0.414                                 & \multicolumn{1}{c|}{0.419} & {\color[HTML]{FF0000} \textbf{0.362}} & {\color[HTML]{FF0000} \textbf{0.383}} & 0.386                                 & \multicolumn{1}{c|}{0.400} & {\color[HTML]{FF0000} \textbf{0.376}} & {\color[HTML]{FF0000} \textbf{0.388}} \\
                                 & 192       & 0.416                                 & \multicolumn{1}{c|}{0.414}                                 & {\color[HTML]{FF0000} \textbf{0.409}} & {\color[HTML]{FF0000} \textbf{0.411}} & 0.441 & \multicolumn{1}{c|}{0.436} & {\color[HTML]{FF0000} \textbf{0.414}} & {\color[HTML]{FF0000} \textbf{0.416}} & 0.460                                 & \multicolumn{1}{c|}{0.445} & {\color[HTML]{FF0000} \textbf{0.413}} & {\color[HTML]{FF0000} \textbf{0.413}} & 0.437                                 & \multicolumn{1}{c|}{0.432} & {\color[HTML]{FF0000} \textbf{0.427}} & {\color[HTML]{FF0000} \textbf{0.419}} \\
                                 & 336       & 0.457                                 & \multicolumn{1}{c|}{0.438}                                 & {\color[HTML]{FF0000} \textbf{0.451}} & {\color[HTML]{FF0000} \textbf{0.435}} & 0.487 & \multicolumn{1}{c|}{0.458} & {\color[HTML]{FF0000} \textbf{0.453}} & {\color[HTML]{FF0000} \textbf{0.437}} & 0.501                                 & \multicolumn{1}{c|}{0.466} & {\color[HTML]{FF0000} \textbf{0.449}} & {\color[HTML]{FF0000} \textbf{0.435}} & 0.481                                 & \multicolumn{1}{c|}{0.459} & {\color[HTML]{FF0000} \textbf{0.470}} & {\color[HTML]{FF0000} \textbf{0.441}} \\
                                 & 720       & 0.463                                 & \multicolumn{1}{c|}{0.462}                                 & {\color[HTML]{FF0000} \textbf{0.454}} & {\color[HTML]{FF0000} \textbf{0.457}} & 0.503 & \multicolumn{1}{c|}{0.491} & {\color[HTML]{FF0000} \textbf{0.454}} & {\color[HTML]{FF0000} \textbf{0.456}} & 0.500                                 & \multicolumn{1}{c|}{0.488} & {\color[HTML]{FF0000} \textbf{0.446}} & {\color[HTML]{FF0000} \textbf{0.452}} & 0.519                                 & \multicolumn{1}{c|}{0.516} & {\color[HTML]{FF0000} \textbf{0.463}} & {\color[HTML]{FF0000} \textbf{0.463}} \\ \cmidrule(l){2-18} 
\multirow{-5}{*}{ETTh1}          & Avg       & 0.424                                 & \multicolumn{1}{c|}{0.424}                                 & {\color[HTML]{FF0000} \textbf{0.417}} & {\color[HTML]{FF0000} \textbf{0.420}} & 0.454 & \multicolumn{1}{c|}{0.447} & {\color[HTML]{FF0000} \textbf{0.421}} & {\color[HTML]{FF0000} \textbf{0.424}} & 0.469                                 & \multicolumn{1}{c|}{0.454} & {\color[HTML]{FF0000} \textbf{0.417}} & {\color[HTML]{FF0000} \textbf{0.421}} & 0.456                                 & \multicolumn{1}{c|}{0.452} & {\color[HTML]{FF0000} \textbf{0.434}} & {\color[HTML]{FF0000} \textbf{0.427}} \\ \midrule
                                 & 96        & 0.169                                 & \multicolumn{1}{c|}{0.249}                                 & {\color[HTML]{FF0000} \textbf{0.168}} & {\color[HTML]{FF0000} \textbf{0.245}} & 0.180 & \multicolumn{1}{c|}{0.264} & {\color[HTML]{FF0000} \textbf{0.178}} & {\color[HTML]{FF0000} \textbf{0.260}} & 0.175                                 & \multicolumn{1}{c|}{0.259} & {\color[HTML]{FF0000} \textbf{0.173}} & {\color[HTML]{FF0000} \textbf{0.250}} & 0.193                                 & \multicolumn{1}{c|}{0.292} & {\color[HTML]{FF0000} \textbf{0.183}} & {\color[HTML]{FF0000} \textbf{0.261}} \\
                                 & 192       & 0.232                                 & \multicolumn{1}{c|}{0.290}                                 & {\color[HTML]{FF0000} \textbf{0.232}} & {\color[HTML]{FF0000} \textbf{0.288}} & 0.250 & \multicolumn{1}{c|}{0.309} & {\color[HTML]{FF0000} \textbf{0.241}} & {\color[HTML]{FF0000} \textbf{0.302}} & 0.241                                 & \multicolumn{1}{c|}{0.302} & {\color[HTML]{FF0000} \textbf{0.239}} & {\color[HTML]{FF0000} \textbf{0.293}} & 0.284                                 & \multicolumn{1}{c|}{0.362} & {\color[HTML]{FF0000} \textbf{0.246}} & {\color[HTML]{FF0000} \textbf{0.301}} \\
                                 & 336       & 0.291                                 & \multicolumn{1}{c|}{0.328}                                 & {\color[HTML]{FF0000} \textbf{0.290}} & {\color[HTML]{FF0000} \textbf{0.326}} & 0.311 & \multicolumn{1}{c|}{0.348} & {\color[HTML]{FF0000} \textbf{0.305}} & {\color[HTML]{FF0000} \textbf{0.342}} & 0.305                                 & \multicolumn{1}{c|}{0.343} & {\color[HTML]{FF0000} \textbf{0.299}} & {\color[HTML]{FF0000} \textbf{0.333}} & 0.369                                 & \multicolumn{1}{c|}{0.427} & {\color[HTML]{FF0000} \textbf{0.306}} & {\color[HTML]{FF0000} \textbf{0.339}} \\
                                 & 720       & {\color[HTML]{FF0000} \textbf{0.389}} & \multicolumn{1}{c|}{0.387}                                 & 0.390                                 & {\color[HTML]{FF0000} \textbf{0.386}} & 0.412 & \multicolumn{1}{c|}{0.407} & {\color[HTML]{FF0000} \textbf{0.401}} & {\color[HTML]{FF0000} \textbf{0.396}} & 0.402                                 & \multicolumn{1}{c|}{0.400} & {\color[HTML]{FF0000} \textbf{0.400}} & {\color[HTML]{FF0000} \textbf{0.392}} & 0.554                                 & \multicolumn{1}{c|}{0.522} & {\color[HTML]{FF0000} \textbf{0.406}} & {\color[HTML]{FF0000} \textbf{0.395}} \\ \cmidrule(l){2-18} 
\multirow{-5}{*}{ETTm2}          & Avg       & 0.270                                 & \multicolumn{1}{c|}{0.313}                                 & {\color[HTML]{FF0000} \textbf{0.270}} & {\color[HTML]{FF0000} \textbf{0.311}} & 0.288 & \multicolumn{1}{c|}{0.332} & {\color[HTML]{FF0000} \textbf{0.281}} & {\color[HTML]{FF0000} \textbf{0.325}} & 0.281                                 & \multicolumn{1}{c|}{0.326} & {\color[HTML]{FF0000} \textbf{0.278}} & {\color[HTML]{FF0000} \textbf{0.317}} & 0.350                                 & \multicolumn{1}{c|}{0.401} & {\color[HTML]{FF0000} \textbf{0.285}} & {\color[HTML]{FF0000} \textbf{0.324}} \\ \midrule
                                 & 96        & 0.131                                 & \multicolumn{1}{c|}{0.221}                                 & {\color[HTML]{FF0000} \textbf{0.129}} & {\color[HTML]{FF0000} \textbf{0.220}} & 0.148 & \multicolumn{1}{c|}{0.240} & {\color[HTML]{FF0000} \textbf{0.147}} & {\color[HTML]{FF0000} \textbf{0.235}} & {\color[HTML]{FF0000} \textbf{0.161}} & \multicolumn{1}{c|}{0.250} & 0.168                                 & {\color[HTML]{FF0000} \textbf{0.247}} & {\color[HTML]{FF0000} \textbf{0.197}} & \multicolumn{1}{c|}{0.282} & {\color[HTML]{FF0000} \textbf{0.197}} & {\color[HTML]{FF0000} \textbf{0.272}} \\
                                 & 192       & 0.150                                 & \multicolumn{1}{c|}{0.238}                                 & {\color[HTML]{FF0000} \textbf{0.147}} & {\color[HTML]{FF0000} \textbf{0.237}} & 0.162 & \multicolumn{1}{c|}{0.253} & {\color[HTML]{FF0000} \textbf{0.161}} & {\color[HTML]{FF0000} \textbf{0.249}} & 0.199                                 & \multicolumn{1}{c|}{0.289} & {\color[HTML]{FF0000} \textbf{0.176}} & {\color[HTML]{FF0000} \textbf{0.257}} & {\color[HTML]{FF0000} \textbf{0.196}} & \multicolumn{1}{c|}{0.285} & {\color[HTML]{FF0000} \textbf{0.196}} & {\color[HTML]{FF0000} \textbf{0.275}} \\
                                 & 336       & 0.165                                 & \multicolumn{1}{c|}{0.254}                                 & {\color[HTML]{FF0000} \textbf{0.157}} & {\color[HTML]{FF0000} \textbf{0.250}} & 0.178 & \multicolumn{1}{c|}{0.269} & {\color[HTML]{FF0000} \textbf{0.176}} & {\color[HTML]{FF0000} \textbf{0.264}} & 0.215                                 & \multicolumn{1}{c|}{0.305} & {\color[HTML]{FF0000} \textbf{0.193}} & {\color[HTML]{FF0000} \textbf{0.273}} & 0.209                                 & \multicolumn{1}{c|}{0.301} & {\color[HTML]{FF0000} \textbf{0.208}} & {\color[HTML]{FF0000} \textbf{0.290}} \\
                                 & 720       & 0.191                                 & \multicolumn{1}{c|}{0.279}                                 & {\color[HTML]{FF0000} \textbf{0.177}} & {\color[HTML]{FF0000} \textbf{0.272}} & 0.225 & \multicolumn{1}{c|}{0.317} & {\color[HTML]{FF0000} \textbf{0.207}} & {\color[HTML]{FF0000} \textbf{0.290}} & 0.256                                 & \multicolumn{1}{c|}{0.337} & {\color[HTML]{FF0000} \textbf{0.234}} & {\color[HTML]{FF0000} \textbf{0.307}} & 0.245                                 & \multicolumn{1}{c|}{0.333} & {\color[HTML]{FF0000} \textbf{0.243}} & {\color[HTML]{FF0000} \textbf{0.322}} \\ \cmidrule(l){2-18} 
\multirow{-5}{*}{ECL}            & Avg       & 0.159                                 & \multicolumn{1}{c|}{0.248}                                 & {\color[HTML]{FF0000} \textbf{0.152}} & {\color[HTML]{FF0000} \textbf{0.245}} & 0.178 & \multicolumn{1}{c|}{0.270} & {\color[HTML]{FF0000} \textbf{0.173}} & {\color[HTML]{FF0000} \textbf{0.259}} & 0.208                                 & \multicolumn{1}{c|}{0.295} & {\color[HTML]{FF0000} \textbf{0.193}} & {\color[HTML]{FF0000} \textbf{0.271}} & 0.212                                 & \multicolumn{1}{c|}{0.300} & {\color[HTML]{FF0000} \textbf{0.211}} & {\color[HTML]{FF0000} \textbf{0.289}} \\ \midrule
                                 & 96        & 0.398                                 & \multicolumn{1}{c|}{{\color[HTML]{FF0000} \textbf{0.226}}} & {\color[HTML]{FF0000} \textbf{0.397}} & 0.231                                 & 0.395 & \multicolumn{1}{c|}{0.268} & {\color[HTML]{FF0000} \textbf{0.389}} & {\color[HTML]{FF0000} \textbf{0.240}} & 0.446                                 & \multicolumn{1}{c|}{0.283} & {\color[HTML]{FF0000} \textbf{0.444}} & {\color[HTML]{FF0000} \textbf{0.253}} & 0.650                                 & \multicolumn{1}{c|}{0.396} & {\color[HTML]{FF0000} \textbf{0.645}} & {\color[HTML]{FF0000} \textbf{0.377}} \\
                                 & 192       & 0.439                                 & \multicolumn{1}{c|}{{\color[HTML]{FF0000} \textbf{0.241}}} & {\color[HTML]{FF0000} \textbf{0.429}} & 0.241                                 & 0.417 & \multicolumn{1}{c|}{0.276} & {\color[HTML]{FF0000} \textbf{0.410}} & {\color[HTML]{FF0000} \textbf{0.249}} & 0.540                                 & \multicolumn{1}{c|}{0.354} & {\color[HTML]{FF0000} \textbf{0.455}} & {\color[HTML]{FF0000} \textbf{0.260}} & {\color[HTML]{FF0000} \textbf{0.598}} & \multicolumn{1}{c|}{0.370} & {\color[HTML]{FF0000} \textbf{0.598}} & {\color[HTML]{FF0000} \textbf{0.353}} \\
                                 & 336       & 0.464                                 & \multicolumn{1}{c|}{{\color[HTML]{FF0000} \textbf{0.250}}} & {\color[HTML]{FF0000} \textbf{0.455}} & 0.253                                 & 0.433 & \multicolumn{1}{c|}{0.283} & {\color[HTML]{FF0000} \textbf{0.423}} & {\color[HTML]{FF0000} \textbf{0.256}} & 0.551                                 & \multicolumn{1}{c|}{0.358} & {\color[HTML]{FF0000} \textbf{0.470}} & {\color[HTML]{FF0000} \textbf{0.267}} & 0.605                                 & \multicolumn{1}{c|}{0.373} & {\color[HTML]{FF0000} \textbf{0.604}} & {\color[HTML]{FF0000} \textbf{0.354}} \\
                                 & 720       & 0.502                                 & \multicolumn{1}{c|}{{\color[HTML]{FF0000} \textbf{0.270}}} & {\color[HTML]{FF0000} \textbf{0.484}} & 0.276                                 & 0.467 & \multicolumn{1}{c|}{0.302} & {\color[HTML]{FF0000} \textbf{0.456}} & {\color[HTML]{FF0000} \textbf{0.274}} & 0.586                                 & \multicolumn{1}{c|}{0.375} & {\color[HTML]{FF0000} \textbf{0.510}} & {\color[HTML]{FF0000} \textbf{0.287}} & 0.645                                 & \multicolumn{1}{c|}{0.394} & {\color[HTML]{FF0000} \textbf{0.644}} & {\color[HTML]{FF0000} \textbf{0.373}} \\ \cmidrule(l){2-18} 
\multirow{-5}{*}{Traffic}        & Avg       & 0.451                                 & \multicolumn{1}{c|}{{\color[HTML]{FF0000} \textbf{0.247}}} & {\color[HTML]{FF0000} \textbf{0.441}} & 0.250                                 & 0.428 & \multicolumn{1}{c|}{0.282} & {\color[HTML]{FF0000} \textbf{0.420}} & {\color[HTML]{FF0000} \textbf{0.255}} & 0.531                                 & \multicolumn{1}{c|}{0.343} & {\color[HTML]{FF0000} \textbf{0.470}} & {\color[HTML]{FF0000} \textbf{0.267}} & 0.625                                 & \multicolumn{1}{c|}{0.383} & {\color[HTML]{FF0000} \textbf{0.623}} & {\color[HTML]{FF0000} \textbf{0.364}} \\ \midrule
                                 & 96        & 0.153                                 & \multicolumn{1}{c|}{0.190}                                 & {\color[HTML]{FF0000} \textbf{0.150}} & {\color[HTML]{FF0000} \textbf{0.186}} & 0.174 & \multicolumn{1}{c|}{0.214} & {\color[HTML]{FF0000} \textbf{0.167}} & {\color[HTML]{FF0000} \textbf{0.202}} & 0.177                                 & \multicolumn{1}{c|}{0.218} & {\color[HTML]{FF0000} \textbf{0.173}} & {\color[HTML]{FF0000} \textbf{0.206}} & 0.196                                 & \multicolumn{1}{c|}{0.255} & {\color[HTML]{FF0000} \textbf{0.190}} & {\color[HTML]{FF0000} \textbf{0.225}} \\
                                 & 192       & 0.200                                 & \multicolumn{1}{c|}{0.235}                                 & {\color[HTML]{FF0000} \textbf{0.198}} & {\color[HTML]{FF0000} \textbf{0.234}} & 0.221 & \multicolumn{1}{c|}{0.254} & {\color[HTML]{FF0000} \textbf{0.216}} & {\color[HTML]{FF0000} \textbf{0.247}} & 0.225                                 & \multicolumn{1}{c|}{0.259} & {\color[HTML]{FF0000} \textbf{0.219}} & {\color[HTML]{FF0000} \textbf{0.248}} & 0.237                                 & \multicolumn{1}{c|}{0.296} & {\color[HTML]{FF0000} \textbf{0.232}} & {\color[HTML]{FF0000} \textbf{0.272}} \\
                                 & 336       & 0.258                                 & \multicolumn{1}{c|}{0.280}                                 & {\color[HTML]{FF0000} \textbf{0.253}} & {\color[HTML]{FF0000} \textbf{0.275}} & 0.278 & \multicolumn{1}{c|}{0.296} & {\color[HTML]{FF0000} \textbf{0.270}} & {\color[HTML]{FF0000} \textbf{0.287}} & 0.278                                 & \multicolumn{1}{c|}{0.297} & {\color[HTML]{FF0000} \textbf{0.273}} & {\color[HTML]{FF0000} \textbf{0.288}} & 0.283                                 & \multicolumn{1}{c|}{0.335} & {\color[HTML]{FF0000} \textbf{0.277}} & {\color[HTML]{FF0000} \textbf{0.313}} \\
                                 & 720       & 0.337                                 & \multicolumn{1}{c|}{0.333}                                 & {\color[HTML]{FF0000} \textbf{0.331}} & {\color[HTML]{FF0000} \textbf{0.328}} & 0.358 & \multicolumn{1}{c|}{0.349} & {\color[HTML]{FF0000} \textbf{0.347}} & {\color[HTML]{FF0000} \textbf{0.339}} & 0.354                                 & \multicolumn{1}{c|}{0.348} & {\color[HTML]{FF0000} \textbf{0.349}} & {\color[HTML]{FF0000} \textbf{0.338}} & 0.345                                 & \multicolumn{1}{c|}{0.381} & {\color[HTML]{FF0000} \textbf{0.338}} & {\color[HTML]{FF0000} \textbf{0.362}} \\ \cmidrule(l){2-18} 
\multirow{-5}{*}{Weather}        & Avg       & 0.237                                 & \multicolumn{1}{c|}{0.260}                                 & {\color[HTML]{FF0000} \textbf{0.233}} & {\color[HTML]{FF0000} \textbf{0.256}} & 0.258 & \multicolumn{1}{c|}{0.279} & {\color[HTML]{FF0000} \textbf{0.250}} & {\color[HTML]{FF0000} \textbf{0.269}} & 0.259                                 & \multicolumn{1}{c|}{0.281} & {\color[HTML]{FF0000} \textbf{0.254}} & {\color[HTML]{FF0000} \textbf{0.270}} & 0.265                                 & \multicolumn{1}{c|}{0.317} & {\color[HTML]{FF0000} \textbf{0.259}} & {\color[HTML]{FF0000} \textbf{0.293}} \\ \bottomrule
\end{tabular}
}
\caption{Applying WFMLoss to existing forecasters: OLinear, iTransformer, PatchTST, and DLinear. The model architectures and hyperparameters remain unchanged. This table corresponds to Table~\ref{tab_wfmloss_plugin} in the main paper.}
\label{tab_wfmloss_plugin_appd}
\end{center}
\end{table*}

\clearpage
\bibliographystyle{named}
\bibliography{ijcai26}

@inproceedings{li2018dcrnn_traffic,
  title={Diffusion Convolutional Recurrent Neural Network: Data-Driven Traffic Forecasting},
  author={Li, Yaguang and Yu, Rose and Shahabi, Cyrus and Liu, Yan},
  booktitle={International Conference on Learning Representations (ICLR '18)},
  year={2018}
}

@inproceedings{diff_tsf,
  title={Autoregressive denoising diffusion models for multivariate probabilistic time series forecasting},
  author={Rasul, Kashif and Seward, Calvin and Schuster, Ingmar and Vollgraf, Roland},
  booktitle={ICML},
  pages={8857--8868},
  year={2021},
  organization={PMLR}
}

@article{CSDI,
  title={{CSDI}: Conditional score-based diffusion models for probabilistic time series imputation},
  author={Tashiro, Yusuke and Song, Jiaming and Song, Yang and Ermon, Stefano},
  journal={NeurIPS},
  volume={34},
  pages={24804--24816},
  year={2021}
}

@inproceedings{quantileformer,
  title={QuantileFormer: Probabilistic Time Series Forecasting with a Pattern-Mixture Decomposed VAE Transformer},
  author={Shao, Yimiao and Li, Wenzhong and Xia, Kang and Lin, Kaijie and Lin, Mingkai and Lu, Sanglu},
  booktitle={IJCAI},
  pages={6147--6155},
  year={2025}
}

@article{gatedattn,
  title={Gated Attention for Large Language Models: Non-linearity, Sparsity, and Attention-Sink-Free},
  author={Qiu, Zihan and Wang, Zekun and Zheng, Bo and Huang, Zeyu and Wen, Kaiyue and Yang, Songlin and Men, Rui and Yu, Le and Huang, Fei and Huang, Suozhi and others},
  journal={arXiv preprint arXiv:2505.06708},
  year={2025}
}

@article{olinear,
  title={OLinear: A Linear Model for Time Series Forecasting in Orthogonally Transformed Domain},
  author={Yue, Wenzhen and Liu, Yong and Li, Haoxuan and Wang, Hao and Ying, Xianghua and Guo, Ruohao and Xing, Bowei and Shi, Ji},
  journal={NeurIPS},
  year={2025}
}

@article{flow_matching,
  title={Flow matching for generative modeling},
  author={Lipman, Yaron and Chen, Ricky TQ and Ben-Hamu, Heli and Nickel, Maximilian and Le, Matt},
  journal={ICLR},
  year={2023}
}

@article{rectified_flow,
  title={Flow straight and fast: Learning to generate and transfer data with rectified flow},
  author={Liu, Xingchao and Gong, Chengyue and Liu, Qiang},
  journal={ICLR},
  year={2023}
}

@article{flow_match_interpolants,
  title={Building normalizing flows with stochastic interpolants},
  author={Albergo, Michael S and Vanden-Eijnden, Eric},
  journal={ICLR},
  year={2023}
}

@article{fm_ot,
  title={Improving and generalizing flow-based generative models with minibatch optimal transport},
  author={Tong, Alexander and Fatras, Kilian and Malkin, Nikolay and Huguet, Guillaume and Zhang, Yanlei and Rector-Brooks, Jarrid and Wolf, Guy and Bengio, Yoshua},
  journal={arXiv preprint arXiv:2302.00482},
  year={2023}
}

@inproceedings{fm_scaling,
  title={Scaling rectified flow transformers for high-resolution image synthesis},
  author={Esser, Patrick and Kulal, Sumith and Blattmann, Andreas and Entezari, Rahim and M{\"u}ller, Jonas and Saini, Harry and Levi, Yam and Lorenz, Dominik and Sauer, Axel and Boesel, Frederic and others},
  booktitle={ICML},
  year={2024}
}

@article{xu2025neutsflow,
  title={NeuTSFlow: Modeling Continuous Functions Behind Time Series Forecasting},
  author={Xu, Huibo and Wu, Likang and Wang, Xianquan and Dang, Haoning and Cheng, Chun-Wun and Aviles-Rivero, Angelica I and Liu, Qi},
  journal={arXiv preprint arXiv:2507.09888},
  year={2025}
}

@article{sundial,
  title={Sundial: A Family of Highly Capable Time Series Foundation Models},
  author={Liu, Yong and Qin, Guo and Shi, Zhiyuan and Chen, Zhi and Yang, Caiyin and Huang, Xiangdong and Wang, Jianmin and Long, Mingsheng},
  journal={arXiv preprint arXiv:2502.00816},
  year={2025}
}

@article{freeformer,
  title={Fre{E}former: Frequency Enhanced Transformer for Multivariate Time Series Forecasting},
  author={Yue, Wenzhen and Liu, Yong and Ying, Xianghua and Xing, Bowei and Guo, Ruohao and Shi, Ji},
  journal={arXiv preprint arXiv:2501.13989},
  year={2025}
}

@article{filternet,
  title={Filternet: Harnessing frequency filters for time series forecasting},
  author={Yi, Kun and Fei, Jingru and Zhang, Qi and He, Hui and Hao, Shufeng and Lian, Defu and Fan, Wei},
  journal={NeurIPS},
  volume={37},
  pages={55115--55140},
  year={2024}
}

@inproceedings{fits,
  title={FITS: Modeling time series with $10 k$ parameters},
  author={Xu, Zhijian and Zeng, Ailing and Xu, Qiang},
  booktitle={ICLR},
  year={2024}
}

@inproceedings{timemixer++,
  title={Timemixer++: A general time series pattern machine for universal predictive analysis},
  author={Wang, Shiyu and Li, Jiawei and Shi, Xiaoming and Ye, Zhou and Mo, Baichuan and Lin, Wenze and Ju, Shengtong and Chu, Zhixuan and Jin, Ming},
  booktitle={ICLR},
  year={2025}
}

@inproceedings{fredf,
  title={FreDF: Learning to forecast in the frequency domain},
  author={Wang, Hao and Pan, Lichen and Shen, Yuan and Chen, Zhichao and Yang, Degui and Yang, Yifei and Zhang, Sen and Liu, Xinggao and Li, Haoxuan and Tao, Dacheng},
  booktitle={ICLR},
  year={2025}
}

@inproceedings{Leddam_icml,
  title={Revitalizing multivariate time series forecasting: Learnable decomposition with inter-series dependencies and intra-series variations modeling},
  author={Yu, Guoqi and Zou, Jing and Hu, Xiaowei and Aviles-Rivero, Angelica I and Qin, Jing and Wang, Shujun},
  booktitle={ICML},
  year={2024}
}

@article{nature_weather,
  title={Interpretable weather forecasting for worldwide stations with a unified deep model},
  author={Wu, Haixu and Zhou, Hang and Long, Mingsheng and Wang, Jianmin},
  journal={Nature Machine Intelligence},
  volume={5},
  number={6},
  pages={602--611},
  year={2023},
  publisher={Nature Publishing Group UK London}
}

@inproceedings{frets,
  title={Frequency-domain MLPs are more effective learners in time series forecasting},
  author={Yi, Kun and Zhang, Qi and Fan, Wei and Wang, Shoujin and Wang, Pengyang and He, Hui and An, Ning and Lian, Defu and Cao, Longbing and Niu, Zhendong},
  booktitle={NeurIPS},
  year={2023}
}

@article{traffic,
  title={Short-term traffic flow prediction for urban road sections based on time series analysis and LSTM\_BILSTM method},
  author={Ma, Changxi and Dai, Guowen and Zhou, Jibiao},
  journal={IEEE Transactions on Intelligent Transportation Systems},
  volume={23},
  number={6},
  pages={5615--5624},
  year={2021},
  publisher={IEEE}
}

@inproceedings{fredformer,
  title={Fredformer: Frequency Debiased Transformer for Time Series Forecasting},
  author={Piao, Xihao and Chen, Zheng and Murayama, Taichi and Matsubara, Yasuko and Sakurai, Yasushi},
  booktitle={SIGKDD},
  year={2024}
}

@inproceedings{chen2022tamp,
  title={TAMP-S2GCNets: coupling time-aware multipersistence knowledge representation with spatio-supra graph convolutional networks for time-series forecasting},
  author={Chen, Yuzhou and Segovia-Dominguez, Ignacio and Coskunuzer, Baris and Gel, Yulia},
  booktitle={ICLR},
  year={2022}
}

@article{pytorch,
  title={Pytorch: An imperative style, high-performance deep learning library},
  author={Paszke, Adam and Gross, Sam and Massa, Francisco and Lerer, Adam and Bradbury, James and Chanan, Gregory and Killeen, Trevor and Lin, Zeming and Gimelshein, Natalia and Antiga, Luca and others},
  journal={Advances in neural information processing systems},
  volume={32},
  year={2019}
}

@article{mamba,
  title={Mamba: Linear-time sequence modeling with selective state spaces},
  author={Gu, Albert and Dao, Tri},
  journal={arXiv preprint arXiv:2312.00752},
  year={2023}
}

@inproceedings{nystromformer,
  title={Nystr{\"o}mformer: A nystr{\"o}m-based algorithm for approximating self-attention},
  author={Xiong, Yunyang and Zeng, Zhanpeng and Chakraborty, Rudrasis and Tan, Mingxing and Fung, Glenn and Li, Yin and Singh, Vikas},
  booktitle={Proceedings of the AAAI conference on artificial intelligence},
  volume={35},
  pages={14138--14148},
  year={2021}
}

@inproceedings{itransformer,
  title={iTransformer: Inverted transformers are effective for time series forecasting},
  author={Yong Liu and Tengge Hu and Haoran Zhang and Haixu Wu and Shiyu Wang and Lintao Ma and Mingsheng Long},
  booktitle={ICLR},
  year={2024}
}

@inproceedings{card,
  title={CARD: Channel aligned robust blend Transformer for time series forecasting},
  author={Xue Wang and Tian Zhou and Qingsong Wen and Jinyang Gao and Bolin Ding and Rong Jin},
  booktitle={ICLR},
  year={2024}
}

@article{patchtst,
  title={A time series is worth 64 words: Long-term forecasting with transformers},
  author={Yuqi Nie and Nam H. Nguyen and Phanwadee Sinthong and Jayant Kalagnanam},
  journal={ICLR},
  year={2023}
}

@inproceedings{linear,
  title={Are transformers effective for time series forecasting?},
  author={Ailing Zeng and Muxi Chen and Lei Zhang and Qiang Xu},
  booktitle={AAAI},
  volume={37},
  pages={11121--11128},
  year={2023}
}

@inproceedings{rnn,
  title={Modeling long-and short-term temporal patterns with deep neural networks},
  author={Guokun Lai and Wei-Cheng Chang and Yiming Yang and  Hanxiao Liu},
  booktitle={SIGIR},
  pages={95--104},
  year={2018}
}

@article{rnn_2020,
  title={DeepAR: Probabilistic forecasting with autoregressive recurrent networks},
  author={David Salinas and Valentin Flunkert and Jan Gasthaus and Tim Januschowski},
  journal={International journal of forecasting},
  volume={36},
  number={3},
  pages={1181--1191},
  year={2020},
  publisher={Elsevier}
}

@inproceedings{timesnet,
  title={Timesnet: Temporal 2d-variation modeling for general time series analysis},
  author={Haixu Wu and Tengge Hu and Yong Liu and Hang Zhou and Jianmin Wang and Mingsheng Long},
  booktitle={ICLR},
  year={2023}
}

@article{transformer,
  title={Attention is all you need},
  author={Ashish Vaswani and Noam Shazeer and Niki Parmar and Jakob Uszkoreit and Llion Jones and Aidan N Gomez and {\L}ukasz Kaiser and Illia Polosukhin},
  journal={NIPS},
  volume={30},
  year={2017}
}

@inproceedings{informer,
  title={Informer: Beyond efficient transformer for long sequence time-series forecasting},
  author={Haoyi Zhou and Shanghang Zhang and Jieqi Peng and Shuai Zhang and Jianxin Li and Hui Xiong and Wancai Zhang},
  booktitle={AAAI},
  volume={35},
  pages={11106--11115},
  year={2021}
}

@article{autoformer,
  title={Autoformer: Decomposition transformers with auto-correlation for long-term series forecasting},
  author={Haixu Wu and Jiehui Xu and Jianmin Wang and  Mingsheng Long},
  journal={NeurIPS},
  volume={34},
  pages={22419--22430},
  year={2021}
}

@article{scinet,
  title={Scinet: Time series modeling and forecasting with sample convolution and interaction},
  author={Minhao Liu and Ailing Zeng and Muxi Chen and Zhijian Xu and Qiuxia Lai and Lingna  Ma and Qiang Xu},
  journal={NeurIPS},
  volume={35},
  pages={5816--5828},
  year={2022}
}

@inproceedings{fedformer,
  title={Fedformer: Frequency enhanced decomposed transformer for long-term series forecasting},
  author={Tian Zhou and Ziqing Ma and Qingsong Wen and Xue Wang and Liang Sun and Rong Jin},
  booktitle={ICML},
  pages={27268--27286},
  year={2022},
  organization={PMLR}
}

@article{non-stationary,
  title={Non-stationary transformers: Exploring the stationarity in time series forecasting},
  author={Yong Liu and Haixu Wu and Jianmin Wang and Mingsheng Long},
  journal={NeurIPS},
  volume={35},
  pages={9881--9893},
  year={2022}
}

@inproceedings{flattentrans,
  title={Flatten transformer: Vision transformer using focused linear attention},
  author={Dongchen Han and Xuran Pan and Yizeng Han and Shiji Song and Gao Huang},
  booktitle={ICCV},
  pages={5961--5971},
  year={2023}
}

@article{linear_softmax,
  title={Sub-adjacent transformer: Improving time series anomaly detection with reconstruction error from sub-adjacent neighborhoods},
  author={Yue, Wenzhen and Ying, Xianghua and Guo, Ruohao and Chen, DongDong and Shi, Ji and Xing, Bowei and Zhu, Yuqing and Chen, Taiyan},
  journal={arXiv preprint arXiv:2404.18948},
  year={2024}
}

@article{flashattention,
  title={Flashattention: Fast and memory-efficient exact attention with io-awareness},
  author={Tri Dao and Dan Fu and Stefano Ermon and Atri Rudra and Christopher R{\'e}},
  journal={NeurIPS},
  volume={35},
  pages={16344--16359},
  year={2022}
}

@inproceedings{flowformer,
  title={Flowformer: Linearizing transformers with conservation flows},
  author={Haixu Wu and Jialong Wu and Jiehui Xu and Jianmin Wang and Mingsheng Long},
  booktitle={ICML},
  year={2022}
}

@inproceedings{adam_opt,
  title={Adam: A method for stochastic optimization},
  author={Diederik P. Kingma and Jimmy Ba},
  booktitle={ICLR},
  year={2015}
}

@article{rlinear,
  title={Revisiting long-term time series forecasting: An investigation on linear mapping},
  author={Zhe Li and Shiyi Qi and Yiduo Li and Zenglin Xu},
  journal={arXiv preprint arXiv:2305.10721},
  year={2023}
}

@inproceedings{timemixer,
  title={TimeMixer: Decomposable multiscale mixing for time series forecasting},
  author={Shiyu Wang and Haixu Wu and Xiaoming Shi and Tengge Hu and Huakun Luo and Lintao Ma and James Y. Zhang and Jun Zhou},
  booktitle={ICLR},
  year={2024}
}

@inproceedings{samformer,
  author = {Ilbert, Romain and Odonnat, Ambroise and Feofanov, Vasilii and Virmaux, Aladin and Paolo, Giuseppe and Palpanas, Themis and Redko, Ievgen},
  title = {SAMformer: unlocking the potential of transformers in time series forecasting with sharpness-aware minimization and channel-wise attention},
  booktitle = {ICML},
  year = {2024}
}

@inproceedings{timebase,
  title={TimeBase: The Power of Minimalism in Efficient Long-term Time Series Forecasting},
  author={Qihe Huang and Zhengyang Zhou and Kuo Yang and Zhongchao Yi and Xu Wang and Yang Wang},
  booktitle = {ICML},
  year = {2025}
}

@inproceedings{simpletm,
title={Simple{TM}: A Simple Baseline for Multivariate Time Series Forecasting},
author={Hui Chen and Viet Luong and Lopamudra Mukherjee and Vikas Singh},
booktitle={ICLR},
year={2025},
}

@inproceedings{tqnet,
title={Temporal Query Network for Efficient Multivariate Time Series Forecasting},
author={Shengsheng Lin and Haojun Chen and Haijie Wu and Chunyun Qiu and Weiwei Lin},
booktitle={ICML},
year={2025},
}

@inproceedings{timepro,
title={TimePro: Efficient Multivariate Long-term Time Series Forecasting with Variable- and Time-Aware Hyper-state}, 
author={Xiaowen Ma and Zhenliang Ni and Shuai Xiao and Xinghao Chen},
booktitle={ICML},
year={2025},
}

@article{transdf, 
  title={Time-o1: Time-Series Forecasting Needs Transformed Label Alignment},
  author={Wang, Hao and Pan, Licheng and Chen, Zhichao and Chen, Xu and Dai, Qingyang and Wang, Lei and Li, Haoxuan and Lin, Zhouchen},
  journal={Advances in Neural Information Processing Systems},
  year={2025}
}

@inproceedings{wang2025optimal,
  title={Optimal transport for time series imputation},
  author={Wang, Hao and Li, Haoxuan and Chen, Xu and Gong, Mingming and Chen, Zhichao and others},
  booktitle={The Thirteenth International Conference on Learning Representations},
  year={2025}
}

@article{li2025towards,
  title={Towards Accurate Time Series Forecasting via Implicit Decoding},
  author={Li, Xinyu and Luo, Yuchen and Wang, Hao and Li, Haoxuan and Peng, Liuhua and Liu, Feng and Guo, Yandong and Zhang, Kun and Gong, Mingming},
  journal={NeurIPS},
  year={2025}
}

@inproceedings{wang2025effective,
  title={Effective and Efficient Time-Varying Counterfactual Prediction with State-Space Models},
  author={Wang, Haotian and Li, Haoxuan and Zou, Hao and Chi, Haoang and Lan, Long and Huang, Wanrong and Yang, Wenjing},
  booktitle={ICLR},
  year={2025}
}

@inproceedings{wu2025srsnet,
  title     = {Enhancing Time Series Forecasting through Selective Representation Spaces: A Patch Perspective},
  author    = {Wu, Xingjian and Qiu, Xiangfei and Cheng, Hanyin and Li, Zhengyu and Hu, Jilin and Guo, Chenjuan and Yang, Bin},
  booktitle = {NeurIPS},
  year      = {2025}
}

@inproceedings{wu2025k2vae,
  title     = {{K${}^2$VAE}: A Koopman-Kalman Enhanced Variational AutoEncoder for Probabilistic Time Series Forecasting},
  author    = {Wu, Xingjian and Qiu, Xiangfei and Gao, Hongfan and Hu, Jilin and Yang, Bin and Guo, Chenjuan},
  booktitle = {ICML},
  year      = {2025}
}

@inproceedings{dbloss, 
  title     = {{DBLoss}: Decomposition-based Loss Function for Time Series Forecasting},
  author    = {Qiu, Xiangfei and Wu, Xingjian and Cheng, Hanyin and Liu, Xvyuan and Guo, Chenjuan and Hu, Jilin and Yang, Bin},
  booktitle = {NeurIPS},
  year      = {2025}
}

\end{document}